\theoremstyle{plain}
\newtheorem{theorem}{Theorem}[section]
\newtheorem{lemma}[theorem]{Lemma}
\newtheorem{corollary}[theorem]{Corollary}
\theoremstyle{definition}
\theoremstyle{remark}
\newtheorem{remark}[theorem]{Remark}
\def\eqref#1{equation~\ref{#1}}
\def\1{\bm{1}}
\def\vzero{{\bm{0}}}
\def\vmu{{\bm{\mu}}}
\def\va{{\bm{a}}}
\def\vb{{\bm{b}}}
\def\vm{{\bm{m}}}
\def\vv{{\bm{v}}}
\def\vw{{\bm{w}}}
\def\vx{{\bm{x}}}
\def\mA{{\bm{A}}}
\def\mI{{\bm{I}}}
\def\mW{{\bm{W}}}
\DeclareMathAlphabet{\mathsfit}{\encodingdefault}{\sfdefault}{m}{sl}
\SetMathAlphabet{\mathsfit}{bold}{\encodingdefault}{\sfdefault}{bx}{n}
\def\tT{{\tens{T}}}
\def\sF{{\mathbb{F}}}
\newcommand{\E}{\mathbb{E}}
\newcommand{\R}{\mathbb{R}}
\DeclareMathOperator*{\argmin}{arg\,min}
\definecolor{pinegreen}{rgb}{0.0, 0.47, 0.44}
\definecolor{cornellred}{rgb}{0.7, 0.11, 0.11}
\definecolor{cadmiumgreen}{rgb}{0.0, 0.42, 0.24}
\definecolor{royalblue}{rgb}{0.0, 0.14, 0.4}
\definecolor{spirodiscoball}{rgb}{0.06, 0.75, 0.99}
\definecolor{mylightblue}{rgb}{0.85, 0.90, 0.94}
\definecolor{kaistblue}{RGB}{20,135,200}
\definecolor{auburn}{RGB}{166,38,57}
\definecolor{mycolor}{HTML}{226CE0}
\newcommand{\ie}{\textit{i}.\textit{e}.}
\newcommand{\eg}{\textit{e}.\textit{g}.}
\def\vx{{\bm{x}}}
\def\vzero{{\bm{0}}}
\def\va{{\bm{a}}}
\def\vb{{\bm{b}}}
\def\vm{{\bm{\mu}}}
\def\vw{{\bm{w}}}
\def\vwt{{\vw^{*}}}
\def\vmu{{\bm{\mu}}}
\def\vmut{{\vmu^{*}}}
\def\vmt{{\vmu^{*}}}
\def\vep{\bm{\epsilon}}
\def\sf{f^*}
\def\mA{{\bm{A}}}
\def\mI{{\bm{I}}}
\def\mzero{{\bm{0}}}
\def\Dpr{\mathcal{D}^{\text{prior}}}
\def\Dpost{\mathcal{D}^{\text{post}}}
\def\Dpo{\mathcal{D}^{\text{post}}}
\def\ddist{\mathcal{D}_{\vx}}
\def\fdist{\mathcal{D}_{y|\vx}}
\def\tT{\widetilde{T}}
\def\tpi{\tilde{\pi}}
\def\tvmu{\tilde{\vmu}}
\def\tvm {\tilde{\vmu}}
\def\tvw {\tilde{\vw }}
\def\tmu{\tilde{\mu}}
\def\tsigma{\tilde{\sigma}}
\def\nm{\sigma_\mu}
\def\nw{\sigma_w}
\def\tnm{\tilde{\sigma}_\mu}
\def\tnw{\tilde{\sigma}_w}
\def\nx{\sigma_x}
\def\ny{\sigma_y}
\def\nxd{\tau_x}
\def\sm {\sigma_\mu}
\def\smu{\sigma_\mu}
\def\sw {\sigma_w }
\def\S{{\mathcal{S}}}
\def\F{{\mathcal{F}}}
\def\hF{\hat{\mathcal{F}}}
\def\sF{\mathcal{F}^*}
\def\SK{\S_K}
\def\Sk{\S_k}
\def\Skx{\S_k \oplus \vx_{k+1}}
\def\D{\mathcal{D}}
\def\dm{\delta_\mu}
\def\dw{\delta_w}
\def\mm{\bar{\vm}}
\def\mw{\bar{\vw}}
\def\mCm{\bar{\bm{\Sigma}}_\vm}
\def\mCw{\bar{\bm{\Sigma}}_\vw}
\def\Dmm{\Delta_\mu\bar{\vm}}
\def\Dmw{\Delta_w\bar{\vw}}
\def\DmCm{\Delta_\mu\bar{\bm{\Sigma}}_\vm}
\def\DmCw{\Delta_w\bar{\bm{\Sigma}}_\vw}
\def\advm{\Psi_\vm}
\def\advw{\Psi_\vw}
\def\dk{{\gamma}}
\def\L{\text{L}}
\def\U{\text{U}}
\def\DE{\displaystyle\mathop{\mathbb{E}}}
\DeclareMathOperator{\diag}{diag}
\newtheorem{asu}{Assumption}
\newcounter{subassumption}[asu]
\renewcommand{\thesubassumption}{(\textit{\roman{subassumption}})}
\renewcommand{\p@subassumption}{\theasu}
\renewcommand{\thesubassumption}{(\alph{subassumption})}
\newcommand{\subasu}{%
  \refstepcounter{subassumption}%
  \thesubassumption~\ignorespaces}
\DeclareMathOperator*{\argminA}{argmin}
\def\CR{CR\xspace}
\def\CRfull{Component Re-weighting\xspace}
\def\CS{CS\xspace}
\def\CSfull{Component Shifting\xspace}
\def\BO{Bayes-Optimal\xspace}
\def\bo{Bayes-optimal\xspace}
\def\SP{Next-Token Predictor\xspace}
\def\sp{next-token predictor\xspace}
\def\BOSP{\BO \SP}
\def\bosp{\bo \sp}
\newcommand{\red}[1]{\textcolor{red}{#1}}
\newcommand{\blue}[1]{\textcolor{blue}{#1}}
\newcommand{\redb}[1]{\textcolor{black}{#1}}
\newcommand{\blueb}[1]{\textcolor{black}{#1}}
\newcommand{\GGG}[1]{\textcolor{orange}{#1}}
\def\mytitle{Dual Operating Modes of In-Context Learning}
\icmltitlerunning{\mytitle}
\begin{document}

\twocolumn[
\icmltitle{\mytitle}




\begin{icmlauthorlist}
\icmlauthor{Ziqian Lin}{yyy}
\icmlauthor{Kangwook Lee}{zzz}
\end{icmlauthorlist}

\icmlaffiliation{yyy}{Department of Computer Science, University of Wisconsin-Madison, Madison, Wisconsin, USA}
\icmlaffiliation{zzz}{Department of Electrical \& Computer Engineering, University of Wisconsin-Madison, Madison, Wisconsin, USA}

\icmlcorrespondingauthor{Kangwook Lee}{kangwook.lee@wisc.edu}

\icmlkeywords{Machine Learning, ICML}

\vskip 0.3in
]



\printAffiliationsAndNotice{}  

\begin{abstract}
In-context learning (ICL) exhibits dual operating modes: \emph{task learning}, \ie{} acquiring a new skill from in-context samples, and \emph{task retrieval}, \ie{}, locating and activating a relevant pretrained skill.
Recent theoretical work proposes various mathematical models to analyze ICL, but they cannot fully explain the duality.
In this work, we analyze a generalized probabilistic model for pretraining data, obtaining a quantitative understanding of the two operating modes of ICL.
Leveraging our analysis, we provide the first explanation of an unexplained phenomenon observed with real-world large language models (LLMs).
Under some settings, the ICL risk initially increases and then decreases with more in-context examples.
Our analysis offers a plausible explanation for this ``early ascent'' phenomenon: a limited number of in-context samples may lead to the retrieval of an incorrect skill, thereby increasing the risk, which will eventually diminish as task learning takes effect with more in-context samples.
We also analyze ICL with biased labels, \eg, zero-shot ICL, where in-context examples are assigned random labels, and predict the bounded efficacy of such approaches.
We corroborate our analysis and predictions with extensive experiments with Transformers and LLMs.
The code is available at:  
\url{https://github.com/UW-Madison-Lee-Lab/Dual_Operating_Modes_of_ICL}.
\end{abstract}

\section{Introduction}
Large language models (LLMs) exhibit a significant improvement in predictive performance when provided with in-context examples~\citep{BrownMRSKDNSSAA20}.
This emergent ability of LLMs, known as in-context learning (ICL), \textbf{operates in two distinct modes: task learning and task retrieval}~\citep{pan2023context}.
Large language models exemplify this duality.
They can learn unseen functions from in-context examples, demonstrating the learning mode~\citep{BrownMRSKDNSSAA20,RazeghiL0022,garg2022can}.
Concurrently, LLMs can also retrieve and utilize a \emph{pretrained} skill.
A clear evidence of the task retrieval mode is presented by \citet{MinLHALHZ22}, where the authors show ICL performance remains largely unaffected even when in-context examples are annotated with random labels.
This suggests that LLMs simply retrieve a pretrained skill rather than learn it from in-context examples.

The dual nature of ICL can be explained as follows. 
LLMs are a next-token predictor that is pretrained on a large pretraining set, consisting of diverse data from diverse domains/tasks.
To predict the next token optimally in such a scenario, the model must first learn the task prior from pretraining data and then implicitly perform Bayesian inference at the test time~\citep{xie2021explanation,raventos2023effects}. 
Optimal prediction on multitask pretraining data requires adherence to the learned prior (over the tasks present in the pretraining data) and making predictions based on the posterior. 
The ability to learn and apply this prior during test-time inference enables task retrieval--if in-context examples align closely with a task encountered during pretraining, the model can swiftly adjust its posterior and predict without learning a new skill. 
Simultaneously, the model can learn a novel or uncommon skill given sufficient in-context samples and a non-zero prior probability for that skill.

Although the link between pretraining and ICL's dual modes is conceptually straightforward, formally establishing this connection is an unresolved challenge. 
Motivated by this, our work seeks to address the following questions: \emph{How do we rigorously explain the dual operating modes of ICL? Can we define the conditions under which the retrieval mode is a dominant one and vice versa?}

\begin{figure*}[t!]
    \centering
    \includegraphics[width = 1.0\textwidth]{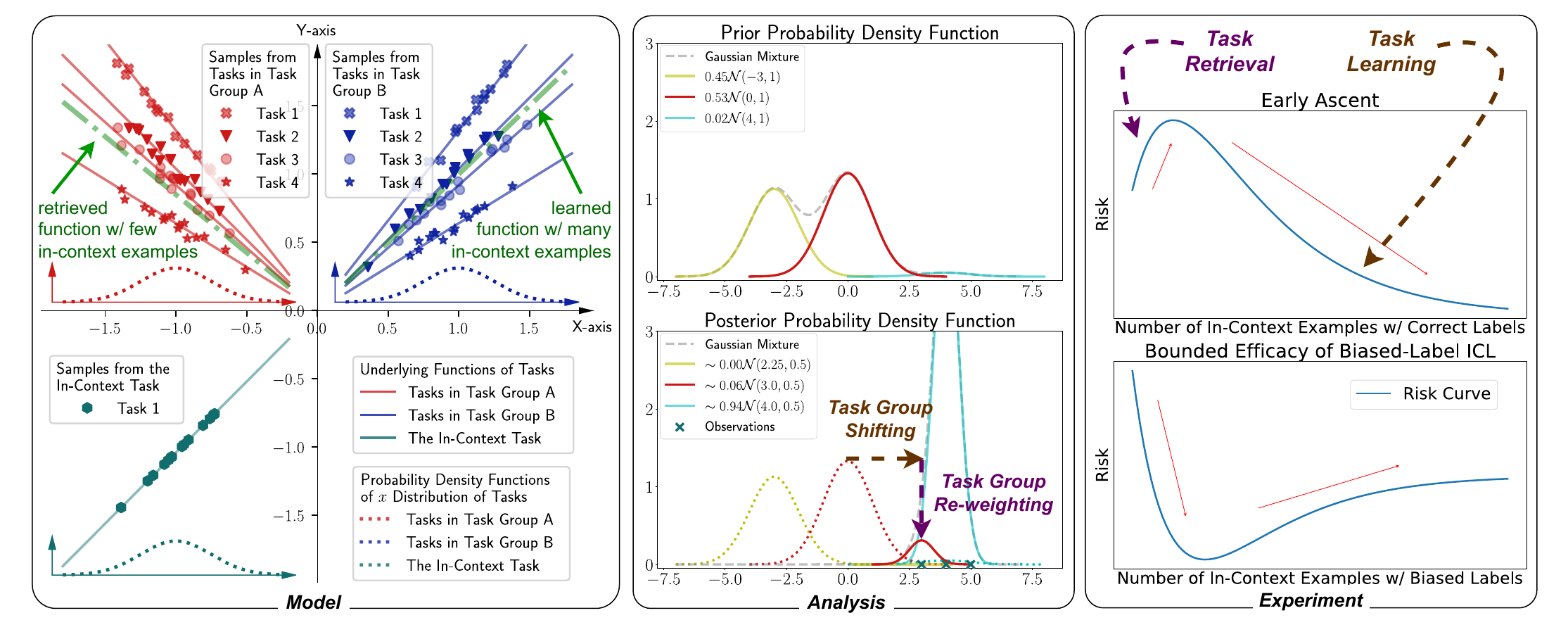}
    \caption{\textbf{A summary of our contributions.} We first propose a probabilistic model for pretraining data and in-context examples. By analyzing our model, we obtain a quantitative understanding of the dual operating modes of ICL, and explain two real-world phenomena observed with LLMs.}
    \label{fig:figure1}
\end{figure*}
\paragraph{A New Model for Pretraining Data} To find the answers to these questions, 
we first propose a new probabilistic model for pretraining data by assuming the pretraining data has a latent clustered structure.
In particular, we consider in-context learning of linear functions following the recent work~\citep{garg2022can,akyurek2022learning,Li23algorithm,von2022transformers,raventos2023effects,anonymous2023how}.
A next-token prediction model is prompted with (1) a sequence of $(\vx, y)$ pairs, which come from a common linear function, 
and (2) one test input $\vx_\text{test}$.
An ideal model capable of in-context learning linear models should internally fit a linear function (say $y = \widehat{\vw}^T \vx$) using the in-context examples and then generate the predicted label $y_\text{test} = \widehat{\vw}^T \vx_\text{test}$ as the next token.
The recent work~\citep{raventos2023effects,anonymous2023how} show that such in-context learning is feasible by training a next-token prediction model on a large pretraining dataset, consisting of sequences of labeled samples drawn from diverse linear functions.

We extend the existing model for pretraining data~\citep{raventos2023effects} by introducing multiple task groups and task-dependent input distributions.
When one generates pretraining data, one must specify a probability distribution of linear functions (equivalently, that of the linear coefficient $\vw$). 
While most of the prior work assumes that $\vw$ is drawn from a single Gaussian distribution, we will model it as drawn from a Gaussian \emph{mixture} model, where each Gaussian component models a \emph{task group}.
This model better reflects real-world data that exhibits a clustered structure~\citep{xie2021explanation}. 
Furthermore, we also allow each mixture component to have its own distribution for input $\vx$. 
Shown on the left-most panel in Fig.~\ref{fig:figure1} is a simple visualization of our model.
The blue task group is modeled as the distribution of linear functions with positive coefficients ($\vw \approx 1$) with the input distribution centered around $\E[\vx]=+1$.
The red lines represent the other task group -- linear functions with negative coefficients ($\vw \approx -1$) with the input distribution centered at $\E[\vx]=-1$.
See Sec.~\ref{sec:setting&connection} for more details.

\paragraph{Analysis} With our new model for pretraining data, we analyze the optimal pretrained model under the squared loss, \ie{}, the MMSE estimator of the label given input with in-context examples.
Here, the pretraining distribution (of linear functions) is the prior, and in-context examples are the observations.
Leveraging the fact that the Gaussian mixture is a conjugate prior to the Gaussian likelihood function, we obtain a closed-form expression of the posterior distribution.
By fully quantifying the posterior distribution of $\vw$ in the form of a Gaussian mixture, we characterize how in-context examples are used to update each component's posterior mean and posterior mixture probability. 
We will call updates of mixture probabilities as \emph{task group (component) re-weighting} and updates of component means as \emph{task group (component) shifting}. 
See the central panel in Fig.~\ref{fig:figure1} for visualization.
By analyzing these two effects, we obtain a quantitative understanding of how two different operating modes emerge.
In particular, we show that, under some mild assumptions, task group re-weighting is the dominant factor when provided with few in-context samples, rendering the task retrieval mode. 
With many in-context samples, task group shifting occurs, resulting in the task learning mode.

\paragraph{Explanation of Two Real-World Phenomena} To demonstrate the practical value of the new insights we have gained from our model, we will leverage our analysis to explain and predict two phenomena observed with LLMs in practice.
\begin{itemize}[leftmargin=0.2cm]
    \item \textbf{The \emph{early ascent} phenomenon} refers to the observation that, under certain conditions, the ICL risk initially increases and then decreases when more in-context examples are introduced~\citep{BrownMRSKDNSSAA20,xie2021explanation}.
    See the right-most panel of Fig.~\ref{fig:figure1} for visualization. 
Based on our analysis, we offer a plausible explanation for this early ascent phenomenon--a limited number of in-context samples may lead to the retrieval of an incorrect skill, thereby increasing the risk, which will eventually diminish as task learning takes effect with more in-context samples.
    \item 
    \textbf{Bounded efficacy of biased-label ICL} is predicted by our model. 
ICL performs well even with in-context examples that are annotated with biased labels~\citep{LyuMBZH23,MinLHALHZ22}.
Our model provides a rigorous justification of this approach: If in-context examples with biased labels carry sufficient information for retrieving a correct pretrained task, then this approach would work. 
At the same time, our analysis suggests that the operating mode of ICL will make a transition from task retrieval to task learning with more in-context examples.
When the learning mode starts taking place, the test risks of such methods will start increasing as the pretrained model will start fitting the biased labels. 
See the right-most panel of Fig.~\ref{fig:figure1} for visualization.
This bounded efficacy has not been reported in the literature~\citep{MinLHALHZ22,pan2023context}. 
We found that this was due to the small number of examples tested.
With more in-context samples, we observe the predicted bounded efficacy phenomenon with real-world LLMs such as Mistral 7B~\citep{jiang2023mistral}, Mixtral 8$\times$7B~\citep{jiang2024mixtral}, Llama 2~\citep{touvron2023llama}, and GPT-4~\citep{openai2023gpt4}.
\end{itemize}

\section{Related Work}
\paragraph{Dual Operating Modes of ICL.}
\citet{pan2023context} empirically disentangle the two operating modes of ICL: task recognition, which we refer to as task retrieval and task learning.
To illustrate, in the context of sentence sentiment classification using ICL, \citet{pan2023context} explore three labeling schemes for in-context examples: (i) correct semantic labels, (ii) correct but abstract labels (``0'' and ``1''), and (iii) random semantic labels (``positive'' or ``negative'').
\citet{pan2023context} claim that ICL is in the task recognition mode when the model is provided with randomly labeled in-context data, and observe that its efficacy does not correlate with model size or the quantity of demonstrations.
In fact, later, we will show that via our analysis, an increasing number of demonstrations will eventually decrease the ICL accuracy.
Conversely, ICL with correct but abstract labels, classified as task learning, shows improved performance in proportion to model size and in-context example count.
ICL with correct labels yields the highest accuracy since both task recognition and task learning benefit it.

\paragraph{Explaining ICL via Bayesian Inference.}
\citet{xie2021explanation} use a Hidden Markov Model (HMM)~\citep{ghahramani1995factorial,rabiner1989tutorial} to model the pretraining data. 
That is, each sequence in pretraining data is generated by an HMM, whose parameters are randomly drawn from a particular distribution.
During pretraining, a next-token prediction model is trained to predict tokens in pretraining sequences, which requires the inference of the latent HMM parameters. 
While this model accurately reflects real-world pretraining data characteristics, such as long-range dependencies, the absence of a closed-form solution for optimal prediction makes detailed analysis of ICL infeasible.
On the other hand, \citet{garg2022can,raventos2023effects} consider the setting where a next-token prediction model is pretrained on token sequences consisting of $(\vx, y)$ pairs in the form of $(\vx_1, y_1, \vx_2, y_2, \ldots)$. 
The pretraining objective is to predict only the tokens at odd positions, \ie, to predict $y$, but not $\vx$.
\citet{garg2022can} empirically evaluate the Transformer architecture~\citep{vaswani2017attention}, while the authors of \citet{raventos2023effects} proposed a probabilistic model to generate sequences according to noisy linear regression.
More specifically, $y_i = \langle \vx_i,\vwt \rangle + \epsilon_i$, where $\vwt$ is the coefficient shared within the same sequence and $\epsilon_i$ is noise. 
While this linear regression model facilitates a tractable analysis and elucidates certain aspects of the dual operating modes of ICL, it falls short in modeling the clustered characteristic of nature language. 
\citet{han23kernel} show that ICL asymptotically approaches kernel regression as the in-context samples increases.
\citet{jeon2024information} introduce information-theoretic tools to show that the ICL risk should decay in both the number and sequence lengths of in-context examples.
On the other hand, our proposed model allows for tractable analysis and captures the clustered characteristic of pretraining data. 

\paragraph{Explaining ICL via Gradient Descent.}
\citet{garg2022can} hint that the pretrained Transformer might implicitly execute gradient descent under ICL.
\citet{akyurek2022learning,von2022transformers,dai2023metaopt} expand this notion by theoretically showing that one attention layer can be exactly constructed to perform gradient descent, and empirically finding similarities between in-context inference and gradient descent algorithm. 
Further,~\citet{ahn23learnGD,mahankali2023step,zhang23learnGD} dive into the training process of Transformers. 
~\citet{ahn23learnGD,mahankali2023step} theoretically show that under certain conditions, Transformers with one or more attention layers trained on noisy linear regression task minimizing the pretraining loss will implement gradient descent algorithm. ~\citet{zhang23learnGD} show that a single linear self-attention layer trained by gradient flow with a suitable random initialization finds a global minimum of the objective function, where ICL of the Transformer achieves prediction error competitive with the
best linear predictor.

\paragraph{Others.}
\citet{anonymous2023how} studies the sample complexity required for pretraining a linear attention model and presents a statistical bound. 
In our work, we do not consider a particular model architecture nor the statistical aspects of pretraining -- we assume a pretrained model is optimally trained on infinitely large pretraining data, similar to the previous work~\citep{xie2021explanation,raventos2023effects,han23kernel}.
\citet{giannou2023looped} show a looped Transformer can emulate any algorithms, 
such as SGD.
\citet{yu23algorithm} show Transformers can perform in-context algorithm selection, \ie, adaptively selecting different ICL algorithms such as gradient descent, least square, or ridge regression.
\cite{Li23algorithm} study the generalization bounds for ICL with Transformers. 

\section{Pretraining and Data Generative Model}
\label{sec:setting&connection}
A next-token predictor is a sequential prediction model that predicts the next token given an initial token sequence.
Consider pretraining this model on sequences consisting of $(\vx, y)$\footnote{It is more rigorous to represent the vector $\vx$ as multiple tokens.
However, viewing it as a high-dimensional ``token'' simplifies our notation while not affecting our analysis. 
Thus, with a slight abuse of notation, we will treat both $\vx_i$ and $y_i$ as tokens for simplicity.} 
pairs in the form of $(\vx_1, y_1, \vx_2, y_2, \ldots)$, with the model trained to predict only the $y$ values, thereby skipping the prediction of $x$.
Here, we assume odd-numbered tokens represent $d$-dimension real-valued vectors, and even-numbered tokens represent scalars.
During inference, the model receives a sequence of $2k+1$ tokens.
The first $2k$ tokens are $k$ labeled samples $(\vx_i,y_i),i\in\{1,\ldots,k\}=:[K]$, and the last token is unlabeled $\vx_{k+1}$. 
Ideally, the model should predict the correct next token, $y_{k+1}$.

\subsection{Data Generative Model}
\label{subsec:setting}
In the pretraining phase, 
we assume the \sp is pretrained on diverse tasks, each representing a continuous joint distribution of $(\vx,y)$.
Before we move on to the exact pretraining data generative model proposed in this paper, we first provide a general setting for the data generation process.
A task is defined by a joint distribution $\mathcal{D}_{\vx,y}$, which specifies the likelihood of obtaining a sample $(\vx,y)$ from this task.
Each task is sampled from the task prior $\Dpr$, meaning $\Dpr$ represents a distribution over distributions.
The pretraining data comprises numerous sequences, each containing $K$ labeled samples i.i.d. drawn from a distribution $\mathcal{D}_{\vx,y}$.
We formally describe our pretraining data generative model in Assumption~\ref{asu:setting}.

\begin{asu}[Pretraining Data Generative Model]
\label{asu:setting} 
Given an integer $K>0$, a pretraining \textbf{task prior} $\Dpr$, we generate a sequence $\SK$ as follows:\\
\subasu \label{asu:setting1} Sample a task from the task prior: $\mathcal{D}_{\vx,y}\sim\Dpr$;\\
\subasu \label{asu:setting2} Sample $K$ labeled samples from the chosen task: $\forall i\in \{1,2,\ldots,K\}$, $(\vx_i, y_i)\sim \mathcal{D}_{\vx,y}$;\\
\subasu \label{asu:setting4} Define a sequence $\SK$: $\SK=[\vx_1,y_1,\ldots,\vx_K,y_K]$. 
\end{asu}
In the sequence, the first $2k$ elements of $\SK$ is denoted as $\Sk$, and the first $2k+1$ elements will be indicated by $\Skx$, \eg, $\S_0=[~]$, and $\S_1\oplus\vx_2=[\vx_1,y_1,\vx_2]$.

\subsection{\BOSP}
\label{subsec:connection}
Let
$    \mathcal{L} (\F) = 
    \mathop{\mathbb{E}}_{\SK} 
    \left[\frac{1}{K}\sum_{k=0}^{K-1} (\F(\Skx)-y_{k+1})^2 \right]
$ as the pretraining objective, where $\F$ is a next-token predictor and $\SK$ is generated from $\Dpr$ following Assumption~\ref{asu:setting}.
In other words, for each sequence, we pretrain $\mathcal{F}$ to predict each label $y$ based on preceding samples, measuring risk with the squared loss.
Due to the linearity of expectation, we have:
$    \mathcal{L} (\F) 
    =\frac{1}{K}\sum_{k=0}^{K-1}\DE_{S_K}  
    \left[ (\F(\Skx)-y_{k+1})^2\right].
$
A variable-input-length next-token predictor $\F$ can be viewed as $K$ fixed-input-length next-token predictors $\F_0,\ldots,\F_{K-1}$, where $\F_k$ takes a sequence of exactly $2k+1$ tokens as input. 
Thus, assuming the sufficient expressiveness of $\F$, the optimization problem $\sF = \argminA_{\F} \mathcal{L} (\F)$ can be decomposed into $K$ separate optimization problems for $k\in\{0,\ldots,K-1\}$:
\begin{align}
    \F_k^*=\argminA_{\F_k} \DE_{\SK} [(\F_k(\Skx)-y_{k+1})^2].
\end{align}
The solution denoted $\F_k^*$ is an MMSE estimator~\citep[page~63]{van2004detection} for each $k$.
Thus, the prediction $\sF(\Skx)= \F_k^*(\Skx)$ satisfies:
\begin{align}
&
    \sF(\Skx)
    =\DE_{\SK} \left[y_{k+1}\vert\Skx\right]
\\&\qquad
    = \displaystyle \mathop{\mathbb{E}}_{\D_{\vx,y}} \left[
    \displaystyle \mathop{\mathbb{E}}_{y_{k+1}} \left[y_{k+1}\vert \D_{\vx,y}, \Skx\right] \middle \vert  \Skx\right]
\\&\qquad
    \label{equation:connection}
    = \displaystyle \mathop{\mathbb{E}}_{\D_{\vx,y}} \left[
    \displaystyle \mathop{\mathbb{E}}_{y_{k+1}} \left[y_{k+1}\vert \D_{\vx,y}, \vx_{k+1}\right] \middle \vert  \Skx\right].
\end{align}
Thus, $\sF(\Skx)$ is the expectation (over task posterior) of $\displaystyle \mathop{\mathbb{E}}_{y_{k+1}} \left[y_{k+1}\vert \D_{\vx,y},\vx_{k+1}\right]$ regarding $\Skx$ as observation. We show that a pretrained Transformer can empirically approximate Bayesian inference in Appendix~\ref{sec:transformer}.

\begin{figure*}[th!]
\centering
    \subfigure[Pretraining data \citet{raventos2023effects}.]{
        \includegraphics[width=0.3\textwidth]{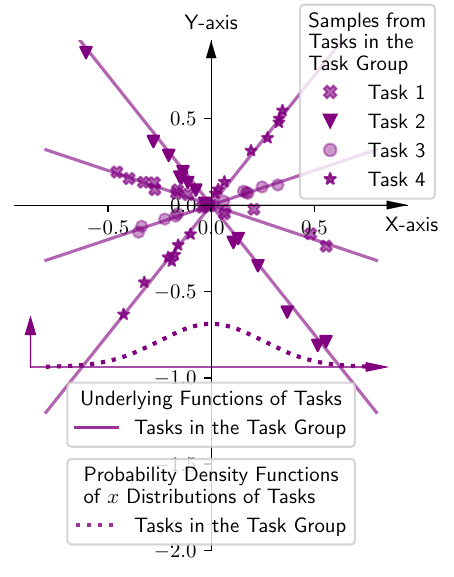}
        }
    \hspace{1em}
    \subfigure[Our pretaining data with 2 task groups.]{
        \includegraphics[width=0.525\textwidth]{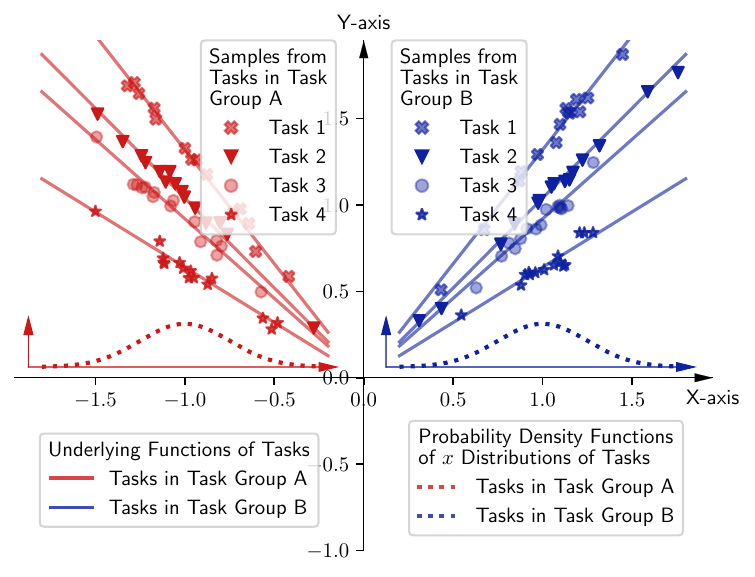}
        }
    \caption{Pretraining data model of \citet{raventos2023effects} and ours.}
    \label{fig:datacompare}
\end{figure*}

\subsection{Gaussian/Linear Assumptions on Pretraining Data Generative Model}
\label{sec:assumption}
Let us now elaborate further assumptions on $\Dpr$ and $\D_{\vx,y}$ in the Assumption~\ref{asu:setting} for a tractable posterior, extending beyond the scope of \citet{raventos2023effects}, who propose the data generative model that each task is a noisy linear regression task, the function 
$\vw$ for each task is drawn from the same Gaussian distribution, and different tasks share the same $\vx$ distribution.
In contrast, our model posits that task functions are derived from a Gaussian mixture distribution, and tasks employ varying $\vx$ distributions, as illustrated in Fig.~\ref{fig:datacompare}.
We formally formulate this setting in Assumption~\ref{asu:assumption}.
\begin{asu}[Gaussian/Linear Assumptions for Pretraining Data Generative Model]
\label{asu:assumption}
\,~\\
\subasu \label{asu:assumption1}
$(\vm,\vw)\sim \Dpr: P(\vm,\vw) = \sum_{m=1}^M \pi_m P(\vm,\vw \vert T_m)$, where $T_m$ is the $m^\text{th}$ mixture component\footnote{The concept ``mixture component'' is derived from Gaussian mixture models in the statistical literature and is analogous to the term ``Task Group'' depicted in the left-most panel of Fig.~\ref{fig:figure1}.} of the Gaussian mixture, \ie, $P(\vm,\vw \vert T_m) = \mathcal{N}(\vm \vert \vm_m,\nm^2\mI)\cdot\mathcal{N}(\vw \vert \vw_m,\nw^2\mI)$, and $\pi_m$ is the mixture weight.
$\sum_{m=1}^M\pi_m=1$, $0<\pi_m<1$, $(\vm_m,\vw_m)$ is the center of the mixture component $T_m$,
and all components share the same covariance matrix controlled by $\nm$ and $\nw$;\\
\subasu \label{asu:assumption2}
input: $\vx \sim \ddist(\vm), P(\vx\vert\vm) = \mathcal{N}(\vx\vert\vm,\nx^2\mI)$;\\
\subasu \label{asu:assumption3} 
label: $y\vert\vx \sim \fdist(\vw): P(y\vert\vx,\vw) = \mathcal{N}(y\vert\vw^\top\vx,\ny^2)$;\\
\subasu \label{asu:assumption4} 
$\|\vm_m\| = \|\vw_m\| = 1, \forall m\in [M]$;\\
\subasu \label{asu:furtherasu3} $\exists r>1$ that $\forall \alpha,\beta \in [M], \frac{1}{r} \leq \frac{\pi_\alpha}{\pi_\beta} \leq r$;\\\
\subasu \label{asu:assumption5} 
$\vx,\vm,\vm_m,\vw,\vw_m \in \mathbb{R}^d, \mI \in \mathbb{R}^{d\times d}$.
\end{asu}
\begin{remark}
\label{remark:likelihood}
    Based on Assumptions~\ref{asu:assumption2} and~\ref{asu:assumption3}, we define the probability of observing a sample $(\vx,y)$ within a task $(\vm,\vw)$ as the ``noisy linear regression'' likelihood.
\end{remark}
Assumption~\ref{asu:assumption1} indicates that the pretraining dataset of an LLM consists of $M$ different task groups.
Assumption~\ref{asu:assumption2} posits that tasks have varying $\vx$ distribution with varying mean but share the same covariance matrix.
Assumption~\ref{asu:assumption3} assumes tasks as noisy linear regressions with the same noise scale in labels.
Assumption~\ref{asu:furtherasu3} posits comparable mixture weights $\pi$ across different task groups.

\begin{figure*}[th!]
\centering
    \subfigure[\textbf{The Tetrahedron setting.} An illustration of the in-context task and the prior centers. $\forall m\in\{1,2,3,4\}$, We set 
  $\vm_m=\vw_m$.]{
        \includegraphics[width=0.225\textwidth]{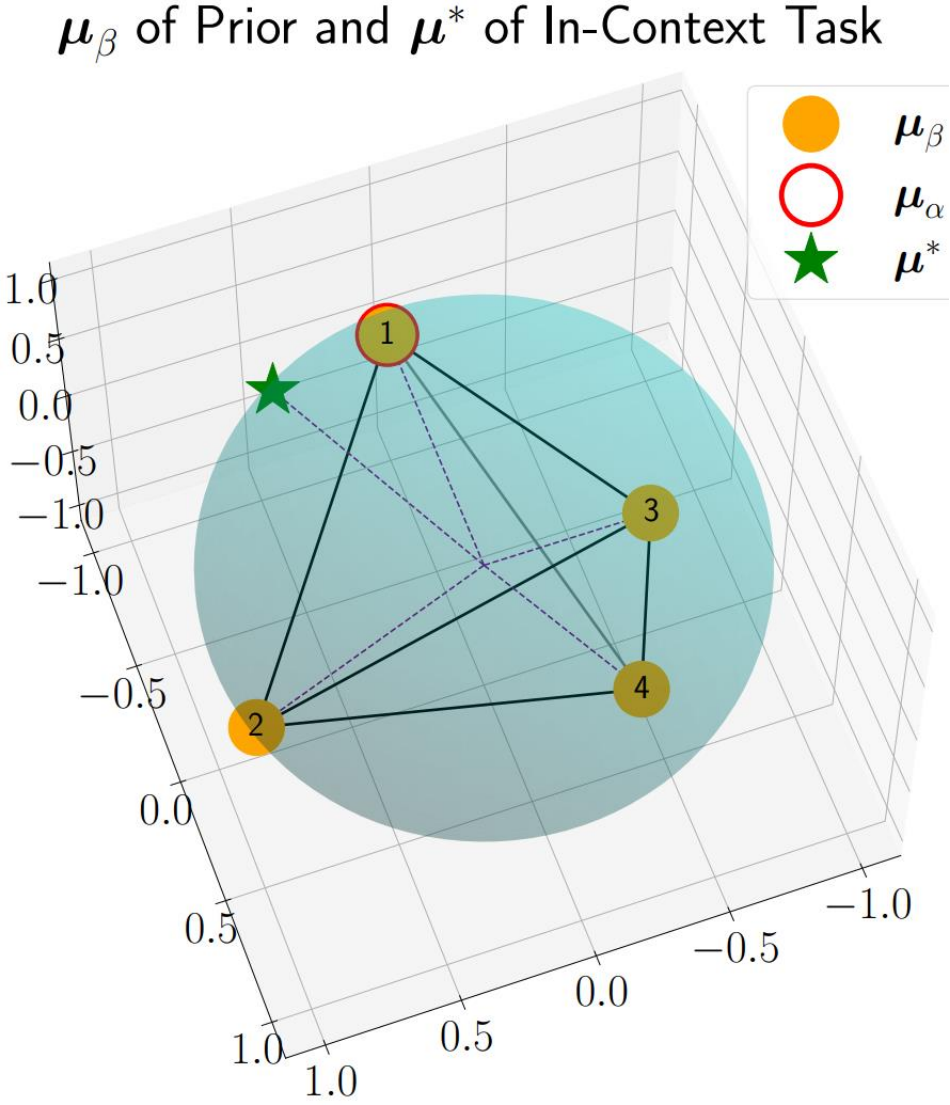}
        \label{fig:3d:4centers:half}}
    \hspace{0.5 em}
    \subfigure[\textbf{CR, CS, and risks under the Tetrahedron setting.}
    In the first two rows, we show the effects of \CS and \CR with an increasing number of in-context examples.
    In the third row, we show how far the in-context predicted function $\tvw$ is from the target function $\vwt$. 
    In the fourth row, we show the ICL risk. ]{
        \includegraphics[width=0.7\textwidth]{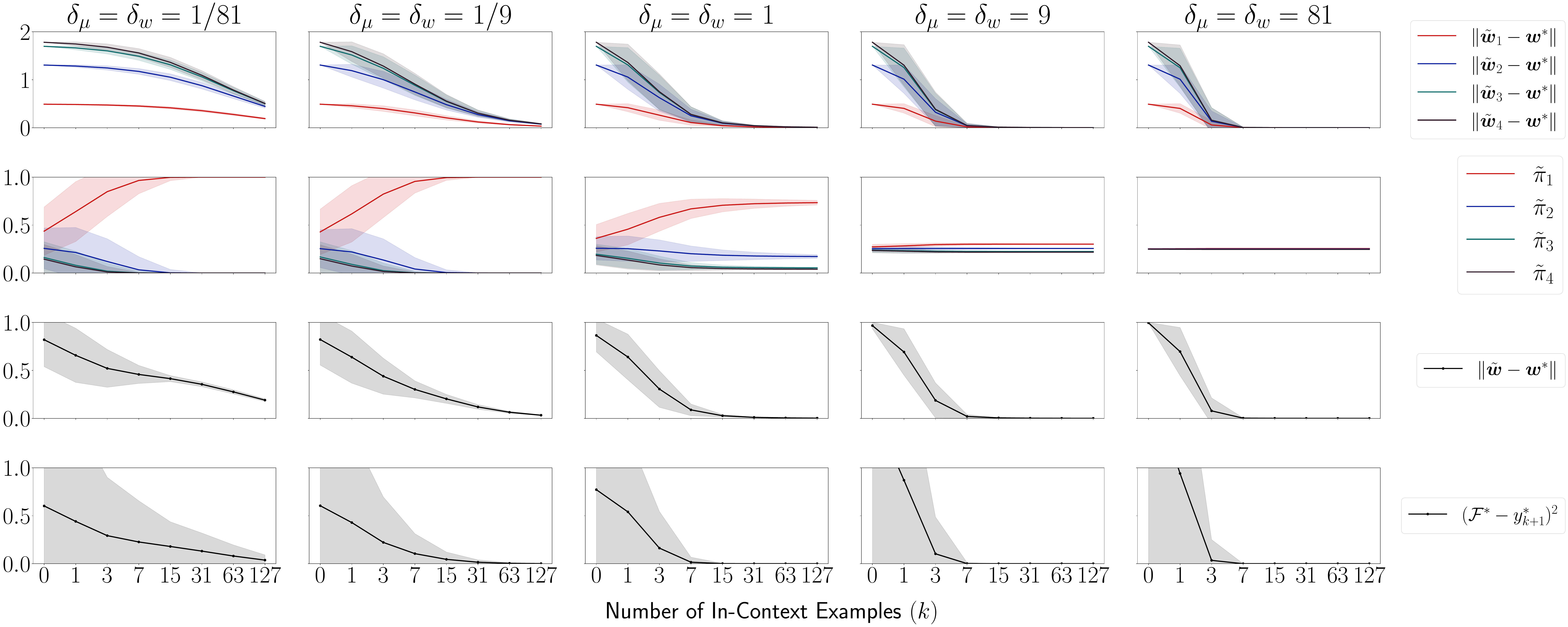}
        \label{fig:subprediction}}
    \caption{\textbf{Numerical experiments.} (left) An illustration of the pretraining priors (right) The numerical computational results}
\end{figure*}
\section{Inference and Dual Operating Modes}
\label{sec:posterior&phenomena}
The previous Sec.~\ref{subsec:connection} shows that performing ICL with the optimally pretrained \sp is equivalent to computing the posterior mean of the label.
In Sec.~\ref{subsec:ictask}, we give the generation process of in-context examples.
In Sec.~\ref{subsec:posterior}, under Assumption~\ref{asu:assumption} and treating $\Skx$ as observation, we derive a closed-form expression for the task posterior $\Dpost$, and identify two factors in the transition from prior to posterior: \CSfull and \CRfull.
In Sec.~\ref{subsec:prediction}, we derive a closed-form expression of the ICL prediction $\sF(\Skx)$.
Further, Sec.~\ref{subsec:twomodes} presents the results of numerical computation conducted under the tetrahedron setting, as illustrated in Fig.~\ref{fig:3d:4centers:half}.
The numerical computation results demonstrate the effects of component shifting and re-weighting.
Finally, Sec.~\ref{subsec:twooperating} raises the definitions of the dual operating modes with component shifting and re-weighting.

\subsection{In-Context Task and In-Context Function}
\label{subsec:ictask}
We introduce Assumption~\ref{asu:source} for the in-context task and the in-context function of in-context examples:
\begin{asu}[Gaussian/Linear Assumptions for In-Context Examples]
\label{asu:source}
\,~\\
\subasu \label{asu:source1} The input sequence $\Skx$ of ICL satisfies, $\forall i$, $\vx_i\sim\mathcal{N}(\vmt,\nxd^2\mI)$, $y_i = \langle \vx_i,\vwt \rangle$;\\
\subasu \label{asu:source2} $\|\vmt\|=\|\vwt\|=1$.
\end{asu}
Assumption~\ref{asu:source1} states that each in-context example $(\vx_i, y_i)$ is drawn from the in-context task $(\vmt, \vwt)$, with $\vwt$ representing the specific in-context function and the labels being free from noise.

\subsection{Closed-Form Expression of Posterior}
\label{subsec:posterior}
The following lemma gives the closed-form expression of posterior $\Dpo$ given any $\Skx$:
\begin{lemma}[Conjugate Distributions with Noisy Linear Regression Likelihood]
\label{lemma:posterior}
Under Assumption~\ref{asu:assumption}, the posterior probability of task $(\vm,\vw)$ given observation $\Skx$ is:
\begin{align}
\label{equation:prediction}
P(\vm,\vw \vert \Skx) &= \textstyle \sum_{m=1}^M \tpi_m P(\vm,\vw \vert \tT_m)\\
&\!\!\!\!\!\!\!\!\!\!\!\!\!\!\!\!\!\!\!\!\!\!\!\!\!\!\!\!\!\!\!\!\!\!\!\!\!\!\!\!\!\!\!\!\!\!\!\!= \textstyle \sum_{m=1}^M \tpi_m \cdot \mathcal{N}(\vm \vert \tvm_m,\tnm^2\mI)\cdot\mathcal{N}(\vw \vert \tvw_m,\tnw^2\mI).\label{shift}
\end{align}
Here, the mixture component $T_m$ in the prior is mapped to the mixture component $\tT_m$ in the posterior with mixture weight $\tpi_m$ and component means $(\tvm_m, \tvw_m)$:
\begin{align}
\label{re-weight}
\tpi_m &= \pi_m C_1 c^\vm_m c^\vw_m,&&\\
c^\vm_m  &=\exp\left(
    -
        \|\vm_m\|^2 - \nicefrac{\|\vm_m+(k+1)\dm\mm\|^2_{(\mI+(k+1)\dm\mCm)^{-1}} 
    }{2\nm^2}
\right),\\
c^\vw_m &=\exp\left(
    -
        \|\vw_m\|^2 - \nicefrac{\|\vw_m+k\dw\mw\|^2_{(\mI+k\dw\mCw)^{-1}} 
    }{2\nw^2}\right),\\
\tvm_m &= (\mI+(k+1)\dm\mCm)^{-1}(\vm_m+(k+1)\dm\mm),\\
\tvw_m &= (\mI+k\dw\mCw)^{-1}(\vw_m+k\dw\mw),\\
\tnm^2 &= \nm^2(\mI+(k+1)\dm\mCm)^{-1},\\
\tnw^2 &= \nw^2(\mI+k\dw\mCw)^{-1},
\end{align} 
where $C_1$ is a normalizing constant, \ie, $\sum_m \tpi_m=1$, $\dm  = \frac{\nm^2}{\nx^2}$,
$\dw  = \frac{\nw^2}{\ny^2}$,
$\mCm = \mI$,
$\mm  = \frac{\sum_{i=1}^{k+1} \vx_i}{k+1}$, 
$\mCw = \frac{\sum_{i=1}^k \vx_i\vx_i^\top}{k}$,
and $\mw  = \frac{\sum_{i=1}^k \vx_i y_i}{k}$.
See Appendix~\ref{sec:lemma} for the proof.
\end{lemma}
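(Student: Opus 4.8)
The plan is to compute the posterior directly from Bayes' rule, $P(\vm,\vw\mid\Skx)\propto P(\vm,\vw)\,P(\Skx\mid\vm,\vw)$ with the prior $P(\vm,\vw)$ of Assumption~\ref{asu:assumption}, and to exploit two product structures so that the whole calculation decouples into standard Gaussian--Gaussian conjugate updates. Note first that the statement should be read as holding for \emph{any} observed prefix $\Skx$: the posterior is taken with respect to the pretraining generative model, so nothing about how $\Skx$ was actually produced (Assumption~\ref{asu:source}) enters.

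First I would factor the likelihood. Because the $K$ samples of a sequence are i.i.d.\ given the task, and because under Assumption~\ref{asu:assumption} the input $\vx_i$ depends on the task only through $\vm$ while $y_i$ depends on it only through $(\vx_i,\vw)$, the likelihood of $\Skx=[\vx_1,y_1,\dots,\vx_k,y_k,\vx_{k+1}]$ splits as
\[
P(\Skx\mid\vm,\vw)=\Big(\textstyle\prod_{i=1}^{k+1}\mathcal{N}(\vx_i\mid\vm,\nx^2\mI)\Big)\cdot\Big(\textstyle\prod_{i=1}^{k}\mathcal{N}(y_i\mid\vw^\top\vx_i,\ny^2)\Big),
\]
the first factor involving only $\vm$ and the second only $\vw$ (note the $k{+}1$-versus-$k$ asymmetry: $\vx_{k+1}$ is observed, $y_{k+1}$ is not). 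Multiplying the mixture prior $\sum_m\pi_m\,\mathcal{N}(\vm\mid\vm_m,\nm^2\mI)\,\mathcal{N}(\vw\mid\vw_m,\nw^2\mI)$ by this and grouping by component, the posterior is, up to an overall constant, a sum over $m$ of $\pi_m$ times a $\vm$-bracket $\big[\mathcal{N}(\vm\mid\vm_m,\nm^2\mI)\prod_{i=1}^{k+1}\mathcal{N}(\vx_i\mid\vm,\nx^2\mI)\big]$ times a $\vw$-bracket $\big[\mathcal{N}(\vw\mid\vw_m,\nw^2\mI)\prod_{i=1}^{k}\mathcal{N}(y_i\mid\vw^\top\vx_i,\ny^2)\big]$.

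Then I would complete the square in each bracket. The $\vm$-bracket is the textbook conjugate update of a Gaussian mean from $k{+}1$ i.i.d.\ Gaussian observations: the quadratic term in $\vm$ yields precision $(\tfrac1{\nm^2}+\tfrac{k+1}{\nx^2})\mI$, hence covariance $\tnm^2\mI=\nm^2(\mI+(k+1)\dm\mCm)^{-1}$ after substituting $\dm=\nm^2/\nx^2$ and $\mCm=\mI$; the linear term gives mean $\tvm_m=(\mI+(k+1)\dm\mCm)^{-1}(\vm_m+(k+1)\dm\mm)$ with $\mm=\tfrac1{k+1}\sum_{i=1}^{k+1}\vx_i$; and the scalar left over from completing the square, restricted to the part depending on $\vm_m$, is exactly $c^\vm_m$, while the rest of that scalar (Gaussian normalizers $(2\pi\nx^2)^{-d/2}$, the term $\tfrac1{2\nx^2}\sum_i\|\vx_i-\mm\|^2$, etc.) is independent of $m$. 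The $\vw$-bracket is identical under $\vm\mapsto\vw$, $\nm\mapsto\nw$, $\nx\mapsto\ny$, $k{+}1\mapsto k$, $\mm\mapsto\mw=\tfrac1k\sum_i\vx_iy_i$, $\mCm\mapsto\mCw=\tfrac1k\sum_i\vx_i\vx_i^\top$ (the $\vx_i$ now serving as fixed design vectors), producing $\tvw_m,\tnw^2$ as claimed plus the $m$-dependent scalar $c^\vw_m$ and an $m$-independent remainder.

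Finally, collecting everything gives $P(\vm,\vw\mid\Skx)\propto\sum_m(\pi_m c^\vm_m c^\vw_m)\,\mathcal{N}(\vm\mid\tvm_m,\tnm^2\mI)\,\mathcal{N}(\vw\mid\tvw_m,\tnw^2\mI)$; since each Gaussian factor already integrates to $1$, normalization forces $\tpi_m=\pi_m C_1 c^\vm_m c^\vw_m$ with $C_1=\big(\sum_{m'}\pi_{m'}c^\vm_{m'}c^\vw_{m'}\big)^{-1}$, which is the claimed form. I expect the main obstacle to be purely the bookkeeping in the completing-the-square step: checking that every surviving factor that is \emph{not} of the form $c^\vm_m$ or $c^\vw_m$ is independent of $m$ (so it can be absorbed into $C_1$), and keeping the $k{+}1$-versus-$k$ counts straight. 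The fact that the $\vx_i$ appear in both brackets — as noisy observations of $\vm$ in one and as covariates for $\vw$ in the other — is harmless, since they are conditioned on throughout.
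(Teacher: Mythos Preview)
Your proposal is correct and follows essentially the same route as the paper's proof: Bayes' rule, the factorization of the likelihood into a $\vm$-only block and a $\vw$-only block, completing the square in each block to read off $(\tvm_m,\tnm^2)$ and $(\tvw_m,\tnw^2)$ together with the $m$-dependent scalars $c^\vm_m,c^\vw_m$, and then absorbing all $m$-independent leftovers into the normalizing constant. Your emphasis on checking that the surviving scalars are $m$-independent and on the $k{+}1$-versus-$k$ bookkeeping matches exactly the care the paper takes in its appendix derivation.
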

\begin{remark}
Gaussian mixture is known to be a conjugate prior to the Gaussian likelihood. 
The outlined conjugate distributions in this lemma extend the Gaussian mixture conjugate distributions by substituting the Gaussian likelihood with the ``noisy linear regression'' likelihood in Remark~\ref{remark:likelihood}.
\end{remark}
Lemma~\ref{lemma:posterior} states that the task posterior remains a Gaussian mixture, with its mixture components shifted and re-weighted from the task prior.
Therefore, understanding the impact of in-context examples on the posterior requires understanding how in-context examples affect the two factors:
\begin{itemize}[topsep=0.1em, partopsep=0em, leftmargin=*]
\setlength\itemsep{0.1em} 
\item \textbf{\CSfull (\CS).} The component center is shifted from $(\vm_m,\vw_m)$ to $(\tvm_m,\tvw_m)$.
\item \textbf{\CRfull (\CR).} The component weight is re-weighted from $\pi$ to $\tpi$.
\end{itemize}

\begin{remark}
    The term ``component'' comes from the literature on Gaussian mixtures.
    It serves as an alternative to ``Task Group'' as shown in Fig.~\ref{fig:datacompare}.
    The terminology ``Component Shifting'' and ``Component Re-weighting'' can be viewed as ``Task Group Shifting'' and ``Task Group Re-weighting''.
    We will abbreviate ``mixture component center'' to simply ``center'' when there is no ambiguity.
\end{remark}
Leveraging Assumption~\ref{asu:source}, we collected mathematical analyses of \CS and \CR in Appendix~\ref{sec:detailedCSCR}.
The analysis explores the impacts of pretraining task noises and the number of in-context examples on $\tvm_m$, $\tvw_m$, and $\tpi_m$, and examines the convergence of $\tvm_m$, $\tvw_m$, and $\tpi_m$, as $k$ approaches infinity.

\subsection{Closed-form Expression of ICL Prediction}
\label{subsec:prediction}
With Assumption~\ref{asu:assumption} and Lemma~\ref{lemma:posterior}, we have the following corollary for the prediction $\sF(\Skx)$:
\begin{corollary}
\label{corollary:prediction}
Let $\tvw = \sum_{m=1}^M \tpi_m \tvw_m$.
With pretraining data generative model~\ref{asu:setting} and Assumption~\ref{asu:assumption}, if the pretrained model $\sF$ minimizes the pretraining risk, then the prediction on any sequence $\Skx$ by $\sF$ is as follows:
$\sF(\Skx) 
= \Big\langle \vx_{k+1},\sum_{m=1}^M \tpi_m \tvw_m \Big\rangle
= \langle \vx_{k+1},\tvw \rangle.$
\end{corollary}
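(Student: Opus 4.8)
The plan is to combine the tower property of conditional expectation (already established in Eq.~\eqref{equation:connection}) with the closed-form posterior from Lemma~\ref{lemma:posterior}. Recall that Eq.~\eqref{equation:connection} gives
\[
\sF(\Skx) = \mathop{\mathbb{E}}_{\D_{\vx,y}}\!\left[\mathop{\mathbb{E}}_{y_{k+1}}\!\left[y_{k+1}\mid \D_{\vx,y},\vx_{k+1}\right]\,\middle|\,\Skx\right].
\]
First I would evaluate the inner expectation: under Assumption~\ref{asu:assumption3}, conditioned on the task $(\vm,\vw)$ and the test input $\vx_{k+1}$, the label satisfies $y_{k+1}\mid\vx_{k+1},\vw\sim\mathcal{N}(\vw^\top\vx_{k+1},\ny^2)$, so $\mathop{\mathbb{E}}[y_{k+1}\mid\D_{\vx,y},\vx_{k+1}] = \langle\vx_{k+1},\vw\rangle$. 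Note this depends on the task only through $\vw$, not $\vm$.

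Next I would take the outer expectation over the task posterior. Since the inner quantity is $\langle\vx_{k+1},\vw\rangle$, which is linear in $\vw$, and $\vx_{k+1}$ is fixed given $\Skx$, linearity of expectation yields
\[
\sF(\Skx) = \Big\langle \vx_{k+1},\ \mathop{\mathbb{E}}[\vw\mid\Skx]\Big\rangle.
\]
It then remains to compute $\mathop{\mathbb{E}}[\vw\mid\Skx]$ using the explicit posterior in Lemma~\ref{lemma:posterior}. The posterior over $(\vm,\vw)$ is the Gaussian mixture $\sum_{m=1}^M\tpi_m\,\mathcal{N}(\vm\mid\tvm_m,\tnm^2\mI)\cdot\mathcal{N}(\vw\mid\tvw_m,\tnw^2\mI)$; marginalizing out $\vm$ leaves the mixture $\sum_m\tpi_m\,\mathcal{N}(\vw\mid\tvw_m,\tnw^2\mI)$, whose mean is $\sum_{m=1}^M\tpi_m\tvw_m =: \tvw$ because the mean of a mixture is the $\tpi$-weighted average of the component means. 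Substituting back gives $\sF(\Skx) = \langle\vx_{k+1},\tvw\rangle$, as claimed.

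I do not anticipate a serious obstacle here — the corollary is essentially a direct corollary of Lemma~\ref{lemma:posterior} plus the MMSE characterization. The only point requiring a little care is the interchange of expectation and inner product (justified by finiteness of the posterior mean, which holds since it is a finite mixture of Gaussians with finite means) and being explicit that the test input $\vx_{k+1}$ is part of the conditioning $\Skx$ and hence treated as a constant when taking the posterior expectation over $\vw$. I would state these two points briefly and then the computation is immediate.
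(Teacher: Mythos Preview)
Your proposal is correct and follows essentially the same route as the paper's own proof: apply Eq.~\eqref{equation:connection} together with Assumption~\ref{asu:assumption3} to reduce the inner expectation to $\langle \vx_{k+1},\vw\rangle$, then use Lemma~\ref{lemma:posterior} and linearity of expectation/inner product to obtain $\langle \vx_{k+1},\sum_m\tpi_m\tvw_m\rangle$. The extra care you take (finiteness of the posterior mean, $\vx_{k+1}$ being part of the conditioning) is sound but not needed beyond what the paper states.
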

\begin{proof}
Apply Assumption~\ref{asu:setting} to Eq.~\ref{equation:connection},
$\sF(\Skx) = \mathop{\mathbb{E}}_{(\vm,\vw)\sim \Dpr} [\langle \vx_{k+1},\vw \rangle \vert \Skx]$. 
Using Lemma~\ref{lemma:posterior}, this reduces to
$\sum_{m=1}^M \tpi_m \displaystyle\mathop{\mathbb{E}}_{(\vm,\vw)\sim \tT_m} [\langle \vx_{k+1},\vw \rangle]$.
Due to the linearity of expectation and inner product, the prediction can be simplified as
$\langle \vx_{k+1},\sum_{m=1}^M \tpi_m\tvw_m \rangle = \langle \vx_{k+1},\tvw \rangle$.
\end{proof}
Thus, the prediction is a convex combination of predictions by the centers of those shifted and re-weighted mixture components in the posterior.
We are interested in how $\pi_m$ and $\vw_m$ change to $\tpi_m$ and $\tvw_m$ with increasing $k$ and how the pretraining prior distribution properties affect these changes.

\subsection{Prior Task Noises, \CS, \CR, and ICL Prediction}
\label{subsec:twomodes}
We numerically compute how $\tpi_m$, $\tvw_m$, and the prediction $\sF(\Skx)$ evolve as $k$ increases under different prior task noise conditions.
The numerical computation is based on the tetrahedron setting with four prior mixture components as illustrated in Fig.~\ref{fig:3d:4centers:half}. See Appendix~\ref{sec:3d} for details of the tetrahedron setting.
Fig.~\ref{fig:subprediction} shows the computational results.
The first row shows the \CS effect, demonstrating the impact of increasing $k$ on $\tvw_m$.
The second row shows the \CR effect, illustrating the impact of increasing $k$ on $\tpi_m$.
The third and fourth rows depict how increasing $k$ influences the risk of learning the function $\vwt$.
We observe that with low task noises and a small $k$ value, the \CR effect initially prevails, significantly boosting the mixture weight of component $1$ over others.
Then, as $k$ increases further, the \CS effect aligns all component centers with $(\vmt,\vwt)$.

\subsection{Dual Operating Modes}
\label{subsec:twooperating}
The \textbf{\emph{``task retrieval''}} mode describes a scenario where the impact of component re-weighting surpasses that of component shifting, leading to the prediction that is primarily influenced by the interplay between pretraining priors and in-context examples.
An illustration of this is shown in the first column of Fig.~\ref{fig:subprediction}, where the re-weighting of $\tpi_m$ is more pronounced than the shifting of $\tvw_m$, indicating that \CR plays a pivotal role in altering the prediction. In contrast, the \textbf{\emph{``task learning''}} mode refers to situations where component shifting dominates over component re-weighting, resulting in the prediction almost depending on in-context examples and neglecting the pretraining priors.

\section{Early Ascent}
\label{sec:theory}
We now explain the early ascent phenomenon by analyzing a finegrained risk bound of ICL.
(See Appendix~\ref{subsec:coarse} Theorem~\ref{the:generallearning} for the coarser bound.)

\subsection{Finegrained Upper Bound}
\label{subsec:TaskLearning}
The finegrained upper bound for ICL risk is shown below:
\begin{theorem}[Finegrained Upper Bound for ICL Risk]
\label{the:finegrainedlearning}
Consider a \sp attaining the optimal pretraining risk.
As $k\rightarrow\infty$, ICL risk is upper bounded by:
\begin{align}
    \E[\mathcal{L}_k^*]
    <&
    \sum_{m=1}^M \|\vw_m-\vwt\|^2 \E_{\Skx}[\tpi_m \|\vx_{k+1}\|^2 \lambda_1(\mA)^2],
\end{align}
where $\mathcal{L}_k^*=(\F(\Skx)-y_{k+1}^*)^2=(\F(\Skx)-\langle \vx_{k+1},\vwt \rangle)^2$, $\|\vw_m-\vwt\|$ is the distance between the in-context function $\vwt$ and the function $\vw_m$ of center $m$, $\tpi_m$ is the posterior mixture weight, and $\mA = (\mI+\dw\sum_{i=1}^k\vx_i\vx_i^\top)^{-1}$.
See Appendix~\ref{proof:learning} and Eq.~\ref{equation:fine-grained} for proof details.
In Appendix~\ref{app:lowrank}, we further refine the bound for cases when in-context $\vx_i$ only spans in a subspace of $\mathbb{R}^d$, resulting in $\lambda_1(\mA)=1$ constantly.
\end{theorem}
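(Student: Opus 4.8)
The plan is to collapse the ICL risk into a quadratic form in $\tvw-\vwt$ using the closed-form prediction, and then to exploit a clean identity, available precisely because the in-context labels are noise-free, that writes $\tvw_m-\vwt$ as $\mA$ applied to $\vw_m-\vwt$. First I would invoke Corollary~\ref{corollary:prediction}: since $\sF$ attains the optimal pretraining risk, $\sF(\Skx)=\langle\vx_{k+1},\tvw\rangle$ with $\tvw=\sum_{m=1}^M\tpi_m\tvw_m$, and by Assumption~\ref{asu:source1} the target is $y_{k+1}^*=\langle\vx_{k+1},\vwt\rangle$, so $\mathcal{L}_k^*=\langle\vx_{k+1},\tvw-\vwt\rangle^2$. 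Using $\sum_m\tpi_m=1$ to write $\tvw-\vwt=\sum_m\tpi_m(\tvw_m-\vwt)$, I would then apply Cauchy–Schwarz to peel off $\|\vx_{k+1}\|^2$ and Jensen's inequality (convexity of $\|\cdot\|^2$ with $\tpi$ a probability vector) to push the square inside the sum, obtaining $\mathcal{L}_k^*\le\|\vx_{k+1}\|^2\sum_{m=1}^M\tpi_m\|\tvw_m-\vwt\|^2$.

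The heart of the argument is the identity $\tvw_m-\vwt=\mA(\vw_m-\vwt)$. To establish it I would start from $\tvw_m=(\mI+k\dw\mCw)^{-1}(\vw_m+k\dw\mw)$ in Lemma~\ref{lemma:posterior}. Since $\mCw=\tfrac1k\sum_{i=1}^k\vx_i\vx_i^\top$, we get $k\dw\mCw=\dw\sum_{i=1}^k\vx_i\vx_i^\top=\mA^{-1}-\mI$; and since the in-context labels are noise-free, $y_i=\langle\vx_i,\vwt\rangle$, so $k\dw\mw=\dw\big(\sum_{i=1}^k\vx_i\vx_i^\top\big)\vwt=(\mA^{-1}-\mI)\vwt$. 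Substituting gives $\tvw_m=\mA\big(\vw_m+(\mA^{-1}-\mI)\vwt\big)=\vwt+\mA(\vw_m-\vwt)$. Because $\mA$ is symmetric positive definite with largest eigenvalue $\lambda_1(\mA)$, this yields $\|\tvw_m-\vwt\|\le\lambda_1(\mA)\|\vw_m-\vwt\|$, and in particular $\lambda_1(\mA)\le 1$ always (since $\dw\sum_i\vx_i\vx_i^\top\succeq \mzero$), with equality exactly when the $\vx_i$ fail to span $\mathbb{R}^d$ — this is the hook for the low-rank refinement of Appendix~\ref{app:lowrank}.

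Combining the two displays gives $\mathcal{L}_k^*\le\|\vx_{k+1}\|^2\sum_{m=1}^M\tpi_m\,\lambda_1(\mA)^2\,\|\vw_m-\vwt\|^2$ pointwise; taking $\E_{\Skx}$ and pulling the deterministic factor $\|\vw_m-\vwt\|^2$ out of the expectation produces the claimed bound. The inequality is strict because Cauchy–Schwarz (and hence the chain) is strict almost surely under the continuous input distribution. I would also note that this bound in fact holds for every $k\ge1$; the ``$k\to\infty$'' framing in the statement is the regime of primary interest, since when the in-context covariance $\sum_{i=1}^k\vx_i\vx_i^\top$ grows in all directions we have $\lambda_1(\mA)\to0$ and the bound drives the risk to zero — the task-learning mode.

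I expect the only genuine obstacle to be getting the algebra of the identity right: in particular, correctly matching the $k$-versus-$(k{+}1)$ indexing in Lemma~\ref{lemma:posterior} (the $\vw$-block uses only the first $k$ labeled pairs, whereas the $\vm$-block involves $\vx_{k+1}$), and verifying that the normalizations in $\mCw,\mw$ combine so that $\dw\sum_{i=1}^k\vx_i\vx_i^\top$ is exactly $\mA^{-1}-\mI$. Once $\tvw_m-\vwt=\mA(\vw_m-\vwt)$ is in hand, the remainder is routine application of Cauchy–Schwarz, Jensen, and an operator-norm bound, exactly as recorded around Eq.~\ref{equation:fine-grained} in Appendix~\ref{proof:learning}.
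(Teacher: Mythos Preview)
Your proposal is correct and follows essentially the same approach as the paper: invoke Corollary~\ref{corollary:prediction}, use the identity $\tvw_m-\vwt=\mA(\vw_m-\vwt)$ (derived exactly as in Eq.~\ref{equation:vwshift}), and combine Jensen's inequality on the mixture with Cauchy--Schwarz and the operator-norm bound $\|\mA\vv\|\le\lambda_1(\mA)\|\vv\|$. The only cosmetic difference is ordering---the paper applies Jensen to the scalar $\big(\sum_m\tpi_m\langle\mA(\vw_m-\vwt),\vx_{k+1}\rangle\big)^2$ before Cauchy--Schwarz, whereas you apply Cauchy--Schwarz first and then Jensen to $\|\cdot\|^2$---but both routes land on the same bound at Eq.~\ref{equation:fine-grained}.
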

In-context examples affect the upper bound by affecting the two factors $\tpi_\beta$ and $\lambda_1(\mA)$, corresponding to CR and CS introduced in Sec.~\ref{subsec:posterior}.
When ignoring the CR effect and only considering CS, the finegrained upper bound degrades to the general coarse bound in Appendix~\ref{subsec:coarse} Theorem~\ref{the:generallearning}.

\begin{figure*}[th!]
    \centering
    \includegraphics[width=0.8\textwidth]{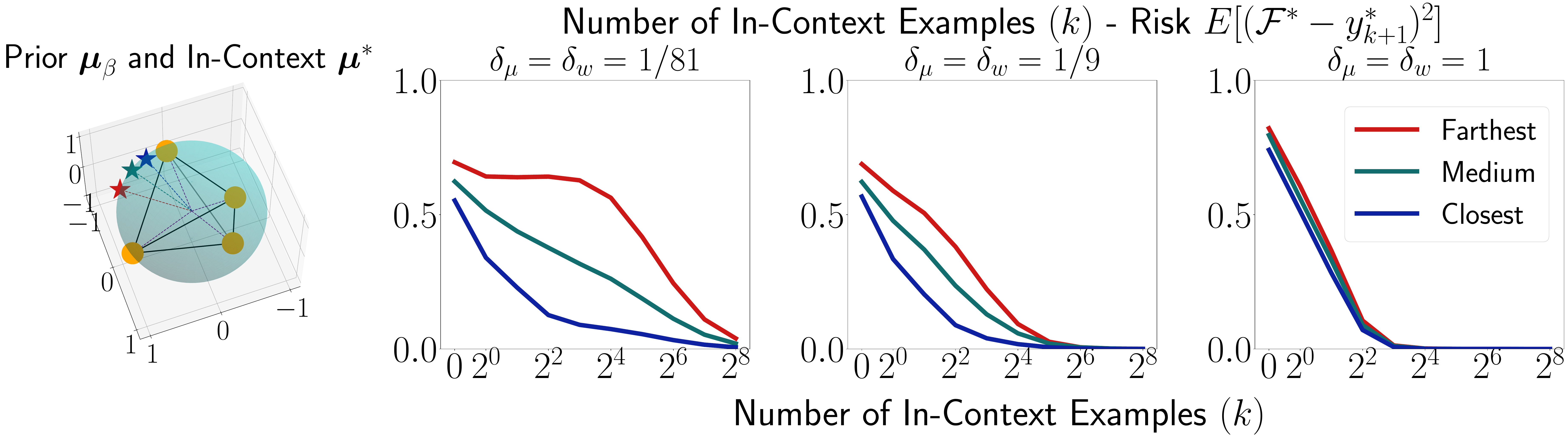}
    \caption{\textbf{Distance to the closest prior vs ICL risk.} 
    We compute ICL risks of three target tasks colored red (farthest), green, and blue (closest), under the tetrahedron setting, illustrated in the left-most figure.
    The red target task has the longest distance to the closest prior center, and the blue target task has the shortest distance to the closest prior center.
    We can observe that the target task is easier to learn when the distance to the closest prior is smaller.
    }
    \label{fig:Compare}
\end{figure*}
\subsection{The Effect of Dual Operating Modes on ICL Risk}
\label{subsubsec:compare}
We numerically compute ICL risk under varied settings to explore the effect of the dual operating modes on the risk in Fig.~\ref{fig:Compare}.
When pretraining task noises are low, \ie, $\dm$ and $\dw$ are small, the task retrieval mode happens with a small number of in-context examples, and the upper bound is affected by how $(\vmt,\vwt)$ is close to a prior center.
Specifically, the task prior boosts the learning process of ICL if the in-context task is close to a prior center, due to the task retrieval mode quickly retrieving the task of the nearest prior center.
\begin{figure*}[th!]
    \centering
    \subfigure[Risk and $\pi_m$ as $k$ increases under $d\in\{1,3,8\}$.]{
        \includegraphics[width=0.4\textwidth]{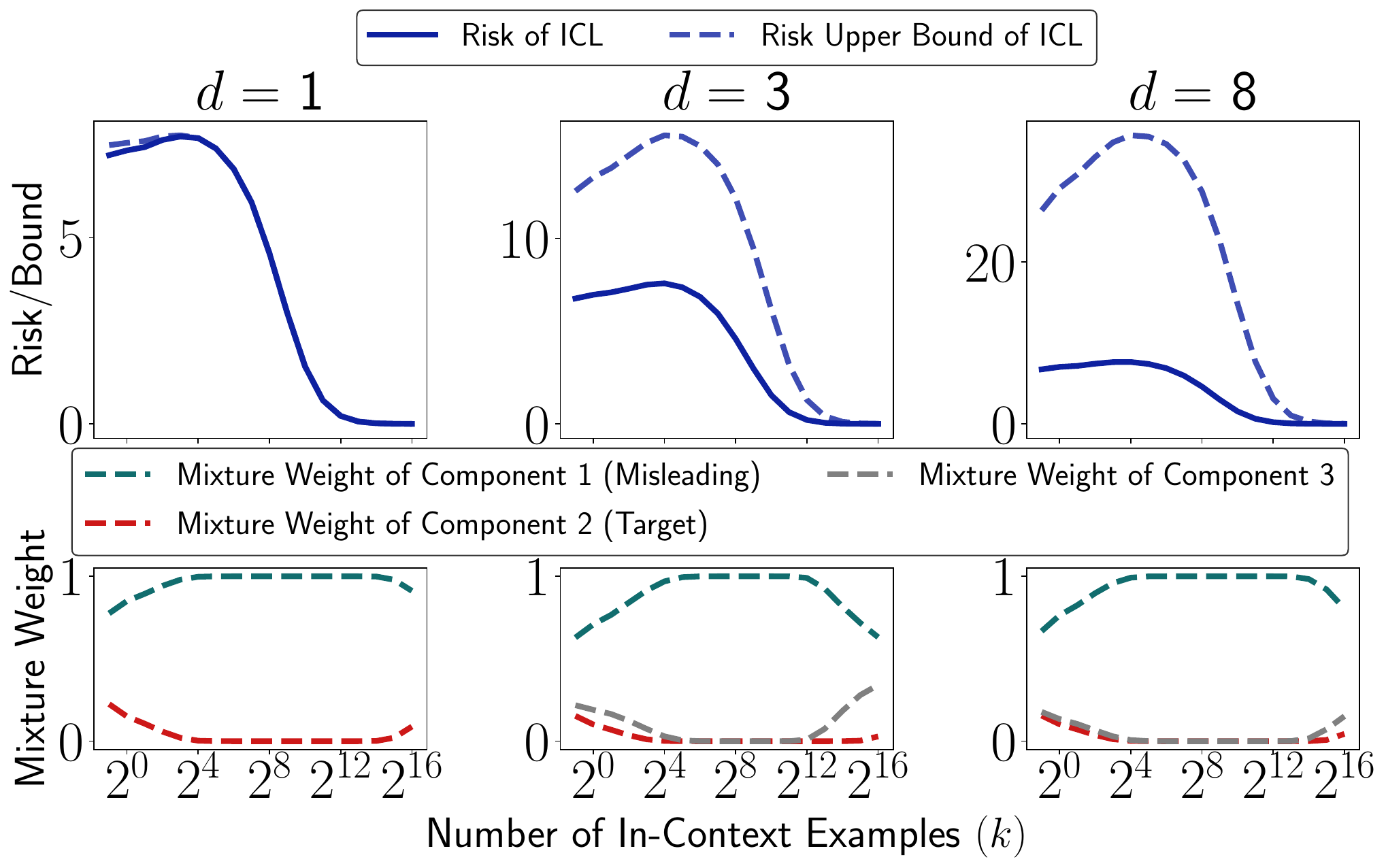}
        \label{fig:UB}}
    \hspace{0.5em}
    \subfigure[Expectation of $\tilde{\vw}$ as $k$ increases under $d\in\{2,3\}$.]{
        \includegraphics[width=0.5\textwidth]{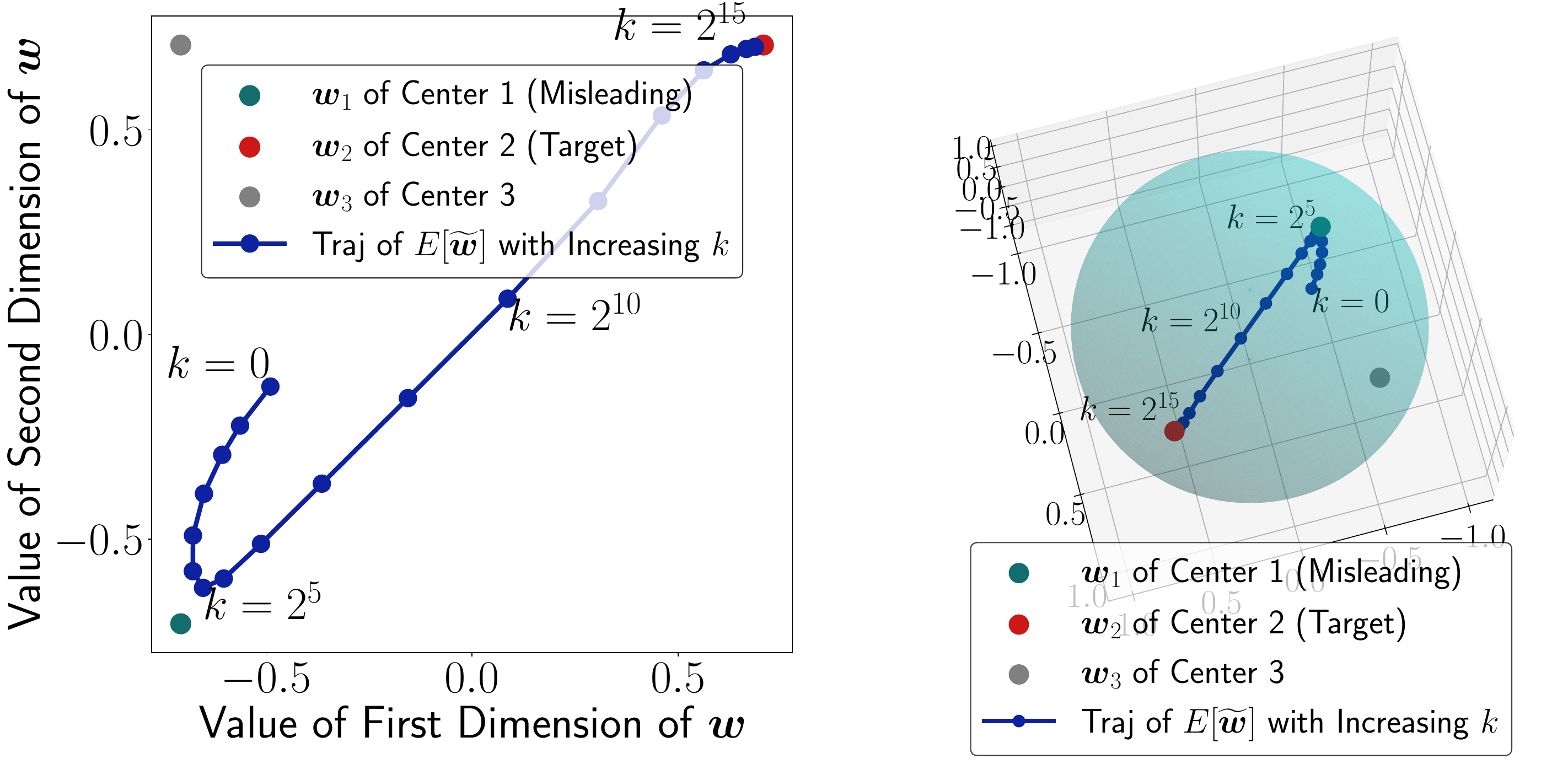}
        \label{fig:traj}}
    \caption{\textbf{The early ascent phenomenon.} 
    Fig.~\ref{fig:UB} and Fig.~\ref{fig:traj} show that the task retrieval mode is dominant up to $k=32$, and component 1's mixture weight increases ($\E[\tilde{\vw}]$ approaches $\vw_1$).
    Since this component is farther than the other one, the risk starts increasing. 
    At larger $k$ values, the risk starts decreasing ($\E[\tilde{\vw}]$ approaches $\vw_2$) via task learning. See Appendix~\ref{subsec:early} for setting details. We further examine the early ascent phenomenon under linear regression with varied levels of label noises in Appendix~\ref{app:expnoisy}, and under non-linear regression and discrete token prediction in Appendix~\ref{app:earlyascent:nonlinear&discrete}.}
    \label{fig:Ushape}
\end{figure*}

\subsection{Early Ascent with Biased $x$ Distribution}
\label{subsec:finegrained}
However, the task retrieval mode may not always benefit ICL.
We notice a weird phenomenon is observed by~\citet{BrownMRSKDNSSAA20} and~\citet{xie2021explanation}.
As the number of in-context samples increased, the performance of ICL first decreased and then increased.
\citet{BrownMRSKDNSSAA20} reports that GPT-3 on LAMBADA shows a lower one-shot accuracy (72.5\%) than zero-shot accuracy (76.2\%), but the few-shot accuracy (86.4\%) is higher than the zero-shot accuracy.
\citet{xie2021explanation} also replicated this phenomenon with their synthetic dataset.
\citet{xie2021explanation} explains this by ``the few-shot setting introduces the distracting prompt structure, which can initially lower accuracy.''

To obtain some insights, we present a simple scenario where $\vx$ misleads the prediction by an LLM. 
Consider the following one-shot prompt for English-to-Korean translation: ``What is the color of apple? \begin{CJK}{UTF8}{mj}사과의 색깔은 무엇인가?\end{CJK}\footnote{``What is the color of apple?'' in Korean.} What is the color of banana?''
The correct answer should be ``\begin{CJK}{UTF8}{mj}바나나의 색깔은  무엇인가?\end{CJK}''\footnote{``What is the color of banana?'' in Korean.}
However, GPT-3.5 generates ``\begin{CJK}{UTF8}{mj}바나나의 색깔은 노란색 입니다\end{CJK},'' which means ``The color of bananas is yellow.''
This shows that pretrained LLMs could retrieve an incorrect skill (question answering in this example) by observing misleading input ($\vx$). 

Based on our analysis, we further show that the early ascent phenomenon provably occurs under a certain assumption Appendix~\ref{app:earlyascent:example}. 
We also reproduce early ascent in Fig.~\ref{fig:UB}, where the upper bound and the risk initially increase due to the misleading task (of center 1) is retrieved first.
Fig.~\ref{fig:traj} further demonstrates the relative locations of the retrieved functions to functions of prior centers.
Finally, we give the formal theorem on the early ascent phenomenon:
\begin{theorem}[Early Ascent]
\label{the:earlyascent}
Assume $\alpha = \argmin\limits_m 
\frac{
    \|\vm_m-\vmt\|^2
}{2\nx^2}
+
\frac{
    \|(\vw_m-\vwt)^\top\vmt\|^2 + d\tau_x^2\|\vw_m-\vwt\|^2
}{2\ny^2}
$ is the most misleading task and the task $\alpha$ satisfies $\E_{\vx_1}\left[\left(\F^*(\vx_1)-\langle \vwt,\vx_1 \rangle\right)^2\right] < \E_{\vx_1}\left[\langle\vx_1, \vw_\alpha-\vwt \rangle^2 \right]$.
Then, when $\dm$ and $\dw$ are small enough, $\exists k\geq1 \text{ s.t.}$:
\begin{align}
&    
    \E_{\vx_1}\left[\left(\F^*(\vx_1)-\langle \vwt,\vx_1 \rangle\right)^2\right]
\\&<
    \E_{\Skx}\left[\left(\F^*(\Skx)-\langle \vwt,\vx_{k+1} \rangle\right)^2\right],
\end{align}
where $\E_{\vx_1}\left[\langle\vx_1, \vw_\alpha-\vwt \rangle^2 \right]$ equals to the risk when the prediction fully depends on the misleading task function $\vw_\alpha$ of prior center $\alpha$.
See Appendix~\ref{app:earlyascent:theory} for proof details.
\end{theorem}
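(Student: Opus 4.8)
\textbf{Proof proposal for Theorem~\ref{the:earlyascent} (Early Ascent).}

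The plan is to compare the zero-shot risk (at $k=0$, which is really the one-example prediction $\F^*(\vx_1)$ since no labeled context is given) against the risk at some carefully chosen $k\ge 1$, and show that with sufficiently small pretraining noises $\dm,\dw$, the intermediate risk strictly exceeds the zero-shot risk. The mechanism is the one already exhibited numerically in Fig.~\ref{fig:UB}: when $\dm$ and $\dw$ are small, the \CS effect is negligible for moderate $k$ (the matrices $(\mI+k\dw\mCw)^{-1}$ and $(\mI+(k+1)\dm\mCm)^{-1}$ stay close to $\mI$, so $\tvw_m\approx\vw_m$), while the \CR effect is not: the ratios $c^\vm_m c^\vw_m$ concentrate mass on the component $\alpha$ minimizing the exponent displayed in the theorem statement. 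So I would first make this ``small-noise limit'' quantitative.

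First I would take the limit $\dm,\dw\to 0$ with $k$ fixed. In this regime, using the closed forms from Lemma~\ref{lemma:posterior}, I would show (i) $\tvw_m\to\vw_m$ and $\tvm_m\to\vm_m$ uniformly over the (almost-sure) realizations of $\Skx$, since the correction matrices collapse to identity; and (ii) the posterior weights $\tpi_m$ converge to a limit that, after taking logs, is governed by $\exp\!\big(\!-k\cdot\text{(something)}\big)$ times $\pi_m$, where the ``something'' is exactly $\frac{\|\vm_m-\vmt\|^2}{2\nx^2}+\frac{\|(\vw_m-\vwt)^\top\vmt\|^2 + d\tau_x^2\|\vw_m-\vwt\|^2}{2\ny^2}$ in expectation over $\vx_i\sim\mathcal N(\vmt,\tau_x^2\mI)$ — this is why $\alpha$ is defined as the argmin of that quantity. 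Concretely I would expand $\log c^\vm_m+\log c^\vw_m$, substitute $\dm=\nm^2/\nx^2$, $\dw=\nw^2/\ny^2$, and $\mm,\mCw,\mw$ from their definitions, and use the law of large numbers / concentration on the in-context samples to identify the dominant exponential rate. Once the rate is identified, for any fixed $k\ge 1$ the weight $\tpi_\alpha\to 1$ and $\tpi_m\to 0$ for $m\ne\alpha$ as $\dm,\dw\to 0$ (after taking expectation over $\Skx$, or with high probability), because $\alpha$ uniquely minimizes the rate — here I would invoke Assumption~\ref{asu:furtherasu3} to control the prefactors $\pi_m$ so that no other component can compete.

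Next, combining (i) and (ii): as $\dm,\dw\to 0$, the prediction $\sF(\Skx)=\langle\vx_{k+1},\sum_m\tpi_m\tvw_m\rangle\to\langle\vx_{k+1},\vw_\alpha\rangle$, hence
\begin{align*}
\E_{\Skx}\!\left[\left(\F^*(\Skx)-\langle\vwt,\vx_{k+1}\rangle\right)^2\right]
\;\longrightarrow\;
\E_{\vx_{k+1}}\!\left[\langle\vx_{k+1},\vw_\alpha-\vwt\rangle^2\right]
=\E_{\vx_1}\!\left[\langle\vx_1,\vw_\alpha-\vwt\rangle^2\right],
\end{align*}
the last equality because $\vx_{k+1}$ and $\vx_1$ are identically distributed $\mathcal N(\vmt,\tau_x^2\mI)$. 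Meanwhile the zero-shot side $\E_{\vx_1}[(\F^*(\vx_1)-\langle\vwt,\vx_1\rangle)^2]$ does not depend on $\dm,\dw$ in any way that helps — actually it does depend on them, but it is bounded above by its own small-noise limit, and the hypothesis of the theorem is precisely that this zero-shot quantity is $<\E_{\vx_1}[\langle\vx_1,\vw_\alpha-\vwt\rangle^2]$. So there is a strict gap in the limit, and by continuity the strict inequality $\E_{\vx_1}[(\F^*(\vx_1)-\langle\vwt,\vx_1\rangle)^2]<\E_{\Skx}[(\F^*(\Skx)-\langle\vwt,\vx_{k+1}\rangle)^2]$ persists for all $\dm,\dw$ below some threshold. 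Pick any such $k\ge 1$ (e.g. $k=1$) and we are done.

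The main obstacle I anticipate is step (ii): making rigorous that the posterior weight on the misleading component $\alpha$ genuinely tends to $1$ in expectation over $\Skx$ as $\dm,\dw\to 0$ for a fixed $k$. The subtlety is that $c^\vm_m,c^\vw_m$ are random (depending on $\vx_1,\dots,\vx_{k+1}$), the exponents scale like $1/\nx^2$ and $1/\nw^2$ which blow up, and one must show the randomness does not spoil the concentration onto $\alpha$ — in particular handling the $\|\cdot\|^2_{(\mI+k\dw\mCw)^{-1}}$ weighted norms and the possibility that for some sample realizations a different component is momentarily favored. I would control this by showing the exponent of $\tpi_m/\tpi_\alpha$ is, with overwhelming probability, at most $-c\cdot(\text{rate}_m-\text{rate}_\alpha)/\max(\nx^2,\ny^2)$ for a universal $c>0$, so the ratio is exponentially small and uniformly integrable, letting us pass the limit inside $\E_{\Skx}$. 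A secondary, more routine obstacle is verifying the uniform closeness $\tvw_m\approx\vw_m$ together with a dominated-convergence argument so that the convergence of the integrand translates to convergence of the risk integral; I expect the norm bounds $\|\vm_m\|=\|\vw_m\|=\|\vmt\|=\|\vwt\|=1$ from Assumptions~\ref{asu:assumption4} and~\ref{asu:source2} plus Gaussian moment bounds on $\vx_i$ to suffice there.
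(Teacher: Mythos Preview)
Your step (i) is right, but step (ii) contains a genuine error that breaks the argument. You claim that for any fixed $k\ge 1$, as $\dm,\dw\to 0$ the posterior weight $\tpi_\alpha\to 1$. This is false: the small-noise limit of $\tpi_m/\tpi_\alpha$ is a \emph{finite, nondegenerate random} quantity, not $0$. The reason is a cancellation you missed. In
\[
\advw(\alpha,\beta)=\frac{\|\vw_\beta-\vwt\|^2_{\mI-(\mI+k\dw\mCw)^{-1}}-\|\vw_\alpha-\vwt\|^2_{\mI-(\mI+k\dw\mCw)^{-1}}}{2\nw^2},
\]
the matrix $\mI-(\mI+k\dw\mCw)^{-1}\approx k\dw\mCw$ carries a factor $\dw=\nw^2/\ny^2$, which exactly kills the $1/\nw^2$ in the denominator; the limit is $\bigl(\|\vw_\beta-\vwt\|^2_{k\mCw}-\|\vw_\alpha-\vwt\|^2_{k\mCw}\bigr)/(2\ny^2)$, finite. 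The same cancellation happens in $\advm$ via $\dm=\nm^2/\nx^2$; see Eqs.~\eqref{converge:dm2advm}--\eqref{advconverge}. So your intuition that ``the exponents scale like $1/\nw^2$ which blow up'' is precisely where the argument fails, and your proposed fix in the obstacle paragraph cannot be carried out. In particular, taking $k=1$ does not work: with one labeled example and $\sigma\to 0$ the posterior weights are just softmax of a single random exponent, nowhere near a point mass on $\alpha$.

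What the paper does instead is take an iterated limit in the other order: first send $(\sigma_\mu,\sigma_w)\to(0,0)$ for fixed $k$ (giving the finite random exponents in Eq.~\eqref{advconverge}, \emph{not} concentration), and \emph{then} send $k\to\infty$. The log-ratio $\advm(\alpha,\beta)+\advw(\alpha,\beta)$ in that inner limit is a sum of $k$ i.i.d.\ terms whose mean is the quantity inside the $\argmin$ defining $\alpha$; the law of large numbers then forces $\tpi_\alpha\to 1$ and the prediction $\sum_m\tpi_m\vw_m\to\vw_\alpha$ as $k\to\infty$. This yields
\[
\lim_{k\to\infty}\lim_{(\sigma_\mu,\sigma_w)\to(0,0)}\E_{\Skx}\bigl[(\F^*(\Skx)-\langle\vwt,\vx_{k+1}\rangle)^2\bigr]=\E_{\vx_1}\bigl[\langle\vx_1,\vw_\alpha-\vwt\rangle^2\bigr],
\]
and the strict gap in the hypothesis then furnishes the required $k$ for all sufficiently small $\dm,\dw$. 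Your skeleton is close in spirit, but the concentration on $\alpha$ comes from large $k$, not from small task noise alone; you need both limits, and in the order the paper takes them.
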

Theorem~\ref{the:earlyascent} shows that, if the misleading task $\alpha$ has a higher risk than the zero-shot risk, then when $\dm$ and $\dw$ are small enough, the early ascent phenomenon happens.

\section{Bounded Efficacy of Biased-Label ICL}
\label{subsec:TaskRetrieval}
We further predict the bounded efficacy phenomenon by examining the bound of ICL with biased labels.
The assumption for ICL with biased labels is described as follows:
\begin{asu}[ICL with Biased Labels]
\label{asu:misaligned}
The function $\vwt$ of ICL with biased labels is different from the target function $\vw_\alpha$, \ie, $\vwt\neq\vw_\alpha$ where $\vw_\alpha$ is a function of a pretraining task prior center.
The in-context task is closer to the prior center $\alpha$  compared to all the other prior centers $\beta\neq\alpha$:\\
$\forall \beta \neq \alpha, \|\vm_\beta-\vmt\|^2 - \|\vm_\alpha-\vmt\|^2 \geq d^2_\vm, \|\vw_\beta-\vwt\|^2 - \|\vw_\alpha-\vwt\|^2 \geq d^2_\vw$, and $\nxd^2 \|\vw_\beta-\vwt\|^2 - (1+\nxd^2)\|\vw_\alpha-\vwt\|^2 \geq \nxd^2 u^2_\vw$.
\end{asu}
Assumption~\ref{asu:misaligned} depicts that to retrieve $\vw_\alpha$ associated with the prior center $\alpha$, the in-context task is selected based on its proximity to center $\alpha$, ensuring it is closer to center $\alpha$.

\subsection{Upper Bound for ICL Risk with Biased Labels}
\label{subsubsec:retrieval}
The following theorem shows an upper bound for ICL risk with biased labels to retrieve a task:
\begin{theorem}[Upper Bound for ICL Risk with Biased Labels]
\label{the:retrieval}
Consider a \sp attaining the optimal pretraining risk.
As $k\rightarrow\infty$, ICL risk with biased labels is upper bounded by:
\begin{align}
    \E_{\Sk}[\mathcal{L}_k^\alpha]
&<
    \|\vw_\alpha-\vwt\|^2 (1+d\nxd^2)
\\&~~
    +
    \frac{C_1}{\redb{k}\blueb{\dw}} 
    \exp\left(C_2 \redb{k}^{\frac{\delta}{2}-\frac{3}{4}}
    \right)
    +O(\redb{k}^{-2})
\end{align}
where $\mathcal{L}_k^\alpha=(\F(\Skx)-y_{k+1}^\alpha)^2=(\F(\Skx)-\langle \vx_{k+1},\vw_\alpha \rangle)^2$.
When $\dm$ and $\dw$ are sufficiently small, exists a particular interval for $k$ s.t.:
\begin{align}
&    \E_{\Sk}[\mathcal{L}_k^\alpha]
<
\|\vw_\alpha-\vwt\|^2 (1+d\nxd^2)\min\{1,4\redb{k}^2\blueb{\dw}^2(1+\nxd^2)^2\}
\\&\quad
    +
    C_3
    \exp\left(-\redb{k}\left(\frac{
        d_\vm^2
    }{
        8\nx^2
    }+\frac{
        u_\vw^2\nxd^2
    }{
        8\ny^2
    }\right)\right)
    +
    C_4\exp\left(-\frac{\redb{k}^{\frac{1}{2}}}{8}\right).
\end{align}
As $k$ increases, the second and third terms dominate and exponential decay when $k$ is small,
and the first term dominates and increases when $k$ is large.
$C_1, C_2, C_3$, and $C_4$ are constants depending on the prior setting, $\nxd$, and $(\vmut,\vwt)$.
See Appendix~\ref{app:retrieval} for proof details.
\end{theorem}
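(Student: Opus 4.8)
The plan is to start from the closed-form prediction $\sF(\Skx) = \langle \vx_{k+1}, \tvw\rangle$ with $\tvw = \sum_{m=1}^M \tpi_m \tvw_m$ (Corollary~\ref{corollary:prediction}) and the closed-form posterior quantities from Lemma~\ref{lemma:posterior}, and to decompose the biased-label risk $\E_{\Sk}[\mathcal{L}_k^\alpha] = \E_{\Sk}[\langle \vx_{k+1}, \tvw - \vw_\alpha\rangle^2]$ into a ``shifting'' contribution and a ``re-weighting'' contribution. Concretely, I would write $\tvw - \vw_\alpha = (\tvw_\alpha - \vw_\alpha) + \sum_{m}\tpi_m(\tvw_m - \tvw_\alpha) + \cdots$ so that the first term captures how much component $\alpha$'s own center has drifted under CS (which, for biased labels, moves $\tvw_\alpha$ toward $\vwt\neq\vw_\alpha$ and hence drives the eventual ascent), and the remaining terms capture the residual posterior mass on the ``wrong'' components $\beta\neq\alpha$, which is what CR exponentially suppresses.

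First I would control the re-weighting factors $\tpi_\beta/\tpi_\alpha$ for $\beta\neq\alpha$. Using the expressions for $\tpi_m = \pi_m C_1 c^\vm_m c^\vw_m$, I would take logs and expand $\|\vm_m + (k{+}1)\dm\mm\|^2_{(\mI+(k{+}1)\dm\mCm)^{-1}}$ and the analogous $\vw$-term. Because under Assumption~\ref{asu:source1} the in-context $\vx_i$ concentrate around $\vmt$ and $y_i = \langle\vx_i,\vwt\rangle$, the empirical quantities $\mm \to \vmt$, $\mCw \to \vmt\vmt^\top + \nxd^2\mI$, $\mw\to(\vmt\vmt^\top+\nxd^2\mI)\vwt$ up to fluctuations of order $k^{-1/2}$ (this is where a Bernstein/sub-exponential concentration bound over the i.i.d. $\vx_i$ enters, producing the $k^{\delta/2 - 3/4}$-type correction exponent). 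Substituting, the leading term of $\log(\tpi_\beta/\tpi_\alpha)$ becomes $-k$ times exactly the gap quantity appearing in Assumption~\ref{asu:misaligned}, namely $\tfrac{1}{2\nx^2}(\|\vm_\beta-\vmt\|^2 - \|\vm_\alpha-\vmt\|^2) + \tfrac{1}{2\ny^2}(\nxd^2\|\vw_\beta-\vwt\|^2 - (1+\nxd^2)\|\vw_\alpha-\vwt\|^2) \geq \tfrac{d_\vm^2}{2\nx^2} + \tfrac{u_\vw^2\nxd^2}{2\ny^2}$, so $\sum_{\beta\neq\alpha}\tpi_\beta \lesssim C_3\exp(-k(\tfrac{d_\vm^2}{8\nx^2} + \tfrac{u_\vw^2\nxd^2}{8\ny^2}))$ after absorbing the concentration slack into the extra $C_4\exp(-k^{1/2}/8)$ term. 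The contribution of the wrong components to the risk is then bounded by this weight times a bounded factor ($\|\vw_m-\vwt\|$ and $\|\vx_{k+1}\|^2\lambda_1(\mA)^2$ are controlled as in Theorem~\ref{the:finegrainedlearning}).

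Next I would handle the component-$\alpha$ shifting term $\E_{\Sk}[\langle\vx_{k+1}, \tvw_\alpha - \vw_\alpha\rangle^2]$. Writing $\tvw_\alpha - \vw_\alpha = (\mI + k\dw\mCw)^{-1}(\vw_\alpha + k\dw\mw) - \vw_\alpha = (\mI+k\dw\mCw)^{-1}k\dw(\mw - \mCw\vw_\alpha)$, and using $\mw - \mCw\vw_\alpha \approx (\vmt\vmt^\top+\nxd^2\mI)(\vwt-\vw_\alpha)$, I would split into two regimes: for small $k$, the operator $(\mI+k\dw\mCw)^{-1}k\dw$ has norm $O(k\dw)$, giving the $\min\{1, 4k^2\dw^2(1+\nxd^2)^2\}$ prefactor on $\|\vw_\alpha-\vwt\|^2(1+d\nxd^2)$; for large $k$ it saturates to $\mCw^{-1}$ and the term converges to $\|\vw_\alpha-\vwt\|^2(1+d\nxd^2)$ (the $(1+d\nxd^2)$ coming from $\E\|\vx_{k+1}\|^2$ against the $\vmt\vmt^\top+\nxd^2\mI$ structure), plus an $O(k^{-1})$ correction from the $1/(k\dw)$ residual and an $O(k^{-2})$ tail. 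Assembling the three pieces gives both displayed bounds; the monotone structure (exponential decay of terms two and three, then growth of term one) then follows by comparing the $\exp(-ck)$ rates against the $k^2\dw^2$ growth when $\dw$ is small.

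The main obstacle I anticipate is the concentration step: making the heuristic replacements $\mm\to\vmt$, $\mCw\to\vmt\vmt^\top+\nxd^2\mI$, $\mw\to(\vmt\vmt^\top+\nxd^2\mI)\vwt$ fully rigorous \emph{inside} the exponentials $c^\vm_m, c^\vw_m$, since a fluctuation of size $k^{-1/2}$ in $\mCw$ gets multiplied by $k\dw$ and sits inside $\exp(\cdot)$, so one must carefully track that the net perturbation to $\log\tpi_m$ is only $O(k^{1/2}\dw)$ and bound the resulting multiplicative error by $\exp(C_2 k^{\delta/2-3/4})$ and $C_4\exp(-k^{1/2}/8)$ rather than something that blows up. Controlling $\lambda_1(\mA) = \lambda_1((\mI+\dw\sum\vx_i\vx_i^\top)^{-1})$ simultaneously — it behaves like $(1+k\dw\nxd^2)^{-1}$ on the bulk but like $1$ along degenerate directions — requires the same concentration machinery, and keeping all constants uniform in $k$ is the delicate bookkeeping that the appendix must carry out.
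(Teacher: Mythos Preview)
Your proposal is correct and follows essentially the same route as the paper: decompose the risk into a component-$\alpha$ shifting term (giving the $\|\vw_\alpha-\vwt\|^2(1+d\nxd^2)\min\{1,4k^2\dw^2(1+\nxd^2)^2\}$ contribution) and a residual-mass term on $\beta\neq\alpha$ (giving the exponential decay via the gap in Assumption~\ref{asu:misaligned}), with concentration of $\mm,\mCw,\mw$ and eigenvalues of $\mCw$ supplying the error terms. The paper's execution differs only in organization: rather than tracking fluctuations inside the exponentials directly, it conditions on an explicit ``good event'' $\textbf{C}$ (eigenvalue and norm concentration from Lemma~\ref{lemma:eigenvalue}, with $t=k^{-1/4}$ or $t=k^{\delta-1/2}$) so that on $\textbf{C}$ all bounds are deterministic, and the complementary event contributes the $C_4\exp(-k^{1/2}/8)$ term outright; it also uses Jensen $(\sum_m\tpi_m a_m)^2\le\sum_m\tpi_m a_m^2$ to split into per-component terms rather than your add-and-subtract $\tvw_\alpha$ decomposition, which is slightly cleaner bookkeeping but not a different idea.
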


\begin{table}[th!]
\centering
\resizebox{0.48\textwidth}{!}{
\begin{tabular}{lrrrrrr}
\toprule
$k$   & 0       & 1      & 2      & 4      & 8      & 16     \\ \midrule
$+$           & 75.0\%  & 36.2\% & \underline{\textbf{33.9\%}} & \underline{49.3\%} & \underline{79.3\%} & \underline{85.1\%} \\
Biased $+$      & 100.0\% & 98.3\% & 95.9\% & 60.5\% & 24.4\% & \textbf{16.8\%}\\ \bottomrule
\end{tabular}}
\caption{\textbf{Bounded efficacy in GPT-4.} Error rate measured with respect to ``addition ($+$)'' and ``biased $+$''.
The bounded efficacy phenomenon: the error rate goes down to $k=2$, but it increases afterward. Experiment details in Appendix~\ref{app:GPT4U}.}
\label{table:U}
\vspace{-0.1 cm}
\end{table}
\subsection{Bounded Efficacy of Biased-Label ICL in GPT-4}
\label{subsec:GPT4U}
This section further shows that the bounded efficacy phenomenon exists in GPT-4 in Table~\ref{table:U}.
With the task ``biased addition ($+$)'' as the in-context task corresponding to $\vwt$, as the number of in-context examples increases, ICL will first retrieve the skill ``addition ($+$)'' corresponding to $\vw_\alpha$ which has a strong pretraining prior. Later, it will learn the ``biased $+$'' task, leading to the bounded efficacy phenomenon.

\begin{figure}[th!]
    \centering
    \includegraphics[width=0.48\textwidth]{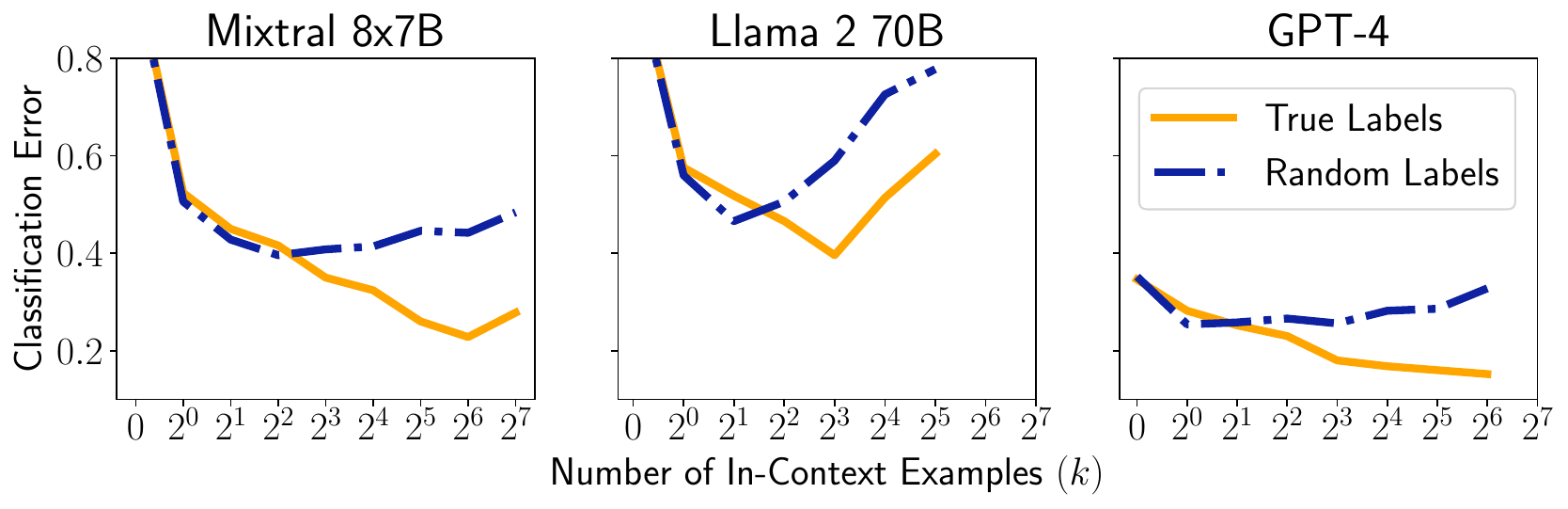}
    \caption{
    \textbf{Bounded efficacy.} The error rates of ICL with random labels start increasing at large $k$. 
    See Appendix~\ref{app:exp:zeroicl} for more experimental results.
    }
    \label{fig:ZeroICL}
\end{figure}

\subsection{Bounded Efficacy for Zero-Shot ICL}
\label{subsubsec:zicl}
We further introduce Lemma~\ref{lemma:zeroicl}, a variation of the previous Theorem~\ref{the:retrieval}, to explain zero-shot ICL, an ICL algorithm capable of functioning with random labels~\citep{LyuMBZH23}.
\begin{lemma}[(informal) Upper Bound for Zero-Shot ICL]
\label{lemma:zeroicl}
Assume a \sp attains the optimal pretraining risk, the risk of ICL with random labels (provide no information) will reveal a bounded efficacy phenomenon. See Appendix~\ref{app:zeroiclproof} for proof details.
\end{lemma}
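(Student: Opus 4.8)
\textbf{Proof proposal for Lemma~\ref{lemma:zeroicl} (Bounded Efficacy for Zero-Shot ICL).}

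The plan is to recast zero-shot ICL as a special case of biased-label ICL and then invoke Theorem~\ref{the:retrieval} essentially verbatim. First I would set up the correspondence between the two settings. In zero-shot ICL, the in-context examples $(\vx_i, y_i)$ are given \emph{random} labels carrying no information about any underlying linear function; concretely, one models $y_i$ as drawn independently of $\vx_i$, e.g.\ $y_i \sim \mathcal{N}(0, \ny^2)$ or $y_i$ uniform over a label set. The key observation is that ``random labels provide no information'' translates in our framework to the statement that the \emph{label} part of the posterior update (Component Shifting on $\vw$, governed by $k\dw\mw$ and $k\dw\mCw$ in Lemma~\ref{lemma:posterior}) contributes no systematic signal: in expectation $\mw = \frac{1}{k}\sum_i \vx_i y_i$ concentrates around $\vzero$ rather than around any target $\vwt$. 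Meanwhile the \emph{input} distribution of the $\vx_i$ still carries information, because the prompt format and token statistics still resemble some pretraining task group $\alpha$; this is exactly the mechanism captured by the $c^\vm_m$ factor and the mean update $\tvm_m$ in Lemma~\ref{lemma:posterior}, which depend only on $\{\vx_i\}$ and not on the labels. So the structural claim is: zero-shot ICL retrieves task group $\alpha$ via Component Re-weighting driven purely by the $\vx$-statistics, and this is formally the $\vwt = \vzero$ (or $\vwt$ = uninformative) instantiation of Assumption~\ref{asu:misaligned}.

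Next I would verify that the hypotheses of Theorem~\ref{the:retrieval} hold in this instantiation. Assumption~\ref{asu:misaligned} requires (i) the in-context $\vx$-distribution to be closer to prior center $\alpha$ than to any other center $\beta$ in the $\vm$-metric, $\|\vm_\beta - \vmt\|^2 - \|\vm_\alpha - \vmt\|^2 \geq d_\vm^2$, and (ii) the analogous separation in the $\vw$-metric. Condition (i) is the natural modeling assumption for zero-shot ICL: the prompt still looks like task $\alpha$. Condition (ii) needs care — with random labels there is no genuine $\vwt$, so I would either (a) take $\vwt$ to be the best-fit linear coefficient for the noise labels, which concentrates near $\vzero$ as $k$ grows, and check the separation inequality holds for the relevant center $\alpha$ in the small-$\dw$ regime, or (b) re-derive the one line of Theorem~\ref{the:retrieval}'s proof where the $\vw$-separation is used, replacing it with the weaker fact that random labels induce $\|\mw\| = O(1/\sqrt{k})$ with high probability, which still yields the same exponential-decay-then-linear-growth structure for the Component Re-weighting factor $\tpi_\alpha$. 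Once the hypotheses are in place, the conclusion of Theorem~\ref{the:retrieval} gives, for the risk $\mathcal{L}_k^\alpha$ measured against $\vw_\alpha$ (the skill being retrieved): a regime of small $k$ where the bound is dominated by the exponentially decaying re-weighting terms $C_3 \exp(-k(\cdots)) + C_4 \exp(-k^{1/2}/8)$ (the retrieval is sharpening), followed by a regime of large $k$ where the Component Shifting term $\|\vw_\alpha - \vwt\|^2(1+d\nxd^2)\min\{1, 4k^2\dw^2(1+\nxd^2)^2\}$ takes over and \emph{increases} — because as $k\to\infty$ the model starts fitting the random labels, drifting $\tvw$ away from the retrieved $\vw_\alpha$. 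That non-monotone shape is precisely the bounded efficacy phenomenon.

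Finally I would make the statement ``informal'' precise in the appendix by stating the random-label model explicitly, the concentration bound on $\mw$, and the resulting interval $[k_1, k_2]$ on which the risk against the retrieved skill first falls then rises; I would also note that in the classification-accuracy language of \citet{MinLHALHZ22, pan2023context} this corresponds to accuracy first improving (or staying flat) and then degrading, matching Fig.~\ref{fig:ZeroICL}. The main obstacle I anticipate is item (ii) above: making the $\vw$-side of Assumption~\ref{asu:misaligned} meaningful when there is no true target function. The cleanest fix is probably to prove a small self-contained sublemma that random labels with $y_i \sim \mathcal{N}(0,\ny^2)$ independent of $\vx_i$ give $\E[\tpi_\alpha] \to 1$ for the $\vx$-closest center $\alpha$ at small task noise (reusing the Component Re-weighting analysis from Appendix~\ref{sec:detailedCSCR}), and then $\E[\tvw] \to \vw_\alpha$ for moderate $k$ but $\E[\tvw]\to\vzero \neq \vw_\alpha$ as $k\to\infty$, so the risk $\E\|\tvw - \vw_\alpha\|^2$-type quantity is genuinely non-monotone; everything else is a routine specialization of Theorem~\ref{the:retrieval}.
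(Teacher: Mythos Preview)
Your high-level strategy—treat zero-shot ICL as a degenerate instance of biased-label ICL and reuse the machinery of Theorem~\ref{the:retrieval}—is exactly what the paper does, and you correctly identify the one real obstacle: the $\vw$-side of Assumption~\ref{asu:misaligned} becomes vacuous when labels carry no information. Where you and the paper differ is in how that obstacle is dispatched. You propose genuinely random labels $y_i\sim\mathcal{N}(0,\ny^2)$ and a concentration sublemma for $\mw=\frac{1}{k}\sum_i\vx_i y_i$ to show the $\vw$-reweighting is negligible with high probability. The paper instead makes a sharper modeling simplification: it restricts to a \emph{symmetric two-component} prior with $(\vm_\alpha,\vw_\alpha)=(-\vm_\beta,-\vw_\beta)$ and takes the labels to be \emph{deterministically} $y_i=0$, i.e.\ $\vwt=\vzero$. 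Under this choice $\|\vw_\alpha-\vwt\|=\|\vw_\beta-\vwt\|$ exactly, so the $\vw$-side reweighting factor (Part~$A$-$2$ in the proof of Theorem~\ref{the:retrieval}) vanishes identically rather than merely in expectation. The proof then retraces Parts~$A$-$1$, $A$-$3$, $B$, $C$ verbatim and obtains a bound with only the $d_\vm^2$ exponential term—no $u_\vw^2$ term at all—reflecting that retrieval is driven purely by the $\vx$-statistics.

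Trade-offs: the paper's route is cleaner and needs no new probabilistic estimate, but it is less general (two symmetric components, deterministic zero labels). Your route with random labels and a concentration bound on $\mw$ is closer in spirit to the actual experimental setting of \citet{MinLHALHZ22} and would extend to $M>2$ components, at the cost of the extra sublemma you flag. Either approach yields the same qualitative bounded-efficacy shape; if you want to match the paper, just adopt the symmetric two-component $y_i=0$ model and drop the $u_\vw^2$ term from the final bound.
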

Lemma~\ref{lemma:zeroicl} says that as the number of in-context examples increases, the loss curve of zero-shot ICL with random labels will have the bounded efficacy phenomenon, which conflicts with the observation from~\citet{MinLHALHZ22} that ICL with random labels has very similar performance as ICL with true labels for the number of in-context examples ranging from $1$ to $32$.
We believe this observation is due to the small number of in-context examples.
Thus, we extend the experiment of~\citet{MinLHALHZ22} to explore the number of in-context examples beyond 32.
Due to LLMs' context lengths constraining the maximum number of in-context examples, we choose different LLMs from~\citet{MinLHALHZ22} for a larger context length capacity.

Fig.~\ref{fig:ZeroICL} highlights the bounded efficacy phenomenon in the error curve associated with random labels.
Compared with true labels, the error rate of ICL with random labels increases at a much smaller $k$ value, clearly exhibiting the bounded efficacy phenomenon we predicted.

\section{Conclusion}
In this paper, we introduced a probabilistic model for understanding the dual operating modes of in-context learning: task learning and task retrieval. 
Our analysis allowed us to explain the existing early ascent phenomenon observed in real-world ICL applications, and predict a new bounded efficacy phenomenon of biased-label ICL.
We validated our findings and predictions via experiments involving large language models.
Our work lays the groundwork for future research in further exploration and improvement of ICL.

We conclude our paper with the limitations of our current framework: (i) the gap between our assumed pretraining linear regression tasks and complex, non-linear, categorical, real-world pretraining tasks of LLMs; (ii) the labels of in-context samples are assumed to be noiseless.


\section*{Acknowledgements}
This work was supported by the NSF Award DMS-2023239, NSF CAREER Award CCF-2339978, Amazon Research Award, and a grant from FuriosaAI.

We would like to express our sincere gratitude to Kartik Sreenivasan for his invaluable discussions for this research.
His insights and expertise have been instrumental in shaping this study.
Additionally, we sincerely thank Andrew Geng for his contributions to coding for the initial experimental setup.
His skills and dedication have been pivotal in the early stages of our research.

\section*{Impact Statement}
This paper presents work whose goal is to advance the field of Machine Learning. There are many potential societal consequences of our work, none which we feel must be specifically highlighted here.

\bibliography{iclr2024_conference}
\bibliographystyle{icml2024}

\appendix
\newpage
\onecolumn


\section{Notations}
\label{app:notation}
This section collects all notations used in the main paper. 

\paragraph{Notations introduced in Sec.~\ref{sec:setting&connection}:}
\begin{itemize}[topsep=0.1em, partopsep=0em, leftmargin=*]
\setlength\itemsep{0.1em} 
    \item $\F$: a \sp.
    \item $\hF$: a pretrained \sp.
    \item $\sF$: a \bosp that attains Bayes risk minimization.
    \item $\F_k$: a \sp for $k$ in-context examples.
    \item $\sF_k$: a \bosp that attains Bayes risk minimization for $k$ in-context examples.
    \item $\vx$ and $y$: input and label for a task, \eg, $\vx$ and $y$ of a linear regression task $y=\vx^\top\vw$.
    \item $k$: the number of in-context examples.
    \item $K$: the max number of examples in a sequence.
    \item $\S_k$: a sequence of $k$ in-context examples, $[\vx_1,y_1,\ldots,\vx_k,y_k]$.
    \item $\S_K$: a sequence of $K$ in-context examples, $[\vx_1,y_1,\ldots,\vx_K,y_K]$.
    \item $\Skx$: $\Skx=[\vx_1,y_1,\ldots,\vx_k,y_k,\vx_{k+1}]$, which is a sequence of $k$ in-context examples appended with $\vx_{k+1}$.
    \item $\vm$ and $\vw$: the parameters that jointly specify a task. $\vm$ specifies the distribution of $\vx$, and $\vw$ specifies the function mapping $\vx$ to $y$.
    \item $\Dpr$ and $\mathcal{D}_{\vm,\vw}$: $\Dpr=\mathcal{D}_{\vm,\vw}$, and they represent the task prior distribution where each task is specified by parameters $\vm$ and $\vw$. The task prior is also named pretraining prior, pretraining task prior, pretraining prior distribution, pretraining task prior distribution, or simply prior. 
    \item $\mathcal{D}_\vx(\vm)$: the conditional distribution of $\vx$ conditioned on $\vm$ of the task $(\vm,\vw)$.
    \item $\mathcal{D}_{\vx,y}(\vm,\vw)$: the joint distribution of $(\vx,y)$ in the task $(\vm,\vw)$.
    \item $\mathcal{D}_{y\vert\vx}(\vw)$: $y$ distribution conditioned on the input $\vx$ and parameter $\vw$ of the task $(\vm,\vw)$.
    \item $P(\vm,\vw)$: the task probability of $(\vm,\vw)$ in the task prior $\Dpr$.
    \item $P(\vx \vert \vm)$: the probability of $\vx$ in $\mathcal{D}_\vx(\vm)$.
    \item $P(y \vert \vx, \vw)$: the probability of $y$ in $\mathcal{D}_{y \vert \vx}(\vw)$.
    \item $\mathcal{L}(\F)$: the risk of $\F$ on samples generated from the pretraining data generative model~\ref{asu:setting}.

    \item $M$: the number of mixture components in a Gaussian mixture prior.
    \item $\mathcal{N}(\vx; \vm,\bm{\Sigma})$: the probability of $\vx$ in the multivariate normal distribution with mean $\vm$ and covariance matrix $\bm{\Sigma}$.
    \item $m$, $\alpha$, and $\beta$: the indices of mixture components in a Gaussian mixture prior.
    \item $T_m$: the $m^\text{the}$ mixture component in a Gaussian mixture prior.
    \item $\pi_m$: the mixture weight of the $m^\text{th}$ mixture component in a Gaussian mixture prior.
    \item $\vm_m$ and $\vw_m$: $(\vm_m,\vw_m)$ is the center of the $m^\text{th}$ mixture component.
    \item $\vmt$ and $\vwt$: $(\vmt,\vwt)$ is the in-context task, \ie, in-context examples are drawn from this task without label noises.
    \item $\nm$ and $\nw$: the task noises, \ie, the noise scales of $\vm$ and $\vw$.
    \item $\nx$ and $\ny$: the sample noises, \ie, the noise scales of $\vx$ and $y$ of pretraining samples.
    \item $\nxd$: the sample noise, \ie, the noise scale of $\vx$ of in-context examples.
    \item $d$: the dimension of $\vx$.
    \item $r$: the max ratio of two mixture weights of two mixture components.
\end{itemize}

\paragraph{Notations introduced in Sec.~\ref{sec:posterior&phenomena}:}
\begin{itemize}[topsep=0.1em, partopsep=0em, leftmargin=*]
\setlength\itemsep{0.1em} 
    \item $\Dpo$: The posterior distribution of the pretraining prior $\Dpr$ after observing $\Skx$.
    \item $\|\cdot\|$: the $L_2$ norm.
    \item $\|\vx\|^2$: for any vector $\vx$, $\|\vx\|^2=\vx^\top\vx$.
    \item $\|\vx\|^2_\mA$: for any vector $\vx$ and matrix $\mA$, $\|\vx\|^2_\mA=\vx^\top\mA\vx$.
    \item $P(\vm, \vw \vert \Skx)$: the probability of task $(\vm,\vw)$ in the posterior after observing $\Skx$.
    \item $\tT_m$: the $m^\text{th}$ mixture component in the Gaussian mixture posterior.
    \item $\tpi_m$: the mixture weight of the $m^\text{th}$ mixture component in the Gaussian mixture posterior.
    \item $\tvm_m$ and $\tvw_m$: $(\tvm_m,\tvw_m)$ is the center of the $m^\text{th}$ mixture component in the Gaussian mixture posterior.
    \item $P(\vm, \vw \vert \tT_m)$: the probability of task $(\vm,\vw)$ in the $m^\text{th}$ mixture component of posterior.
    \item $\dm$ and $\dw$: the ratios of squared task noises over squared sample noises. $\dm=\frac{\nm^2}{\nx^2}$, and $\dw=\frac{\nw^2}{\ny^2}$.
    \item $\mCm$: $\mCm = \mI$.
    \item $\mCw$: $\mCw = \frac{\sum_{i=1}^k \vx_i\vx_i^\top}{k}$.
    \item $\mm$: $\mm = \frac{\sum_{i=1}^{k+1} \vx_i}{k+1}$.
    \item $\mw$: $\mw = \frac{\sum_{i=1}^k \vx_i y_i}{k}$.
    \item $\tvw$: the mean of $\vw$ in the task posterior, \ie, the predicted function by \bosp. $\sF(\Skx)=\langle \vx_{k+1}, \tvw \rangle = \Big\langle \vx_{k+1},\sum_{m=1}^M \tpi_m \tvw_m \Big\rangle$.
    \item $c_m^\vm$ and $c_m^\vw$: parts of the re-weighting coefficient of \CRfull.
    \item $\advm(\alpha,\beta)$ and $\advw(\alpha,\beta)$: functions to help analyze the phenomenon of \CRfull. 
    \item $r(\alpha,\beta)$: the ratio of the mixture weight $\tpi_\alpha$ of $\tT_\alpha$ over the mixture weight $\tpi_\beta$ of $\tT_\beta$. 
    \item $\lambda_d(\mA)$: the $d^\text{th}$ largest eigenvalue of matrix $\mA$. In this paper $\mA\in\mathbb{R}^{d\times d}$, thus $\lambda_d(\mA)$ represents the smallest eigenvalue of matrix $\mA$.
    \item $\lambda_1(\mA)$: the $1^\text{st}$, the largest eigenvalue of matrix $\mA$.
    \item $y^*_{k+1}$: the label of learning the function $\vwt$. $y^*_{k+1}=\langle \vx_{k+1}, \vwt \rangle$.
\end{itemize}

\paragraph{Notations introduced in 
Sec.~\ref{sec:theory}:}
\begin{itemize}[topsep=0.1em, partopsep=0em, leftmargin=*]
\setlength\itemsep{0.1em} 
    \item The L2 loss of ICL learning to learn the function $\vwt$. $\mathcal{L}_k^*=(\F(\Skx)-y_{k+1}^*)^2=(\F(\Skx)-\langle \vx_{k+1},\vwt \rangle)^2$.
\end{itemize}

\paragraph{Notations introduced in Sec.~\ref{subsec:TaskRetrieval}:}
\begin{itemize}[topsep=0.1em, partopsep=0em, leftmargin=*]
\setlength\itemsep{0.1em} 
    \item $d^2_\vm$: $\forall \beta\neq\alpha, \|\vm_\beta-\vmt\|^2 - \|\vm_\alpha-\vmt\|^2 \geq d^2_\vm$, the $\vm$-margin of any other $\vm_\beta$ over $\vm_\alpha$.
    \item $d^2_\vw$: $\forall \beta\neq\alpha, \|\vw_\beta-\vwt\|^2 - \|\vw_\alpha-\vwt\|^2 \geq d^2_\vw$, the $\vw$-margin of any other $\vw_\beta$ over $\vw_\alpha$.
    \item $u^2_\vw$: $\forall \beta\neq\alpha, \nxd^2 \|\vw_\beta-\vwt\|^2 - (1+\nxd^2)\|\vw_\alpha-\vwt\|^2 \geq \nxd^2 u^2_\vw$, the weighted $\vw$-margin of any other $\vw_\beta$ over $\vw_\alpha$.
    \item $y^\alpha_{k+1}$: the label of retrieving the function $\vw_\alpha$. $y^\alpha_{k+1}=\langle \vx_{k+1}, \vw_\alpha \rangle$.
    \item The L2 loss of ICL learning to retrieve the function $\vw_\alpha$ of the pretraining prior center $\alpha$. $\mathcal{L}_k^\alpha=(\F(\Skx)-y_{k+1}^\alpha)^2=(\F(\Skx)-\langle \vx_{k+1},\vw_\alpha \rangle)^2$.
\end{itemize}



\begin{figure}[t]
    \centering
    \includegraphics[width = 0.9\textwidth]{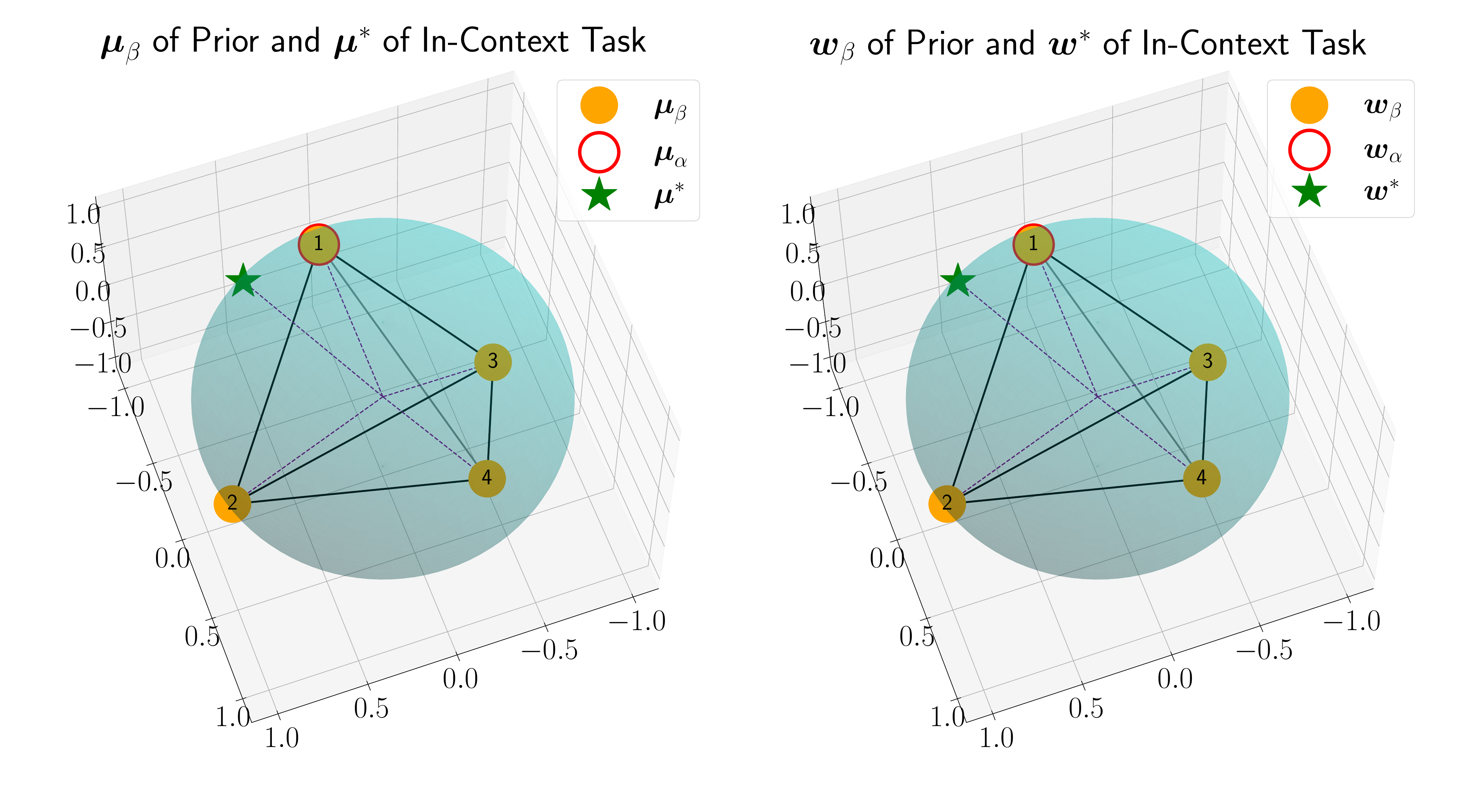}
    \caption{\textbf{Visualization of the tetrahedron setting.} The figure shows the pretraining prior centers and the in-context task. 
    For $\beta\in\{1,2,3,4\},(\vm_\beta,\vw_\beta)$ is a mixture component center in the prior.
    $(\vm_\alpha,\vw_\alpha)$ for $\alpha=1$ (numbers are noted in the center of circles) is the center of the target task for ICL with biased labels, while $(\vmt,\vwt)$ is the in-context task. The dotted purple lines highlight the distance of 1 from the origin $(0,0,0)$ to any point denoted by $\vm$ or $\vw$.}
    \label{fig:3d:4centers}
\end{figure}
\section{Prior Examples}
\label{sec:example}
This section outlines our configurations of prior settings in numerical computations and preliminary Transformer experiments, focusing on the geometrical arrangement of the centers in the priors. Specifically, we detail the configurations where the centers form shapes of 3-dimensional regular polyhedra in Sec.~\ref{sec:3d}, extend to configurations in $d$-dimensional spaces in Sec.~\ref{sec:dd}, and discuss a unique setup related to the early ascent phenomenon in Sec.~\ref{subsec:early}. 

\subsection{Regular Polyhedrons}
\label{sec:3d}
Taking into account the centers of the mixture components from the pretraining prior, which manifest as distinct points forming the vertices of various shapes, we examine  3-dimensional regular polyhedrons.
These include tetrahedron (4 vertices/centers), octahedron (6 vertices/centers), hexahedron (8 vertices/centers), icosahedron (12 vertices/centers), and dodecahedron (20 vertices/centers), listed with increasing density of the centers on a sphere.

The configuration of a regular polyhedron with $M$ centers is established in accordance with the parameters outlined in Assumption~\ref{asu:assumption}, as detailed below:
\begin{itemize}[topsep=0.1em, partopsep=0em, leftmargin=*]
\setlength\itemsep{0.1em} 
    \item Dimension $d=3$, the number of mixture components equals to $M$;
    \item The centers of mixture components form a regular polyhedron with $M$ vertices;
    \item All components' mixture weights are the same, $\pi_m = 1/M$, and $\vm_m=\vw_m$, for all $m \in [M]$;
    \item For noises of $\vx$ and $y$, we have $\nx = \ny = 1$, and $\nxd=1$;
    \item For noises of $\vm$ and $\vw$, we have $\sm=\sw=0.25$ if not specified;
    \item For the in-context task, $\vmt = \frac{2\vm_1+\vm_2}{\|2\vm_1+\vm_2\|}$ and $\vwt = \frac{2\vw_1+\vw_2}{\|2\vw_1+\vw_2\|}$ if not specified, where $\vm_2$ is one of the the closest centers to $\vm_1$.
\end{itemize}

We mainly use the \textbf{tetrahedron} setting in the paper. Therefore, we further visualize the setting and note down the parameters.
The 3D visualization of mixture component centers in the prior and the in-context task are shown in Fig.~\ref{fig:3d:4centers}.
The parameters are noted as follows:
\begin{itemize}[topsep=0.1em, partopsep=0em, leftmargin=*]
\setlength\itemsep{0.1em} 
    \item Dimension $d=3$, number of mixture components $M=4$;
    \item The centers of topics form a tetrahedron as shown in Fig.~\ref{fig:3d:4centers}. $\vm_1 = \vw_1 = [0,0,-1]^\top$, $\vm_2 = \vw_2 = [\sqrt{\frac{8}{9}}, 0, \frac{1}{3}]^\top$, $\vm_3 = \vw_3 = [-\sqrt{\frac{2}{9}}, +\sqrt{\frac{2}{3}}, \frac{1}{3}]^\top$, and $\vm_4 = \vw_4 = [-\sqrt{\frac{2}{9}}, -\sqrt{\frac{2}{3}}, \frac{1}{3}]^\top$;
    \item All components' mixture weights are the same, $\pi_m = 1/4$, and $\vm_m=\vw_m$, for all $m \in \{1,2,3,4\}$;
    \item For noise of $\vx$ and $y$, we have $\nx = \ny = 1$, and $\nxd=1$;
    \item For noises of $\vm$ and $\vw$, we have $\sm=\sw=0.25$ if not specified;
    \item For in-context task, we have $\vmt = \frac{2\vm_1+\vm_2+0.2\vm_3}{\|2\vm_1+\vm_2+0.2\vm_3\|}$ and $\vwt = \frac{2\vw_1+\vw_2+0.2\vw_3}{\|2\vw_1+\vw_2+0.2\vw_3\|}$. We slightly shift the in-context task $(\vmt,\vwt)$ towards $(\vm_3,\vw_3)$ for visualization purposes, to make $m=3$ and $m=4$ produce slightly different curves.
\end{itemize}

\subsection{$d$-Dimensional Examples}
\label{sec:dd}
We consider $d$-dimensional examples with $d$ centers for $d\in\{2,4,8,16,32\}$. 
A $d$-dimensional example with $d$ vertices is parametered as follows:
\begin{itemize}[topsep=0.1em, partopsep=0em, leftmargin=*]
\setlength\itemsep{0.1em} 
    \item Dimension equals to $d$, number of mixture component $M=d$;
    \item For all $m \in [M]$, $\vm_m=\bm{e}_m$ and
$\vm_{m,i} = 
\begin{cases} 
1 & \text{if } i = m \\
0 & \text{if } i \neq m
\end{cases}
$, \ie, $\vm_m$ is the $m^\text{th}$ vector in the standard basis of $\R^m$, characterized by having all elements equal to $0$ except for the $m^\text{th}$ element, which is $1$.
    \item All components' mixture weights are the same, $\pi_m = 1/d$, and $\vm_m=\vw_m$, for all $m \in [M]$;
    \item For noise of $\vx$ and $y$, we have $\nx = \ny = 1$, and $\nxd=1$;
    \item For noises of $\vm$ and $\vw$, we have $\sm=\sw=0.25$;
    \item For the in-context task, we have $\vmt = \frac{2\vm_1+\vm_2}{\|2\vm_1+\vm_2\|}$ and $\vwt = \frac{2\vw_1+\vw_2}{\|2\vw_1+\vw_2\|}$.
\end{itemize}

\begin{table}[t!]
\centering
\caption{\textbf{Prior settings for early ascent.} The pretraining task prior comprises two components for one dimension and three for two or more dimensions.
ICL aims to predict following the in-context function $\vwt$, equivalent to prior center $2$'s function $\vw_2$ ($\vwt=\vw_2$).
The in-context task is characterized by having a closer $\vx$ distribution to the task of prior center $1$ but a closer $\vx\rightarrow  y$ mapping to the prior center $2$.
The parameters for all cases are set to $\sm=\sw=0.05$, $\nx=\nxd=1$, and $\ny=2$. Refer to Fig.~\ref{fig:traj_} for visualization of the prior centers under dimension $d\in\{1,2,3\}$.}
\resizebox{\columnwidth}{!}{%
\begin{tabular}{lccll}
\toprule
Case                               & \begin{tabular}[c]{@{}c@{}}Component\\/Task\end{tabular}    & \begin{tabular}[c]{@{}c@{}}Mixture\\Weight\end{tabular} & \multicolumn{1}{c}{$\vm$}        & \multicolumn{1}{c}{$\vw$}    \\ \midrule
\multirow{4}{*}{$d=1$}               & Component 1                                                  & \nicefrac{1}{2}                                                      & $\vm_1=[+1]$                     & $\vw_1=[-1]$    \\
                                   & Component 2                                                  & \nicefrac{1}{2}                                                      & $\vm_2=[-1]$                     & $\vw_2=[+1]$    \\
                                   & Component 3                                                  & /                                                        & /                                 & /               \\
                                   & In-context Task                                              & /                                                        & $\vm^*=[+1]$                     & $\vw^*=[+1]$    \\ \hdashline
\multirow{4}{*}{$d=2$}               & Component 1                                                  & \nicefrac{1}{3}                                                      & $\vm_1=[+1,+1]$                  & $\vw_1=[-1,-1]$ \\
                                   & Component 2                                                  & \nicefrac{1}{3}                                                      & $\vm_2=[-1,-1]$                  & $\vw_2=[+1,+1]$ \\
                                   & Component 3                                                  & \nicefrac{1}{3}                                                      & $\vm_3=[+1,-1]$                  & $\vw_3=[-1,+1]$ \\
                                   & In-context Task                                              & /                                                        & $\vm^*=[+1,+1]$                  & $\vw^*=[+1,+1]$ \\ \hdashline
\multirow{4}{*}{$d\geq2$} & Component 1                                                  & \nicefrac{1}{3}                                                     & $\vm_1=[+1]+[+1]\times(d-1)$     & $\vw_1=[-1]+[-1]\times(d-1)$ \\
                                   & Component 2                                                  & \nicefrac{1}{3}                                                      & $\vm_2=[-1]+[-1]\times(d-1)$     & $\vw_2=[+1]+[+1]\times(d-1)$ \\
                                   & Component 3                                                  & \nicefrac{1}{3}                                                      & $\vm_3=[+1]+[-1]\times(d-1)$     & $\vw_3=[-1]+[+1]\times(d-1)$ \\
                                   & In-context Task                                              & /                                                        & $\vm^*=[+1]\times d$             & $\vw^*=[+1]\times d$         \\ \bottomrule
\end{tabular}}
\label{table:earlysetting}
\end{table}
\begin{figure*}[ht!]
    \centering
    \subfigure[First row: expected L2 loss and upper bound with increasing in-context samples $k$ under varied dimensions $d$.
    Second row: expected mixture weights with increasing in-context samples $k$ under varied dimensions $d$. We further examine the early ascent phenomenon under linear regression with varied levels of label noises in Appendix~\ref{app:expnoisy}, and under non-linear regression and discrete token prediction in Appendix~\ref{app:earlyascent:nonlinear&discrete}.]{
        \includegraphics[width=0.85\textwidth]{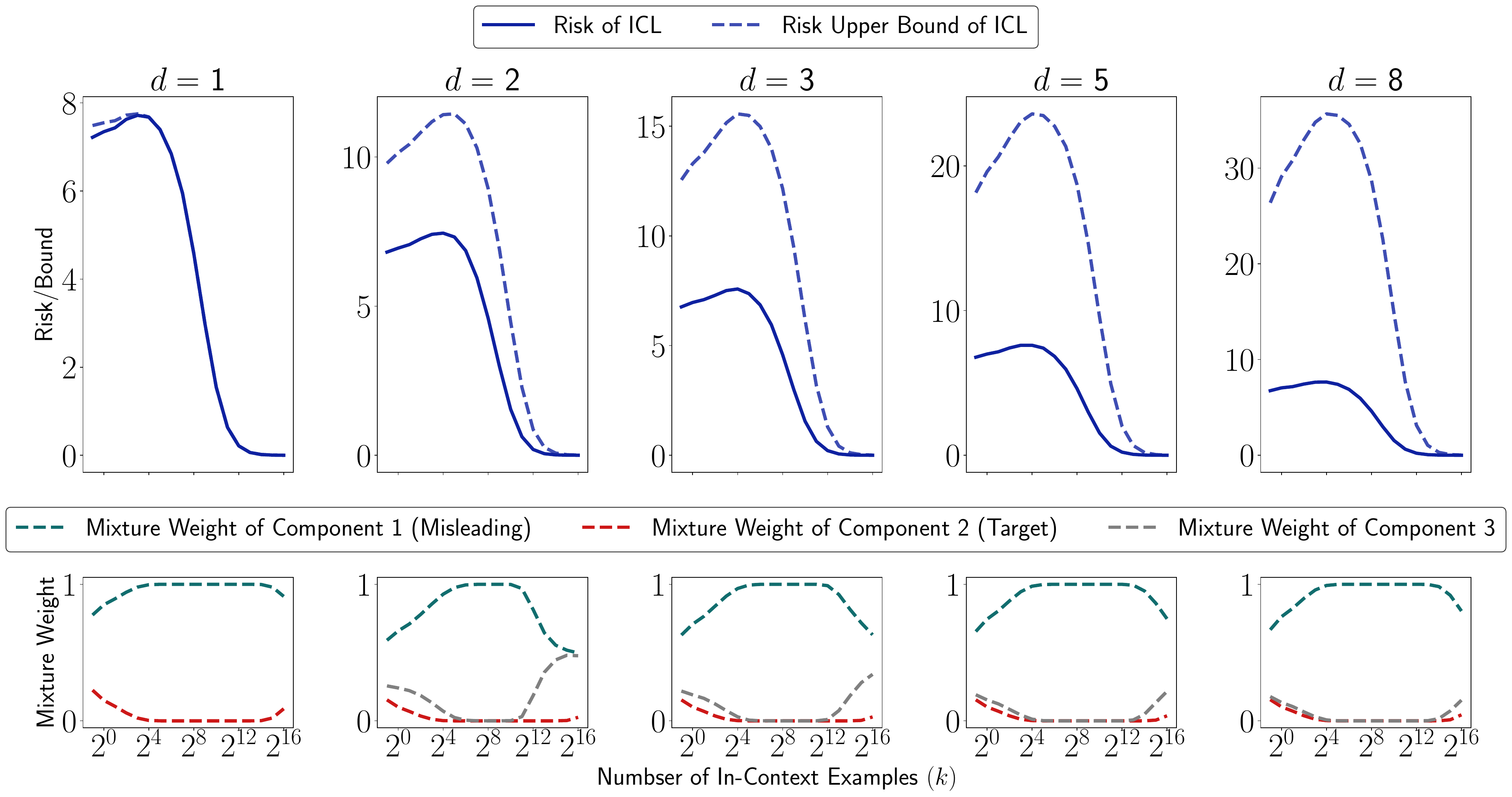}
        \label{fig:UB_}}
    \hspace{1em}
    \subfigure[The trajectory of the expectation of $\tilde{\vw}$ with increasing $k$ under $d$ equal to 1, 2 and 3.]{
        \includegraphics[width=0.80\textwidth]{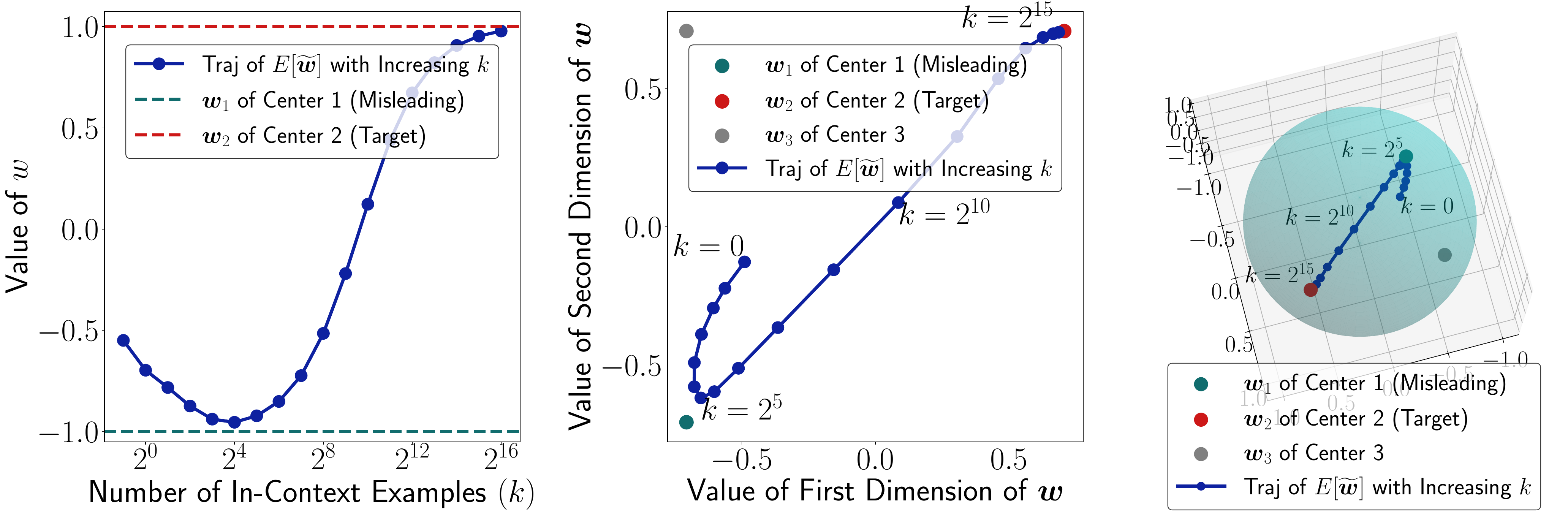}
        \label{fig:traj_}}
    \caption{\textbf{The early ascent phenomenon.} 
    Fig.~\ref{fig:UB_} displays the trends of expected losses, upper bounds, and mixture weights, while Fig.~\ref{fig:traj_} presents the trend of the expectation of $\tvw$.
    We can see that the task retrieval mode is dominant up to $k=32$, and component 1's mixture weight increases ($\E[\tilde{\vw}]$ approaches $\vw_1$).
    Since this misleading component 1 is far from the target component 2, the risk starts increasing. 
    At larger $k$ values, the risk starts decreasing ($\E[\tilde{\vw}]$ approaches $\vw_2$) via task learning. 
    }
    \label{fig:Ushape_}
\end{figure*}
\subsection{Early Ascent Examples}
\label{subsec:early}
Table~\ref{table:earlysetting} outlines the prior configuration used to produce the early ascent phenomenon, where the in-context task is designed with a distribution of $\vx$ close to a misleading task. The full results are shown in Fig.~\ref{fig:Ushape_}.
\section{Coarse Upper Bound for ICL Risk}
\label{subsec:coarse}
The following theorem shows a coarse upper bound of the ICL risk parallel to Theorem~\ref{the:finegrainedlearning}:
\begin{theorem}[Coarse Upper Bound for ICL Risk]
\label{the:generallearning}
Consider a \sp attaining the optimal pretraining risk.
As $k\rightarrow\infty$, the ICL risk is upper bounded by:
\begin{align}
    \E_{\Skx}[\mathcal{L}_k^*] 
    <&
    \frac{4(1+d\nxd^2)}{\nxd^4 \blue{\dw}^2 \red{k}^2} + O(\red{k}^{\delta-\frac{5}{2}}),
\end{align}
where $\mathcal{L}_k^*=(\F(\Skx)-y_{k+1}^*)^2=(\F(\Skx)-\langle \vx_{k+1},\vwt \rangle)^2$ and $\delta$ is an arbitrarily small positive constant.
See Appendix~\ref{proof:learning} for proof details.
The upper bound decreases as the square of the inverse of $k$. 
Notice there is no noise for $y$ labels of in-context examples under our setting, which leads to a faster decay rate than standard $1/k$ for ridge regression~\citep{Alexander23}.
\end{theorem}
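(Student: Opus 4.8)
The plan is to derive the coarse bound directly from the finegrained bound of Theorem~\ref{the:finegrainedlearning} by throwing away all information carried by the component re-weighting. By Assumptions~\ref{asu:assumption4} and~\ref{asu:source2} we have $\|\vw_m\|=\|\vwt\|=1$, so $\|\vw_m-\vwt\|^2=2-2\langle\vw_m,\vwt\rangle\le 4$ for every $m$, and since $\sum_{m=1}^M\tpi_m=1$ this gives $\sum_m\tpi_m\|\vw_m-\vwt\|^2\le 4$ pointwise. Substituting into Theorem~\ref{the:finegrainedlearning} and pulling the $\tpi_m$-sum inside the expectation yields $\E[\mathcal{L}_k^*]<4\,\E_{\Skx}[\|\vx_{k+1}\|^2\,\lambda_1(\mA)^2]$, where $\mA=(\mI+\dw\sum_{i=1}^k\vx_i\vx_i^\top)^{-1}$.

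Next I would exploit that, because the in-context labels are noiseless, $\mA$ is a function of $\vx_1,\dots,\vx_k$ only and is therefore independent of $\vx_{k+1}\sim\mathcal{N}(\vmt,\nxd^2\mI)$. Factoring the expectation and using $\E[\|\vx_{k+1}\|^2]=\|\vmt\|^2+d\nxd^2=1+d\nxd^2$ reduces the whole problem to controlling $\E[\lambda_1(\mA)^2]$. Writing $\mu_k:=\lambda_d(\sum_{i=1}^k\vx_i\vx_i^\top)$ for the smallest eigenvalue of the in-context Gram matrix, we have $\lambda_1(\mA)=(1+\dw\mu_k)^{-1}$, so what remains is a lower-tail concentration estimate for $\mu_k$ and an integration of $(1+\dw\mu_k)^{-2}$ against its law.

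For the concentration, note $\E[\vx_i\vx_i^\top]=\vmt\vmt^\top+\nxd^2\mI$ has smallest eigenvalue $\nxd^2$, so by Weyl's inequality and a matrix concentration bound for sums of i.i.d.\ sub-exponential rank-one matrices, for $k$ large and with probability at least $1-o(k^{-N})$ (any fixed $N$) one has $\mu_k\ge k\nxd^2-O(\sqrt{k\log k})$. On this event $\lambda_1(\mA)^2\le \dw^{-2}\nxd^{-4}k^{-2}(1-O(\sqrt{\log k/k}))^{-2}=\dw^{-2}\nxd^{-4}k^{-2}+O(k^{-5/2+\delta})$; on the complement we use the deterministic bound $\lambda_1(\mA)\le 1$, whose contribution is super-polynomially small and hence absorbed in the same error term. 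Taking expectations gives $\E[\lambda_1(\mA)^2]\le \dw^{-2}\nxd^{-4}k^{-2}+O(k^{-5/2+\delta})$, and multiplying by $4(1+d\nxd^2)$ recovers exactly $\E[\mathcal{L}_k^*]<\frac{4(1+d\nxd^2)}{\nxd^4\dw^2k^2}+O(k^{\delta-\frac{5}{2}})$.

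I expect the eigenvalue concentration to be the main obstacle: the summands $\vx_i\vx_i^\top$ are quadratic in Gaussians and thus only sub-exponential, so one must invoke a matrix Bernstein/Chernoff-type inequality valid for unbounded terms and track the rate carefully enough that the deviation stays $O(\sqrt{k\log k})$ — a polynomially larger deviation would spoil the $k^{\delta-5/2}$ error. A secondary subtlety is purely in the direction of the tail bound: $\lambda_1(\mA)=(1+\dw\mu_k)^{-1}$ is convex and decreasing in $\mu_k$, so a one-sided \emph{lower}-tail bound on $\mu_k$ is precisely what an \emph{upper} bound on $\E[\lambda_1(\mA)^2]$ requires, and integrability near $\mu_k=0$ is automatic since for $k\ge d$ the Gram matrix is a.s.\ nonsingular while the trivial bound $\lambda_1(\mA)\le 1$ always applies.
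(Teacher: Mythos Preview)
Your proposal is correct and follows essentially the same route as the paper: reduce via $\|\vw_m-\vwt\|^2\le 4$ and independence of $\vx_{k+1}$ to $4(1+d\nxd^2)\,\E[\lambda_1(\mA)^2]$, then control this via a high-probability lower bound on $\lambda_d\!\left(\frac{1}{k}\sum_i\vx_i\vx_i^\top\right)$ combined with the trivial bound $\lambda_1(\mA)\le 1$ on the complement. The only difference is the concentration tool---the paper proves and invokes a tailored eigenvalue lemma (decomposing $\frac{1}{k}\sum_i\vx_i\vx_i^\top$ into $\vmt\vmt^\top$, the Wishart noise part, and a cross term, then applying a Vershynin-type bound with $t=k^{\delta-1/2}$) rather than a generic sub-exponential matrix Bernstein, but the resulting rate is identical.
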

The notations $\blue{\dw}$ and $\red{k}$ are colored for easier observation.

\begin{figure}[th!]
    \centering
    \includegraphics[width=1.0\textwidth]{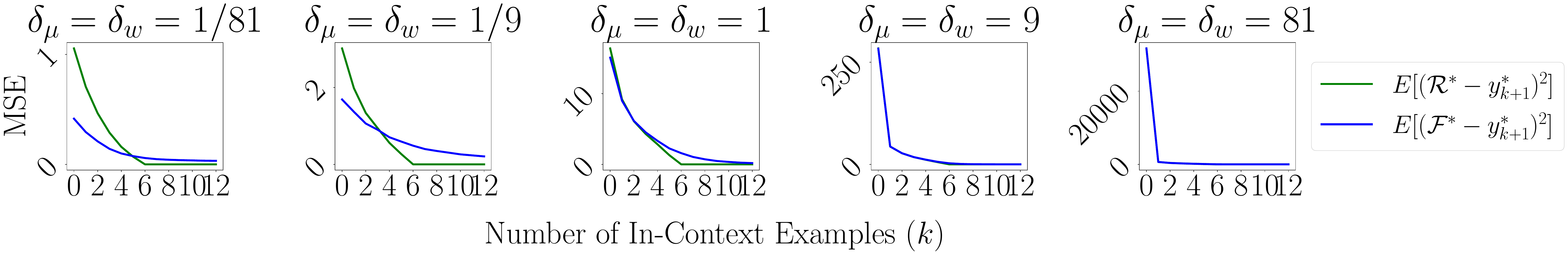}
    \caption{\textbf{In-context learning vs ridge regression.} $\mathcal{R}^*$ indicates the prediction by ridge regression, $\mathcal{F}^*$ indicates the prediction by ICL with a \bosp, and $y^{*}_{k+1}=\langle \vx_{k+1},\vwt \rangle$.
    Let the $k$ samples draw from a task $(\vmt,\vwt)$, which is drawn from the pretraining prior distribution.
    The dimension $d$ of $\vx$ equals 6.
    We observe that ICL performs better than ridge regression when $k$ is small, and ridge regression performs better than ICL when $k\geq d$.
    Especially, when the task prior distribution has high task variance (big $\dm$ and $\dw$ values), ICL and ridge regression have very similar performance.}
    \label{fig:LASSOcompare}
\end{figure}
We further compare the risk $\E_{\Skx}[\mathcal{L}_k^*]$ and the risk under ridge regression with L2 regularization parameter equal to $10^{-6}$, where the same $k$ samples without label noises are used as in-context examples for ICL and training samples for ridge regression. 
Fig.~\ref{fig:LASSOcompare} shows the experiment results.
Under certain settings for the task prior $\mathcal{D}_{\vm,\vw}$, when the task prior has low task variances, ICL performs better than ridge regression with a fixed regularization parameter under small $k$.

\section{Transformer Performance in Approximating Bayesian Inference}
\label{sec:transformer}
We examine if a Transformer network pretrained on samples generated from our pretraining data generative model matches the performance of Bayesian inference.
We consider three factors of the task prior in our experiment: \emph{prior task noises}, \emph{number of components}, and \emph{feature dimension}.
For scalar $y$, we transform it to a $d$-dimensional vector $[y, 0, \ldots, 0]$.
Thus, $\Skx$ forms a $(2k+1)\times d$ matrix, comprising $\vx_{k+1}$ and $k$ pairs of $(\vx_i,y_i)$.

\paragraph{Experiment Setting.} We conduct experiments based on the module GPT2Model from the package Transformers supported by HuggingFace\footnote{https://huggingface.co/}. 
We use a 10-layer, 8-head Transformer decoder with 1024-dimensional feedforward layers, and the input dimension is set to $d$, equal to the dimension of $\vx$.
We train the model over three epochs, each consisting of 10,000 batches, with every batch containing 256 samples.
We use AdamW~\citep{loshchilov2017decoupled} as the optimizer with weight decay as $0.00001$ and set the learning rate to $0.00001$.

\begin{figure}[th!]
    \centering
    \includegraphics[width=.90\textwidth]{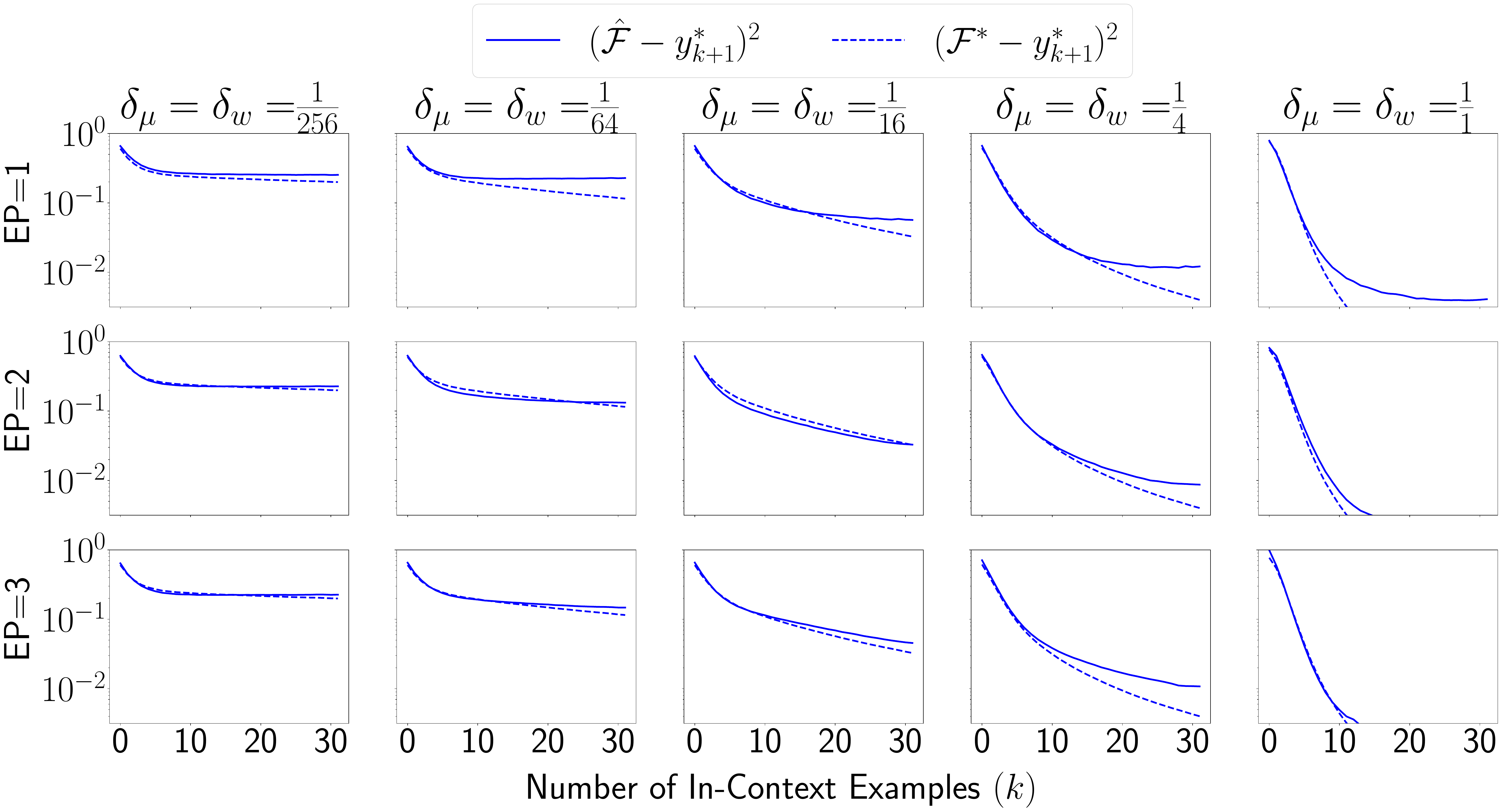}
    \caption{
    \textbf{Prior task noises.} The figure shows the experiment results under varied noise levels.
    $\dm$ and $\dw$ indicate the noise levels of the pretraining task prior.
    $\sF$ indicates the prediction of Bayesian inference while $\hat{\F}$ indicates the prediction of the trained Transformer network.
    The results show that the trained Transformer network's performance can approach the performance of Bayesian inference.
    }
    \label{fig:regular4}
\end{figure}
\begin{figure}[th!]
    \centering
    \includegraphics[width=0.90\textwidth]{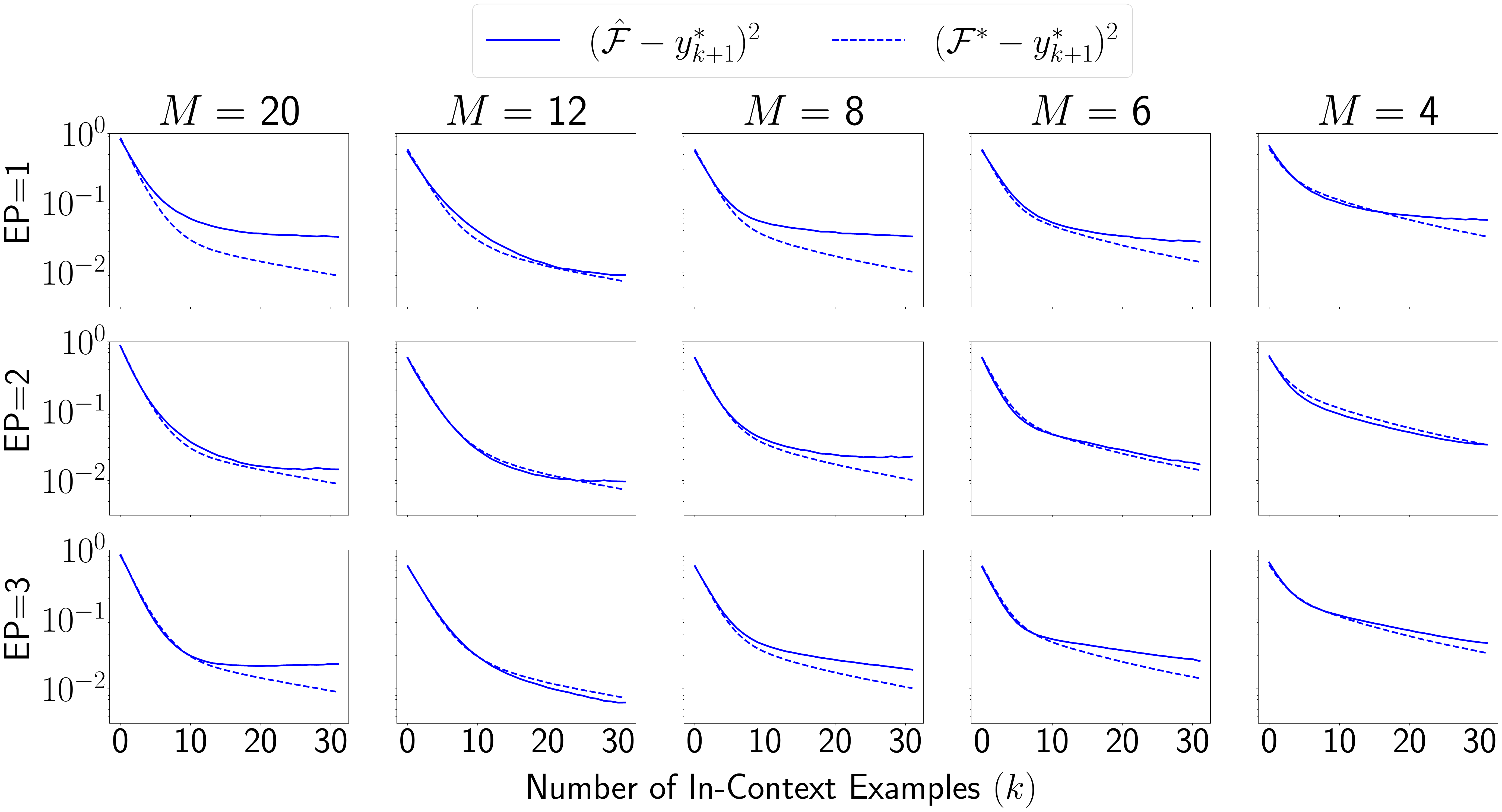}
    \caption{
    \textbf{Number of components.} The figure shows the experiment results under varied component densities.
    $M$ indicates the number of mixture components corresponding to different 3D regular polyhedrons described in Appendix~\ref{sec:3d}, and $\dm=\dw=\frac{1}{16}$.
    $\sF$ indicates the prediction of Bayesian inference while $\hat{\F}$ indicates the prediction of the trained Transformer network.
    The higher the component density is, the harder it is for the Transformer network to approach Bayesian inference.
    }
    \label{fig:regularM}
\end{figure}
\begin{figure}[th!]
    \centering
    \includegraphics[width=0.90\textwidth]{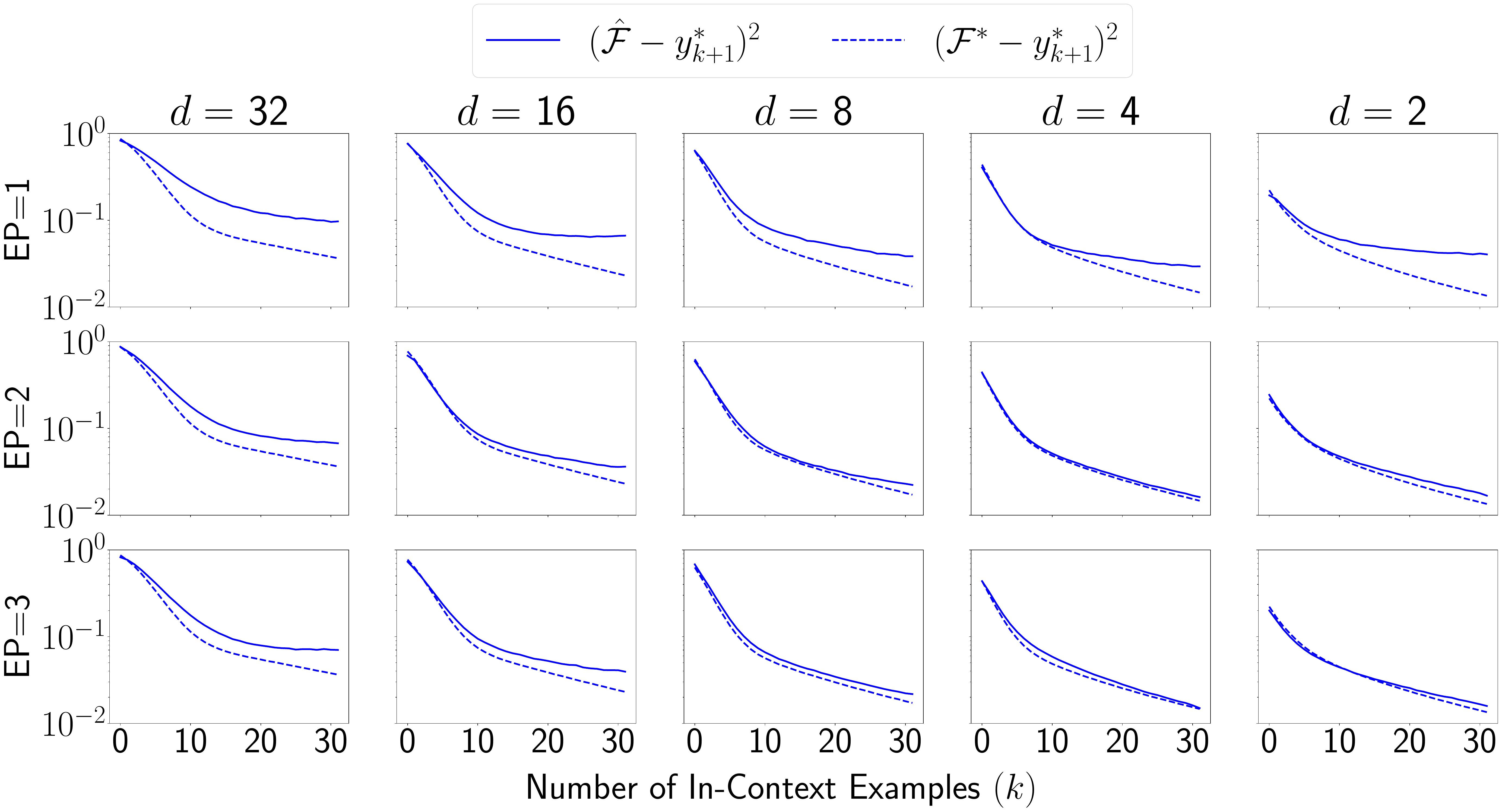}
    \caption{
    \textbf{Feature dimension.} The figure shows the experiment results under varied dimensions.
    $d$ indicates the dimension and the number of mixture components (see Appendix~\ref{sec:dd} for setting details), and $\dm=\dw=\frac{1}{16}$.
    $\sF$ indicates the prediction of Bayesian inference while $\hat{\F}$ indicates the prediction of the trained Transformer network.
    The higher the feature dimension is, the harder it is for the Transformer network to approach Bayesian inference.}
    \label{fig:D_d}
\end{figure}

\paragraph{Experiment Results.} Fig.~\ref{fig:regular4},~\ref{fig:regularM}, and~\ref{fig:D_d} show the experimental results, where $\hat{\F}$ denotes the prediction of the Transformer network, $\sF$ denotes the prediction of Bayesian inference, and $y_{k+1}^*=\langle \vx_{k+1}, \vwt \rangle$ is the label of learning the in-context function.
In Fig.~\ref{fig:regular4}, we consider the tetrahedron setting (see Apendix~\ref{sec:3d} for setting details) under varied task noises ($\dm=\dw \in \{1/256, 1/64, 1/16, 1/4, 1\}$).
In Fig.~\ref{fig:regularM}, we consider settings of regular shapes (see Appendix~\ref{sec:3d} for setting details) with different numbers of vertices/components ($M\in\{4,6,8,12,20\}$).
In Fig.~\ref{fig:D_d}, we consider settings with varied dimensions (see Appendix~\ref{sec:dd} for setting details, $d\in\{2,4,8,16,32\}$).
We observe that the trained Transformer network can approximate the Bayes-optimal predictor under varied settings, and the larger the number of dimensions and the number of mixture components, the harder it is for the Transformer network to approximate Bayesian prediction.
\section{Additional Information for Bounded Efficacy in GPT-4}
\begin{table}[th!]
\centering
\caption{Experiment setting to reveal the bounded efficacy phenomenon of biased-label ICL in GPT-4.}
\begin{tabular}{cl}
\toprule
Setting                & \multicolumn{1}{c}{Desciption}                                                                                                                                                                                                                                                                           \\ \midrule
LLM                    & \multicolumn{1}{c}{GPT-4}                                                                                                                                                                                                                                                                                \\ \hline
System Message         & \begin{tabular}[c]{@{}l@{}}You are a mathematician. Consider the following math problem and\\ follow the exact instruction.\end{tabular}                                                                                                                                                                 \\ \hline
Prompt                 & \begin{tabular}[c]{@{}l@{}}You are given examples. Each example has two integers as input and\\ one integer as output. Please provide an answer for the last problems \\ in the math exercise:\\ $\GGG{a_1}$(?)$\GGG{b_1}$=$\GGG{c_1}$\\ ...\\ $\GGG{a_k}$(?)$\GGG{b_k}$=$\GGG{c_2}$\\ $\GGG{a_{k+1}}$(?)$\GGG{b_{k+1}}$=\\ Provide your answer directly.\end{tabular} \\ \hline
In-Context Task       & \multicolumn{1}{c}{$\GGG{a_i}$ and $\GGG{b_i}$ are uniformly sampled from $[10,99]$, and $\GGG{c_i}=\GGG{a_i}+\GGG{b_i}+1$.}                                                                                                                                                                                                                                       \\ \hline
\begin{tabular}[c]{@{}c@{}} Goal of Learning the\\``biased $+$'' \\Task with True Labels \end{tabular} & \begin{tabular}[c]{@{}l@{}}Aiming to learn the ``biased $+$'' task, a(?)b=(a+b+1), with \\ in-context examples following the same ``biased $+$'' task, \\ a(?)b=(a+b+1).\end{tabular}                                                                                     \\ \hline
\begin{tabular}[c]{@{}c@{}} Goal of Retrieving the\\``addition ($+$)'' \\Task with Biased Labels \end{tabular} & \begin{tabular}[c]{@{}l@{}}Aiming to retrieve the ``addition ($+$)'' task, a(?)b=(a+b). However, the \\ in-context examples are provided with a slightly different task\\ ``biased $+$'', a(?)b=(a+b+1).\end{tabular}   \\ \bottomrule                   
\end{tabular}
\label{table:Usetting}
\end{table}
\subsection{Experimental Setting}
\label{app:GPT4U}
Table~\ref{table:Usetting} introduces the experiment setting of GPT-4, including the system message, the prompt, the in-context task, the ``biased $+$'' task, and the ``addition ($+$)'' task.
Designating the ``biased $+$'' task as the in-context task, \ie, $\GGG{c_i}=\GGG{a_i}+\GGG{b_i}+1$, we measure the performances on two goals, including learning the ``biased $+$'' task and retrieving the ``addition ($+$)'' task. 


\begin{table}[t]
\centering
\caption{
Zero in-context example, $k=0$. 
Prediction is colored \red{red} if it is correct for task retrieval ($a(?)b=(a+b)$), and colored \blue{blue} if it is correct for task learning ($a(?)b=(a+b+1)$).
``...'' denotes the hidden part of the prompt.
Please refer to Table~\ref{table:Usetting} for the whole prompt.}
\begin{tabular}{cllll}
\toprule
Prompt  & \begin{tabular}[c]{@{}l@{}}...\\ 51(?)36=\\ ...\end{tabular}                                                                                                                                                                                                               & \begin{tabular}[c]{@{}l@{}}...\\ 27(?)15=\\ ...\end{tabular}                                                                                                                                  & \begin{tabular}[c]{@{}l@{}}...\\ 76(?)82=\\ ...\end{tabular}                                                                                                                                                                                                                         & \begin{tabular}[c]{@{}l@{}}...\\ 55(?)15=\\ ...\end{tabular} \\ \midrule
Results & \begin{tabular}[c]{@{}l@{}}Without knowing the \\ operation or rule that \\ connects the two \\ input integers to \\ the output integer in \\ the examples, it's \\ impossible to provide \\ a correct answer. \\ Please provide the \\ examples or the rule.\end{tabular} & \begin{tabular}[c]{@{}l@{}}Sorry, but your \\ questionis not\\ clear. Could \\ you please \\ provide more \\ information \\ about the \\ operation \\ between the\\ two numbers?\end{tabular} & \begin{tabular}[c]{@{}l@{}}Your question seems to \\ be missing some \\ information. Could you \\ please provide the \\ examples you mentioned? \\ They are necessary to \\ understand the relationship \\ between the two input \\ integers and the output \\ integer.\end{tabular} & \red{70}     \\ \bottomrule                                                     
\end{tabular}
\label{table:k0}
\end{table}

\begin{table}[t]
\centering
\caption{Two in-context examples, $k=2$. Prediction is colored \red{red} if it is correct for task retrieval ($a(?)b=(a+b)$), and colored \blue{blue} if it is correct for task learning ($a(?)b=(a+b+1)$). ``...'' denotes the hidden part of the prompt.
Please refer to Table~\ref{table:Usetting} for the whole prompt.}
\begin{tabular}{cllll} \toprule
Prompt  & \begin{tabular}[c]{@{}l@{}}...\\ 73(?)80=154\\ 59(?)22=82\\ 54(?)97=\\ ...\end{tabular} & \begin{tabular}[c]{@{}l@{}}...\\ 48(?)73=122\\ 78(?)80=159\\ 21(?)33=\\ ...\end{tabular} & \begin{tabular}[c]{@{}l@{}}...\\ 21(?)28=50\\ 69(?)29=99\\ 47(?)10=\\ ...\end{tabular} & \begin{tabular}[c]{@{}l@{}}...\\ 94(?)43=138\\ 98(?)70=169\\ 96(?)41=\\ ...\end{tabular} \\ \midrule
Results & \red{151}                                                                                     & \red{54}                                                                                       & \red{57}                                                                                        & 187       \\ \bottomrule                                                                                 
\end{tabular}
\label{table:k2}
\end{table}

\begin{table}[th!]
\centering
\caption{Eight in-context examples, $k=8$. Prediction is colored \red{red} if it is correct for task retrieval ($a(?)b=(a+b)$), and colored \blue{blue} if it is correct for task learning ($a(?)b=(a+b+1)$). ``...'' denotes the hidden part of the prompt.
Please refer to Table~\ref{table:Usetting} for the whole prompt.}

\begin{tabular}{cllll}
\toprule
Prompt  & \begin{tabular}[c]{@{}l@{}}...\\ 37(?)70=108\\ 41(?)18=60\\ 19(?)12=32\\ 82(?)67=150\\ 42(?)13=56\\ 26(?)41=68\\ 80(?)39=120\\ 58(?)23=82\\ 40(?)90=\\ ...\end{tabular} & \begin{tabular}[c]{@{}l@{}}...\\ 60(?)76=137\\ 69(?)26=96\\ 72(?)85=158\\ 39(?)10=50\\ 50(?)47=98\\ 19(?)63=83\\ 45(?)95=141\\ 69(?)41=111\\ 81(?)36=\\ ...\end{tabular} & \begin{tabular}[c]{@{}l@{}}...\\ 66(?)40=107\\ 46(?)81=128\\ 63(?)31=95\\ 41(?)24=66\\ 70(?)43=114\\ 89(?)84=174\\ 76(?)82=159\\ 46(?)28=75\\ 49(?)46=\\ ...\end{tabular} & \begin{tabular}[c]{@{}l@{}}...\\ 68(?)88=157\\ 34(?)18=53\\ 70(?)70=141\\ 13(?)35=49\\ 52(?)50=103\\ 72(?)32=105\\ 98(?)82=181\\ 55(?)51=107\\ 50(?)31=\\ ...\end{tabular} \\ \midrule
Results & \red{130}                                                                                                                                                                     & \blue{118}                                                                                                                                                                      & \blue{96}                                                                                                                                                                        & \blue{82}         \\ \bottomrule                                                                                                                                                               
\end{tabular}
\label{table:k8}
\end{table}
\subsection{Additional Results}
\label{app:additionGPT4}
This section collects four pairs of prompts and predictions for $k=0,2,8$ in Tables~\ref{table:k0}, ~\ref{table:k2}, and~\ref{table:k8}.
The results show that ICL with biased labels will initially retrieve a commonsense pretraining task due to task retrieval, and finally learn the in-context task because of task learning.
\begin{figure}[th!]
    \centering
    \includegraphics[width = 0.75\textwidth]{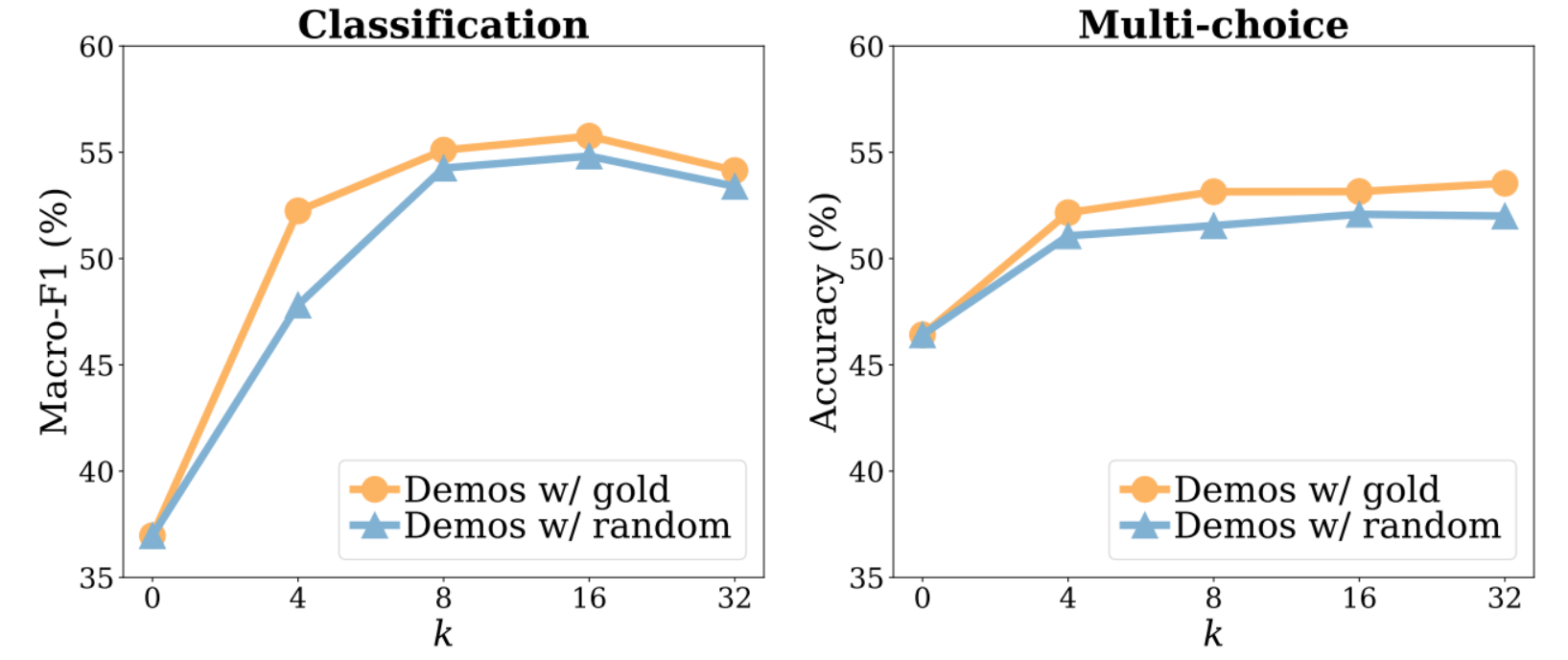}
    \caption{Ablations on varying numbers of examples in the demonstrations $(k)$. Models that are the best under 13B in each task category (Channel MetaICL and Direct GPT-J, respectively) are used.}
    \label{fig:copy:zeroicl}
\end{figure}
\begin{figure}[th!]
    \centering
    \includegraphics[width=0.8\textwidth]{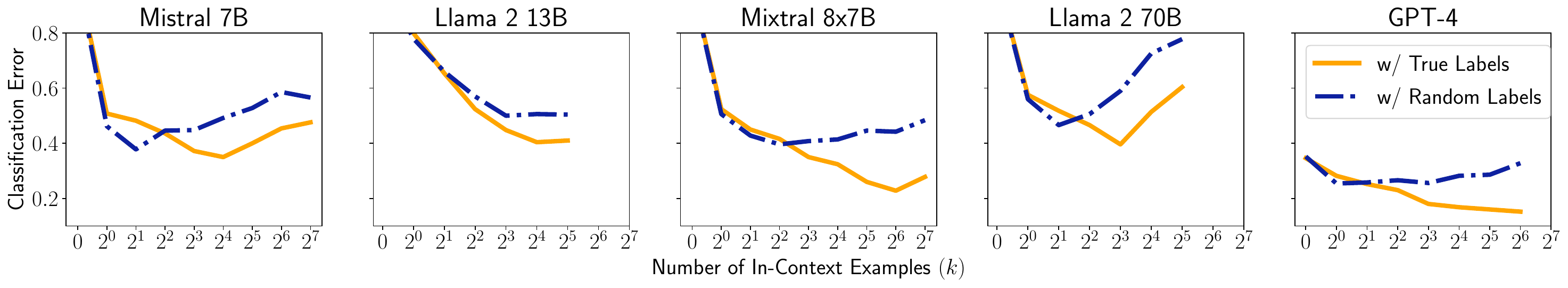}
    \caption{
    As $k$ increases, the classification error curve of ICL with random labels exhibits the bounded efficacy phenomenon.
    The curve with true labels further confirms that this phenomenon is not due to models tending to perform worse on long sequences.
    }
    \label{fig:ZeroICLall}
\end{figure}
\section{Bounded Efficacy in Zero-shot ICL}
\label{app:exp:zeroicl}
This section introduces the experiment setting of Fig.~\ref{fig:ZeroICL}.
We start by introducing the experiment results in Fig.~\ref{fig:copy:zeroicl} copied and pasted from the work of~\citet{MinLHALHZ22}.
While our theory shows the bounded efficacy phenomenon for ICL with non-informative labels (Lemma~\ref{lemma:zeroicl}), Fig.~\ref{fig:copy:zeroicl} seems to imply a conflict phenomenon.
Thus, we further extend the number of in-context examples in Fig.~\ref{fig:copy:zeroicl} left.
The classification task adopts five datasets including (i) glue-mrpc~\citep{DolanB05}, (ii) glue-rte~\citep{DaganGM05}, (iii) tweet\_eval-hate~\citep{BarbieriCAN20},
(iv) sick~\citep{MarelliMBBBZ14}, and (v) poem-sentiment~\citep{sheng2020investigating}.
We use the GitHub code\footnote{https://github.com/Alrope123/rethinking-demonstrations} released by \citet{MinLHALHZ22} to generate the same data and evaluate LLMs with a larger context length capacity aiming at a larger number of in-context examples.
We selected Mistral 7B (32768), Mixtral 8$\times$7B (32768), Llama2 13B (4096), Llama2 70B (4096), and GPT-4 (8192) for our experiments, with the integers in parentheses indicating the maximum context length for each model. We perform inference on large models with 8 H100 with the package vllm\footnote{https://docs.vllm.ai/en/latest/}.

\section{The Derivation of Posterior}
\label{sec:lemma}
This section provides detailed derivations for Lemma~\ref{lemma:posterior}.
We begin by showing the posterior is potentially still a Gaussian mixture in Sec.~\ref{app:p2p1}.
Then, in Sec.~\ref{app:p2p2}, we show how Eq.~\ref{prior} is proportion to Eq.~\ref{posterior}, which is precisely a Gaussian mixture.
\subsection{Prior to Posterior}
We start by showing the posterior is potentially still a Gaussian mixture.
For fixed $\Skx$:
\label{app:p2p1}
\begin{align}
&
    P(\vm,\vw \vert \Skx) 
\\&\propto
    P(\vm,\vw \vert \Skx)P(\Skx) 
\\&=
    P(\vm,\vw,\Skx)
\\&=
    P(\vm,\vw) P(\Skx \vert \vm,\vw)
\\&=
    \bigg(\sum_{m=1}^M \pi_m P(\vm,\vw \vert T_m)\bigg) P(\Skx \vert \vm,\vw)
\\&=
    \label{prior}
    \sum_{m=1}^M \pi_m P(\vm,\vw \vert T_m) P(\Skx \vert \vm,\vw)
\\&\propto
    \label{posterior}
    \sum_{m=1}^M \tpi_m P(\vm,\vw \vert \tT_m).
\end{align}
We give the derivation from Eq.~\ref{prior} to Eq.~\ref{posterior} in the next section.

\subsection{Closed-form Solution from Eq.~\ref{prior} to Eq.~\ref{posterior}}
\label{app:p2p2}
We analyze each component (indicated by a specific $m$) in Eq.~\ref{prior}.
Given fixed $\Skx$, for all $m\in[M]$ and all $(\vm,\vw)$, we have:
\begin{align}
    &\log(P(\vm,\vw \vert T_m) P(\Skx \vert \vm,\vw))
    \\&= 
    -\frac{\|\vm_m-\vm\|^2}{2\nm^2}
    -\frac{\|\vw_m-\vw\|^2}{2\nw^2}
    -\frac{\sum_{i=1}^{k+1}  \|\vm-\vx_i\|^2 }{2\nx^2}
    -\frac{\sum_{i=1}^k\|\vx_i^\top\vw-y_i\|^2}{2\ny^2}
    \\&~~~~~
    +\log\left(\frac{(2\pi)^{-d/2}}{\nm^d}\right)
    +\log\left(\frac{(2\pi)^{-d/2}}{\nw^d}\right)
    +(k+1)\log\left(\frac{(2\pi)^{-d/2}}{\nx^d}\right)
    +k\log\left(\frac{(2\pi)^{-1/2}}{\ny}\right)
    \\&
    (\text{Let } C_3 = \log\left(\frac{(2\pi)^{-d/2}}{\nm^d}\right)
    +\log\left(\frac{(2\pi)^{-d/2}}{\nw^d}\right)
    +(k+1)\log\left(\frac{(2\pi)^{-d/2}}{\nx^d}\right)
    +k\log\left(\frac{(2\pi)^{-1/2}}{\ny}\right).)
    \\&=
    C_3
    -\frac{\|\vm_m-\vm\|^2}{2\nm^2}
    -\frac{\|\vw_m-\vw\|^2}{2\nw^2}
    -\frac{\sum_{i=1}^{k+1}  \|\vm-\vx_i\|^2 }{2\nx^2}
    -\frac{\sum_{i=1}^k\|\vx_i^\top\vw-y_i\|^2}{2\ny^2}
    \\&=
    C_3
    -(\frac{\|\vm_m-\vm\|^2}{2\nm^2} + \frac{\sum_{i=1}^{k+1}  \|\vm-\vx_i\|^2 }{2\nx^2})
    -(\frac{\|\vw_m-\vw\|^2}{2\nw^2} + \frac{\sum_{i=1}^k\|\vx_i^\top\vw-y_i\|^2}{2\ny^2})
    \\&(\text{Let } \dm = \frac{\nm^2}{\nx^2} \text{ and } \dw  = \frac{\nw ^2}{\ny^2}.)
    \\&=
    C_3
    -\frac{1}{2\nm^2}\left((\|\vm_m\|^2-2\vm_m^\top\vm+\|\vm\|^2)
               +\dm\bigg((k+1)\|\vm\|^2 - 2\vm^\top\sum_{i=1}^{k+1}\vx_i+\sum_{i=1}^{k+1}\|\vx_i\|^2\bigg)\right)
    \\&~~~~~
    -\frac{1}{2\nm^2}\left((\|\vw_m\|^2-2\vw_m^\top\vw+\|\vw\|^2)
               +\dw\bigg(\sum_{i=1}^k\vw^\top\vx_i\vx_i^\top\vw - 2\vw^\top\sum_{i=1}^k\vx_i y_i+\sum_{i=1}^k y_i^2\bigg)\right)
    \\&=
    C_3-\frac{1}{2\nm^2}\left(\|\vm_m\|^2+(1+(k+1)\dm)\|\vm\|^2 - 2\vm\bigg(\vm_m+\dm\sum_{i=1}^{k+1}\vx_i\bigg)+\dm\sum_{i=1}^{k+1}\|\vx_i\|^2\right)
    \\&~~~~~
    -\frac{1}{2\nw^2}\left(\|\vw_m\|^2 +\vw^\top\bigg(\mI+\dw\sum_{i=1}^k\vx_i\vx_i^\top\bigg)\vw - 2\vw\bigg(\vw_m+\dw\sum_{i=1}^k\vx_i y_i\bigg)+\dw\sum_{i=1}^k y_i^2\right)
    \\&
    (\text{Let } C_4 = C_3 - \frac{\dm}{2\nm^2}\sum_{i=1}^{k+1}\|\vx_i\|^2 - \frac{\dw}{2\nw^2}\sum_{i=1}^k y_i^2.)
    \\&=
    C_4-\frac{1}{2\nm^2}\left(\|\vm_m\|^2+(1+(k+1)\dm)\|\vm\|^2 - 2\vm\bigg(\vm_m+\dm\sum_{i=1}^{k+1}\vx_i\bigg)\right)
    \\&~~~~~
    -\frac{1}{2\nw^2}\left(\|\vw_m\|^2 +\vw^\top\bigg(\mI+\dw\sum_{i=1}^k\vx_i\vx_i^\top\bigg)\vw - 2\vw\bigg(\vw_m+\dw\sum_{i=1}^k\vx_i y_i\bigg)\right)
    \\&
    (\text{Let } \mCm = \mI \text{ and } \mCw = \frac{\sum_{i=1}^k\vx_i\vx_i^\top}{k}.)
    \\&=
    C_4
    -\frac{1}{2\nm^2}\left(\|\vm_m\|^2+\|\vm\|^2_{\mI+(k+1)\dm\mCm} - 2\vm^\top\bigg(\vm_m+\dm\sum_{i=1}^{k+1}\vx_i\bigg)\right)
    \\&~~~~~
    -\frac{1}{2\nw^2}\left(\|\vw_m\|^2+\|\vw\|^2_{\mI+k\dw\mCw} - 2\vw^\top\bigg(\vw_m+\dw\sum_{i=1}^k\vx_i y_i\bigg)\right)
    \\&
    (\text{Let } \mm = \sum_{i=1}^{k+1} \vx_i \text{ and } \mw = \frac{\sum_{i=1}^k\vx_i y_i}{k}.)
\\&= 
    C_4
    -\frac{1}{2\nm^2}(\|\vm_m\|^2+\|\vm\|^2_{\mI+(k+1)\dm\mCm} - 2\vm^\top(\vm_m+(k+1)\dm\mm))
    \\&~~~~~
    -\frac{1}{2\nw^2}(\|\vw_m\|^2+\|\vw\|^2_{\mI+k\dw\mCw} - 2\vw^\top(\vw_m+k\dw\mw))
    \\&
    (\text{Let } \Delta_\mu = (k+1)\dm \text{ and } \Delta_w = k\dw.)
\\&= 
    C_4
    -\frac{1}{2\nm^2}(\|\vm_m\|^2+\|\vm\|^2_{\mI+\Delta_\mu\mCm} - 2\vm^\top(\vm_m+\Delta_\mu\mm))
    \\&~~~~~
    -\frac{1}{2\nw^2}(\|\vw_m\|^2+\|\vw\|^2_{\mI+\Delta_w\mCw} - 2\vw^\top(\vw_m+\Delta_w\mw))
\\&= 
    C_4
    -\nicefrac{\left(
        \|\vm_m\|^2+\left(\|\vm\|^2_{\mI+\DmCm} - 2\vm^\top(\vm_m+\Dmm) + \|\vm_m+\Dmm\|^2_{(\mI+\DmCm)^{-1}}\right)
        -\|\vm_m+\Dmm\|^2_{(\mI+\DmCm)^{-1}}
    \right)}{2\nm^2}
    \\&~~~
    -\nicefrac{\left(
        \|\vw_m\|^2+\left(\|\vw\|^2_{\mI+\DmCw} - 2\vw^\top(\vw_m+\Dmw) + \|\vw_m+\Dmw\|^2_{(\mI+\DmCw)^{-1}}\right)
        -\|\vw_m+\Dmw\|^2_{(\mI+\DmCw)^{-1}}
    \right)}{2\nw^2}
\\&= 
    C_4
        -\frac{1}{2\nm^2}
        \bigg(
            \left(\|\vm_m\|^2
            -\|\vm_m+\Dmm\|^2_{(\mI+\DmCm)^{-1}}\right)
            +
            \|\vm - (\mI+\DmCm)^{-1}(\vm_m+\Dmm) \|^2_{\mI+\DmCm}
        \bigg)
    \\&~~~~~
        -\frac{1}{2\nw^2}
        \bigg(
            \left(\|\vw_m\|^2
            -\|\vw_m+\Dmw\|^2_{(\mI+\DmCw)^{-1}}\right)
            +
            \|\vw - (\mI+\DmCw)^{-1}(\vw_m+\Dmw) \|^2_{\mI+\DmCw}
        \bigg).
\end{align}
Notice $C_4$ is independent to $m$, $\vm$, and $\vw$, thus we have:
\begin{align}
    &P(\vm,\vw \vert T_m) P(\Skx \vert \vm,\vw)
\\&\propto 
        \exp\Bigg(
        -\frac{1}{2\nm^2}
        \bigg(
            \left(\|\vm_m\|^2
            -\|\vm_m+\Dmm\|^2_{(\mI+\DmCm)^{-1}}\right)
            +
            \|\vm - (\mI+\DmCm)^{-1}(\vm_m+\Dmm) \|^2_{\mI+\DmCm}
        \bigg)
        \Bigg)
    \\&~~~~~
        \cdot
        \exp\Bigg(
        -\frac{1}{2\nw^2}
        \bigg(
            \left(\|\vw_m\|^2
            -\|\vw_m+\Dmw\|^2_{(\mI+\DmCw)^{-1}}\right)
            +
            \|\vw - (\mI+\DmCw)^{-1}(\vw_m+\Dmw) \|^2_{\mI+\DmCw}
        \bigg)
        \Bigg)
\\&\propto
    \underbrace{\exp\left(
        -\frac{
            \|\vm_m\|^2
            -\|\vm_m+(k+1)\dm\mm\|^2_{(\mI+(k+1)\dm\mCm)^{-1}}
        }{2\nm^2}\right)}_{c_m^\vm}
    \underbrace{\exp\left(
        -\frac{
            \|\vw_m\|^2
            -\|\vw_m+k\dw\mw\|^2_{(\mI+k\dw\mCw)^{-1}}
        }{2\nw^2}\right)}_{c_m^\vw}
    \\&~~~~~
    \cdot
    \mathcal{N}(\vm \vert (\mI+(k+1)\dm\mCm)^{-1}(\vm_m+(k+1)\dm\mm), \nm^2(\mI+(k+1)\dm\mCm)^{-1})
    \\&~~~~~
    \cdot
    \mathcal{N}(\vw \vert (\mI+k\dw\mCw)^{-1}(\vw_m+k\dw\mw), \sw^2(\mI+k\dw\mCw)^{-1}).
\end{align}
By defining $P(\vm,\vw \vert \tT)=\mathcal{N}(\vm \vert (\mI+(k+1)\dm\mCm)^{-1}(\vm_m+(k+1)\dm\mm), \nm^2(\mI+(k+1)\dm\mCm)^{-1})
\cdot
\mathcal{N}(\vw \vert (\mI+k\dw\mCw)^{-1}(\vw_m+k\dw\mw), \sw^2(\mI+k\dw\mCw)^{-1})$ and $\tpi_m = \pi_m c_m^\vm c_m^\vw$. We have:
\begin{align}
    \pi_m P(\vm,\vw \vert T_m) P(\Skx \vert \vm,\vw)
\propto
    \tpi_m P(\vm,\vw \vert \tT_m).
\end{align}
Therefore,
\begin{align}
    \sum_{m=1}^M \pi_m P(\vm,\vw \vert T_m) P(\Skx \vert \vm,\vw)
\propto
    \sum_{m=1}^M \tpi_m P(\vm,\vw \vert \tT_m).
\end{align}
\begin{figure}[t]
    \centering
    \includegraphics[width=1.00\textwidth]{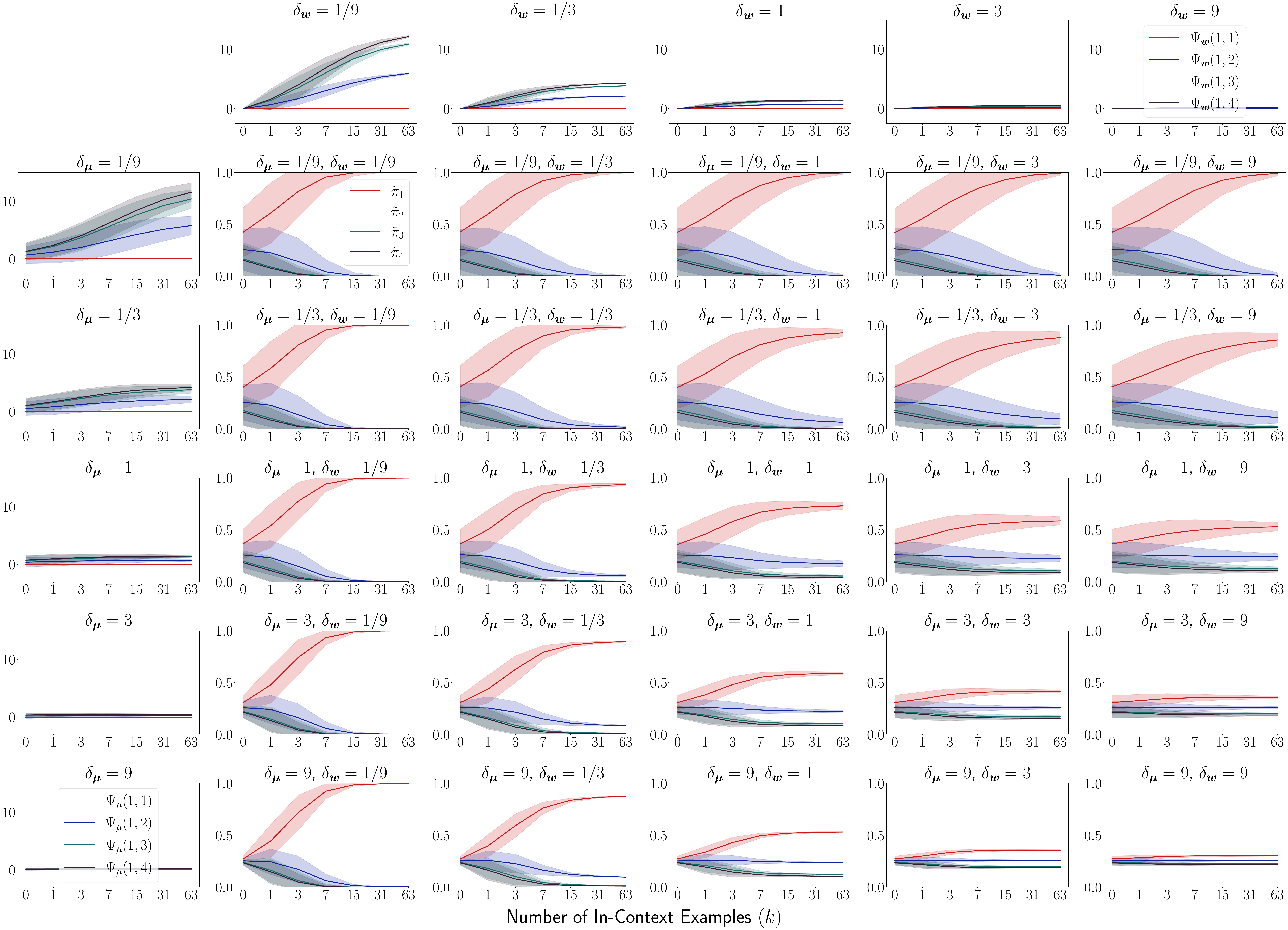}
    \caption{\textbf{Numerical analysis on component re-weighting.} The trends of $\advm$, $\advw$, and $\pi_m$ for \CR with increasing $k$ under varying task noise parameters.}
    \label{fig:TR_together}
\end{figure}
\section{Detailed Analysis of Component Shifting and Re-weighting}
\label{sec:detailedCSCR}
\subsection{Analysis of Component Re-weighting}
\label{subsec:TR}
This section analyzes the \CR effect on $\tpi_\beta$ as $k$ increases.
We focus on whether $\tpi_\alpha$ of $\tT_\alpha$ surpasses $\tpi_\beta$ of any other $\tT_\beta$ with $\beta\neq\alpha$, where $\alpha$ is the index of the closest prior center to the in-context task as described in Assumption~\ref{asu:source}.
We assess this via the ratio $r(\alpha,\beta)$ of $\tpi_\alpha$ to $\tpi_\beta$:
\begin{align}
\label{equation:ratio}
r(\alpha,\beta) 
= \frac{\tpi_\alpha}{\tpi_\beta}
= \frac{\pi_\alpha C_0 c^\vm_\alpha c^\vw_\alpha}{\pi_\beta C_0 c^\vm_\beta c^\vw_\beta}
=\frac{\pi_\alpha}{\pi_\beta}\exp(\advm(\alpha,\beta)+\advw(\alpha,\beta)),
\end{align}
where we define two functions $\Psi_\vm(\alpha,\beta)=\log(c^\vm_\alpha/c^\vm_\beta)$ and $\Psi_\vw(\alpha,\beta)=\log(c^\vw_\alpha/c^\vw_\beta)$ to facilitate the analyses of how $r(\alpha,\beta)$ changes with increasing $k$.

\paragraph{Analysis of $\advm(\alpha,\beta)$.}
\label{subsubsec:advm}
We further simplify the function $\advm(\alpha,\beta)$ as follows:
\begin{align}
\label{re-weight:vmu}
\advm(\alpha,\beta)
=
(
    \sum_{i=1}^{k+1} \|\vm_\beta-\vx_i\|^2 - \sum_{i=1}^{k+1} \|\vm_\alpha-\vx_i\|^2
)/(
    2\nx^2(1+(k+1)\dm)
).
\end{align}
(\text{See Appendix~\ref{bppsub:advm} for derivation}.) Since $\vx_i \sim \mathcal{N}(\vmt,\nxd^2\mI)$, choosing $\vmt$ closer to $\vm_\alpha$ tends to make $\advm(\alpha,\beta)$ positive and increase faster with increasing $k$.
However, as $k$ approaches infinity, $\advm(\alpha,\beta)$ stabilizes rather than increasing infinitely, \ie,
$    \lim_{k\rightarrow\infty}\advm(\alpha,\beta)
    =
    (\|\vm_\beta-\vmt\|^2 - \|\vm_\alpha-\vmt\|^2)/        (2\nm^2).
$
The leftmost column of Fig.~\ref{fig:TR_together} shows the numerical computation of $\advm(\alpha,\beta)$ with varied task noises under the tetrahedron setting (see Appendix~\ref{sec:3d} for setting details).
The smaller the value of $\dm$ ($= \frac{\nm^2}{\nx^2}$) is, the easier for $\advm(\alpha,\beta)$ to increase as $k$ increases.

Meanwhile, we also have:
\begin{align}
\label{converge:dm2advm}
    \lim_{\sm \rightarrow 0} \advm(\alpha, \beta)
=
    (
    \sum_{i=1}^{k+1} \|\vm_\beta-\vx_i\|^2 - \sum_{i=1}^{k+1} \|\vm_\alpha-\vx_i\|^2
    )/(2\nx^2)
\end{align}

\paragraph{Analysis of $\advw(\alpha,\beta)$.}
\label{subsubsec:advw}
We further simplify the function $\advw(\alpha,\beta)$ as follows:
\begin{align}
\label{re-weight:vw}
\advw(\alpha,\beta)
=
(
    \|\vw_\beta-\vwt\|^2_{\mI-(\mI+k\dw\mCw)^{-1}}-\|\vw_\alpha-\vwt\|^2_{\mI-(\mI+k\dw\mCw)^{-1}}
)/
    (2\nw^2)
.
\end{align}
(\text{See Appendix~\ref{bppsub:advw} for derivation}.) Since $k\dw\mCw$ ($= \dw \sum_{i=1}^k \vx_i\vx_i^\top$, see definition of $\mCw$ in Lemma~\ref{lemma:posterior}) is semi-positive definite, thus choosing $\vwt$ closer to $\vw_\alpha$ tends to make $\advw(\alpha,\beta)$ positive and increase faster as $k$ increases.
However, as $k$ approaches infinity, $\lim_{k\rightarrow\infty} k\dw\mCw = \lim_{k\rightarrow\infty} k\dw \frac{\sum_{i=1}^k \vx_i\vx_i^\top}{k} = k\dw (\vmt\vmt^\top+\nxd^2\mI)$. 
Thus, $\lim_{k\rightarrow\infty} \mI-(\mI+k\dw\mCw)^{-1} = \mI$ and $\advw(\alpha,\beta)$ stabilizes rather than increasing infinitely, \ie,
$    \lim_{k\rightarrow\infty}\advw(\alpha,\beta)
    =
    (
        \|\vw_\beta-\vwt\|^2-\|\vw_\alpha-\vwt\|^2
    )/
        (2\nw^2)
$.
The topmost row of Fig.~\ref{fig:TR_together} shows the numerical computation of $\advw(\alpha,\beta)$ with varied task noises under the tetrahedron setting (see Appendix~\ref{sec:3d} for setting details).
The smaller the value of $\dw$ ($= \frac{\nw^2}{\ny^2}$) is, the easier for $\advw(\alpha,\beta)$ to increase as $k$ increases.
However, one should note that $\|\vw_\beta-\vwt\|^2 \geq \|\vw_\alpha-\vwt\|^2$ does not necessarily imply $\|\vw_\beta-\vwt\|^2_{\mI-(\mI+k\dw\mCw)^{-1}} \geq \|\vw_\alpha-\vwt\|^2_{\mI-(\mI+k\dw\mCw)^{-1}}$.

Meanwhile, we also have:
\begin{align}
    \lim_{\sw \rightarrow 0} \advw(\alpha, \beta)
&=
    (\|\vw_\beta-\vwt\|^2_{k\dw\mCw} - \|\vw_\alpha-\vwt\|^2_{k\dw\mCw}
    )/(2\nw^2)
\\&=
    (\|\vm_\beta-\vx_i\|^2_{k\mCw} - \|\vm_\alpha-\vx_i\|^2_{k\mCw}
    )/(2\ny^2)
\\&\label{converge:dw2advw}
    =
    (\sum_{i=1}^k\|y_i^\beta-y_i^*\|^2 - \sum_{i=1}^k\|y_i^\alpha-y_i^*\|^2
    )/(2\ny^2),
\end{align}
where $y_i^\beta = \langle \vx_i, \vw_\beta \rangle$, $y_i^\alpha = \langle \vx_i, \vw_\alpha \rangle$, and $y_i^* = \langle \vx_i, \vwt \rangle$.

Therefore, combine Eqs. \ref{converge:dm2advm} and \ref{converge:dw2advw} and we have:
\begin{align}
&
    \lim_{\sm,\sw \rightarrow 0} \advm(\alpha, \beta) + \advw(\alpha, \beta)
\\\label{advconverge}&=
    \frac{
        \|\vm_\beta-\vx_{k+1}\|^2 - \|\vm_\alpha-\vx_{k+1}\|^2
    }{2\nx^2}
    + \sum_{i=1}^k
    (
        \frac{
        \|\vm_\beta-\vx_i\|^2 - \|\vm_\alpha-\vx_i\|^2
    }{2\nx^2}
    +
        \frac{
        \|y_i^\beta-y^*_i\|^2 - \|y_i^\alpha-y^*_i\|^2
    }{2\ny^2}
    )
\end{align}

\paragraph{Numerical Computations of Component Re-weighting.}
We have seen how noises $\nm$ and $\nw$ of the task prior affect the values of $\advm$ and $\advw$ with increasing $k$.
We further show the numerical computation of $\tpi_\beta$ in the center of Fig.~\ref{fig:TR_together}.
The figure shows that the smaller $\dm$ and $\dw$ are, the larger $\advm(\alpha,\beta)$ and $\advw(\alpha,\beta)$ will be with increasing $k$, and the easier for the mixture component $\tT_\alpha$ to dominates in the posterior with an increasing number of in-context examples.

\begin{figure}[t]
    \centering
    \includegraphics[width=.90\textwidth]{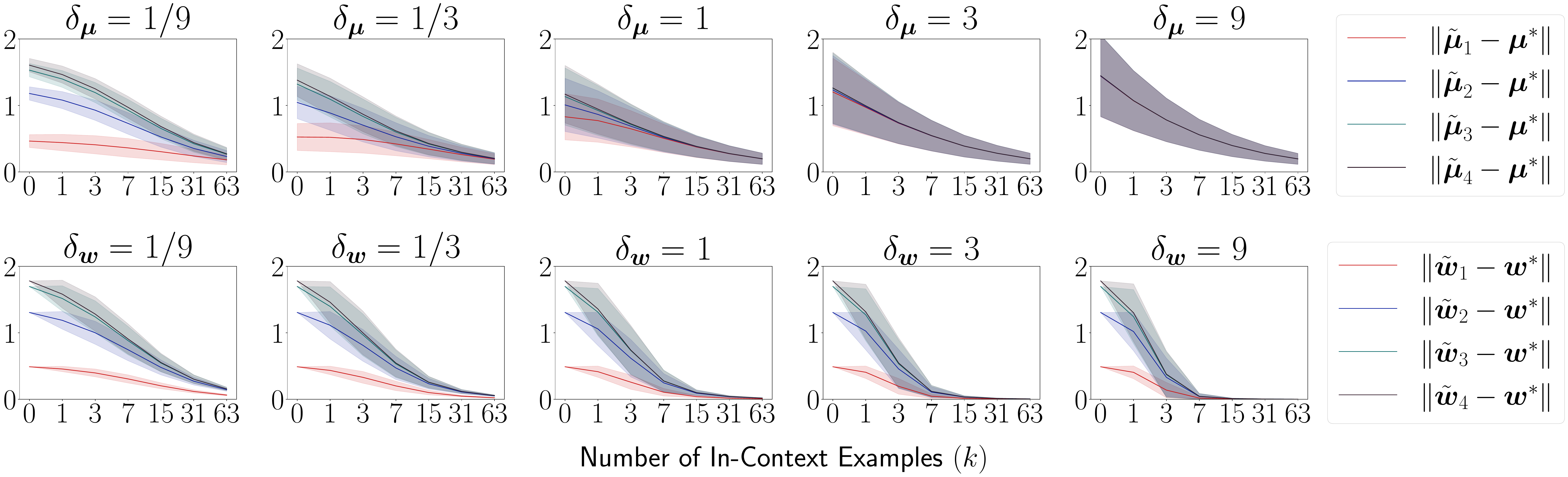}
    \caption{Numerical computations of $\|\tvm_m-\vmt\|$, $\|\tvw_m-\vwt\|$ for \CSfull (\CS).}
    \label{fig:TS_separate}
\end{figure}
\subsection{Analysis of \CSfull}
\label{subsec:TS}
The \CSfull effect in Lemma~\ref{lemma:posterior} involves shifting the variables $\tvm_m$ and $\tvw_m$:
\begin{align}
\label{equation: shift1}
\tvm_m &= 
(\mI+(k+1)\dm\mCm)^{-1}(\vm_m+(k+1)\dm\mm),\\
\label{equation: shift2}
\tvw_m &= 
(\mI+k\dw\mCw)^{-1}(\vw_m+k\dw\mw).
\end{align}
The following analyses examine these two variables with increasing $k$.

\paragraph{Analysis of $\tvm_m$.}
We provide the derivation of $\tvm_m$ in Eq.~\ref{equation: shift1} (see Appendix~\ref{bppsub:shiftingm} for details):
\begin{align}
    \label{shift:vmu}
    \tvm_m 
    = (\vm_m+k\dm\mm)/(1+(k+1)\dm).
\end{align}
Thus, when $k$ increases, $\tvm_m$ moves close to the value of $\frac{\sum_{i=1}^k \vx_i}{k}$ and $\lim_{k\rightarrow \infty} \tvm_m=\vmt$. 
We also show the numerical computation of the distance between shifted $\tvm_m$ and $\vmt$ in the first row of Fig.~\ref{fig:TS_separate}.

\paragraph{Analysis of $\tvw_m$.}
We provide the derivation of $\tvw_m$ in Eq.~\ref{equation: shift2} (see Appendix~\ref{bppsub:shiftingw} for details):
\begin{align}
\label{shift:vw}
\tvw_m
= (\mI+k\dw\mCw)^{-1}(\vw_m-\vwt) + \vwt.
\end{align}
Notice when $k\rightarrow \infty$, $k\dw\mCw = k\dw \frac{\sum_{i=1}^k \vx_i\vx_i^\top}{k} \rightarrow k\dw (\nxd^2\mI + \vwt\vwt^\top)$, thus $\lambda_d(k\dw\mCw) \rightarrow \infty$, $\lambda_1((\mI+k\dw\mCw)^{-1}) \rightarrow 0$, $\lim_{k\rightarrow \infty}(\mI+k\dw\mCw)^{-1}(\vw_m-\vwt) \leq \lim_{k\rightarrow \infty}\lambda_1((\mI+k\dw\mCw)^{-1})\cdot\|\vw_m-\vwt\| = 0$ and $\lim_{k\rightarrow \infty} \tvw_m=\vwt$, where $\lambda_d(\mA)$ indicates the minimum eigenvalue of $\mA$.
We also show the numerical computed distance between $\tvw_m$ and $\vwt$ in the second row of Fig.~\ref{fig:TS_separate}.

\subsection{Derivation Collection of $\Psi_\mu(\alpha,\beta)$ and $\Psi_w(\alpha,\beta)$}
\label{bpp:adv}
This section collects derivations for $\Psi_\vm(\alpha,\beta)$ and $\Psi_\vw(\alpha,\beta)$.
The derivation of $\Psi_\vm(\alpha,\beta)$ is collected in Sec~\ref{bppsub:advm} and the derivation of $\Psi_\vw(\alpha,\beta)$ is collected in Sec~\ref{bppsub:advw}.

\subsubsection{Derivation of $\Psi_\mu(\alpha,\beta)$}
\label{bppsub:advm}
This section collects the derivation of $\Psi_\vm(\alpha,\beta)$ in Eq.~\ref{re-weight:vmu} of Sec.~\ref{subsubsec:advm}:
\begin{align}
&\Psi_\vm(\alpha,\beta)
\\&=
    \log(c^\vm_\alpha/c^\vm_\beta)
\\&=
\log\left(
\frac{
    \exp\left(
    -\frac{
         \|\vm_\beta\|^2 - \|\vm_\beta+(k+1)\dm\mm\|^2_{(\mI+(k+1)\dm\mCm)^{-1}}
    }{2\smu^2}
    \right)
}{
    \exp\left(
    -\frac{
        \|\vm_\alpha\|^2 - \|\vm_\alpha+(k+1)\dm\mm\|^2_{(\mI+(k+1)\dm\mCm)^{-1}} 
    }{2\smu^2}
    \right)
}
\right)\\
&=
\frac{(1+(k+1)\dm)\|\vm_\beta\|^2 - \|\vm_\beta+\dm \sum_{i=1}^{k+1} \vx_i\|^2}{2\smu^2(1+(k+1)\dm)}
-
\frac{(1+(k+1)\dm)\|\vm_\alpha\|^2 - \|\vm_\alpha+\dm \sum_{i=1}^{k+1} \vx_i\|^2}{2\smu^2(1+(k+1)\dm)}\\
&=
\frac{ - \|\vm_\beta+\dm \sum_{i=1}^{k+1} \vx_i\|^2}{2\smu^2(1+(k+1)\dm)}
-
\frac{ - \|\vm_\alpha+\dm \sum_{i=1}^{k+1} \vx_i\|^2}{2\smu^2(1+(k+1)\dm)}\\
&=
\frac{
    - \|\vm_\beta\|^2 
    - 2\vm_\beta^\top(\dm \sum_{i=1}^{k+1} \vx_i)
    - \|\dm \sum_{i=1}^{k+1} \vx_i\|^2
}{2\smu^2(1+(k+1)\dm)}
-
\frac{
    - \|\vm_\alpha\|^2 
    - 2\vm_\alpha^\top(\dm \sum_{i=1}^{k+1} \vx_i)
    - \|\dm \sum_{i=1}^{k+1} \vx_i\|^2
}{2\smu^2(1+(k+1)\dm)}
\\
&=
\frac{
    (k+1)\dm\|\vm_\beta\|^2 
    - 2\vm_\beta^\top(\dm \sum_{i=1}^{k+1} \vx_i)
    + \dm \sum_{i=1}^{k+1} \|\vx_i\|^2
}{2\smu^2(1+(k+1)\dm)}
-
\frac{
    (k+1)\dm\|\vm_\alpha\|^2 
    - 2\vm_\alpha^\top(\dm \sum_{i=1}^{k+1} \vx_i)
    + \dm \sum_{i=1}^{k+1} \|\vx_i\|^2
}{2\smu^2(1+(k+1)\dm)}
\\
&=
\frac{
    \sum_{i=1}^{k+1} \dm\|\vm_\beta-\vx_i\|^2
}{2\smu^2(1+(k+1)\dm)}
-
\frac{
    \sum_{i=1}^{k+1} \dm\|\vm_\alpha-\vx_i\|^2 
}{2\smu^2(1+(k+1)\dm)}
\\
&=
\frac{
    \sum_{i=1}^{k+1} \|\vm_\beta-\vx_i\|^2 - \sum_{i=1}^{k+1} \|\vm_\alpha-\vx_i\|^2
}{
    2\nx^2(1+(k+1)\dm)
}.
\end{align}

\subsubsection{Derivation of $\Psi_w(\alpha,\beta)$}
\label{bppsub:advw}
This section collects the derivation of $\Psi_\vw(\alpha,\beta)$ in Eq.~\ref{re-weight:vw} of Sec.~\ref{subsubsec:advw}:
\begin{align}
&\Psi_\vw(\alpha,\beta)
\\&=
    \log(c^\vw_\alpha/c^\vw_\beta)
\\&=
\log\left(
\frac{
    \exp\left(
    -\frac{
        \|\vw_\alpha\|^2 - \|\vw_\alpha+k\dw\mw\|^2_{(\mI+k\dw\mCw)^{-1}} 
    }{2\sw^2}
    \right)
}{
    \exp\left(
    -\frac{
        \|\vw_\beta\|^2 - \|\vw_\beta+k\dw\mw\|^2_{(\mI+k\dw\mCw)^{-1}} 
    }{2\sw^2}
    \right)
}
\right)
\\&=
\frac{
    \|\vw_\beta\|^2-\|\vw_\beta +k\dw\mw\|^2_{(\mI+k\dw\mCw)^{-1}}
}{
    2\sw^2
}
-
\frac{
    \|\vw_\alpha\|^2-\|\vw_\alpha+k\dw\mw\|^2_{(\mI+k\dw\mCw)^{-1}} 
}{
    2\sw^2
}
\\&
(\text{Note } k\dw\mw = \dw \sum_{i=1}^k \vx_i y_i = \dw \sum_{i=1}^k \vx_i \vx_i^\top \vwt = k\dw\mCw \vwt 
.)
\\&=
\frac{
    \|\vw_\beta\|^2-\|\vw_\beta +k\dw\mCw \vwt\|^2_{(\mI+k\dw\mCw)^{-1}}
}{
    2\sw^2
}
-
\frac{
    \|\vw_\alpha\|-\|\vw_\alpha+k\dw\mCw \vwt\|^2_{(\mI+k\dw\mCw)^{-1}} 
}{
    2\sw^2
}
\\&=
\frac{
    \|\vw_\beta\|^2-\|(\vw_\beta-\vwt) +(\mI+k\dw\mCw) \vwt\|^2_{(\mI+k\dw\mCw)^{-1}}
}{
    2\sw^2
}
-
\frac{
    \|\vw_\alpha\|^2-\|(\vw_\alpha-\vwt)+(\mI+k\dw\mCw) \vwt\|^2_{(\mI+k\dw\mCw)^{-1}} 
}{
    2\sw^2
}
\\&=
\frac{
    \|\vw_\beta\|^2-\|\vw_\beta-\vwt\|^2_{(\mI+k\dw\mCw)^{-1}} -2(\vw_\beta-\vwt)^\top\vwt
}{
    2\sw^2
}
-
\frac{
    \|\vw_\alpha\|^2-\|\vw_\alpha-\vwt\|^2_{(\mI+k\dw\mCw)^{-1}} -2(\vw_\alpha-\vwt)^\top\vwt
}{
    2\sw^2
}
\\&=
\frac{
    \|\vw_\beta-\vwt\|^2 - \|\vw_\beta-\vwt\|^2_{(\mI+k\dw\mCw)^{-1}}
}{
    2\sw^2
}
-
\frac{
    \|\vw_\alpha-\vwt\|^2 - \|\vw_\alpha-\vwt\|^2_{(\mI+k\dw\mCw)^{-1}}
}{
    2\sw^2
}
\\&=
\frac{
    \|\vw_\beta-\vwt\|^2_{\mI-(\mI+k\dw\mCw)^{-1}}-\|\vw_\alpha-\vwt\|^2_{\mI-(\mI+k\dw\mCw)^{-1}}
}{
    2\sw^2
}.
\end{align}

\subsection{Derivation Collection of $\tilde{\vm}_m$ and $\tilde{\vw}_m$}
\label{bpp:shifting}
This section collects derivations for $\tvm_m$ and $\tvw_m$.
The derivation of $\tvm_m$ is collected in Appendix~\ref{bppsub:shiftingm}, and the derivation of $\tvw_m$ is collected in Appendix~\ref{bppsub:shiftingw}.

\subsubsection{Derivation of $\tilde{\vm}_m$}
\label{bppsub:shiftingm}
This section collects the derivation of $\tvm_m$ in Eq.~\ref{shift:vmu} of Sec.~\ref{subsubsec:advm}: 
\begin{align}
    \tvmu_m
    &= (\mI+(k+1)\dm\mCm)^{-1}(\vm_m+(k+1)\dm\mm)\\
    &= (\mI+(k+1)\dm\mI)^{-1}(\vm_m+\dm\sum_{i=1}^{k+1}\vx_i)\\
    &= \frac{\vm_m+\dm\sum_{i=1}^{k+1}\vx_i}{1+(k+1)\dm}.
\end{align}

\subsubsection{Derivation of $\tilde{\vw}_m$}
\label{bppsub:shiftingw}
This section collects the derivation of $\tvw_m$ in Eq.~\ref{shift:vw} of Sec.~\ref{subsubsec:advm}:
\begin{align}
\tvw_m &= (\mI+k\dw\mCw)^{-1}(\vw_m+k\dw\mw)\\
&(\text{Recall } k\dw\mw = \dw \sum_{i=1}^k\vx_i y_i = \dw \sum_{i=1}^k\vx_i\vx_i^\top\vwt = k\dw\mCw \vwt.)\\
&= (\mI+k\dw\mCw)^{-1}(\vw_m+k\dw\mCw \vwt)\\
&= (\mI+k\dw\mCw)^{-1}(\vw_m-\vwt+(\mI+k\dw\mCw) \vwt)\\
\label{equation:vwshift}
&= (\mI+k\dw\mCw)^{-1}(\vw_m-\vwt) + \vwt.
\end{align}

\section{Additional Experiments for Early Ascent}
\subsection{Early Ascent and Bounded Efficacy under Noisy Labels}
\label{app:expnoisy}
We further examine phenomena of early ascent and bounded efficacy with noisy labels under varied noise levels.
The results show that these two phenomena are robust to label noises to some extend.
\begin{figure}[th!]
    \centering
    \subfigure[ICL risk under label noise level $\tau_y=0.0$.]{
        \includegraphics[width=0.45\textwidth]{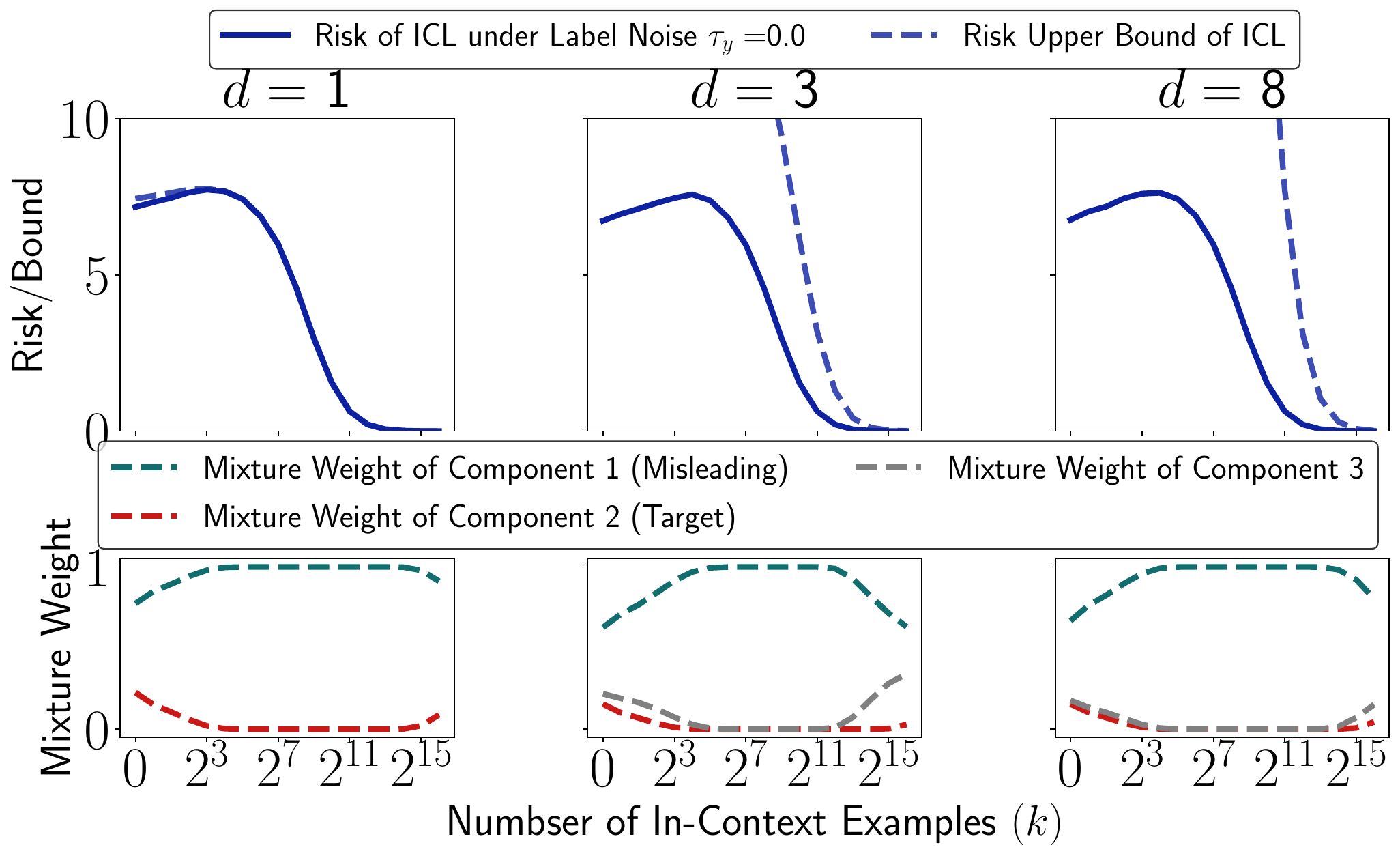}
        \label{fig:earlyascentnoise1}}
    \hspace{1em}
    \subfigure[ICL risk under label noise level $\tau_y=0.01$.]{
        \includegraphics[width=0.45\textwidth]{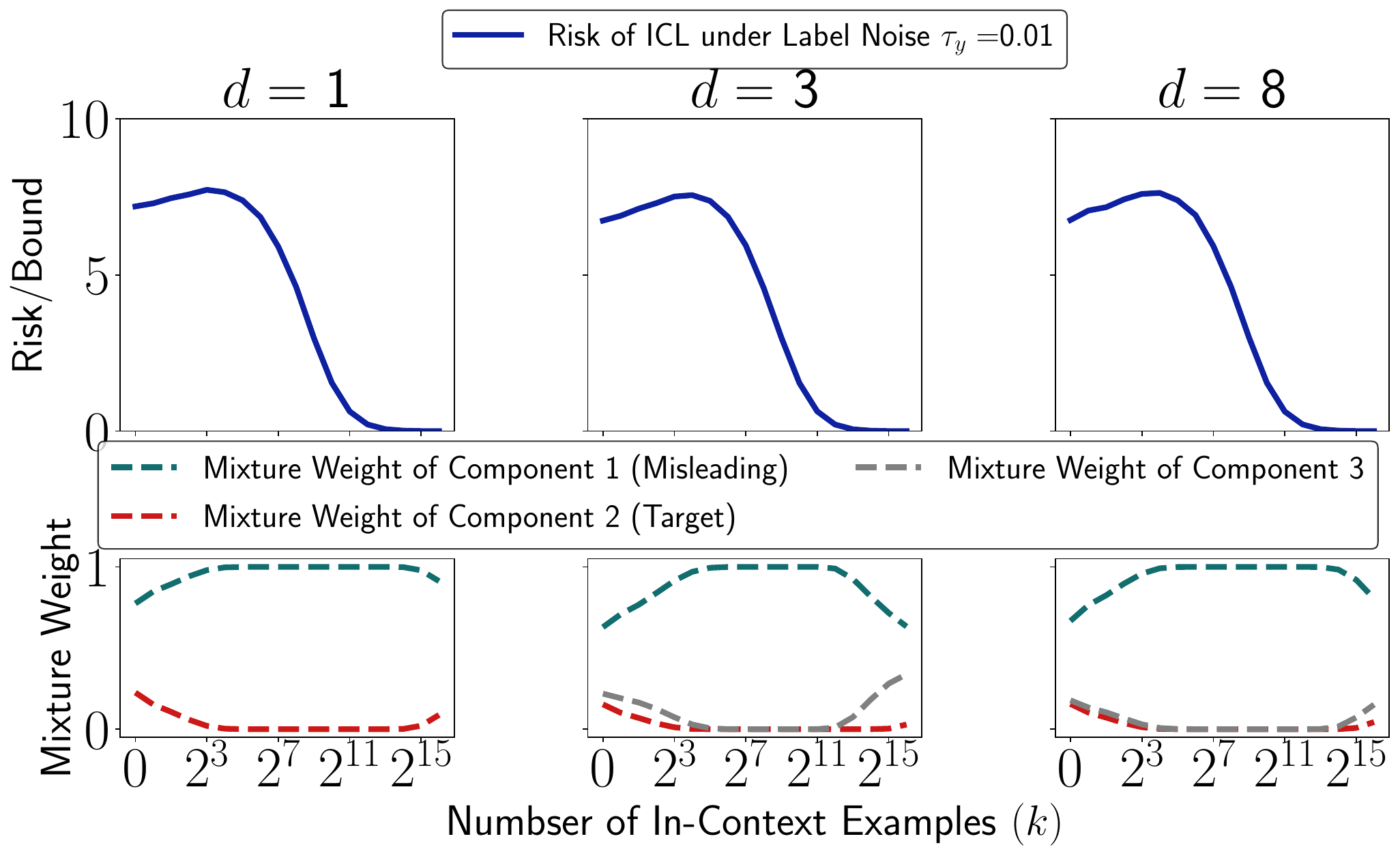}
        \label{fig:earlyascentnoise2}}
    \hspace{1em}
    \subfigure[ICL risk under label noise level $\tau_y=0.1$.]{
        \includegraphics[width=0.45\textwidth]{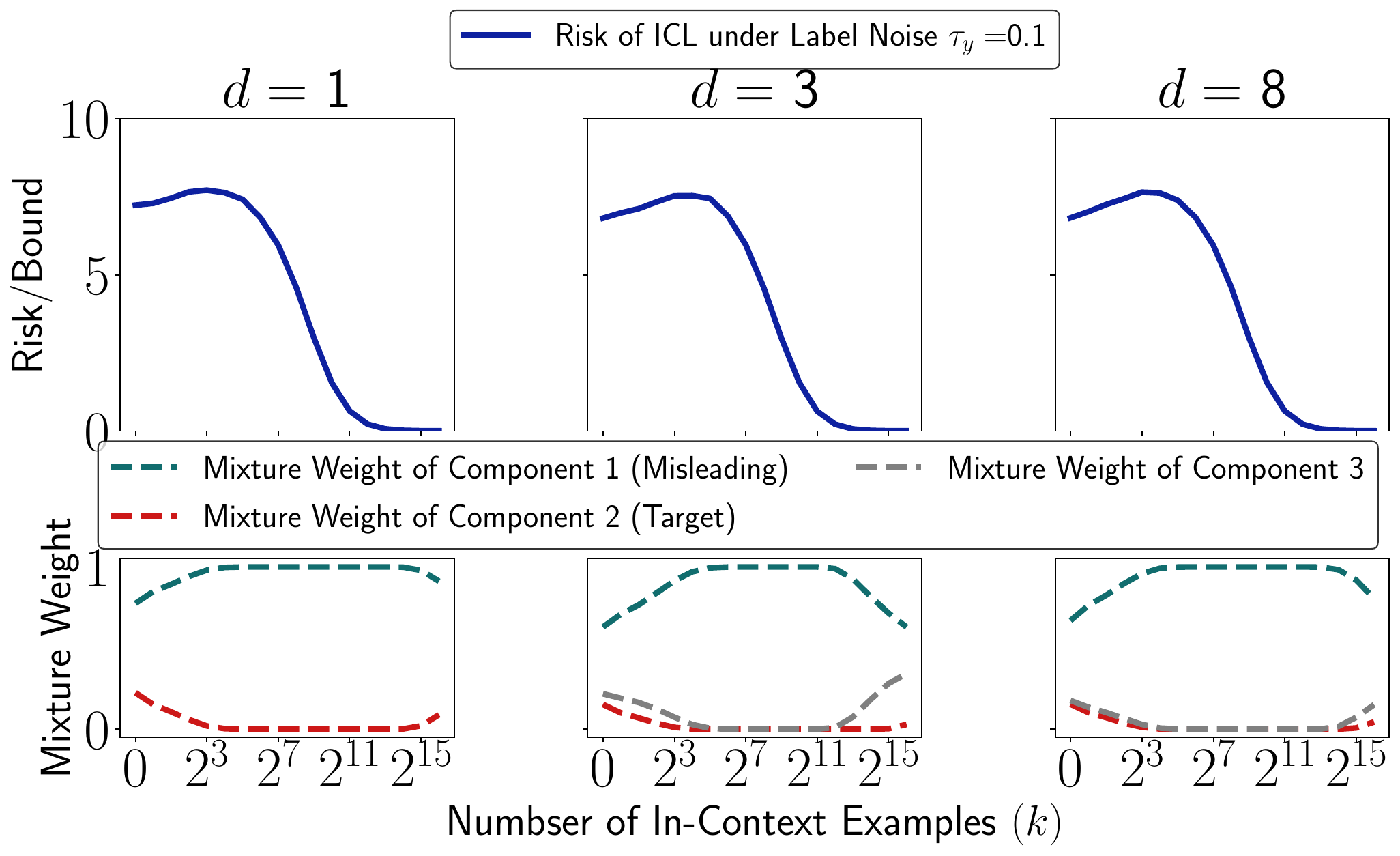}
        \label{fig:earlyascentnoise3}}
    \hspace{1em}
    \subfigure[ICL risk under label noise level $\tau_y=1.0$.]{
        \includegraphics[width=0.45\textwidth]{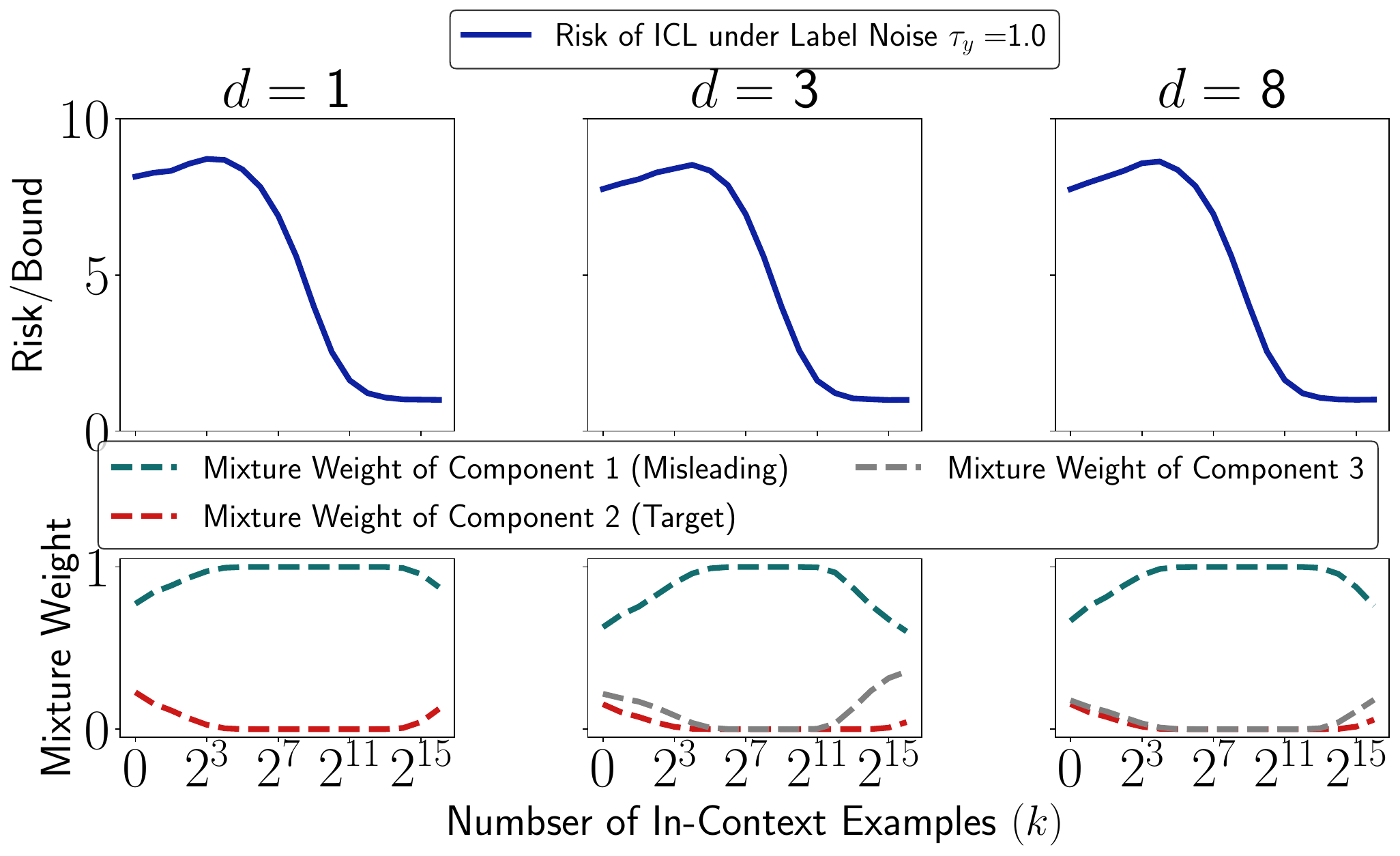}
        \label{fig:earlyascentnoise4}}
    \caption{\textbf{Early ascent under varied label noises.} Results show that the early ascent phenomenon maintains for noise level $\tau_y\in[0,1.0]$.
    Label noise level $\ny=1.0$ is used for pretraining.
    }
    \label{fig:earlyascentnoise}
\end{figure}
\begin{figure}[th!]
    \centering
    \includegraphics[width=0.9\textwidth]{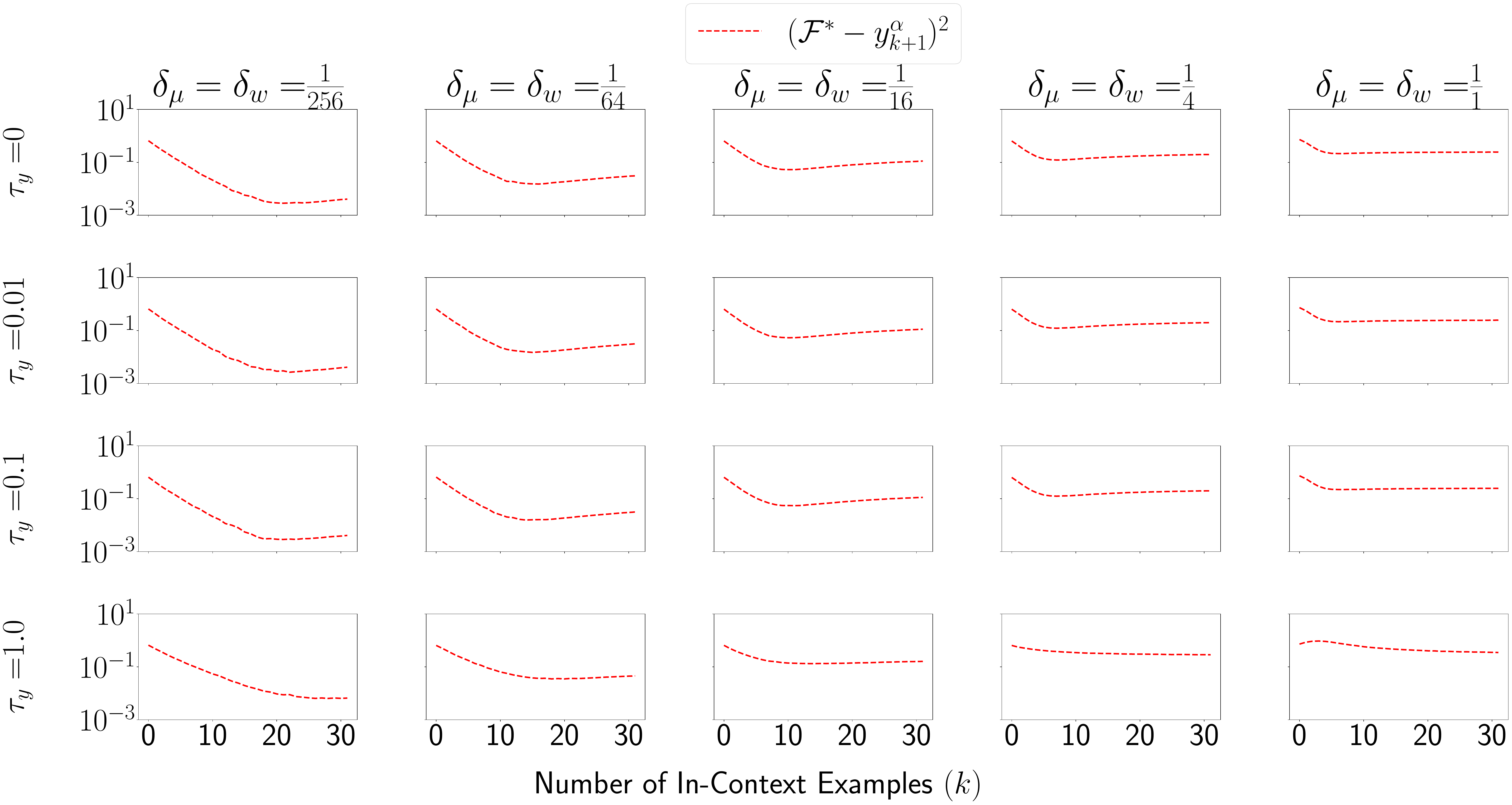}
    \caption{\textbf{Bounded efficacy under varied label noises.} Results show that the bounded efficacy phenomenon maintains for noise level $\tau_y\in[0,0.1]$.
    Label noise level $\ny=1.0$ is used for pretraining.}
    \label{fig:boundedefficacynoise}
\end{figure}

\subsection{Early Ascent under Non-Linear Regression and Discrete Token Prediction}
\label{app:earlyascent:nonlinear&discrete}
This section uses Fig.~\ref{fig:rebuttal} to show the existence of the early ascent phenomenon on non-linear regression and discrete token prediction with our designed distributions of pretraining and in-context samples.
Fig.~\ref{fig:nonlinear} shows that the early ascent phenomenon exists when a 2-layer neural network with Tanh Activation function serves as the non-linear function, and Fig.~\ref{fig:textualized} shows that the early ascent phenomenon exists when the dataset consists of sequences of tokens with discrete values rather than sequences of vectors with continuous values.
For the details of experiments including our designed distributions of pretraining and in-context samples, please refer to Sec.~\ref{non-linear} for the experiment with non-linear regression and Sec.~\ref{textualized} for the experiment with discrete token prediction.

\begin{figure}[th!]
\centering
    \subfigure[Experiment under non-linear regressions.]{
        \includegraphics[width=0.45\textwidth]{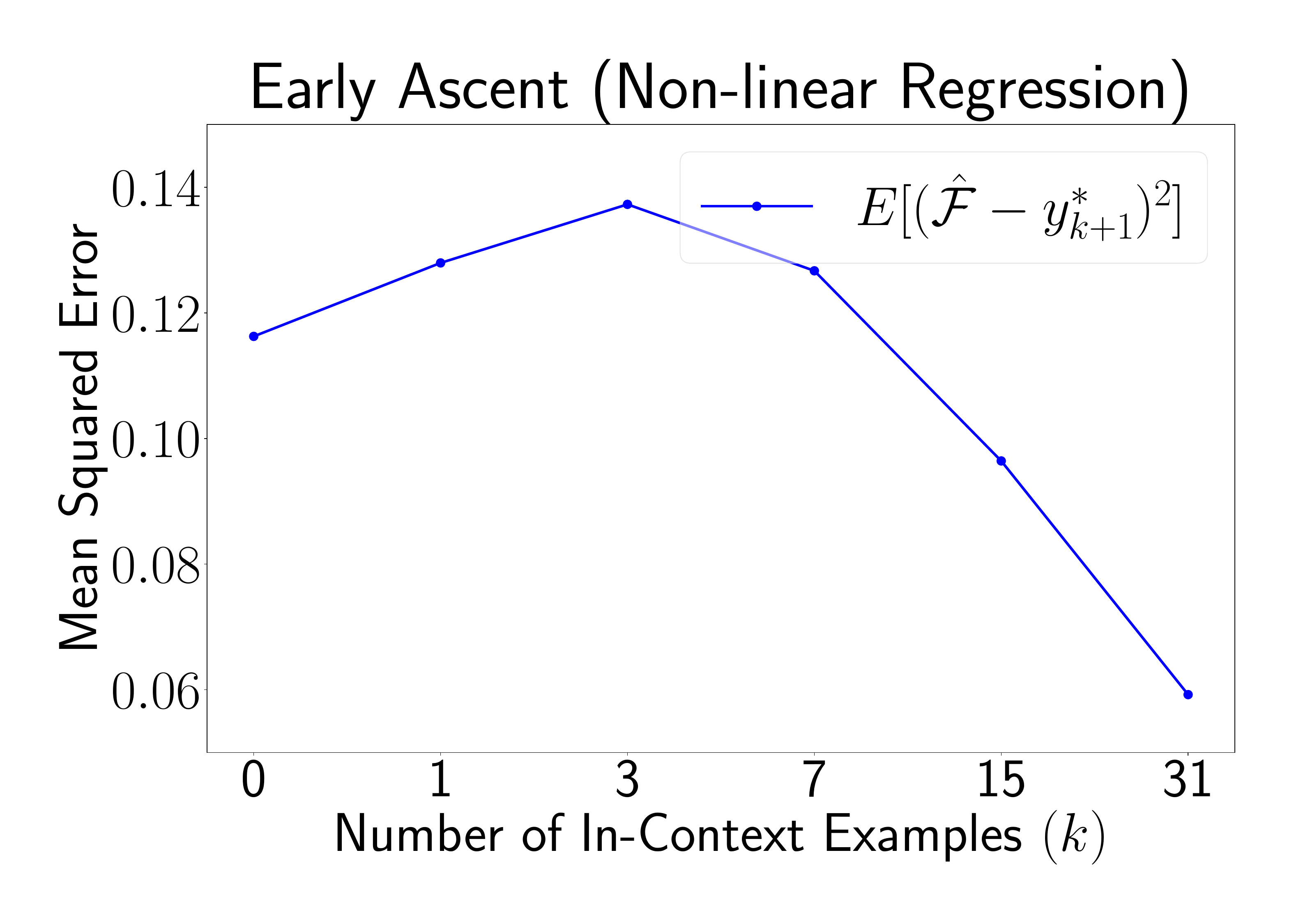}
        \label{fig:nonlinear}}
    \hspace{1em}
    \subfigure[Experiment under discrete token prediction.]{
        \includegraphics[width=0.45\textwidth]{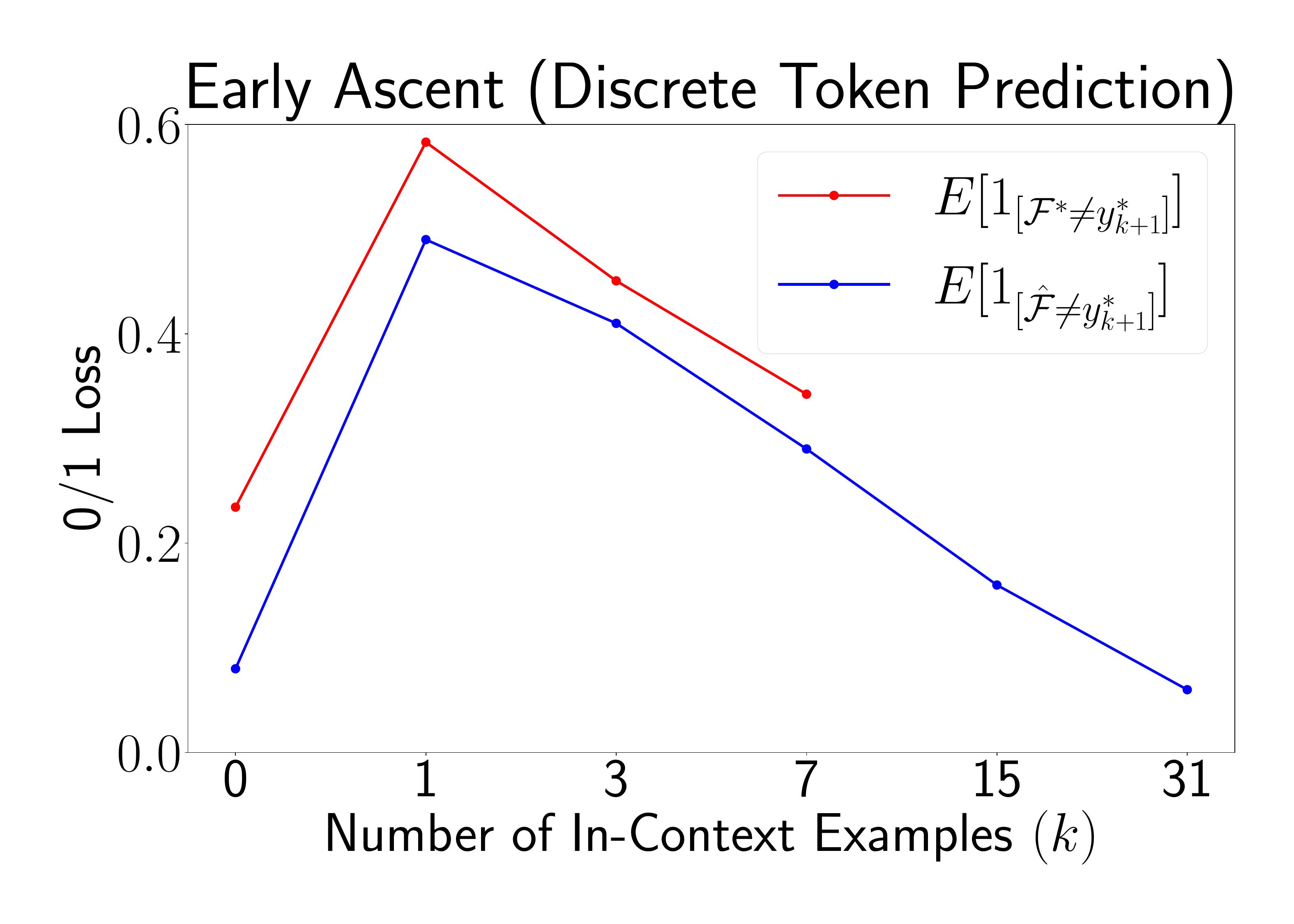}
        \label{fig:textualized}}
    \caption{$\hat{\F}$ indicates the prediction by a pretrained Transformer model and $\F^*$ indicates the prediction by numerical computation following a Bayes optimal predictor. While we cannot derive the optimal predictor under non-linear regression, we can derive the optimal predictor under discrete token prediction.}
    \label{fig:rebuttal}
\end{figure}

\subsubsection{Experiment Design for Non-Linear Regression}
\label{non-linear}
The following assumption shows the data generation model to generate a non-linear sequence $[\vx_1,y_1,\ldots,\vx_K,y_K]$, where $\vx_i$ is a vector and $y_i$ is a scalar.
The non-linear function mapping $\vx$ to $y$ is highlighted in red in the assumption.
\begin{asu}[Pretraining Data Generative Model for Non-linear Regression]
\label{asu:nonlinear_assumption}
\,~\\
\subasu \label{asu:nonlinear_assumption1}
sample a task from the task distribution: $(\vm,\mW,\vv)\sim \Dpr, P(\vm,\mW,\vv) = \sum_{m=1}^M \pi_m P(\vm,\mW,\vv \vert T_m)$, where $T_m$ is the $m^\text{th}$ mixture component, \ie, $P(\vm,\mW,\vv \vert T_m) = \mathcal{N}(\vm; \vm_m,\nm^2\mI)\cdot \frac{1}{\sqrt{(2\pi)^{d^2} \sigma_W^{d^2}}} \exp(\frac{\|\mW-\mW_m\|_F^2}{2})\cdot\mathcal{N}(\vv; \vv_m,\sigma_v^2\mI)$, and $\pi_m$ is the mixture weight.
$\mathcal{N}(\vx; \vm,\bm{\Sigma})$ denotes the probability of $\vx$ in the multivariate normal distribution with mean $\vm$ and covariance matrix $\bm{\Sigma}$, $\|\cdot\|_F$ indicates the Frobenius norm, $\sum_{m=1}^M\pi_m=1$, $0<\pi_m<1$, $(\vm_m,\vw_m)$ is the center of the mixture component $T_m$,
and all components share the same covariance matrix controlled by $\nm$, $\sigma_W$, and $\sigma_v$;\\
\subasu \label{asu:nonlinear_assumption2}
input variable distribution: within a sequence, $\forall i \in [K]$, $\vx_i \sim \ddist(\vm), P(\vx\vert\vm) = \mathcal{N}(\vx\vert\vm,\nx^2\mI)$;\\
\subasu \label{asu:nonlinear_assumption3} 
label distribution: within a sequence, $\forall i \in [K]$, $y_i\vert\vx_i \sim \mathcal{D}_{y\vert \vx_i}(\mW,\vv), P(y_i\vert\vx_i,\mW,\vv) = \mathcal{N}(y_i\vert \textcolor{black}{\langle\tanh(\mW\vx_i), \vv \rangle},\ny^2)$, where $\tanh()$ is a Tanh Activation function;\\
\subasu \label{asu:nonlinear_assumption5} 
$\vx,\vm,\vm_m,\vv,\vv_m \in \mathbb{R}^d$, and $\mW, \mW_m \in \mathbb{R}^{d\times d}$.
\end{asu}

For experimental setting of Fig.~\ref{fig:nonlinear}, we set $d=2$, $\sm=1, \sigma_W=\sigma_v=0.5$, $\sigma_x=\sigma_y=1$, $M=2$, $\pi_1=0.1, \pi_2=0.9$, $\vm_1=[1,0]^\top, \vm_2=[0,1]^\top$, $\mW_1=\begin{bmatrix}
1 & 0 \\
0 & 0
\end{bmatrix},
\mW_2=\begin{bmatrix}
0 & 0 \\
0 & 1
\end{bmatrix}$, and $\vv_1=[1,0]^\top, \vv_2=[0,1]^\top$.
In-context samples follows task $(\vm^*,\mW^*,\vv^*)$, where $\vm^*=\vm_1$, $\mW^*=\mW_2$, $\vv^*=\vv_2$, and $\sigma_y=1$.
Notice that although we add label noise to in-context samples, when evaluating the prediction, we calculate error/loss based on the clean label.

\subsubsection{Experiment Design for Discrete Token Prediction}
\label{textualized}
The following assumption shows the data generation model to generate a non-linear sequence $[x_1,y_1,\ldots,x_K,y_K]$, where $x_i$ and $y_i$ are both integers (discrete tokens). 
\begin{asu}[Pretraining Data Generative Model for Discrete Token Prediction]
\label{asu:token_assumption}
\,~\\
\subasu \label{asu:token_assumption1}
sample a task from the task distribution: $(\mu,w)\sim \Dpr, \mu\in[M],w\in[M]$, $P(\mu,w) =\sum_{m=1}^M \pi_m P(\mu,w \vert T_m)$, where $T_m$ is the $m^\text{th}$ mixture component, \ie, $P(\mu,w \vert T_m) = \text{1}_{[w=w_m]}((1-(M-1)\sigma_\mu)\text{1}_{[\mu=\mu_m]}+\sigma_\mu\text{1}_{[\mu\neq\mu_m]})$, and $\pi_m$ is the mixture weight.
\\
\subasu \label{asu:token_assumption2}
input variable distribution: within a sequence, $\forall i \in [K]$, $x_i \sim \mathcal{D}_x(\mu), P(x_i\vert\mu) = (1-(M-1)\sigma_x)\text{1}_{[x=\mu]}+\sigma_x\text{1}_{[x\neq\mu]}$;\\
\subasu \label{asu:token_assumption3} 
label distribution: within a sequence, $\forall i \in [K]$, $y_i\vert x_i \sim \mathcal{D}_{y\vert x_i}(w), P(y_i\vert x_i,w) = (1-(M-1)\sigma_y)\text{1}_{[\textcolor{black}{y_i=x_i+w \mod M}]}+\sigma_y\text{1}_{[y_i\neq x_i+w \mod M]}$.
\end{asu}

For experimental setting of Fig.~\ref{fig:textualized}, we set $M=6$,$\pi_1=0.04,\pi_3=0.481,\pi_5=0.479,\pi_2=\pi_4=\pi_6=0$, $\sigma_\mu=0.05$, $\sigma_x=0.04$, $\sigma_y=0.13$, $\mu_1=w_1=1, \mu_3=w_3=3, \mu_5=w_5=5$.
In-context samples follows task $(\mu^*, w^*)$, where $\mu^*=\mu_1$, $w^*=w_3$, and $\sigma_y=0.13$. Notice that although we add label noise to in-context samples, when evaluating the prediction, we calculate error/loss based on the clean label.
\section{Mathematical Derivation for Early Ascent}
\label{app:FlippedUMath}
We show that the early ascent phenomenon occurs under a specific setting in Sec.~\ref{app:earlyascent:example}.
Then, we give formal theory with proof to show when early ascent happens in Sec.~\ref{app:earlyascent:theory}.
\subsection{A Specific Setting of Early Ascent}
\label{app:earlyascent:example}
To have a cleaner mathematical understanding of this phenomenon, this section uses the setting of $d=1$, the first row, in Table~\ref{table:earlysetting} to show the mathematical logic. (Some parameter settings are described in Table~\ref{table:earlysetting}'s caption.)
Following Theorem~\ref{the:finegrainedlearning}, the upper bound of ICL risk is as follows:
\begin{align}
    &\E_{\Skx}[\mathcal{L}_k^*]
    \\&<
    \sum_{\beta=1}^2 \|\vw_\beta-\vwt\|^2 \E_{\Skx}[\tpi_\beta \|\vx_{k+1}\|^2 \lambda_1(\mA)^2]
    \\&=
    \|\vw_1-\vwt\|^2 \E_{\Skx}[\tpi_1 \|\vx_{k+1}\|^2 \lambda_1(\mA)^2] + \|\vw_2-\vwt\|^2 \E_{\Skx}[\tpi_2 \|\vx_{k+1}\|^2 \lambda_1(\mA)^2]
    \\&
    (\text{Notice } \vw_2=\vwt, \|\vw_1-\vwt\|^2=2^2=4.)
    \\&=
    4 \E_{\Skx}[\tpi_1 \|\vx_{k+1}\|^2 \lambda_1(\mA)^2]
    \\&
    (\text{Notice } \tpi_1+\tpi_2=1.)
    \\&=
    4 \E_{\Skx}\left[\frac{\tpi_1}{\tpi_1+\tpi_2} \|\vx_{k+1}\|^2 \lambda_1(\mA)^2\right]
    \\&
    (\text{Recall } \frac{\tpi_1}{\tpi_2} = r(1,2) \text{ as Eq.~\ref{equation:ratio}}.)
    \\&=
    4 \E_{\Skx}\left[\frac{r(1,2)}{1+r(1,2)} \|\vx_{k+1}\|^2 \lambda_1(\mA)^2\right].
\end{align}
Noticing $\dm=\frac{0.05^2}{1^2}$ and $\dw=\frac{0.05^2}{2^2}$ are very small, when $k$ is small, we have $k\dw\approx 0$ and $\lambda_1(\mA)=(\mI+\dw\sum_{i=1}^k\vx_i\vx_i^\top)^{-1}\approx \mI$, thus $\E_{\Skx}\left[\frac{r(1,2)}{1+r(1,2)} \|\vx_{k+1}\|^2 \lambda_1(\mA)^2\right]\approx \E_{\Skx}\left[\frac{r(1,2)}{1+r(1,2)} \|\vx_{k+1}\|^2\right]$ and a larger $r(1,2)$ means a larger upper bound. 
In the following, we will examine whether the increase of $k$ leads to the increase of $r(1,2)$.

Following Eq.~\ref{equation:ratio}:
\begin{align}
    r(1,2)
    &=\frac{1/2}{1/2}\exp(\advm(1,2)+\advw(1,2))
    \\
    &=\exp(\advm(1,2)+\advw(1,2)).
\end{align}

We first analyze $\advm(1,2)$, following Eq.~\ref{re-weight:vmu}:
\begin{align}
    \E[\advm(1,2)]
    &=
    \E\left[\frac{\sum_{i=1}^{k+1} \|\vm_2-\vx_i\|^2 - \sum_{i=1}^{k+1} \|\vm_1-\vx_i\|^2
    }{
        2\nx^2(1+(k+1)\dm)
    }\right]
    \\&
    (\text{Since } \dm \approx 0, \text{ thus when } k \text{ is small, we have:})
    \\&\approx
    \E\left[\frac{
    \sum_{i=1}^{k+1} \|\vm_2-\vx_i\|^2 - \sum_{i=1}^{k+1} \|\vm_1-\vx_i\|^2
    }{
        2\nx^2
    }\right]
    \\&=
    \frac{k+1}{2\nx^2}\E\left[
    \|\vm_2-\vx_1\|^2 - \|\vm_1-\vx_1\|^2
    \right]
    \\&=
    \frac{k+1}{2\nx^2}(\E[\|\vm_2-\vx_1\|^2] - \E[\|\vm_1-\vx_1\|^2])
    \\&=
    \frac{k+1}{2\nx^2}(\E[\|\vm_2-\vmt\|^2]+\nxd^2) - (\E[\|\vm_1-\vmt\|^2]+\nxd^2)
    \\
    &(\vmt \text{ is the same as } \vm_1, \text{ but different from } \vm_2.)
    \\&=
    \frac{k+1}{2\nx^2}(\E[\|\vm_2-\vmt\|^2]-0)
    \\
    &=\frac{k+1}{2\times 1^2}\times 2^2
    \\
    &=2(k+1).
\end{align}
We then analyze $\advw(1,2)$, following Eq.~\ref{re-weight:vw}:
\begin{align}
    \E[\advw(1,2)]
    &=
    \E\left[-\frac{\|\vw_1-\vwt\|^2_{\mI-(\mI+k\dw\mCw)^{-1}}}{2\nw^2}\right]
    \\&
    (\text{Since } \dw \approx 0, \text{ thus when } k \text{ is small, we have:})
    \\&\approx
    -\E\left[\frac{(\vw_1-\vwt)^\top k\dw\mCw (\vw_1-\vwt)}{2\nw^2}\right]
    \\
    &(\text{Notice the feature dimension } d=1, \mCw=\frac{\sum_{i=1}^k \|\vx_i\|^2}{k}.)
    \\&\approx
    -\E\left[\frac{\|\vw_1-\vwt\|^2 k\dw\sum_{i=1}^k \|\vx_i\|^2}{2\nw^2}\right]
    \\&=
    -\E\left[\frac{2\sum_{i=1}^k \|\vx_i\|^2}{\ny^2}\right]
    \\&=
    -\frac{2k}{\ny^2} \E\left[\|\vx_1\|^2\right]
    \\&=
    -\frac{2k}{\ny^2} (\|\vmt\|^2+\nxd^2)
    \\&=
    -\frac{2k}{2^2}\times(1+1) = -k.
\end{align}
\begin{wrapfigure}{r}{0.27\textwidth}
\vspace{-0.8in}
  \begin{center}
    \includegraphics[width=0.25\textwidth]{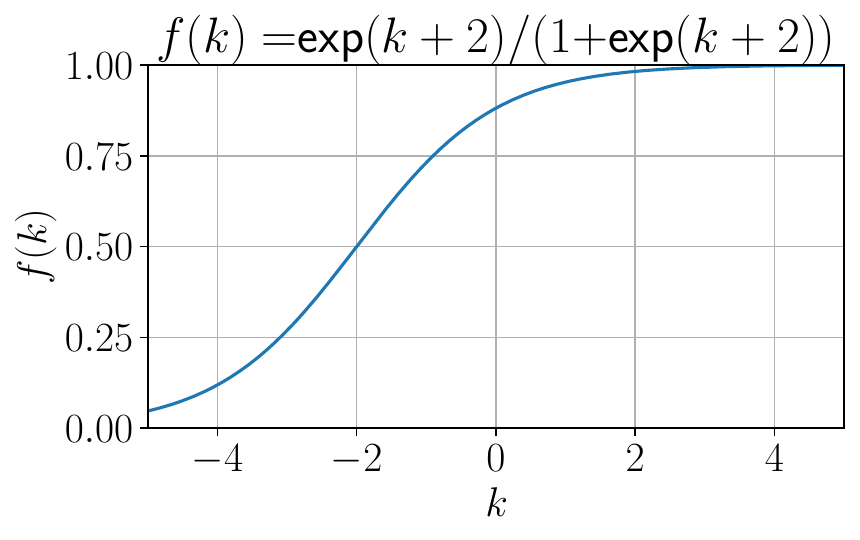}
  \end{center}
  \vspace{-0.1in}
  \caption{
  Illustration of the function $\exp(k+2)/(1+\exp(k+2))$
    }\label{fig:sigmoid}
\end{wrapfigure}
Therefore, when $k$ is small, $r(1,2)= \advm(1,2)+\advw(1,2) \approx\exp(k+2)$, and the upper bound is approximately equal to:
\begin{align}
    4 \E_{\Skx}\left[\frac{\exp(k+2)}{1+\exp(k+2)} \|\vx_{k+1}\|^2\right],
\end{align}
which increases as the number of in-context examples increases.

\subsection{Theorem of Early Ascent}
\label{app:earlyascent:theory}
\newtheorem*{mytheorem}{Theorem~\ref{the:earlyascent}}
\begin{mytheorem}[Early Ascent]
Assume $\E_{\vx_1}\left[\left(\F^*(\vx_1)-\langle \vwt,\vx_1 \rangle\right)^2\right] < \E_{\vx_1}\left[\langle\vx_1, \vw_\alpha-\vwt \rangle^2 \right]$, where $\alpha = \argmin\limits_m 
\frac{
    \|\vm_m-\vmt\|^2
}{2\nx^2}
+
\frac{
    \|(\vw_m-\vwt)^\top\vmt\|^2 + d\tau_x^2\|\vw_m-\vwt\|^2
}{2\ny^2}
$.
Then, when $\dm$ and $\dw$ are small enough, we have the early ascent phenomenon on the risk:
\begin{align}
    \exists k\geq1 \text{ s.t. } 
    \E_{\vx_1}\left[\left(\F^*(\vx_1)-\langle \vwt,\vx_1 \rangle\right)^2\right]
<
    \E_{\Skx}\left[\left(\F^*(\Skx)-\langle \vwt,\vx_{k+1} \rangle\right)^2\right].
\end{align}
\end{mytheorem}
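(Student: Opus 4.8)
\textbf{Proof plan for Theorem~\ref{the:earlyascent} (Early Ascent).}
The plan is to show that, in the limit $\dm,\dw\to 0$, the one-shot prediction $\F^*(\Skx)$ at $k=1$ is arbitrarily close to the prediction produced by the single misleading prior center $\alpha$, so that the one-shot risk is close to $\E_{\vx_1}[\langle\vx_1,\vw_\alpha-\vwt\rangle^2]$; by the hypothesis this strictly exceeds the zero-shot risk $\E_{\vx_1}[(\F^*(\vx_1)-\langle\vwt,\vx_1\rangle)^2]$, which gives the desired $k$ (namely $k=1$) by a continuity/limiting argument. First I would write, using Corollary~\ref{corollary:prediction}, $\F^*(\Skx)=\langle\vx_{k+1},\tvw\rangle$ with $\tvw=\sum_m\tpi_m\tvw_m$, and observe that the zero-shot risk, $k=1$ risk, and the reference quantity $\E_{\vx_1}[\langle\vx_1,\vw_\alpha-\vwt\rangle^2]$ are all continuous functions of $(\dm,\dw)$ on a neighborhood of $(0,0)$ (the only $(\dm,\dw)$-dependence enters through $\tpi_m$ and $\tvw_m$, which are continuous by Lemma~\ref{lemma:posterior}). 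Hence it suffices to prove the strict inequality in the limit and then invoke continuity to extend it to all sufficiently small $\dm,\dw$.

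Next I would compute the $\dm,\dw\to 0$ limit of $\tvw$ at $k=1$. By Eq.~\ref{shift:vw}, $\tvw_m=(\mI+k\dw\mCw)^{-1}(\vw_m-\vwt)+\vwt\to\vw_m$ as $\dw\to 0$, so the component means remain at the prior centers. For the mixture weights, the key is the re-weighting ratio analysis of Appendix~\ref{subsec:TR}: $r(m,m')=\tpi_m/\tpi_{m'}=(\pi_m/\pi_{m'})\exp(\advm(m,m')+\advw(m,m'))$, and by Eqs.~\ref{converge:dm2advm},~\ref{converge:dw2advw} the limit $\lim_{\dm,\dw\to 0}\advm(\alpha,\beta)+\advw(\alpha,\beta)$ equals, for $k=1$ (so the sums over $i$ run over $i=1$ only, plus the extra $\vx_2$ term in $\advm$), a positive multiple of
\begin{align}
\frac{\|\vm_\beta-\vx_1\|^2-\|\vm_\alpha-\vx_1\|^2}{2\nx^2}+\frac{\|\langle\vx_1,\vw_\beta-\vwt\rangle\|^2-\|\langle\vx_1,\vw_\alpha-\vwt\rangle\|^2}{2\ny^2}+\text{(the }\vx_2\text{ term)}.\nonumber
\end{align}
Taking the expectation over $\vx_1\sim\mathcal N(\vmt,\tau_x^2\mI)$ and $\vx_2\sim\mathcal N(\vmt,\tau_x^2\mI)$ and using $\E\|\vm_\beta-\vx_1\|^2=\|\vm_\beta-\vmt\|^2+d\tau_x^2$ and $\E\langle\vx_1,\vw_\beta-\vwt\rangle^2=\langle\vmt,\vw_\beta-\vwt\rangle^2+\tau_x^2\|\vw_\beta-\vwt\|^2$, the sign of the exponent's drift as $\dm,\dw\to 0$ is governed precisely by the quantity defining $\alpha$ in the theorem statement; since $\alpha$ is the \emph{argmin} of that quantity, the exponent $\advm(\alpha,\beta)+\advw(\alpha,\beta)$ is driven to $+\infty$ (in the appropriate regime of small noises with $k$ fixed at $1$ but the effective "signal-to-noise" of the single example large relative to $\dm,\dw$), so $\tpi_\alpha\to 1$ and $\tpi_\beta\to 0$ for all $\beta\neq\alpha$. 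Consequently $\tvw\to\vw_\alpha$ and $\F^*(\Skx)\to\langle\vx_2,\vw_\alpha\rangle$, whence $\E_{\Skx}[(\F^*(\Skx)-\langle\vwt,\vx_2\rangle)^2]\to\E_{\vx}[\langle\vx,\vw_\alpha-\vwt\rangle^2]$.

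Finally, combining the two limits: the zero-shot risk is unchanged by the $\dm,\dw\to 0$ limit (it does not depend on $k$), and the hypothesis $\E_{\vx_1}[(\F^*(\vx_1)-\langle\vwt,\vx_1\rangle)^2]<\E_{\vx_1}[\langle\vx_1,\vw_\alpha-\vwt\rangle^2]$ is exactly the statement that, in the limit, the one-shot ($k=1$) risk strictly exceeds the zero-shot risk. By continuity of all three expressions in $(\dm,\dw)$, the strict inequality persists for $\dm,\dw$ small enough, which yields $\exists k\geq 1$ (take $k=1$) with $\E_{\vx_1}[(\F^*(\vx_1)-\langle\vwt,\vx_1\rangle)^2]<\E_{\Skx}[(\F^*(\Skx)-\langle\vwt,\vx_{k+1}\rangle)^2]$, as claimed. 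The main obstacle I anticipate is making the double limit rigorous: one must control the order in which $k$ stays fixed while $\dm,\dw\to 0$, verify that the $\advm,\advw$ convergence in Appendix~\ref{subsec:TR} holds \emph{uniformly enough} to conclude $\tpi_\alpha\to 1$ (rather than merely $r(\alpha,\beta)$ being bounded), and handle the expectation over the random $\vx_1$ (where the exponent could in principle be negative on a low-probability event) — this likely requires a dominated-convergence or truncation argument to pass the limit inside $\E_{\Skx}[\cdot]$, using that $\tpi_m\in[0,1]$ and $\|\tvw_m\|$ is uniformly bounded.
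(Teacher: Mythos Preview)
Your plan has a genuine gap at the step where you claim $\tpi_\alpha\to 1$ as $\dm,\dw\to 0$ with $k=1$ fixed. This is false: Eqs.~\ref{converge:dm2advm}--\ref{converge:dw2advw} (summarized in Eq.~\ref{advconverge}) show that for fixed $k$,
\[
\lim_{(\dm,\dw)\to(0,0)}\bigl(\advm(\alpha,\beta)+\advw(\alpha,\beta)\bigr)
=\frac{\|\vm_\beta-\vx_{k+1}\|^2-\|\vm_\alpha-\vx_{k+1}\|^2}{2\nx^2}
+\sum_{i=1}^k\Bigl(\frac{\|\vm_\beta-\vx_i\|^2-\|\vm_\alpha-\vx_i\|^2}{2\nx^2}
+\frac{\|y_i^\beta-y_i^*\|^2-\|y_i^\alpha-y_i^*\|^2}{2\ny^2}\Bigr),
\]
which at $k=1$ is a \emph{finite} random variable, not $+\infty$. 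Consequently $r(\alpha,\beta)=\tpi_\alpha/\tpi_\beta$ has a finite limit, the posterior weights $(\tpi_m)$ converge to a nondegenerate random probability vector, $\tvw$ does \emph{not} collapse to $\vw_\alpha$, and the one-shot risk does not converge to $\E_{\vx}[\langle\vx,\vw_\alpha-\vwt\rangle^2]$. Your parenthetical about ``effective signal-to-noise \ldots large relative to $\dm,\dw$'' does not rescue this: the $1/\nm^2$ and $1/\nw^2$ factors in $c_m^{\vm},c_m^{\vw}$ cancel in the simplification of $\advm,\advw$ (Appendix~\ref{bpp:adv}), leaving only $\nx^2,\ny^2$ in the denominator.

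What your argument is missing is the second limit $k\to\infty$. The paper's proof (Appendix~\ref{app:earlyascent:theory}) takes the \emph{iterated} limit $\lim_{k\to\infty}\lim_{(\sm,\sw)\to(0,0)}$: the inner limit freezes component shifting ($\mA\to\mI$, so $\tvw_m\to\vw_m$) and reduces the log-ratios to the sum above; the outer limit then makes that sum a sum of $k$ i.i.d.\ increments whose expectation is precisely the quantity in the $\argmin$ defining $\alpha$, so by the law of large numbers $\advm(\alpha,\beta)+\advw(\alpha,\beta)\to+\infty$ for every $\beta\neq\alpha$ and hence $\tpi_\alpha\to 1$. Only after both limits does the $k$-shot risk equal $\E_{\vx_1}[\langle\vx_1,\vw_\alpha-\vwt\rangle^2]$, and the hypothesis then gives some finite $k\ge 1$ (generally not $k=1$) with risk exceeding the zero-shot risk. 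A secondary inaccuracy: the zero-shot risk is \emph{not} unchanged by the $\dm,\dw\to 0$ limit, since $\F^*(\vx_1)$ depends on the prior through the posterior weights computed from the single observation $\vx_1$.
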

\begin{proof}
We examine the following case, when $\sm$ and $\sw$ are small enough, and $k$ is also big enough to retrieve a task, i.e., making a center dominate:
\begin{align}
&
    \lim_{k\rightarrow  \infty} \lim_{(\sm,\sw) \rightarrow (0,0)} \E_{\Skx}\left[\left(\F^*(\Skx)-\langle \vwt,\vx_{k+1} \rangle\right)^2\right]
\\&=
    \lim_{k\rightarrow  \infty} \lim_{(\sm,\sw) \rightarrow (0,0)} \E_{\Skx}\left[\left\langle \sum\nolimits_{m=1}^M \tpi_m \mA(\vw_m-\vwt),\vx_{k+1}\right\rangle^2\right]
\\&=
    \lim_{k\rightarrow  \infty} \lim_{(\sm,\sw) \rightarrow (0,0)} \E_{\Skx}\left[\left\langle \sum\nolimits_{m=1}^M \tpi_m (\vw_m-\vwt),\vx_{k+1}\right\rangle^2\right]
\\&=
    \lim_{k\rightarrow  \infty} \lim_{(\sm,\sw) \rightarrow (0,0)} \E_{\Skx}\left[\left\langle \frac{\sum_{m=1}^M\pi_m\exp(\advm(m,1)+\advw(m,1)) (\vw_m-\vwt)}{\sum_{m=1}^M\pi_m\exp(\advm(m,1)+\advw(m,1))}, \vx_{k+1} \right\rangle^2\right]
\\&
    (\text{Following Eq.~\ref{advconverge}, we have } \lim_{(\sm,\sw) \rightarrow (0,0)} \advm(m,1)+\advw(m,1)
    =\frac{
        \|\vm_m-\vx_{k+1}\|^2 - \|\vm_1-\vx_{k+1}\|^2
    }{2\nx^2}
\\&
    \quad\quad\quad\quad\quad\quad\quad\quad\quad\quad\quad\quad\quad\quad+ \sum_{i=1}^k
    \left(
        \frac{
        \|\vm_m-\vx_i\|^2
        -
        \|\vm_1-\vx_i\|^2
    }{2\nx^2}
    +
        \frac{
        \|y_i^m-y^*_i\|^2
        -
        \|y_i^1-y^*_i\|^2
    }{2\ny^2}
    \right))
\\&=
    \lim_{k\rightarrow  \infty} \E_{\Skx}\left[
    \left\langle 
    \frac{
        \sum_{m=1}^M\pi_m
        \exp\left(
        \frac{
            \|\vm_m-\vx_{k+1}\|^2
        }{2\nx^2}
        +
        \sum_{i=1}^k
        (
            \frac{
            \|\vm_m-\vx_i\|^2
        }{2\nx^2}
        +
            \frac{
            \|y_i^m-y^*_i\|^2
        }{2\ny^2}
        )
        \right) (\vw_m-\vwt)
    }{
        \sum_{m=1}^M\pi_m\exp\left(\frac{
            \|\vm_m-\vx_{k+1}\|^2
        }{2\nx^2}
        +
        \sum_{i=1}^k
        (
            \frac{
            \|\vm_m-\vx_i\|^2
        }{2\nx^2}
        +
            \frac{
            \|y_i^m-y^*_i\|^2
    }{2\ny^2})\right)
    }
    ,\vx_{k+1}
    \right\rangle^2\right]
\\&=
    \E_{\Skx}[\left\langle \vw_\alpha-\vwt,\vx_{k+1}\right\rangle^2]
\\&=
    \E_{\vx_1}[\left\langle \vw_\alpha-\vwt, \vx_1 \right\rangle^2],
\end{align}
where $\alpha = \argmin\limits_m 
\frac{
    \|\vm_m-\vmt\|^2
}{2\nx^2}
+
\frac{
    \|(\vw_m-\vwt)^\top\vmt\|^2 + d\tau_x^2\|\vw_m-\vwt\|^2
}{2\ny^2}$.
\end{proof}
\section{Proof Tools}
This section introduces the inequalities used in our proofs for Theorems~\ref{the:finegrainedlearning} (finegrained upper bound for ICL risk),~\ref{the:retrieval} (upper bound for ICL with biased labels),~\ref{the:generallearning} (coarse upper bound for ICL risk) and Lemma~\ref{lemma:zeroicl} ((informal) upper bound for zero-shot ICL):
\subsection{Gaussian Tail Bound}
\label{app:gaussian}
If $Z_i \sim \mathcal{N}(0, 1)$, then for $t>0$ we have:
\begin{align}
    P\left(\frac{\sum_{i=1}^k Z_i}{k} > t\right) \leq \exp\left(-\frac{kt^2}{2}\right),\\
    P\left(\frac{\sum_{i=1}^k Z_i}{k} < -t\right) \leq \exp\left(-\frac{kt^2}{2}\right).
\end{align}

\subsection{Chi-squared Tail Bound}
\label{app:chi-square}
If $X \sim\chi(k)$, \ie, $X=\sum_{i=1}^k Z_i^2$ where $Z_i \sim \mathcal{N}(0,1)$ then~\citep{boucheron2013concentration}:
\begin{align}
    P\left(\frac{X}{k}-1 >  2\sqrt{t_1}+2t_1\right) \leq \exp\left(-k t_1^2\right),\\
    P\left(\frac{X}{k}-1 < -2\sqrt{t_1}\right)      \leq \exp\left(-k t_1^2\right).
\end{align}
As a looser but symmetric bound, for any $t>0$, we have:
\begin{align}
    P\left(\frac{X}{k}-1 >  t\right) \leq \exp\left(-\frac{k t^2}{8}\right),\\
    P\left(\frac{X}{k}-1 < -t\right) \leq \exp\left(-\frac{k t^2}{8}\right).
\end{align}

\subsection{Norm Tail Bound}
\label{app:norm}
If $\vep_i \sim \mathcal{N}(\bm{0}, \nxd^2\mI)$, $\vep_i\in\mathbb{R}^d,\mI\in \mathbb{R}^{d\times d}$, then for $t>0$ we have:
\begin{align}
    P\left(\bigg\|\frac{\sum_{i=1}^k\vep_i}{k}\bigg\| > \sqrt{\frac{\nxd^2 d}{k}(1+t)}\right) \leq \exp\left(-\frac{kt^2}{8}\right),
\end{align}
where $\|\cdot\|$ indicates the $L_2$ norm.
\begin{proof}
    \begin{align}
        &\bigg\|\frac{\sum_{i=1}^k\vep_i}{k}\bigg\|^2
        \\&=
        \sum_{j=1}^d \left(\frac{\sum_{i=1}^k\epsilon_{i,j}}{k}\right)^2
        \\&=
        \frac{\nxd^2}{k} \sum_{j=1}^d \left(\frac{\sum_{i=1}^k\epsilon_{i,j}}{\nxd\sqrt{k}}\right)^2
        \\&
        (\text{Notice } \epsilon_{i,j}\sim\mathcal{N}(0,\nxd^2) \text{ and let } Z_j = \frac{\sum_{i=1}^k\epsilon_{i,j}}{\nxd\sqrt{k}} \sim \mathcal{N}(0,1).)
        \\&=
        \frac{\nxd^2 d}{k} \frac{\sum_{i=1}^{d} Z_i^2}{d}.
    \end{align}
    Therefore, by applying Appendix~\ref{app:chi-square} we have:
    \begin{align}
        P\left(\frac{\nxd^2 d}{k} \frac{\sum_{i=1}^{d} Z_i^2}{d} > \frac{\nxd^2 d}{k}(1+t)\right) \leq \exp\left(-\frac{kt^2}{8}\right).
    \end{align}
\end{proof}

\subsection{Eigenvalue Concentration Bound}
\begin{lemma}
\label{lemma:eigenvalue}
    If$~~\forall i$, $\vx_i\sim\mathcal{N}(\vm, \nxd^2\mI)$, $\|\vm\|=1$, $\mA=\frac{\sum_{i=1}^k \vx_i\vx_i^\top}{k}$, and $\vep_i=\vx_i-\vm$, we have $\forall t>0$:
    \begin{align}
        P\left(\L\leq\lambda_d(\mA)\leq\lambda_1(\mA)\leq \U \text{ and } \bigg\|\frac{\sum_{i=1}^k\vep_i}{k}\bigg\| < \nxd\sqrt{\gamma(1+t)}\right)> 1-3\exp\left(-\frac{k t^2}{8}\right),
    \end{align}
where $\L=\nxd^2(1-\frac{t}{2}-\dk)^2-2\nxd\dk\sqrt{1+t}, \U=1+\nxd^2(1+\frac{t}{2}+\dk)^2+2\nxd\dk\sqrt{1+t}$, $\lambda_i(\mA)$ is the $i^\text{th}$ biggest eigenvalue of the matrix $\mA$ and $\dk=\sqrt{\frac{d}{k}}$.
\end{lemma}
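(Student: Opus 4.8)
The plan is to split $\vx_i = \vm + \vep_i$, expand $\mA$ into a rank-one term, a rank-$\le 2$ cross term, and a sample second-moment term of the noise, control each piece on a high-probability event, and recombine via Weyl's inequality. Writing $\bar\vep = \tfrac1k\sum_{i=1}^k\vep_i$ and $\mE = \tfrac1k\sum_{i=1}^k\vep_i\vep_i^\top$, I would first record
\begin{align}
\mA = \vm\vm^\top + \big(\vm\bar\vep^\top + \bar\vep\vm^\top\big) + \mE .
\end{align}
The first summand is PSD with eigenvalues $\|\vm\|^2 = 1$ and $0$. The middle summand is symmetric with spectral norm at most $\|\vm\bar\vep^\top\| + \|\bar\vep\vm^\top\| = 2\|\bar\vep\|$ (using $\|\vm\|=1$), so its eigenvalues lie in $[-2\|\bar\vep\|,\,2\|\bar\vep\|]$. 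Hence, by Weyl's inequality, $\lambda_1(\mA)\le 1 + 2\|\bar\vep\| + \lambda_1(\mE)$ and $\lambda_d(\mA)\ge -2\|\bar\vep\| + \lambda_d(\mE)$, so it suffices to bound $\|\bar\vep\|$ from above and to sandwich the extreme eigenvalues of $\mE$.

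Next I would isolate three ``bad'' events, each of probability at most $\exp(-kt^2/8)$. First, since $\vep_i\sim\mathcal{N}(\vzero,\nxd^2\mI)$, the Norm Tail Bound of Appendix~\ref{app:norm} gives $\|\bar\vep\| < \nxd\sqrt{(d/k)(1+t)} = \nxd\dk\sqrt{1+t}$ off an event of probability $\le\exp(-kt^2/8)$ — this is already the second conclusion of the lemma. Second and third, I would set $\mathbf{W} = \nxd^{-1}[\vep_1,\dots,\vep_k]\in\R^{d\times k}$, which has i.i.d.\ standard-normal entries and satisfies $\mE = (\nxd^2/k)\,\mathbf{W}\mathbf{W}^\top$, so $\lambda_1(\mE) = (\nxd^2/k)\,\sigma_{\max}(\mathbf{W})^2$ and $\lambda_d(\mE) = (\nxd^2/k)\,\sigma_{\min}(\mathbf{W})^2$. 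Combining the classical mean estimates $\E\,\sigma_{\max}(\mathbf{W})\le\sqrt{k}+\sqrt{d}$ and $\E\,\sigma_{\min}(\mathbf{W})\ge\sqrt{k}-\sqrt{d}$ with the $1$-Lipschitzness of $\sigma_{\max},\sigma_{\min}$ in the entries (so Gaussian concentration bounds deviations $>s$ by $\exp(-s^2/2)$), the choice $s=\sqrt{k}\,t/2$ yields, off two further events of probability $\le\exp(-kt^2/8)$ each,
\begin{align}
\frac{\sigma_{\max}(\mathbf{W})}{\sqrt{k}} \le 1 + \dk + \tfrac t2, \qquad \frac{\sigma_{\min}(\mathbf{W})}{\sqrt{k}} \ge 1 - \dk - \tfrac t2 ,
\end{align}
hence $\lambda_1(\mE)\le\nxd^2(1+\dk+t/2)^2$ and $\lambda_d(\mE)\ge\nxd^2(1-\dk-t/2)^2$ (the latter in the regime $t/2+\dk\le 1$; otherwise one only has $\lambda_d(\mE)\ge 0$, still consistent with $\L$).

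Finally, a union bound gives that off an event of probability at most $3\exp(-kt^2/8)$ all three estimates hold simultaneously; plugging them into the Weyl bounds from the first step yields $\lambda_1(\mA)\le 1 + 2\nxd\dk\sqrt{1+t} + \nxd^2(1+\dk+t/2)^2 = \U$ and $\lambda_d(\mA)\ge -2\nxd\dk\sqrt{1+t} + \nxd^2(1-\dk-t/2)^2 = \L$, which together with the trivial $\lambda_d(\mA)\le\lambda_1(\mA)$ and the $\|\bar\vep\|$ bound already in force on this event is exactly the claim. The main obstacle is securing the non-asymptotic extreme-singular-value estimates for $\mathbf{W}$ with the sharp constants $\sqrt{k}\pm\sqrt{d}$ — these produce the $(1\pm\dk\pm t/2)^2$ shape of $\U,\L$ and are the only input beyond Weyl's inequality and the elementary tail bounds of Appendices~\ref{app:gaussian}--\ref{app:norm}; the matrix decomposition, the rank-two norm estimate, and the union bound are routine.
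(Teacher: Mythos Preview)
Your proposal is correct and follows essentially the same route as the paper: the identical decomposition $\mA=\vm\vm^\top+(\vm\bar\vep^\top+\bar\vep\vm^\top)+\mE$, the same spectral-norm bound $2\|\bar\vep\|$ on the cross term via the Norm Tail Bound, and the same union bound over three events of probability $\le\exp(-kt^2/8)$. The only cosmetic difference is that the paper invokes \citet[Theorem~4.6.1]{vershynin2018high} as a black box for the $(1\pm s\pm\dk)^2$ sandwich on the eigenvalues of $\mE$ (with $s=t/2$), whereas you unpack that result directly via Gordon's expectation bounds $\sqrt{k}\pm\sqrt{d}$ and Gaussian Lipschitz concentration---which is precisely how that theorem is proved in the Gaussian case.
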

We begin with decomposing $\mA$ to three components $\mA=\frac{\sum_{i=1}^k \vx_i\vx_i^\top}{k}=\frac{\sum_{i=1}^k (\vm+\vep_i)(\vm+\vep_i)^\top}{k}=\vm\vm^\top + \frac{\sum_{i=1}^k \vep_i\vep_i^\top}{k} + \frac{\sum_{i=1}^k (\vm\vep_i^\top+\vep_i\vm^\top)}{k}$, then consider the eigenvalue bound of each of them.

For the first component $\vm\vm^\top$, we have:
\begin{align}
    0\leq \lambda_d(\vm\vm^\top) < \lambda_1(\vm\vm^\top) \leq 1.
\end{align}

Then, we analyze the second component $\frac{\sum_{i=1}^k \vep_i\vep_i^\top}{k}$. 
Following \citet[Theorem 4.6.1, p.~97]{vershynin2018high}, we have for any $1-\sqrt{\frac{d}{k}}>s>0$:
\begin{align}
    P\left(\bigg(1-s-\sqrt{\frac{d}{k}}\bigg)^2\leq \frac{1}{\nxd^2}\lambda_d\bigg(\frac{\sum_{i=1}^k \vep_i\vep_i^\top}{k}\bigg) < \frac{1}{\nxd^2}\lambda_1\bigg(\frac{\sum_{i=1}^k \vep_i\vep_i^\top}{k}\bigg) \leq \bigg(1+s+\sqrt{\frac{d}{k}}\bigg)^2\right)>1-2\exp\left(-\frac{k s^2}{2}\right).
\end{align}

Finally, we examine the third component $\frac{\sum_{i=1}^k (\vm\vep_i^\top+\vep_i\vm^\top)}{k}$.
We have for all $\|\va\|=1$:
\begin{align}
    &
    \bigg\|\va^\top \frac{\sum_{i=1}^k (\vm\vep_i^\top+\vep_i\vm^\top)}{k} \va\bigg\|
    =2\bigg\|\va^\top\frac{\sum_{i=1}^k\vep_i}{k}\vm^\top\va\bigg\|
    \leq2\bigg\|\frac{\sum_{i=1}^k \vep_i}{k}\bigg\|
    \\&
    (\text{Notice by Norm Tail Bound in Appendix~\ref{app:norm}, we have }P\left(\bigg\|\frac{\sum_{i=1}^k\vep_i}{k}\bigg\| > \sqrt{\frac{\nxd^2 d}{k}(1+t)}\right) \leq \exp\left(-\frac{kt^2}{8}\right).)
    \\&
    \Longrightarrow P\left(\bigg\|\va^\top \frac{\sum_{i=1}^k (\vm\vep_i^\top+\vep_i\vm^\top)}{k} \va\bigg\| \leq 2\bigg\|\frac{\sum_{i=1}^k \vep_i}{k}\bigg\| \leq 2\sqrt{\frac{\nxd^2 d}{k}(1+t)}\right) > 1-\exp\left(-\frac{kt^2}{8}\right)
    \\&
    \text{\small $\Longrightarrow P\left(-2\nxd\sqrt{\frac{d}{k}(1+t)} \leq \lambda_d\bigg(\frac{\sum_{i=1}^k (\vm\vep_i^\top+\vep_i\vm^\top)}{k}\bigg) \leq \lambda_1\bigg(\frac{\sum_{i=1}^k (\vm\vep_i^\top+\vep_i\vm^\top)}{k}\bigg) \leq 2\nxd\sqrt{\frac{d}{k}(1+t)}\right) > 1-\exp\left(-\frac{kt^2}{8}\right).$}
\end{align}

Let $\dk=\sqrt{\frac{d}{k}}$, $s=t/2$, and summarize three components by union bound, we have:
\begin{align}
    P\left(
    \nxd^2\left(1-\frac{t}{2}-\dk\right)^2-2\nxd\dk\sqrt{1+t}
    \leq\lambda_d(\mA)\leq\lambda_1(\mA)\leq 
    1+\nxd^2\left(1+\frac{t}{2}+\dk\right)^2+2\nxd\dk\sqrt{1+t}
    \right)> 
    1-3\exp\left(-\frac{k t^2}{8}\right).
\end{align}
As a summary, we have:
\begin{align}
    P\left(\L\leq\lambda_d(\mA)\leq\lambda_1(\mA)\leq \U \text{ and } \bigg\|\frac{\sum_{i=1}^k\vep_i}{k}\bigg\| < \nxd\sqrt{\gamma(1+t)}\right)> 1-3\exp\left(-\frac{k t^2}{8}\right),
\end{align}
where $\dk=\sqrt{\frac{d}{k}}$, $\L=\nxd^2(1-\frac{t}{2}-\dk)^2-2\nxd\dk\sqrt{1+t}, \U=1+\nxd^2\left(1+\frac{t}{2}+\dk\right)^2+2\nxd\dk\sqrt{1+t}$, and $\lambda_i(\mA)$ is the $i^\text{th}$ biggest eigenvalue of the matrix $\mA$.

\section{ICL to Learn the In-Context Function}
\label{proof:learning}
This section introduces the proof of Theorem~\ref{the:generallearning} (coarse upper bound for ICL risk) and Theorem~\ref{the:finegrainedlearning} (finegrained upper bound for ICL risk).
The upper bound of Theorem~\ref{the:finegrainedlearning} is derived at Eq.~\ref{equation:fine-grained}.
\begin{proof}
    Assuming we are using in-context examples following Assumption~\ref{asu:source}, \ie, $\vx_i \sim \mathcal{N}(\vmt, \nxd^2\mI), y_i = \langle \vx_i,\vwt \rangle$, $\|\vmt\|=\|\vwt\|=1$, and we aim to have the prediction of $\Skx$ to be $\langle \vx_{k+1},\vwt \rangle$, \ie, to learn the function $(\vwt)$ of the in-context task $(\vmt,\vwt)$.
    Let $\mathcal{L}_k^*$ indicate the squared loss $(\F^*(\Skx)-\langle \vx_{k+1},\vwt \rangle)^2$, where $\F^*(\Skx)$ is the prediction of $\Skx$ by the \bosp $\F^*$ under Assumption~\ref{asu:assumption} for pretraining data generation.
    We derive the upper bound of the expected squared loss as follows:
    \begin{align}
        &
        \E_{\Skx}[\mathcal{L}_k^*]
    \\&=
        \E_{\Skx}\left[\left(\F^*(\Skx)-\langle \vwt,\vx_{k+1} \rangle\right)^2\right]
    \\&
        (\text{By Corollary~\ref{corollary:prediction}}.)
    \\&=
        \E_{\Skx}\left[\left(\sum\nolimits_{m=1}^M \tpi_m \langle \tvw_m,\vx_{k+1} \rangle - \langle \vwt,\vx_{k+1} \rangle\right)^2\right]
    \\&= 
        \E_{\Skx}\left[\left(\left\langle \sum\nolimits_{m=1}^M \tpi_m(\tvw_m-\vwt),\vx_{k+1}\right\rangle \right)^2\right]
    \\&
        (\text{See Eq.~\ref{equation:vwshift} for the derivation of } \tvw_m.)
    \\&=
        \E_{\Skx}\left[\left(\left\langle \sum\nolimits_{m=1}^M \tpi_m ((\mI+k\dw\mCw)^{-1}(\vw_m-\vwt) + \vwt- \vwt),\vx_{k+1}\right\rangle \right)^2\right]
    \\&
        (\text{Let } \mA = (\mI+k\dw\mCw)^{-1}, \text{and notice } \mA \text{ is symmetric positive definite.})
    \\&=
        \E_{\Skx}\left[\left\langle \sum\nolimits_{m=1}^M \tpi_m \mA(\vw_m-\vwt),\vx_{k+1}\right\rangle^2\right]
    \\&
        (\text{Notice } \left(\sum\nolimits_{\beta=1}^M \tpi_\beta a_\beta\right)^2 \leq \sum\nolimits_{\beta=1}^M \tpi_\beta a_\beta^2 \text{, since } \E[a]^2 \leq \E[a^2].)
    \\&\leq
        \E_{\Skx}\left[\sum\nolimits_{m=1}^M \tpi_m \langle \mA(\vw_m-\vwt),\vx_{k+1} \rangle^2\right]
    \\&=
        \sum\nolimits_{m=1}^M \E_{\Skx}\left[\tpi_m ((\vw_m-\vwt)^\top \mA \vx_{k+1})^2\right]
    \\&\leq
        \sum\nolimits_{m=1}^M  \E_{\Skx}\left[\tpi_m \|\vw_m-\vwt\|^2 \lambda_1(\mA)^2 \|\vx_{k+1}\|^2\right]
    \\&=
        \label{equation:fine-grained}
        \sum\nolimits_{m=1}^M \|\vw_m-\vwt\|^2 \E_{\Skx}\left[\tpi_m \|\vx_{k+1}\|^2 \lambda_1(\mA)^2\right]
    \\&\leq
         4 \E_{\Skx}\left[\sum\nolimits_{m=1}^M\tpi_m \|\vx_{k+1}\|^2 \lambda_1(\mA)^2\right]
    \\&=
        4 \E_{\Skx}\left[\|\vx_{k+1}\|^2 \lambda_1(\mA)^2\right]
    \\&
        (\text{Notice } \mA \text{ is a random matrix only depends on } \vx_1,\vx_2,\ldots,\vx_k, \text{ but not } \vx_{k+1}.)
    \\&=
        4 \E_{\vx_{k+1}}\left[\|\vx_{k+1}\|^2\right] \E_{\Sk}\left[\lambda_1^2(\mA)\right]
    \\&=
        4(1 + d\nxd^2) \E_{\Sk}\left[\lambda_1^2(\mA)\right].
    \end{align}
    We further simplify $\E_{\Sk}\left[\lambda_1^2(\mA)\right]$ using Lemma~\ref{lemma:eigenvalue}:
    \begin{align}
        &
        \E_{\Skx}[\mathcal{L}_k^*] 
        \\&\leq
        4(1+d\nxd^2) \E_{\Sk}\left[\lambda_1^2(\mA)\right]
        \\&\leq
        4(1+d\nxd^2) \E_{\Sk}\left[\left(\frac{1}{1+k\dw\lambda_d(\frac{\sum_{i=1}^k \vx_i\vx_i^\top}{k})}\right)^2\right]
        \\&
        (\text{By applying Lemma}~\ref{lemma:eigenvalue} \text{ to } \frac{\sum_{i=1}^k \vx_i\vx_i^\top}{k}.)
        \\&\leq
        4(1+d\nxd^2) \E_{\Sk}\left[\left(\frac{1}{1+k\dw\L}\right)^2\right]
        \\&\leq
        4(1+d\nxd^2) \left(\left(\frac{1}{1+k\dw (\nxd^2(1-\frac{t}{2}-\dk)^2-2\nxd\dk\sqrt{1+t})}\right)^2 + 3\exp\left(-\frac{k t^2}{8}\right)\right).
    \end{align}
    Let $t=k^{\delta-\frac{1}{2}}$, where $\frac{1}{2}>\delta>0$ and $\delta$ is arbitrary small. We have:
    \begin{align}
        \E_{\Skx}[\mathcal{L}_k^*] 
        <
        \frac{4(1+d\nxd^2)}{\nxd^4 \dw^2 k^2}
        +
        O(k^{\delta-\frac{5}{2}})
        .
    \end{align}
\end{proof}
\begin{figure}[th!]
    \centering
    \includegraphics[width = 0.9\textwidth]{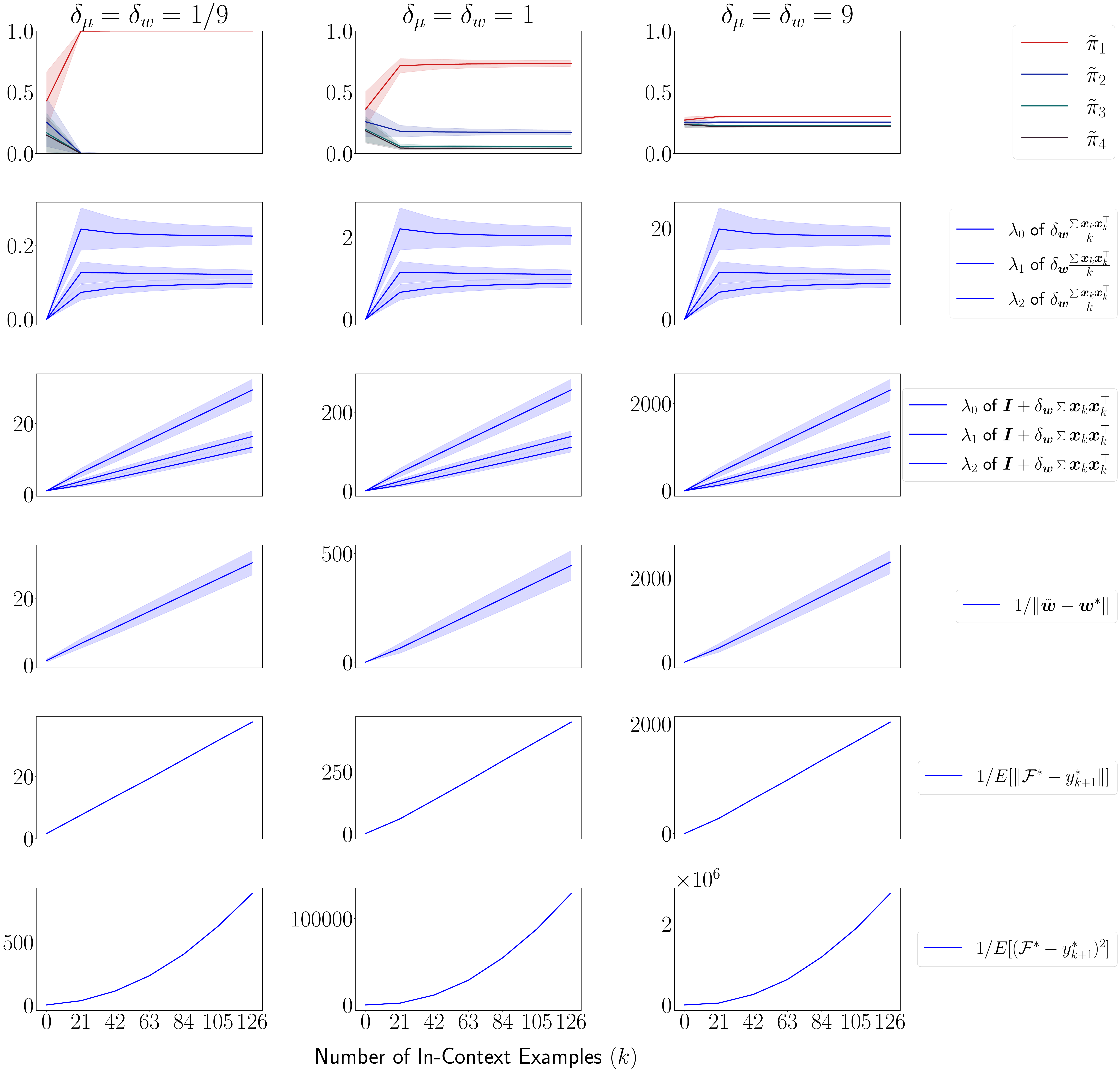}
    \caption{The numerical computation of the task learning.
    The second and third rows show the eigenvalues of the matrices $\dw\frac{\sum_{i=1}^k \vx_i\vx_i^\top}{k}$ and $\mI+\dw\sum_{i=1}^k \vx_i\vx_i^\top$.
    The fourth row shows the distance between the predicted $\tilde{\vw}$ and $\vwt$ has a reciprocal decreasing rate with respect to $k$.
    The fifth and sixth rows indicate the expected squared loss follows a quadratic decreasing rate with respect to $k$.}
    \label{fig:learning}
\end{figure}
We further validate our analysis with numerical computations in Fig.~\ref{fig:learning}, including the trend of $\tpi_m$ for $m\in[M]$, $\lambda_j\left(\dw\frac{\sum_{i=1}^k \vx_i \vx_i^\top}{k}\right)$ for $j\in[d]$, $\lambda_j\left(\mI+\dw \sum_{i=1}^k \vx_i \vx_i^\top\right)$ for $j\in[d]$, $1/\|\tvw-\vwt\|$, $1/\E[\sF(\Skx)-y^*_{k+1}]$, and $1/\E[(\sF(\Skx)-y^*_{k+1})^2]$ as $k$ increases.

\subsection{Case When In-context Input Variable Spans in Subspace}
\label{app:lowrank}
In this section, we refine Eq.~\ref{equation:fine-grained} for the finegrained bound in Theorem~\ref{the:finegrainedlearning}.
Specifically, we refine the following inequality for case when in-context input variable $\vx_i$ only spans in the subspace of $\mathbb{R}^d$, resulting in $\lambda_1(\mA)=1$ constantly as mentioend in Theorem~\ref{the:finegrainedlearning}:
\begin{align}
&
    \sum\nolimits_{m=1}^M \E_{\Skx}\left[\tpi_m ((\vw_m-\vwt)^\top \mA \vx_{k+1})^2\right]
\\&\leq
    \sum\nolimits_{m=1}^M  \E_{\Skx}\left[\tpi_m \|\vw_m-\vwt\|^2 \lambda_1(\mA)^2 \|\vx_{k+1}\|^2\right],
\end{align}
where $\mA = (\mI + \sum_{i=1}^k \vx_i\vx_i^\top)^{-1}$ is derived in Lemma~\ref{lemma:posterior}.
Violating Assumption~\ref{asu:source1}, in this section we consider the case that $\vx_i\sim \mathcal{N}(\vm,\diag(\underbrace{1,\ldots,1}_{d'},0,\ldots,0))$, where $\vm=[p,\underbrace{0,\ldots,0}_{d'-1},q,0,\ldots,0]^\top$.
(If $\vm$ does not follows the format $ [p,\underbrace{0,\ldots,0}_{d'-1},q,0,\ldots,0]^\top$, we can always rotate the coordinates so $\vm$ has this format.)
Therefore, we have matrix $\mA$ (after rotation) with the following format:
\begin{align}
    \mA=
    \begin{cases}
        \begin{bmatrix}
        \mI_{d'\times d'}+\sum_{i=1}^k \vx_{i,1:d'}\vx_{i,1:d'}^\top & \mzero_{d' \times (d-d')} \\
        \mzero_{(d-d')\times d'} & \mI_{(d-d')\times (d-d')}
        \end{bmatrix}^{-1}, \text{ if $q=0$}\\
        \begin{bmatrix}
        \mI_{(d'+1)\times (d'+1)}+\sum_{i=1}^k \vx_{i,1:(d'+1)}\vx_{i,1:(d'+1)}^\top & \mzero_{(d'+1) \times (d-d'-1)} \\
        \mzero_{(d-d'-1)\times (d'+1)} & \mI_{(d-d'-1) \times (d-d'-1)}
        \end{bmatrix}^{-1}, \text{ if $q>0$}\\
    \end{cases}
\end{align}
where $\vx_{i,1:d'}=[\vx_{i,1},\vx_{i,2},\ldots,\vx_{i,d'}]^\top$, $\mI_{a\times a}$ indicates an identity matrix with shape $a$ by $a$, and $\mzero_{a \times b}$ indicates a zero matrix with shape $a$ by $b$.
Finally, we can revise the upper bound for the case when $\vx_i$ only spans in a subspace of $\mathbb{R}^d$ using the new format of $\mA$ as follows:

When $q=0$, we have:
\begin{align}
&
    \sum\nolimits_{m=1}^M \E_{\Skx}\left[\tpi_m ((\vw_m-\vwt)^\top \mA \vx_{k+1})^2\right]
\\&\leq
    \sum\nolimits_{m=1}^M \E_{\Skx}\left[\tpi_m ((\vw_m-\vwt)_{1:d'}^\top \mA_{1:d',1:d'} \vx_{k+1,1:d'} + (\vw_m-\vwt)_{(d'+1):d}^\top \mI_{(d-d')\times (d-d')} \vx_{k+1,(d'+1):d})^2\right]
\\&\leq
    \sum\nolimits_{m=1}^M  \E_{\Skx}\left[\tpi_m 
    (
        \|(\vw_m-\vwt)_{1:d'}\|^2 \lambda_1(\mA_{1:d',1:d'})^2 \|\vx_{k+1,1:d'}\|^2
        +
        \|(\vw_m-\vwt)_{(d'+1):d}\|^2 \|\vx_{k+1,(d'+1):d}\|^2
    )\right], 
\\&
    (\text{Notice } \|\vx_{k+1,(d'+1):d}\|^2=0)
\\&=
    \sum\nolimits_{m=1}^M  \E_{\Skx}\left[\tpi_m 
        \|(\vw_m-\vwt)_{1:d'}\|^2 \lambda_1(\mA_{1:d',1:d'})^2 \|\vx_{k+1,1:d'}\|^2
    \right], 
\end{align}
When $q>0$, we skip the analysis since the analysis for $q>0$ is the same as the analysis for $q=0$. The only difference is that $d'$ for $q>0$ is one bigger than $d'$ for $q=0$.
\begin{figure}[th!]
    \centering
    \includegraphics[width = 1.0\textwidth]{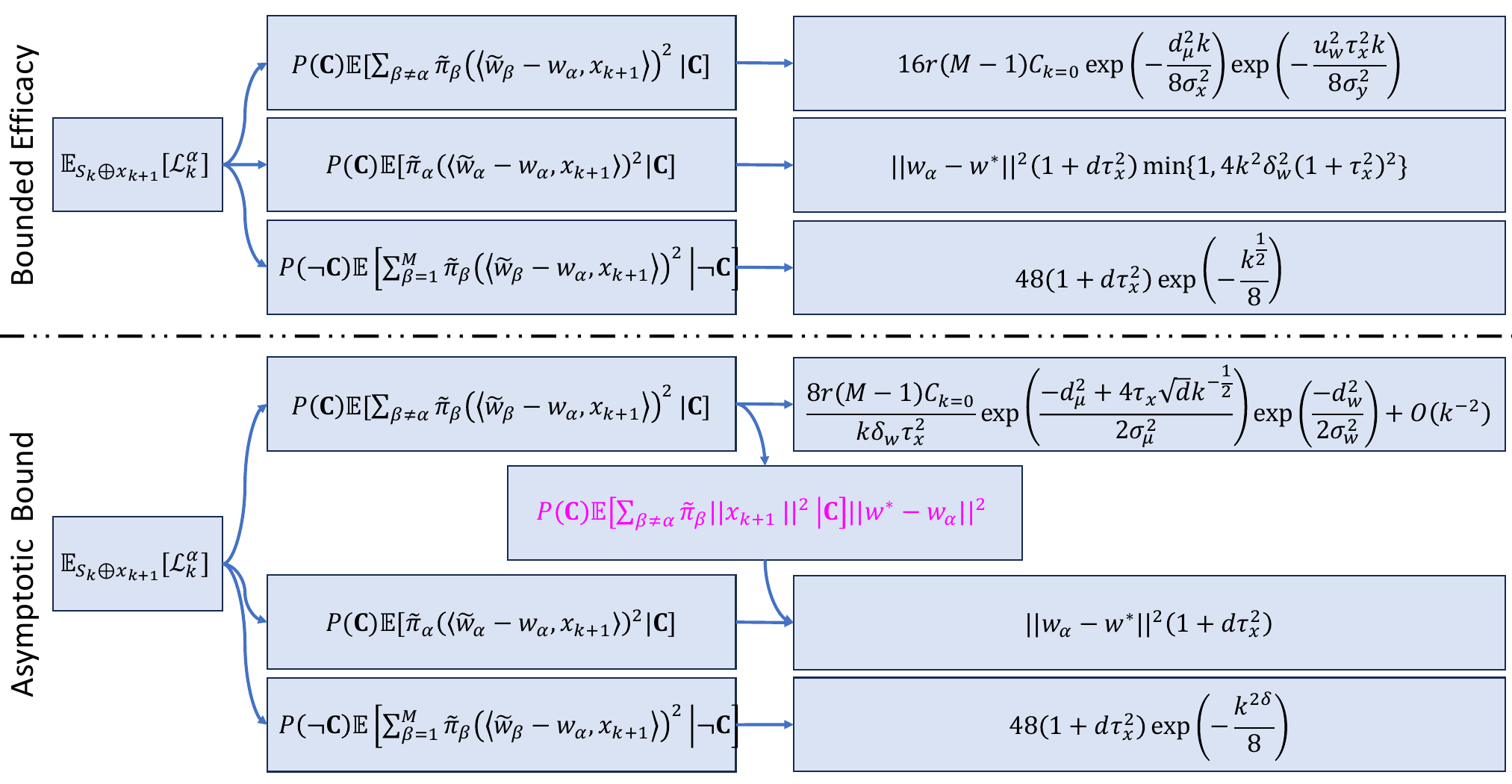}
    \caption{Proof roadmap of ICL with biased labels, Theorem.~\ref{the:retrieval}.}
    \label{fig:proofroadmap}
\end{figure}

\section{ICL with Biased Labels to Retrieve A Task}
\label{app:retrieval}
This section details the proof of Theorem~\ref{the:retrieval}, with Fig.\ref{fig:proofroadmap} serving as a visual guide.
The non-asymptotic bound for the bounded efficacy phenomenon and the asymptotic bound share the same foundational elements in the proof.
However, they are different in handling the components marked in pink. Fig.~\ref{fig:proofroadmap} is thus provided to offer a clearer understanding of its overall framework and assist readers in navigating through the proof.
In the following sections, Sec.~\ref{app:proof:nonasymptotic} introduces the non-asymptotic bound revealing the bounded efficacy phenomenon, and Sec.~\ref{app:proof:asymptotic} introduces the asymptotic bound.

\subsection{Non-Asymptotic Bound for the Bounded Efficacy Phenomenon}
\label{app:proof:nonasymptotic}
This section proves the non-asymptotic bound in Theorem~\ref{the:retrieval}: Consider a \sp attaining the optimal pretraining risk.
When $\dm$ and $\dw$ are sufficiently small, there exists a particular interval (refer to Sec.\ref{app:proof:interval} for the interval) for $k$ such that ICL risk with biased labels is upper bounded by:
\begin{align}
    \E_{\Sk}[\mathcal{L}_k^\alpha]
&<
    C_3
    \exp\left(-\redb{k}\left(\frac{
        d_\vm^2
    }{
        8\nx^2
    }+\frac{
        u_\vw^2\nxd^2
    }{
        8\ny^2
    }\right)\right)
    +
    48(1+d\nxd^2)\exp\left(-\frac{\redb{k}^{\frac{1}{2}}}{8}\right)
\\&~~+
    \|\vw_\alpha-\vwt\|^2 (1+d\nxd^2)\min\{1,4\redb{k}^2\blueb{\dw}^2(1+\nxd^2)^2\}.
\end{align}
where $\mathcal{L}_k^\alpha=(\F(\Skx)-y_{k+1}^\alpha)^2=(\F(\Skx)-\langle \vx_{k+1},\vw_\alpha \rangle)^2$ $C_3$ is a constant depending on the prior setting, $\nxd$, and $(\vmut,\vwt)$.
With small $k$, the first and second terms dominate and exponential decay.
With large $k$, the third term dominates and increases.
Thus, the upper bound reveals a bounded efficacy phenomenon.
\begin{proof}
Assuming we are using in-context examples following Assumptions~\ref{asu:source} and~\ref{asu:misaligned}, \ie, $\vx_i \sim \mathcal{N}(\vmt, \nxd^2\mI), y_i = \langle \vx_i,\vwt \rangle$, $\|\vmt\|=\|\vwt\|=1$, and we aim to retrieve the function $\vw_\alpha$ of the prior center $(\vm_\alpha,\vw_\alpha)$ which is close to the in-context task.
Let $\mathcal{L}_k^\alpha$ indicate the squared risk $(\F^*(\Skx)-\langle \vx_{k+1},\vw_\alpha \rangle)^2$, where $\F^*(\Skx)$ is the prediction of $\Skx$ by the \bosp $\F^*$.
In order to have an upper bound on the risk, we consider $\vx_i\sim \mathcal{N}(\vmt,\nxd^2\mI)$ in two cases: (1) $\textbf{C}$:
$\L < \lambda_d\left(\frac{\sum_{i=1}^k \vx_i\vx_i^\top}{k}\right) \leq \lambda_1\left(\frac{\sum_{i=1}^k \vx_i\vx_i^\top}{k}\right) < \U \text{ and } \left\|\frac{\sum_{i=1}^k\vep_i}{k}\right\| < \nxd\sqrt{\gamma(1+t)}$
(see Lemma~\ref{lemma:eigenvalue} for $t$, $\gamma$, $\L$ and $\U$) and (2) $\neg\textbf{C}$: at least one of the previous inequalities does not hold.
Following Lemma~\ref{lemma:eigenvalue}, the probability of $\neg\textbf{C}$ is bounded by: $P(\neg\textbf{C}) \leq 3\exp(-\frac{k t^2}{8})$).

We start our upper bound analysis on the expected squared risk by splitting the risk into three parts:
\begin{align}
&
    \E_{\Skx}[\mathcal{L}_k^\alpha]
\\&=
    \E_{\Skx}\left[(\F^*(\Skx)-\langle \vw_\alpha,\vx_{k+1} \rangle)^2\right]
\\&
    (\text{By Corollary~\ref{corollary:prediction}}.)
\\&=            
    \E_{\Skx}\left[\left(\sum\nolimits_{\beta=1}^M \tpi_\beta \langle \tvw_\beta,\vx_{k+1} \rangle - \langle \vw_\alpha,\vx_{k+1} \rangle\right)^2\right]
\\&
    (\text{Notice } \sum\nolimits_{\beta=1}^M \tpi_\beta=1.)
\\&=
    \E_{\Skx}\left[\left(\sum\nolimits_{\beta=1}^M \tpi_\beta \left(\langle \tvw_\beta,\vx_{k+1} \rangle - \langle \vw_\alpha,\vx_{k+1} \rangle \right)\right)^2\right]
\\&
    (\text{Notice } \left(\sum\nolimits_{\beta=1}^M \tpi_\beta a_\beta\right)^2 \leq \sum\nolimits_{\beta=1}^M \tpi_\beta a_\beta^2 \text{, since } \E[a]^2 \leq \E[a^2].)
\\&\leq
    \E_{\Skx}\left[\sum\nolimits_{\beta=1}^M \tpi_\beta (\langle \tvw_\beta,\vx_{k+1} \rangle - \langle \vw_\alpha,\vx_{k+1} \rangle)^2\right]
\\&=
    \E_{\Skx}\left[\sum\nolimits_{\beta=1}^M \tpi_\beta \langle \tvw_\beta- \vw_\alpha,\vx_{k+1} \rangle^2\right]
\\&=
    P(\textbf{C}) \E_{\Skx}\left[\sum\nolimits_{\beta=1}^M \tpi_\beta \langle \tvw_\beta- \vw_\alpha,\vx_{k+1} \rangle^2 \middle\vert \textbf{C}\right]
    \\&~~~~~
    +
    P(\neg\textbf{C}) \E_{\Skx}\left[\sum\nolimits_{\beta=1}^M \tpi_\beta \langle \tvw_\beta- \vw_\alpha,\vx_{k+1} \rangle^2 \middle\vert \neg\textbf{C}\right]
\\&=
    \label{equation:part1} P(\textbf{C})\E_{\Skx}\left[\sum\nolimits_{\beta\neq\alpha} \tpi_\beta \langle \tvw_\beta- \vw_\alpha,\vx_{k+1} \rangle^2 \middle\vert \textbf{C}\right] \tag{Part $A$}
    \\&~~~~~
    \label{equation:part2}
    +
    P(\textbf{C})\E_{\Skx}[\tpi_\alpha \langle \tvw_\alpha- \vw_\alpha,\vx_{k+1} \rangle^2 \vert \textbf{C}] \tag{Part $B$}
    \\&~~~~~
    \label{equation:part3}
    +
    P(\neg\textbf{C}) \E_{\Skx}\left[\sum\nolimits_{\beta=1}^M \tpi_\beta \langle \tvw_\beta- \vw_\alpha,\vx_{k+1} \rangle^2 \middle\vert \neg\textbf{C}\right]. \tag{Part $C$}
\end{align}
We will analyze three parts one by one in the following three sections respectively.
\end{proof}

\subsubsection{Bounded Efficacy - \ref{equation:part1}}
\label{sec:retrieval:part1}
\begin{proof}
We firstly analyze the term $P(\textbf{C})\E_{\Skx}[\sum_{\beta\neq\alpha} \tpi_\beta \langle \tvw_\beta- \vw_\alpha,\vx_{k+1} \rangle^2\vert\textbf{C}]$, \ref{equation:part1}:
\begin{align}
&
    P(\textbf{C})\E_{\Skx}\left[\sum\nolimits_{\beta\neq\alpha} \tpi_\beta \langle \tvw_\beta- \vw_\alpha,\vx_{k+1} \rangle^2 \middle\vert \textbf{C}\right]
\\&< 
    P(\textbf{C}) \E_{\Skx}\left[\sum\nolimits_{\beta\neq\alpha} \tpi_\beta \|\tvw_\beta- \vw_\alpha\|^2 \|\vx_{k+1}\|^2 \middle\vert \textbf{C}\right]
\\&
    (\text{See Eq.~\ref{equation:vwshift} for the derivation of } \tvw_\beta.)
\\&=
    P(\textbf{C})\E_{\Skx}\left[\sum\nolimits_{\beta\neq\alpha} \tpi_\beta \|(\mI+k\dw\mCw)^{-1}(\vw_\beta-\vwt) +\vwt - \vw_\alpha\|^2 \|\vx_{k+1}\|^2 \middle\vert \textbf{C}\right]
\\&
    (\text{Let } \mA=(\mI+k\dw\mCw)^{-1}, \text{ and } \lambda_1(\mA) \text{ is the largest eigenvalue of matrix } \mA.)
\\&=
    P(\textbf{C})\E_{\Skx}\left[\sum\nolimits_{\beta\neq\alpha} \tpi_\beta \|\mA(\vw_\beta-\vwt) +\vwt - \vw_\alpha\|^2 \|\vx_{k+1}\|^2 \middle\vert \textbf{C}\right]
\\&\leq
    P(\textbf{C})\E_{\Skx}\left[\sum\nolimits_{\beta\neq\alpha} \tpi_\beta (\|\mA(\vw_\beta-\vwt)\|+\|\vwt - \vw_\alpha\|)^2 \|\vx_{k+1}\|^2 \middle\vert \textbf{C}\right]
\\&
    (\text{Notice } \|\vw_\beta-\vwt\|\leq2.)
\\&\leq
    P(\textbf{C})\E_{\Skx}\left[\sum\nolimits_{\beta\neq\alpha} \tpi_\beta \|\vx_{k+1}\|^2 (2\lambda_1(\mA)+\|\vwt - \vw_\alpha\|)^2 \middle\vert \textbf{C}\right]
\\&
    (\text{Notice } \mA=(\mI+k\dw\mCw)^{-1} \text{ and conditioned on \textbf{C} we have } \L<\lambda_d(\mCw)<\lambda_1(\mCw)<\U.)
\\&\leq
    P(\textbf{C})\E_{\Skx}\left[\sum\nolimits_{\beta\neq\alpha} \tpi_\beta \|\vx_{k+1}\|^2 \middle\vert \textbf{C}\right] \left(\frac{2}{1+k\dw \L}+\|\vwt-\vw_\alpha\|\right)^2
\\&
    (\text{Notice } \|\vwt-\vw_\alpha\|\leq2.)
\\&\leq
    16 P(\textbf{C})\E_{\Skx}\left[\sum\nolimits_{\beta\neq\alpha} \frac{\tpi_\beta}{\tpi_\alpha} \|\vx_{k+1}\|^2 \middle\vert \textbf{C}\right].
\\&
    (\text{By applying Eqs.~\ref{equation:ratio},~\ref{re-weight:vmu},~\ref{re-weight:vw}, and Assumption~\ref{asu:furtherasu3} on } \frac{\tpi_\beta}{\tpi_\alpha}:)
\\&< 
    16 P(\textbf{C})\E_{\Skx}\Bigg[\sum\nolimits_{\beta\neq\alpha} r
    \exp\bigg(\frac{
        -\sum_{i=1}^{k+1} \|\vm_\beta -\vx_i\|^2
        +\sum_{i=1}^{k+1} \|\vm_\alpha-\vx_i\|^2
    }{
        2\nx^2(1+(k+1)\dm)
    }\bigg)
    \\&~~~~~
    \cdot
    \exp\bigg(\frac{
        -\|\vw_\beta -\vwt\|^2_{\mI-(\mI+k\dw\mCw)^{-1}}
        +\|\vw_\alpha-\vwt\|^2_{\mI-(\mI+k\dw\mCw)^{-1}}
    }{
        2\nw^2
    }\bigg)
    \|\vx_{k+1}\|^2 \Bigg\vert \textbf{C}\Bigg]
\\&
    (\text{In the first exponential term, by splitting } \sum\nolimits_{i=1}^{k+1} \text{ to } \sum\nolimits_{i=1}^k \text{ and } i=k+1:)
\\&< 
    16 P(\textbf{C})\E_{\Skx}\Bigg[\sum\nolimits_{\beta\neq\alpha} r
    \underbrace{\exp\bigg(\frac{
        -\sum_{i=1}^k \|\vm_\beta -\vx_i\|^2
        +\sum_{i=1}^k \|\vm_\alpha-\vx_i\|^2
    }{
        2\nx^2(1+(k+1)\dm)
    }\bigg)}_{\text{Part $A$-$1$}}
    \\&~~~~~
    \cdot
    \underbrace{\exp\bigg(\frac{
        -\|\vw_\beta -\vwt\|^2_{\mI-(\mI+k\dw\mCw)^{-1}}
        +\|\vw_\alpha-\vwt\|^2_{\mI-(\mI+k\dw\mCw)^{-1}}
    }{
        2\nw^2
    }\bigg)}_{\text{Part $A$-$2$}}
    \\&~~~~~
    \cdot
    \underbrace{\exp\bigg(\frac{
        -\|\vm_\beta -\vx_{k+1}\|^2
        +\|\vm_\alpha-\vx_{k+1}\|^2
    }{
        2\nx^2(1+(k+1)\dm)
    }\bigg)\|\vx_{k+1}\|^2}_{\text{Part $A$-$3$}}
    \Bigg\vert \textbf{C}\Bigg]
\\&
    (\text{Note that } \vx_1,\ldots,\vx_k \text{ are dependent on } \textbf{C} \text{ but } \vx_{k+1} \text{ is not. Thus, we split them for further analysis.})
\end{align}

In the following, we separately analyze the three terms, Part $A$-$1$, Part $A$-$2$, and Part $A$-$3$.
The high-level idea is that, as $k$ increases, due to the concentration of Part $A$-$1$ and Part $A$-$2$, they can be upper bounded by a function of $k$.
Then, regarding Part $A$-$1$ and Part $A$-$2$ as constant values (their upper bounds), the expectation of Part $A$-$3$ can be upper bounded.

\paragraph{Part $A$-$1$.} We first deal with Part $A$-$1$. 
When conditioned on case $\textbf{C}$, we have:
\begin{align}
&
    \frac{
    \sum_{i=1}^k(
        -\|\vm_\beta -\vx_i\|^2 
        +\|\vm_\alpha-\vx_i\|^2
    )}{
    1+(k+1)\dm
    }
\\&
    (\text{Let } \vx_i = \vmt+\vep_i)
\\&=
    k\frac{
        \|\vm_\alpha-\vmt\|^2
        -\|\vm_\beta-\vmt\|^2
    +
        \frac{
            \sum_{i=1}^k 2\left\langle\vm_\beta-\vm_\alpha,\vep_i\right\rangle
        }{k}
    }{
    1+(k+1)\dm
    }
\\&=
    k\frac{
        \|\vm_\alpha-\vmt\|^2
        -\|\vm_\beta-\vmt\|^2
    +
        \left\langle
            2(\vm_\beta-\vm_\alpha),
            \frac{\sum_{i=1}^k \vep_i}{k}
        \right\rangle
    }{
    1+(k+1)\dm
    }
\\&\leq
    k\frac{
        \|\vm_\alpha-\vmt\|^2
        -\|\vm_\beta-\vmt\|^2
    +
        2
        \|
            \vm_\beta-\vm_\alpha
        \|
        \left\|
            \frac{\sum_{i=1}^k \vep_i}{k}
        \right\|
    }{
    1+(k+1)\dm
    }
\\&
    (\text{Recall we have } \forall \beta\in[M], \|\vm_\beta-\vm_\alpha\|\leq2, \text{ and in case $\textbf{C}$ we have: } \bigg\|\frac{\sum_{i=1}^k \vep_i}{k}\bigg\| < \nxd\gamma\sqrt{1+t}.)
\\&<
    k\frac{
        \|\vm_\alpha-\vmt\|^2
        -\|\vm_\beta-\vmt\|^2
        +4\nxd\gamma\sqrt{1+t}
    }{
    1+(k+1)\dm
    }.
\end{align}
Let $t=k^{-\frac{1}{4}}$. Recall in Assumption~\ref{asu:misaligned}, we have $\forall \beta \neq \alpha, \|\vm_\beta-\vmt\|^2 - \|\vm_\alpha-\vmt\|^2 \geq d^2_\vm$.
If $\dm \ll 1$ s.t. $I_\vm = \{k\vert (k+1)\dm\leq 1 \text{ and } \frac{d_\vm^2}{2} > 4\nxd\gamma\sqrt{1+k^{-\frac{1}{4}}} \}\neq \varnothing$, then when $k\in I_\vm$ we have:
\begin{align}
     k\frac{
        \|\vm_\alpha-\vmt\|^2
        -\|\vm_\beta-\vmt\|^2
        +4\nxd\gamma\sqrt{1+t}
    }{
    1+(k+1)\dm
    }
<
     k\frac{
        \|\vm_\alpha-\vmt\|^2
        -\|\vm_\beta-\vmt\|^2
        +\frac{d_\vm^2}{2}
    }{
    2
    }
=
    -k\frac{
        d_\vm^2
    }{4}.
\end{align}

\paragraph{Part $A$-$2$.} We then deal with Part $A$-$2$.
When conditioned on case $\textbf{C}$, we have:
\begin{align}
&
    -\|\vw_\beta-\vwt\|^2_{\mI-(\mI+k\dw\mCw)^{-1}}
    +\|\vw_\alpha-\vwt\|^2_{\mI-(\mI+k\dw\mCw)^{-1}}
\\&
    (\lambda_1(\mA) \text{ and } \lambda_d(\mA) \text{ indicate the largest and smallest eigenvalues of the matrix } \mA\in\mathbb{R}^{d\times d}.)
\\&< 
    -\|\vw_\beta-\vwt\|^2 \lambda_d(\mI-(\mI+k\dw\mCw)^{-1})
    +\|\vw_\alpha-\vwt\|^2 \lambda_1(\mI-(\mI+k\dw\mCw)^{-1})
\\&
    (\text{Recall in case \textbf{C} we have: }
    \L<\lambda_d(\mCw)<\lambda_1(\mCw)<\U.)
\\&< 
    -\|\vw_\beta-\vwt\|^2 \left(1-\frac{1}{1+k\dw\L}\right)
    +\|\vw_\alpha-\vwt\|^2 \left(1-\frac{1}{1+k\dw\U}\right)
\\&=
    -\|\vw_\beta-\vwt\|^2 \frac{k\dw\L}{1+k\dw\L}
    +\|\vw_\alpha-\vwt\|^2 \frac{k\dw\U}{1+k\dw\U}
\\&<
    -\|\vw_\beta-\vwt\|^2 \frac{k\dw\L}{1+k\dw\nxd^2}
    +\|\vw_\alpha-\vwt\|^2 \frac{k\dw\U}{1+k\dw\nxd^2}
\end{align}
Let $t=k^{-\frac{1}{4}}$.
If $\dw \ll 1$ s.t. $I_\vw = \{k \vert k\dw \nxd^2\leq 1 \text{ and } \L\|\vw_\beta-\vwt\|^2-\U\|\vw_\alpha-\vwt\|^2 > \frac{\nxd^2 u_\vw^2}{2}\} \neq \varnothing$, (note $\lim_{k\rightarrow\infty} \L\|\vw_\beta-\vwt\|^2-\U\|\vw_\alpha-\vwt\|^2 = \nxd^2\|\vw_\beta-\vwt\|^2-(1+\nxd^2)\|\vw_\alpha-\vwt\|^2 \geq \nxd^2 u_\vw^2$) then when $k\in I_\vw$, we have:
\begin{align}
    -\|\vw_\beta-\vwt\|^2 \frac{k\dw\L}{1+k\dw\nxd^2}
    +\|\vw_\alpha-\vwt\|^2 \frac{k\dw\U}{1+k\dw\nxd^2}
<
    -\frac{\nxd^2 u_\vw^2}{2} \frac{k\dw}{1+k\dw\nxd^2}
<
    -k\dw \frac{\nxd^2 u_\vw^2}{4}.
\end{align}

\paragraph{Part $A$-$3$.} We finally deal with Part $A$-$3$.
Part $A$-$3$ is independent to case \textbf{C}, and we have:
\begin{align}
&
    P(\textbf{C})\E_{\Skx}\left[\exp\left(\frac{
        -\|\vm_\beta -\vx_{k+1}\|^2
        +\|\vm_\alpha-\vx_{k+1}\|^2
    }{
        2\nx^2(1+(k+1)\dm)
    }\right)\|\vx_{k+1}\|^2 \bigg\vert\textbf{C}\right]
\\&<
    \E_{\Skx}\left[\exp\left(\frac{
        -\|\vm_\beta -\vx_{k+1}\|^2
        +\|\vm_\alpha-\vx_{k+1}\|^2
    }{
        2\nx^2(1+(k+1)\dm)
    }\right)\|\vx_{k+1}\|^2\right]
\\&
    (\text{Let } \vx_{k+1}=\vmt+\vep.)
\\&=
    \E_{\Skx}\left[\exp\left(\frac{
        -\|\vm_\beta -\vmt-\vep\|^2
        +\|\vm_\alpha-\vmt-\vep\|^2
    }{
        2\nx^2(1+(k+1)\dm)
    }\right)\|\vx_{k+1}\|^2\right]
\\&=
    \E_{\Skx}\left[\exp\left(\frac{
        -\|\vm_\beta -\vmt\|^2
        +\|\vm_\alpha-\vmt\|^2
        +\langle 2(\vm_\beta-\vm_\alpha), \vep \rangle
    }{
        2\nx^2(1+(k+1)\dm)
    }\right)\|\vx_{k+1}\|^2\right]
\\&
    (\text{Let } -\|\vm_\beta -\vmt\|^2
        +\|\vm_\alpha-\vmt\|^2 = -D, 2\nx^2(1+(k+1)\dm)=E, \vb=2(\vm_\beta-\vm_\alpha).)
\\&=
    \E_{\Skx}\left[\exp\left(\frac{
        -D+\vb^\top\vep
    }{
        E
    }\right)\|\vx_{k+1}\|^2\right]
\\&
    (\text{Notice } \|\vx_{k+1}\|^2=\|\vmt+\vep\|^2\leq 2\|\vmt\|^2+2\|\vep\|^2.)
\\&\leq
    \E_{\Skx}\left[\exp\left(\frac{
        -D+\vb^\top\vep
    }{
        E
    }\right)(2\|\vmt\|^2 + 2\|\vep\|^2)\right]
\\&
    (\text{Notice } \|\vmt+\vep\|^2=1.)
\\&=
    2\Bigg(
    \E_{\Skx}\left[\exp\left(\frac{
        -D+\vb^\top\vep
    }{
        E
    }\right)\right]
    +
    \E_{\Skx}\left[\exp\left(\frac{
        -D+\vb^\top\vep
    }{
        E
    }\right)\|\vep\|^2\right]
    \Bigg)
\\&=
    2\Bigg(
    \exp\left(\frac{\nxd^2 \|\vb\|^2}{2E^2}-\frac{D}{E}\right)
    +
    \E_{\Skx}\left[\exp\left(\frac{
        -D+\vb^\top\vep
    }{
        E
    }\right)\|\vep\|^2\right]
    \Bigg)
\\&=
    2\Bigg(
        \exp\left(\frac{\nxd^2 \|\vb\|^2}{2E^2}-\frac{D}{E}\right)
        +
        \nxd^2\left(1+\frac{\nxd^2\|\vb\|^2}{E^2}\right)\exp\left(\frac{\nxd^2\|\vb\|^2}{2E^2}-\frac{D}{E}\right) 
        + 
        (d-1)\nxd^2\exp\left(\frac{\nxd^2 \|\vb\|^2}{2E^2}-\frac{D}{E}\right)
    \Bigg)
\\&=
    2\Bigg(1+\nxd^2\left(d+\frac{\nxd^2\|\vb\|^2}{E^2}\right)\Bigg)
    \exp\left(\frac{\nxd^2 \|\vb\|^2}{2E^2}-\frac{D}{E}\right)
\\&=
    \label{equation:constant}
    C_{k=0}.
\end{align}

\paragraph{Summary of \ref{equation:part1}.} 
Thus, summarizing Part $A$-$1$, Part $A$-$2$, and Part $A$-$3$, we have:
\begin{align}
&
    P(\textbf{C})\E_{\Skx}\left[\sum\nolimits_{\beta\neq\alpha} \tpi_\beta \langle \tvw_\beta- \vw_\alpha,\vx_{k+1} \rangle^2 \middle\vert \textbf{C}\right]
\\&< 
    16 P(\textbf{C})\E_{\Skx}\Bigg[\sum\nolimits_{\beta\neq\alpha} r
    \underbrace{\exp\bigg(\frac{
        -\sum_{i=1}^k \|\vm_\beta -\vx_i\|^2
        +\sum_{i=1}^k \|\vm_\alpha-\vx_i\|^2
    }{
        2\nx^2(1+(k+1)\dm)
    }\bigg)}_{\text{Part $A$-$1$}}
    \\&~~~~~
    \cdot
    \underbrace{\exp\bigg(\frac{
        -\|\vw_\beta -\vwt\|^2_{\mI-(\mI+k\dw\mCw)^{-1}}
        +\|\vw_\alpha-\vwt\|^2_{\mI-(\mI+k\dw\mCw)^{-1}}
    }{
        2\nw^2
    }\bigg)}_{\text{Part $A$-$2$}}
    \\&~~~~~
    \cdot
    \underbrace{\exp\bigg(\frac{
        -\|\vm_\beta -\vx_{k+1}\|^2
        +\|\vm_\alpha-\vx_{k+1}\|^2
    }{
        2\nx^2(1+(k+1)\dm)
    }\bigg)\|\vx_{k+1}\|^2}_{\text{Part $A$-$3$}}
    \Bigg\vert \textbf{C}\Bigg]
\\&<
    16 r (M-1) C_{k=0} 
    \exp\left(-\frac{
        d_\vm^2 k
    }{
        8\nx^2
    }\right)
    \exp\left(-\frac{
        u_\vw^2\nxd^2 k
    }{
        8\ny^2
    }\right)
\\&=
    16 r (M-1) C_{k=0} 
    \exp\left(-k(\frac{
        d_\vm^2
    }{
        8\nx^2
    }
    +
    \frac{
        u_\vw^2\nxd^2
    }{
        8\ny^2
    })\right)
\end{align}
\end{proof}

\subsubsection{Bounded Efficacy - \ref{equation:part2}}
\label{proof:retrieval:part2}
\begin{proof}
We then deal with the second term $P(\textbf{C})\E_{\Skx}[\tpi_\alpha \langle \tvw_\beta- \vw_\alpha,\vx_{k+1} \rangle^2\vert\textbf{C}]$, \ref{equation:part2}:
\begin{align}
&
    P(\textbf{C})\E_{\Skx}[\tpi_\alpha \langle \tvw_\alpha- \vw_\alpha,\vx_{k+1} \rangle^2 \vert\textbf{C}]
\\&\leq
    P(\textbf{C})\E_{\Skx}[\tpi_\alpha \|\tvw_\alpha- \vw_\alpha\|^2 \|\vx_{k+1}\|^2 \vert\textbf{C}]
\\&
    \text{(See Eq.~\ref{equation:vwshift} for the derivation of } \tvw_\alpha.)
\\&=
    P(\textbf{C})\E_{\Skx}[\tpi_\alpha\|(\mI+k\dw\mCw)^{-1}(\vw_\alpha-\vwt) + \vwt - \vw_\alpha\|^2 \|\vx_{k+1}\|^2\vert\textbf{C}]
\\&=
    P(\textbf{C})\E_{\Skx}[\tpi_\alpha\|(\mI-(\mI+k\dw\mCw)^{-1})(\vwt-\vw_\alpha)\|^2 \|\vx_{k+1}\|^2\vert\textbf{C}]
\\&
    (\text{Let } \lambda_1(\mA) \text{ be the maximal eigenvalue of the matrix }\mA.)
\\&\leq
    \|\vw_\alpha-\vwt\|^2P(\textbf{C})\E_{\Skx}[\tpi_\alpha\lambda_1^2(\mI-(\mI+k\dw\mCw)^{-1})\|\vx_{k+1}\|^2\vert\textbf{C}]
\\&
    (\text{Recall that conditioned on \textbf{C} we have } \L<\lambda_d(\mCw)<\lambda_1(\mCw)<\U.)
\\&<
    \|\vw_\alpha-\vwt\|^2 P(\textbf{C})\E_{\Skx}\left[\tpi_\alpha\left(1-\frac{1}{1+k\dw \U}\right)^2\|\vx_{k+1}\|^2 \Bigg\vert\textbf{C}\right]
\\&=
    \|\vw_\alpha-\vwt\|^2 P(\textbf{C})\E_{\Skx}[\tpi_\alpha\|\vx_{k+1}\|^2\vert\textbf{C}]\left(1-\frac{1}{1+k\dw \U}\right)^2
\\&<
    \|\vw_\alpha-\vwt\|^2 \E_{\vx_{k+1}}\left[\|\vx_{k+1}\|^2\right] \left(1-\frac{1}{1+k\dw \U}\right)^2
\\&=
    \|\vw_\alpha-\vwt\|^2 (1+d\nxd^2)\left(1-\frac{1}{1+k\dw \U}\right)^2
\\&=
    \|\vw_\alpha-\vwt\|^2 (1+d\nxd^2) \left(\frac{k\dw \U}{1+k\dw \U}\right)^2.
\end{align}
Let $t=k^{-\frac{1}{4}}$. if $\dw \ll 1$ s.t. $I_\U =\{k \vert \U < 2(1+\nxd^2)\} \neq \varnothing$, then when $k\in I_\U$ we have:
\begin{align}
    \|\vw_\alpha-\vwt\|^2 (1+d\nxd^2) \left(\frac{k\dw \U}{1+k\dw \U}\right)^2
<
    \|\vw_\alpha-\vwt\|^2 (1+d\nxd^2)\min\{1,4k^2\dw^2(1+\nxd^2)^2\}.
\end{align}
\end{proof}

\subsubsection{Bounded Efficacy - \ref{equation:part3}}
\label{proof:retrieval:part3}
\begin{proof}
Finally, for the third term $P(\neg\textbf{C}) \E_{\SK}[\sum\nolimits_{\beta=1}^M \tpi_\beta \langle \tvw_\beta- \vw_\alpha,\vx_{k+1} \rangle^2 \vert \neg\textbf{C}]$, \ref{equation:part3}:
\begin{align}
&
    P(\neg\textbf{C}) \E_{\Skx}\left[\sum\nolimits_{\beta=1}^M \tpi_\beta \langle \tvw_\beta- \vw_\alpha,\vx_{k+1} \rangle^2 \middle\vert \neg\textbf{C}\right]
\\&\leq
    P(\neg\textbf{C}) \E_{\Skx}\left[\sum\nolimits_{\beta=1}^M \tpi_\beta \|\tvw_\beta- \vw_\alpha\|^2 \|\vx_{k+1}\|^2 \middle\vert \neg\textbf{C}\right]
\\&
    \text{(See Eq.~\ref{equation:vwshift} for the derivation of } \tvw_\beta.)
\\&=
    P(\neg\textbf{C}) \E_{\Skx}\left[\sum\nolimits_{\beta=1}^M \tpi_\beta \|(\mI+k\dw\mCw)^{-1}(\vw_\beta-\vwt) +\vwt - \vw_\alpha\|^2 \|\vx_{k+1}\|^2 \middle\vert \neg\textbf{C}\right]
\\&<
    P(\neg\textbf{C}) \E_{\Skx}\left[\sum\nolimits_{\beta=1}^M \tpi_\beta (2\|(\mI+k\dw\mCw)^{-1}(\vw_\beta-\vwt)\|^2 + 2\|\vwt - \vw_\alpha\|^2) \|\vx_{k+1}\|^2 \middle\vert \neg\textbf{C}\right]
\\&<
    P(\neg\textbf{C}) \E_{\Skx}\left[\sum\nolimits_{\beta=1}^M \tpi_\beta \left(2\|\vw_\beta-\vwt\|^2\lambda_1^2\left((\mI+k\dw\mCw)^{-1}\right) + 2\|\vwt - \vw_\alpha\|^2\right) \|\vx_{k+1}\|^2 \middle\vert \neg\textbf{C}\right]
\\&<
    P(\neg\textbf{C}) \E_{\Skx}\left[\sum\nolimits_{\beta=1}^M \tpi_\beta (2\cdot 4\cdot 1 + 2\cdot 4) \|\vx_{k+1}\|^2 \middle\vert \neg\textbf{C}\right]
\\&= 
    16P(\neg \textbf{C})\E_{\Skx}\left[\sum\nolimits_{\beta=1}^M \tpi_\beta \|\vx_{k+1}\|^2 \middle\vert \neg\textbf{C}\right]
\\&<
    16P(\neg \textbf{C})\E_{\vx_{k+1}}[\|\vx_{k+1}\|^2 \vert \neg\textbf{C}]
\\&
(\text{Notice } \textbf{C} \text{ is defined on } \{\vx_1,\ldots,\vx_k\})
\\&<
    16P(\neg \textbf{C})\E_{\vx_{k+1}}[\|\vx_{k+1}\|^2]
\\&<
    16(1+d\nxd^2)P(\neg \textbf{C})
\\&
    (\text{Let } t = k^{-\frac{1}{4}}.)
\\&<
    48(1+d\nxd^2)\exp\left(-\frac{k^{\frac{1}{2}}}{8}\right).
\end{align}
\end{proof}

\subsubsection{Bounded Efficacy - Summary}
\label{proof:retrieval:summary}
\begin{proof}
Summarizing \ref{equation:part1}, \ref{equation:part2}, and \ref{equation:part3}, we have:
\begin{align}
&
    \E_{\Skx}[\mathcal{L}_k^\alpha]
\\&<
    16 r (M-1) C_{k=0} 
    \exp\left(-\frac{
        d_\vm^2 k
    }{
        8\nx^2
    }\right)
    \exp\left(-\frac{
        u_\vw^2\nxd^2 k
    }{
        8\ny^2
    }\right)
\\&~~~~~
    +
    \|\vw_\alpha-\vwt\|^2 (1+d\nxd^2)\min\{1,4k^2\dw^2(1+\nxd^2)^2\}
    +
    48(1+d\nxd^2)\exp\left(-\frac{k^{\frac{1}{2}}}{8}\right)
\\&=
    C_3
    \exp\left(-k\left(\frac{
        d_\vm^2
    }{
        8\nx^2
    }+\frac{
        u_\vw^2\nxd^2
    }{
        8\ny^2
    }\right)\right)
    +
    48(1+d\nxd^2)\exp\left(-\frac{k^{\frac{1}{2}}}{8}\right)
\\&~~~~~
    +
    \|\vw_\alpha-\vwt\|^2 (1+d\nxd^2)\min\{1,4k^2\dw^2(1+\nxd^2)^2\}.
\end{align}
\end{proof}

\subsubsection{The Particular Interval}
\label{app:proof:interval}
The particular interval for the non-asymptotic bound is the union of $I_\vm$, $I_\vw$, and $I_\U$:
\begin{align}
k&\leq\min\{\frac{1}{\dm}-1,\frac{1}{\dw \nxd^2}\}\\
4\nxd\gamma\sqrt{1+k^{-\frac{1}{4}}}) &< \frac{d_\vm^2}{2}\\
\L\|\vw_\beta-\vwt\|^2-\U\|\vw_\alpha-\vwt\|^2 &> \nxd^2 u_\vw^2/2\\
\U &< 2(1+\nxd^2).
\end{align}

\subsection{Asymptotic Bound}
\label{app:proof:asymptotic}
This section proves the non-asymptotic bound in Theorem~\ref{the:retrieval}: Consider a \sp attaining the optimal pretraining risk.
As $k\rightarrow\infty$, ICL risk with biased labels is upper bounded by:
\begin{align}
    \E_{\Sk}[\mathcal{L}_k^\alpha]
&<
    \|\vw_\alpha-\vwt\|^2 (1+d\nxd^2) +
    \frac{C_1}{\redb{k}} 
    \exp\left(C_2 \redb{k}^{-\frac{1}{2}}
    \right)
    +O(\redb{k}^{-2}),
\end{align}
where $\mathcal{L}_k^\alpha=(\F(\Skx)-y_{k+1}^\alpha)^2=(\F(\Skx)-\langle \vx_{k+1},\vw_\alpha \rangle)^2$, and $C_1$ and $C_2$ are constants depending on the prior setting, $\nxd$, and $(\vmt,\vwt)$.

The proof of the asymptotic bound is heavily overlapped with the proof of the non-asymptotic bound.
\textbf{We will hide the overlapped derivations with ``($\ldots$)''.}

\begin{proof}
Assuming we are using in-context examples following Assumptions~\ref{asu:source} and~\ref{asu:misaligned}, \ie, $\vx_i \sim \mathcal{N}(\vmt, \nxd^2\mI), y_i = \langle \vx_i,\vwt \rangle$, $\|\vmt\|=\|\vwt\|=1$, and we aim to retrieve the function $\vw_\alpha$ of the prior center $(\vm_\alpha,\vw_\alpha)$ which is close to the in-context task.
Let $\mathcal{L}_k^\alpha$ indicate the squared risk $(\F^*(\Skx)-\langle \vx_{k+1},\vw_\alpha \rangle)^2$, where $\F^*(\Skx)$ is the prediction of $\Skx$ by the \bosp $\F^*$.
In order to have an upper bound on the risk, we consider $\vx_i\sim \mathcal{N}(\vmt,\nxd^2\mI)$ in two cases: (1) $\textbf{C}$:
$\L < \lambda_d\left(\frac{\sum_{i=1}^k \vx_i\vx_i^\top}{k}\right) \leq \lambda_1\left(\frac{\sum_{i=1}^k \vx_i\vx_i^\top}{k}\right) < \U \text{ and } \left\|\frac{\sum_{i=1}^k\vep_i}{k}\right\| < \nxd\sqrt{\gamma(1+t)}$
(see Lemma~\ref{lemma:eigenvalue} for $t$, $\gamma$, $\L$ and $\U$) and (2) $\neg\textbf{C}$: at least one of the previous inequalities does not hold.
Following Lemma~\ref{lemma:eigenvalue}, the probability of $\neg\textbf{C}$ is bounded by: $P(\neg\textbf{C}) \leq 3\exp(-\frac{k t^2}{8})$).

We start our upper bound analysis on the expected squared risk by splitting the risk into three parts:
\begin{align}
&
    \E_{\Skx}[\mathcal{L}_k^\alpha]
\\&
    (\ldots)
\\&=
    \label{equation:part1'} P(\textbf{C})\E_{\Skx}\left[\sum\nolimits_{\beta\neq\alpha} \tpi_\beta \langle \tvw_\beta- \vw_\alpha,\vx_{k+1} \rangle^2 \middle\vert \textbf{C}\right] \tag{Part $A'$}
    \\&~~~~~
    \label{equation:part2'}
    +
    P(\textbf{C})\E_{\Skx}[\tpi_\alpha \langle \tvw_\alpha- \vw_\alpha,\vx_{k+1} \rangle^2 \vert \textbf{C}] \tag{Part $B'$}
    \\&~~~~~
    \label{equation:part3'}
    +
    P(\neg\textbf{C}) \E_{\Skx}\left[\sum\nolimits_{\beta=1}^M \tpi_\beta \langle \tvw_\beta- \vw_\alpha,\vx_{k+1} \rangle^2 \middle\vert \neg\textbf{C}\right]. \tag{Part $C'$}
\end{align}
We will analyze three parts one by one in the following three sections respectively.
\end{proof}

\subsubsection{Asymptotic Bound - \ref{equation:part1'}}
\label{sec:retrieval:part1'}
\begin{proof}
We firstly analyze the term $P(\textbf{C})\E_{\Skx}[\sum_{\beta\neq\alpha} \tpi_\beta \langle \tvw_\beta- \vw_\alpha,\vx_{k+1} \rangle^2\vert\textbf{C}]$, \ref{equation:part1'}:
\begin{align}
&
    P(\textbf{C})\E_{\Skx}\left[\sum\nolimits_{\beta\neq\alpha} \tpi_\beta \langle \tvw_\beta- \vw_\alpha,\vx_{k+1} \rangle^2 \middle\vert \textbf{C}\right]
\\&
    (\ldots)
\\&<
    P(\textbf{C})\E_{\Skx}\left[\sum\nolimits_{\beta\neq\alpha} \tpi_\beta \|\vx_{k+1}\|^2 \middle\vert \textbf{C}\right] \left(\frac{2}{1+k\dw \L}+\|\vwt-\vw_\alpha\|\right)^2
\\&
    (\text{Notice } \|\vwt-\vw_\alpha\|\leq2.)
\\&\leq
    \label{equation:remain}
    P(\textbf{C})\E_{\Skx}\left[\sum\nolimits_{\beta\neq\alpha} \frac{\tpi_\beta}{\tpi_\alpha} \|\vx_{k+1}\|^2 \middle\vert \textbf{C}\right] \left(\frac{4}{(1+k\dw \L)^2}+\frac{8}{1+k\dw \L}\right)
\\&~~~~~
    \label{equation:magenta}
    +
    P(\textbf{C})\E_{\Skx}\left[\sum\nolimits_{\beta\neq\alpha} \tpi_\beta \|\vx_{k+1}\|^2 \middle\vert \textbf{C}\right] \|\vwt-\vw_\alpha\|^2.
\end{align}
Line~\ref{equation:magenta} will be merged with \ref{equation:part2'} and analyzed in Sec.~\ref{proof:retrieval:part2'}.
The current section will analyze the line~\ref{equation:remain}.
We start by analyzing the term $P(\textbf{C})\E_{\Skx}\left[\sum\nolimits_{\beta\neq\alpha} \frac{\tpi_\beta}{\tpi_\alpha} \|\vx_{k+1}\|^2 \middle\vert \textbf{C}\right]$.
By Eqs.~\ref{equation:ratio},~\ref{re-weight:vmu},~\ref{re-weight:vw}, and Assumption~\ref{asu:furtherasu3} on $\frac{\tpi_\beta}{\tpi_\alpha}$, we have:
\begin{align}
&
    P(\textbf{C})\E_{\Skx}\left[\sum\nolimits_{\beta\neq\alpha} \frac{\tpi_\beta}{\tpi_\alpha} \|\vx_{k+1}\|^2 \middle\vert \textbf{C}\right]
\\&
    (\ldots)
\\&<
    P(\textbf{C})\E_{\Skx}\Bigg[\sum\nolimits_{\beta\neq\alpha} r
    \underbrace{\exp\bigg(\frac{
        -\sum_{i=1}^k \|\vm_\beta -\vx_i\|^2
        +\sum_{i=1}^k \|\vm_\alpha-\vx_i\|^2
    }{
        2\nx^2(1+(k+1)\dm)
    }\bigg)}_{\text{Part $A'$-$1$}}
    \\&~~~~~
    \cdot
    \underbrace{\exp\bigg(\frac{
        -\|\vw_\beta -\vwt\|^2_{\mI-(\mI+k\dw\mCw)^{-1}}
        +\|\vw_\alpha-\vwt\|^2_{\mI-(\mI+k\dw\mCw)^{-1}}
    }{
        2\nw^2
    }\bigg)}_{\text{Part $A'$-$2$}}
    \\&~~~~~
    \cdot
    \underbrace{\exp\bigg(\frac{
        -\|\vm_\beta -\vx_{k+1}\|^2
        +\|\vm_\alpha-\vx_{k+1}\|^2
    }{
        2\nx^2(1+(k+1)\dm)
    }\bigg)\|\vx_{k+1}\|^2}_{\text{Part $A'$-$3$}}
    \Bigg\vert \textbf{C}\Bigg]
\\&
    (\text{Note that } x_1,\ldots,x_k \text{ are dependent on } \textbf{C} \text{ but } x_{k+1} \text{ is not. Thus, we break them for further analysis.})
\end{align}

In the following, we separately analyze the three terms, Part $A'$-$1$, Part $A'$-$2$, and Part $A'$-$3$.
The high-level idea is that, as $k$ increases, due to the concentration of Part $A'$-$1$ and Part $A'$-$2$, they can be upper bounded by a function of $k$.
Then, regarding Part $A'$-$1$ and Part $A'$-$2$ as constant values (their upper bounds), the expectation of Part $A'$-$3$ can be upper bounded.

\paragraph{Part $A'$-$1$.} We first deal with Part $A$-$1$. 
When conditioned on case $\textbf{C}$, we have:
\begin{align}
&
    \frac{
    \sum_{i=1}^k(
        -\|\vm_\beta -\vx_i\|^2 
        +\|\vm_\alpha-\vx_i\|^2
    )}{
    1+(k+1)\dm
    }
\\&
    (\ldots)
\\&<
    k\frac{
        \|\vm_\alpha-\vmt\|^2
        -\|\vm_\beta-\vmt\|^2
        +4\nxd\gamma\sqrt{1+t}
    }{
    1+(k+1)\dm
    }.
\end{align}
With Assumption~\ref{asu:misaligned}, we have $d_\vm^2 \leq \|\vm_\beta-\vmt\|^2-\|\vm_\alpha-\vmt\|^2$.
With Lemma~\ref{lemma:eigenvalue}, we have $\gamma=\sqrt{\frac{d}{k}}$. Let $t=k^{\delta-\frac{1}{2}}$ and $0<\delta<\frac{1}{2}$, we have:
\begin{align}
    k\frac{
        \|\vm_\alpha-\vmt\|^2
        -\|\vm_\beta-\vmt\|^2
        +4\nxd\gamma\sqrt{1+t}
    }{
    1+(k+1)\dm
    }
    =
    -\frac{
        d_\vm^2
    }{\dm}
    +
    \frac{4\nxd\sqrt{d}}{\dm}k^{-\frac{1}{2}}
    +
    O(
    k^{-1}
    ).
\end{align}

\paragraph{Part $A'$-$2$.} We then deal with Part $A'$-$2$.
When conditioned on case $\textbf{C}$, we have:
\begin{align}
&
    -\|\vw_\beta-\vwt\|^2_{\mI-(\mI+k\dw\mCw)^{-1}}
    +\|\vw_\alpha-\vwt\|^2_{\mI-(\mI+k\dw\mCw)^{-1}}
\\&
    (\ldots)
\\&< 
    -\|\vw_\beta-\vwt\|^2 \left(1-\frac{1}{1+k\dw\L}\right)
    +\|\vw_\alpha-\vwt\|^2 \left(1-\frac{1}{1+k\dw\U}\right)
\\&=
    -(\|\vw_\beta-\vwt\|^2 
    -\|\vw_\alpha-\vwt\|^2)
    +
    \left(\frac{\|\vw_\beta-\vwt\|^2}{1+k\dw\L}
    -\frac{\|\vw_\alpha-\vwt\|^2}{1+k\dw\U}\right).
\end{align}
With Assumption~\ref{asu:misaligned}, we have $d_\vw^2 \leq \|\vw_\beta-\vwt\|^2-\|\vw_\alpha-\vwt\|^2$.
Lemma~\ref{lemma:eigenvalue} gives the definitions of \L~and \U.
Let $t=k^{\delta-\frac{1}{2}}$ and $0<\delta<\frac{1}{2}$, we have:
\begin{align}
\\&=
    -d_\vw^2
    +\left(\frac{\|\vw_\beta-\vwt\|^2}{k\dw\nxd^2}
    -\frac{\|\vw_\alpha-\vwt\|^2}{k\dw(1+\nxd^2)}\right)
    +O(k^{-2})
\\&<
    -d_\vw^2
    +\frac{\|\vw_\beta-\vwt\|^2}{k\dw\nxd^2}
    +O(k^{-2})
\\&<
    -d_\vw^2
    +\frac{4}{\dw\nxd^2}k^{-1}
    +O(k^{-2}).
\end{align}

\paragraph{Part $A'$-$3$.} We finally deal with Part $A'$-$3$.
Part $A'$-$3$ is independent to case \textbf{C}, and we have:
\begin{align}
&
    P(\textbf{C})\E_{\Skx}\left[\exp\left(\frac{
        -\|\vm_\beta -\vx_{k+1}\|^2
        +\|\vm_\alpha-\vx_{k+1}\|^2
    }{
        2\nx^2(1+(k+1)\dm)
    }\right)\|\vx_{k+1}\|^2 \bigg\vert\textbf{C}\right]
\\&
    (\dots)
\\&=
    \label{equation:constant}
    C_{k=0}.
\end{align}

\paragraph{Summary of \ref{equation:part1'}.} 
Thus, summarizing Part $A'$-$1$, Part $A'$-$2$, and Part $A'$-$3$, we have:
\begin{align}
&
    P(\textbf{C})\E_{\Skx}\left[\sum\nolimits_{\beta\neq\alpha} \frac{\tpi_\beta}{\tpi_\alpha} \|\vx_{k+1}\|^2 \bigg\vert \textbf{C}\right] \left(\frac{4}{(1+k\dw \L)^2}+\frac{8}{1+k\dw \L}\right)
\\&<
    P(\textbf{C})\E_{\Skx}\Bigg[\sum\nolimits_{\beta\neq\alpha} r
    \underbrace{\exp\bigg(\frac{
        -\sum_{i=1}^k \|\vm_\beta -\vx_i\|^2
        +\sum_{i=1}^k \|\vm_\alpha-\vx_i\|^2
    }{
        2\nx^2(1+(k+1)\dm)
    }\bigg)}_{\text{Part $A'$-$1$}}
    \\&~~~~~
    \cdot
    \underbrace{\exp\bigg(\frac{
        -\|\vw_\beta -\vwt\|^2_{\mI-(\mI+k\dw\mCw)^{-1}}
        +\|\vw_\alpha-\vwt\|^2_{\mI-(\mI+k\dw\mCw)^{-1}}
    }{
        2\nw^2
    }\bigg)}_{\text{Part $A'$-$2$}}
    \\&~~~~~
    \cdot
    \underbrace{\exp\bigg(\frac{
        -\|\vm_\beta -\vx_{k+1}\|^2
        +\|\vm_\alpha-\vx_{k+1}\|^2
    }{
        2\nx^2(1+(k+1)\dm)
    }\bigg)\|\vx_{k+1}\|^2}_{\text{Part $A'$-$3$}}
    \Bigg\vert \textbf{C}\Bigg]
    \\&~~~~~
    \cdot
    \left(\frac{4}{(1+k\dw \L)^2}+\frac{8}{1+k\dw \L}\right)
\\&
    (\text{Notice } \lim_{k\rightarrow\infty} \L=\lim_{k\rightarrow\infty} \nxd^2\left(1-\frac{t}{2}-\dk\right)^2-2\nxd\dk\sqrt{1+t} = \nxd^2.)
\\&<
    r \sum_{\beta\neq\alpha} 
    \exp\left(
    \frac{
        - \frac{d_\vm^2}{\dm} 
        + \frac{4\nxd\sqrt{d}}{\dm}k^{-\frac{1}{2}} 
        + O(k^{-1})
    }{
        2\nx^2
    }\right)
    \exp\left(\frac{
        -d_\vw^2+\frac{4}{\dw\nxd^2}k^{-1}
        +O(k^{-2})
    }{
        2\nw^2
    }\right)
    C_{k=0} \left(\frac{8}{k\dw \nxd^2}+O(k^{-2})\right)
\\&=
    r (M-1) C_{k=0} 
    \exp\left(\frac{
        - d_\vm^2 
        + 4\nxd\sqrt{d}k^{-\frac{1}{2}} 
        + O(k^{-1})
    }{
        2\nm^2
    }\right)
    \exp\left(\frac{
        -d_\vw^2+\frac{4}{\dw\nxd^2}k^{-1}
        +O(k^{-2})
    }{
        2\nw^2
    }\right)
    \left(\frac{8}{k\dw \nxd^2}+O(k^{-2})\right)
\\&=
    \frac{8 r (M-1) C_{k=0}}{k\dw \nxd^2} 
    \exp\left(\frac{
        - d_\vm^2 
        + 4\nxd\sqrt{d}k^{-\frac{1}{2}} 
        + O(k^{-1})
    }{
        2\nm^2
    }\right)
    \exp\left(\frac{
        -d_\vw^2+\frac{4}{\dw\nxd^2}k^{-1}
        +O(k^{-2})
    }{
        2\nw^2
    }\right)
    +O(k^{-2})
\\&=
    \frac{8 r (M-1) C_{k=0}}{k\dw \nxd^2} 
    \exp\left(\frac{
        - d_\vm^2
        + 4\nxd\sqrt{d}k^{-\frac{1}{2}} 
    }{
        2\nm^2
    }\right)
    \exp\left(\frac{
        -d_\vw^2
    }{
        2\nw^2
    }\right)
    +O(k^{-2})
\end{align}
\end{proof}

\subsubsection{Asymptotic Bound - \ref{equation:part2'}}
\label{proof:retrieval:part2'}
\begin{proof}
We then deal with the second term $P(\textbf{C})\E_{\Skx}[\tpi_\alpha \langle \tvw_\beta- \vw_\alpha,\vx_{k+1} \rangle^2\vert\textbf{C}]$, \ref{equation:part2'}:
\begin{align}
&
    P(\textbf{C})\E_{\Skx}[\tpi_\alpha \langle \tvw_\alpha- \vw_\alpha,\vx_{k+1} \rangle^2 \vert\textbf{C}]
\\&
    (\ldots)
\\&<
    \|\vw_\alpha-\vwt\|^2 P(\textbf{C})\E_{\Skx}[\tpi_\alpha\|\vx_{k+1}\|^2\vert\textbf{C}]\left(1-\frac{1}{1+k\dw \U}\right)^2.
\end{align}
We add the line~\ref{equation:magenta} in Sec.~\ref{sec:retrieval:part1'} back:
\begin{align}
&
    P(\textbf{C})\E_{\Skx}[\tpi_\alpha(\langle \tvw_\alpha- \vw_\alpha,\vx_{k+1} \rangle)^2\vert\textbf{C}] + 
    \underbrace{ P(\textbf{C})\E_{\Skx}\left[\sum\nolimits_{\beta\neq\alpha} \tpi_\beta \|\vx_{k+1}\|^2 \middle\vert \textbf{C}\right] \|\vwt-\vw_\alpha\|^2 }_{\text{line~\ref{equation:magenta} in Sec.~\ref{sec:retrieval:part1'}}}
\\&<
    \|\vw_\alpha-\vwt\|^2 P(\textbf{C})\E_{\Skx}[\tpi_\alpha\|\vx_{k+1}\|^2\vert\textbf{C}]\left(1-\frac{1}{1+k\dw \U}\right)^2
    \\&~~~~~
    +
    P(\textbf{C})\E_{\Skx}\left[\sum\nolimits_{\beta\neq\alpha} \tpi_\beta \|\vx_{k+1}\|^2 \middle\vert \textbf{C}\right] \|\vwt-\vw_\alpha\|^2
\\&\leq
    \|\vw_\alpha-\vwt\|^2 P(\textbf{C})\E_{\Skx}[\tpi_\alpha\|\vx_{k+1}\|^2\vert\textbf{C}]
    +
    \|\vw_\alpha-\vwt\|^2 P(\textbf{C})\E_{\Skx}\left[\sum\nolimits_{\beta\neq\alpha} \tpi_\beta \|\vx_{k+1}\|^2 \middle\vert \textbf{C}\right]
\\&
    (\text{Notice } \sum\nolimits_{\beta=1}^M \tpi_\beta = 1)
\\&=
    \|\vw_\alpha-\vwt\|^2 P(\textbf{C})\E_{\Skx}[\|\vx_{k+1}\|^2\vert\textbf{C}]
\\&<
    \|\vw_\alpha-\vwt\|^2 \E_{\vx_{k+1}}\left[\|\vx_{k+1}\|^2\right]
\\&=
    \|\vw_\alpha-\vwt\|^2 (1+d\nxd^2)
\end{align}
\end{proof}

\subsubsection{Asymptotic Bound - \ref{equation:part3'}}
\label{proof:retrieval:part3'}
\begin{proof}
Finally for the third term $P(\neg\textbf{C}) \E_{\SK}[\sum\nolimits_{\beta=1}^M \tpi_\beta \langle \tvw_\beta- \vw_\alpha,\vx_{k+1} \rangle^2 \vert \neg\textbf{C}]$, \ref{equation:part3'}:
\begin{align}
&
    P(\neg\textbf{C}) \E_{\Skx}\left[\sum\nolimits_{\beta=1}^M \tpi_\beta \langle \tvw_\beta- \vw_\alpha,\vx_{k+1} \rangle^2 \middle\vert \neg\textbf{C}\right]
\\&
    (\ldots)
\\&<
    16(1+d\nxd^2)P(\neg \textbf{C})
\\&
    (\text{Let } t = k^{\delta-\frac{1}{2}}.)
\\&<
    48(1+d\nxd^2)\exp\left(-\frac{k^{2\delta}}{8}\right).
\end{align}
\end{proof}

\subsubsection{Asymptotic Bound - Summary}
\label{proof:retrieval:summary'}
\begin{proof}
Summarizing \ref{equation:part1'}, \ref{equation:part2'}, and \ref{equation:part3'}, we have:
\begin{align}
&
    \E_{\Skx}[\mathcal{L}_k^\alpha]
\\&<
    \frac{8 r (M-1) C_{k=0}}{k\dw \nxd^2} 
    \exp\left(\frac{
        - d_\vm^2
        + 4\nxd\sqrt{d}k^{-\frac{1}{2}} 
    }{
        2\nm^2
    }\right)
    \exp\left(\frac{
        -d_\vw^2
    }{
        2\nw^2
    }\right)
    +O(k^{-2})
\\&~~~~~
    +
    \|\vw_\alpha-\vwt\|^2 (1+d\nxd^2)
    +
    48(1+d\nxd^2)\exp\left(-\frac{k^{2\delta}}{8}\right)
\\&=
    \|\vw_\alpha-\vwt\|^2 (1+d\nxd^2)
    +
    \frac{8 r (M-1) C_{k=0}}{k\dw \nxd^2} 
    \exp\left(\frac{
        - d_\vm^2
        + 4\nxd\sqrt{d}k^{-\frac{1}{2}} 
    }{
        2\nm^2
    }\right)
    \exp\left(\frac{
        -d_\vw^2
    }{
        2\nw^2
    }\right)
    +O(k^{-2})
\\&=
    \|\vw_\alpha-\vwt\|^2 (1+d\nxd^2)
    +
    \frac{C_1}{k} 
    \exp(C_2 k^{-\frac{1}{2}})
    +O(k^{-2})
\end{align}
\end{proof}
\section{Proof of Lemma~\ref{lemma:zeroicl}}
\label{app:zeroiclproof}
In this subsection, we introduce the proof of Lemma~\ref{lemma:zeroicl}. We first give the full version of the lemma:

\newtheorem*{lemmainformal}{Lemma~\ref{lemma:zeroicl}}
\begin{lemmainformal}[Upper Bound for Zero-Shot ICL]
Assume a \sp attains the optimal pretraining risk,
and Assumption~\ref{asu:assumption} has only two components $\alpha$ and $\beta$, with centers $(\vm_\alpha,\vw_\alpha)=(-\vm_\beta,-\vw_\beta)$.
When performing ICL with $\vx_i\sim\mathcal{N}(\vmt\vert\nxd^2\mI)$, assume $\|\vmt\|=1$, and $y_i=0$, \ie, $y_i$ has the same preference to prior component $\alpha$ as $\beta$.
When $\dm$ and $\dw$ are sufficiently small, there is a particular interval for $k$ that ICL risk is upper bounded by:
\begin{align}
    \E_{\Sk}[\mathcal{L}_k^\alpha]
    <
    C_4
    \exp\left(-\frac{
        d_\vm^2 \red{k}
    }{
        8\nx^2
    }\right)
    +
    12 (1+d\nxd^2)\exp\left(-\frac{\red{k}^{\frac{1}{2}}}{8}\right)
    +
    (1+d\nxd^2)\min\{1,\red{k}^2\blue{\dw}^2(1+\nxd^2)^2\},
\end{align}
where $\mathcal{L}_k^\alpha=(\F(\Skx)-y_{k+1}^\alpha)^2=(\F(\Skx)-\langle \vx_{k+1},\vw_\alpha \rangle)^2$, $C_4$ is a constant depending on the prior, $\nxd$, and $(\vmt,\vwt)$. When $k$ is small, the first and second terms dominate and exponential decay.
When $k$ is large, the third term dominates and increases.
\end{lemmainformal}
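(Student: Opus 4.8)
This is the zero-shot analogue of Theorem~\ref{the:retrieval}, and I would reuse that proof's architecture, exploiting the simplifications that $y_i=0$, $M=2$, and $(\vm_\alpha,\vw_\alpha)=(-\vm_\beta,-\vw_\beta)$ afford. First, since every in-context label vanishes, in Lemma~\ref{lemma:posterior} we have $\mw=\frac1k\sum_i\vx_i y_i=\vzero$, so the posterior $\vw$-centers merely shrink toward the origin: $\tvw_m=(\mI+k\dw\mCw)^{-1}\vw_m=\mA\vw_m$ with $\mA\preceq\mI$, hence $\|\tvw_m\|\le\|\vw_m\|=1$; and the $\vw$-reweighting factor becomes $c^\vw_m=\exp\!\big(-(\|\vw_m\|^2-\|\vw_m\|^2_{\mA})/(2\nw^2)\big)$. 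Because $\|\vw_\alpha\|=\|\vw_\beta\|$ and $\vw_\alpha=-\vw_\beta$, the quadratic form $\|\cdot\|^2_{\mA}$ agrees on the two centers, so $c^\vw_\alpha=c^\vw_\beta$ and $\advw(\alpha,\beta)=0$; all reweighting is then driven by the input-distribution mismatch $\advm(\alpha,\beta)$ alone (Eq.~\ref{re-weight:vmu}), which is exactly what ``retrieval from non-informative labels'' should mean. Second, with $M=2$ the ``wrong-component'' sum collapses to a single term $\beta\ne\alpha$.

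Then I would start from Corollary~\ref{corollary:prediction}, $\F^*(\Skx)=\langle\vx_{k+1},\tpi_\alpha\tvw_\alpha+\tpi_\beta\tvw_\beta\rangle$, apply Jensen ($(\sum\tpi_m a_m)^2\le\sum\tpi_m a_m^2$) to get $\E[\mathcal{L}_k^\alpha]\le\E[\sum_m\tpi_m\langle\tvw_m-\vw_\alpha,\vx_{k+1}\rangle^2]$, and condition on the concentration event $\textbf{C}$ of Lemma~\ref{lemma:eigenvalue} ($\L<\lambda_d(\mCw)\le\lambda_1(\mCw)<\U$ and $\|k^{-1}\sum_i\vep_i\|$ small), with $P(\neg\textbf{C})\le 3\exp(-kt^2/8)$, splitting the risk into: part~(A) $P(\textbf{C})\E[\tpi_\beta\langle\tvw_\beta-\vw_\alpha,\vx_{k+1}\rangle^2\mid\textbf{C}]$; part~(B) $P(\textbf{C})\E[\tpi_\alpha\langle\tvw_\alpha-\vw_\alpha,\vx_{k+1}\rangle^2\mid\textbf{C}]$; part~(C) $P(\neg\textbf{C})\E[\,\cdot\mid\neg\textbf{C}]$. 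Part~(C) is crude: $\|\tvw_m-\vw_\alpha\|\le2$ gives $(C)\le 4(1+d\nxd^2)P(\neg\textbf{C})$, and $t=k^{-1/4}$ turns this into the $12(1+d\nxd^2)\exp(-k^{1/2}/8)$ term. Part~(B) produces the plateau: $\tvw_\alpha-\vw_\alpha=(\mA-\mI)\vw_\alpha$, so it is at most $\|\vw_\alpha\|^2(1-\lambda_d(\mA))^2\,\E[\|\vx_{k+1}\|^2]=(1+d\nxd^2)\big(\tfrac{k\dw\lambda_1(\mCw)}{1+k\dw\lambda_1(\mCw)}\big)^2$; on $\textbf{C}$, with $\dw$ small and the interval chosen so $\lambda_1(\mCw)$ stays suitably close to its limit $1+\nxd^2$, this is $\le(1+d\nxd^2)\min\{1,k^2\dw^2(1+\nxd^2)^2\}$ — below $1+d\nxd^2$ for small $k$ but rising toward it as $k$ leaves the interval.

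Part~(A) is the main work and follows Theorem~\ref{the:retrieval}'s Part~$A$ with the $\advw$ factor simply absent. Bounding $\|\tvw_\beta-\vw_\alpha\|\le\|\tvw_\beta\|+\|\vw_\alpha\|\le2$, using $\tpi_\beta\le\tpi_\beta/\tpi_\alpha$, and plugging in $\tpi_\beta/\tpi_\alpha\le r\exp(\advm(\beta,\alpha))$ with $\advm$ from Eq.~\ref{re-weight:vmu}, I would split $\sum_{i=1}^{k+1}$ into $\sum_{i=1}^k$ (dependent on $\textbf{C}$) and the $i=k+1$ contribution (independent of $\textbf{C}$, integrated against $\|\vx_{k+1}\|^2$). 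On $\textbf{C}$, writing $\vx_i=\vmt+\vep_i$ and using Assumption~\ref{asu:misaligned}'s $\vm$-margin $\|\vm_\beta-\vmt\|^2-\|\vm_\alpha-\vmt\|^2\ge d_\vm^2$, the first piece is $\le -kd_\vm^2/(8\nx^2)$ once $\dm$ is small enough that $(k+1)\dm\le1$ and the $k^{-1/4}$-scale error terms fall below $d_\vm^2/2$; the $i=k+1$ piece times $\|\vx_{k+1}\|^2$ is a Gaussian moment integral equal to the finite constant $C_{k=0}$ of Eq.~\ref{equation:constant}. Hence $(A)\le 4r\,C_{k=0}\exp(-d_\vm^2 k/(8\nx^2))=:C_4\exp(-d_\vm^2 k/(8\nx^2))$. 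Summing (A)+(B)+(C) yields the claimed bound, the ``particular interval'' being the intersection of $(k+1)\dm\le1$, $4\nxd\sqrt{d/k}\sqrt{1+k^{-1/4}}<d_\vm^2/2$, and a smallness condition on $k\dw$ making $\lambda_1(\mCw)$ close to $1+\nxd^2$ — nonempty when $\dm,\dw$ are small — on which the two decaying terms dominate for small $k$ while the (B)-term climbs toward $1+d\nxd^2$ for larger $k$, giving the bounded-efficacy shape.

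\textbf{Main obstacle.} The subtle point is not any single estimate but the fact that $y_i=0$ breaks the hypothesis $\|\vwt\|=1$ of Assumption~\ref{asu:source}, so Theorem~\ref{the:retrieval} cannot be invoked as a black box: the derivations of $\tvw_m$ and of $\advw$ must be re-run with $\mw=\vzero$, one must check that Lemma~\ref{lemma:eigenvalue} still applies (it does, as its only requirement is $\vx_i\sim\mathcal{N}(\vmt,\nxd^2\mI)$ with $\|\vmt\|=1$), and — most importantly — that part~(B) no longer tends to $0$ as $k\to\infty$ but to $\|\vw_\alpha\|^2(1+d\nxd^2)>0$, because the posterior $\vw$-mean is shrunk to $\vzero$ rather than pulled toward a matching target. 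Tracking, rather than discarding, this non-vanishing term is precisely the mechanism behind the bounded-efficacy plateau.
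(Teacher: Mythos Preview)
Your proposal is correct and follows essentially the same approach as the paper's proof: the same three-part decomposition conditioned on the eigenvalue-concentration event $\textbf{C}$, the same observation that $\advw(\alpha,\beta)=0$ by the symmetry $\vw_\alpha=-\vw_\beta$ and $\vwt=\vzero$ (so only $\advm$ drives reweighting in Part~A), the same treatment of Parts~B and~C, and the same identification of the interval via constraints on $(k+1)\dm$, the $k^{-1/4}$-scale error, and $\U$. Your ``main obstacle'' paragraph also correctly isolates the key structural point --- that Part~B no longer vanishes because the posterior mean is pulled to $\vzero$ rather than to $\vw_\alpha$ --- which is exactly the mechanism the paper relies on for the bounded-efficacy plateau; the only discrepancies are harmless constant factors (your $4rC_{k=0}$ versus the paper's $9rC_{k=0}$), all absorbed into $C_4$.
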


\begin{proof}
The proof techniques are very similar to the proof techniques used in Sec.~\ref{app:proof:nonasymptotic}.
Assuming we are using in-context examples following $\vx_i \sim \mathcal{N}(\vmt, \nxd^2\mI), \|\vmt\|=1, y_i = 0$, \ie, $\vwt=\vzero$, and we aim to retrieve the function $\vw_\alpha$ of the prior center $(\vm_\alpha,\vw_\alpha)$ which is close to the in-context task.
Let $\mathcal{L}_k^\alpha$ indicate the squared loss $(\F^*(\Skx)-\langle \vx_{k+1},\vw_\alpha \rangle)^2$, where $\F^*(\Skx)$ is the prediction of $\Skx$ by the \bosp $\F^*$.
In order to have an upper bound on the loss, we consider $\vx_i\sim \mathcal{N}(\vmt,\nxd^2\mI)$ in two cases: (1) $\textbf{C}$:
$\L < \lambda_d\left(\frac{\sum_{i=1}^k \vx_i\vx_i^\top}{k}\right) \leq \lambda_1\left(\frac{\sum_{i=1}^k \vx_i\vx_i^\top}{k}\right) < \U \text{ and } \left\|\frac{\sum_{i=1}^k\vep_i}{k}\right\| < \nxd\sqrt{\gamma(1+t)}$
(see Lemma~\ref{lemma:eigenvalue} for $t$, $\gamma$, $\L$ and $\U$) and (2) $\neg\textbf{C}$: at least one of the previous inequalities does not hold.
Following Lemma~\ref{lemma:eigenvalue}, the probability of $\neg\textbf{C}$ is bounded by: $P(\neg\textbf{C}) \leq 3\exp(-\frac{k t^2}{8})$).

Similar to Sec.~\ref{app:proof:nonasymptotic}, we split the expected squared loss into three parts:
\begin{align}
&
    \E_{\Skx}[\mathcal{L}_k^\alpha]
\\&<
    \label{equation:zeropart1}
    P(\textbf{C})\E_{\Skx}[\tpi_\beta \langle \tvw_\beta- \vw_\alpha,\vx_{k+1} \rangle^2\vert\textbf{C}]  \tag{Part $A''$}
    \\&~~~~~
    \label{equation:zeropart2}
    +
    P(\textbf{C})\E_{\Skx}[\tpi_\alpha \langle \tvw_\alpha- \vw_\alpha,\vx_{k+1} \rangle^2\vert\textbf{C}] \tag{Part $B''$}
    \\&~~~~~
    \label{equation:zeropart3}
    +
    P(\neg\textbf{C}) \E_{\Skx}\left[\sum\nolimits_{\kappa\in\{\alpha,\beta\}} \tpi_\kappa \langle \tvw_\kappa- \vw_\alpha,\vx_{k+1} \rangle^2 \vert \neg\textbf{C}\right]. \tag{Part $C''$}
\end{align}
\end{proof}

\subsection{Proof of Lemma~\ref{lemma:zeroicl}: \ref{equation:zeropart1}}
\label{app:zeroicl:part1}
\begin{proof}
We first analyze the term $P(\textbf{C})\E_{\Skx}[\tpi_\beta \langle \tvw_\beta- \vw_\alpha,\vx_{k+1} \rangle^2\vert\textbf{C}]$, \ref{equation:zeropart1}.
Similar to Sec.~\ref{app:proof:nonasymptotic}, we have:
\begin{align}
&
    P(\textbf{C})\E_{\Skx}[\tpi_\beta \langle \tvw_\beta- \vw_\alpha,\vx_{k+1} \rangle^2\vert\textbf{C}]
\\&<
    P(\textbf{C})\E_{\Skx}[\frac{\tpi_\beta}{\tpi_\alpha} \langle \tvw_\beta- \vw_\alpha,\vx_{k+1} \rangle^2\vert\textbf{C}]
    \cdot
    \left(\frac{2}{1+k\dw \L}+\|\vwt-\vw_\alpha\|\right)^2
\\&<
    P(\textbf{C})\E_{\Skx}\Bigg[r 
    \exp\left(\frac{
        -\sum_{i=1}^k \|\vm_\beta -\vx_i\|^2
        +\sum_{i=1}^k \|\vm_\alpha-\vx_i\|^2
    }{
        2\nx^2(1+(k+1)\dm)
    }\right)
    \\&~~~~~
    \cdot
    \exp\left(\frac{
        -\|\vw_\beta -\vwt\|^2_{\mI-(\mI+k\dw\mCw)^{-1}}
        +\|\vw_\alpha-\vwt\|^2_{\mI-(\mI+k\dw\mCw)^{-1}}
    }{
        2\nw^2
    }\right)
    \\&~~~~~
    \cdot
    \exp\left(\frac{
        -\|\vm_\beta -\vx_{k+1}\|^2
        +\|\vm_\alpha-\vx_{k+1}\|^2
    }{
        2\nx^2(1+(k+1)\dm)
    }\right)\|\vx_{k+1}\|^2 \Bigg\vert\textbf{C}\Bigg]
    \cdot
    \left(\frac{2}{1+k\dw \L}+\|\vwt-\vw_\alpha\|\right)^2
\\&
    (\text{Notice } \vwt=\vzero, \vw_\beta=-\vw_\alpha.)
\\&=
    r P(\textbf{C}) \E_{\Skx}\Bigg[
    \exp\left(\frac{
        -\sum_{i=1}^k \|\vm_\beta -\vx_i\|^2
        +\sum_{i=1}^k \|\vm_\alpha-\vx_i\|^2
    }{
        2\nx^2(1+(k+1)\dm)
    }\right)
    \\&~~~~~
    \cdot
    \exp\left(\frac{
        -\|\vm_\beta -\vx_{k+1}\|^2
        +\|\vm_\alpha-\vx_{k+1}\|^2
    }{
        2\nx^2(1+(k+1)\dm)
    }\right)\|\vx_{k+1}\|^2 \Bigg\vert\textbf{C}\Bigg]\cdot 3^2
\\&=
    9r P(\textbf{C}) \E_{\Skx}\Bigg[
    \underbrace{\exp\left(\frac{
        -\sum_{i=1}^k \|\vm_\beta -\vx_i\|^2
        +\sum_{i=1}^k \|\vm_\alpha-\vx_i\|^2
    }{
        2\nx^2(1+(k+1)\dm)
    }\right)}_{\text{$A''$-$1$}}
    \\&~~~~~
    \cdot
    \underbrace{\exp\left(\frac{
        -\|\vm_\beta -\vx_{k+1}\|^2
        +\|\vm_\alpha-\vx_{k+1}\|^2
    }{
        2\nx^2(1+(k+1)\dm)
    }\right)\|\vx_{k+1}\|^2}_{\text{$A''$-$3$}} \Bigg\vert\textbf{C}\Bigg].
\end{align}

Same to Sec.~\ref{sec:retrieval:part1}, when conditioned on case $\textbf{C}$, for Part $A''$-$1$ we have:
\begin{align}
    \frac{
    \sum_{i=1}^k(
        -\|\vm_\beta -\vx_i\|^2 
        +\|\vm_\alpha-\vx_i\|^2
    )}{
    1+(k+1)\dm
    }
<
    k\frac{
        \|\vm_\alpha-\vmt\|^2
        -\|\vm_\beta-\vmt\|^2
        +4\nxd\gamma\sqrt{1+t}
    }{
    1+(k+1)\dm
    }.
\end{align}
Let $t=k^{-\frac{1}{4}}$. Recall in Assumption~\ref{asu:misaligned}, we have $\forall \beta \neq \alpha, \|\vm_\beta-\vmt\|^2 - \|\vm_\alpha-\vmt\|^2 \geq d^2_\vm$.
If $\dm \ll 1$ s.t. $I_\vm = \{k\vert (k+1)\dm\leq 1 \text{ and } \frac{d_\vm^2}{2} > 4\nxd\gamma\sqrt{1+k^{-\frac{1}{4}}} \}\neq \varnothing$, then when $k\in I_\vm$ we have:
\begin{align}
    k\frac{
        \|\vm_\alpha-\vmt\|^2
        -\|\vm_\beta-\vmt\|^2
        +4\nxd\gamma\sqrt{1+t}
    }{
    1+(k+1)\dm
    }
<
    -\frac{
        d_\vm^2
    }{4}.
\end{align}


Same to Sec.~\ref{sec:retrieval:part1}, when conditioned on case $\textbf{C}$, for Part $A''$-$3$ we have:
\begin{align}
    P(\textbf{C}) \E_{\Skx}\left[
    \exp\left(\frac{
        -\|\vm_\beta -\vx_{k+1}\|^2
        +\|\vm_\alpha-\vx_{k+1}\|^2
    }{
        2\nx^2(1+(k+1)\dm)
    }\right)\|\vx_{k+1}\|^2 \bigg\vert\textbf{C}\right] = C_{k=0}.
\end{align}

As a summary of the above analysis, we have:
\begin{align}
     P(\textbf{C})\E_{\Skx}[\tpi_\beta \langle \tvw_\beta- \vw_\alpha,\vx_{k+1} \rangle^2\vert\textbf{C}]
<
    9 r C_{k=0} 
    \exp\left(-\frac{
        d_\vm^2 k
    }{
        8\nx^2
    }\right).
\end{align}
\end{proof}

\subsection{Proof of Lemma~\ref{lemma:zeroicl}: \ref{equation:zeropart2}}
\label{app:zeroicl:part2}
\begin{proof}
We then deal with the second term $P(\textbf{C})\E_{\Skx}[\tpi_\alpha(\langle \tvw_\alpha- \vw_\alpha,\vx_{k+1} \rangle)^2\vert\textbf{C}]$, \ref{equation:zeropart2}.
The analysis is exactly the same as Sec.~\ref{proof:retrieval:part2}, and we have:
\begin{align}
    P(\textbf{C})\E_{\Skx}[\tpi_\alpha \langle \tvw_\alpha- \vw_\alpha,\vx_{k+1} \rangle^2 \vert\textbf{C}]
<
    \|\vw_\alpha-\vwt\|^2 (1+d\nxd^2) \left(\frac{k\dw \U}{1+k\dw \U}\right)^2.
\end{align}
Let $t=k^{-\frac{1}{4}}$. if $\dw \ll 1$ s.t. $I_\U =\{k \vert \U < 2(1+\nxd^2)\} \neq \varnothing$, then when $k\in I_\U$ we have:
\begin{align}
    \|\vw_\alpha-\vwt\|^2 (1+d\nxd^2) \left(\frac{k\dw \U}{1+k\dw \U}\right)^2
<
    \|\vw_\alpha-\vwt\|^2 (1+d\nxd^2)\min\{1,4k^2\dw^2(1+\nxd^2)^2\}.
\end{align}
\end{proof}

\subsection{Proof of Lemma~\ref{lemma:zeroicl}:  \ref{equation:zeropart3}}
\label{app:zeroicl:part3}
\begin{proof}
Finally, for the third term $P(\neg\textbf{C}) \E_{\Skx}[\sum\nolimits_{\kappa\in\{\alpha,\beta\}} \tpi_\kappa \langle \tvw_\kappa- \vw_\alpha,\vx_{k+1} \rangle^2 \vert \neg\textbf{C}]$, \ref{equation:zeropart3}. Similar to Sec.~\ref{proof:retrieval:part3}, we have:
\begin{align}
&
    P(\neg\textbf{C}) \E_{\Skx}\left[\sum\nolimits_{\kappa\in\{\alpha,\beta\}} \tpi_\kappa (\langle \tvw_\kappa- \vw_\alpha,\vx_{k+1} \rangle)^2 \middle\vert \neg\textbf{C}\right]
\\&<
    P(\neg\textbf{C}) \E_{\Skx}\left[\sum\nolimits_{\kappa\in\{\alpha,\beta\}} \tpi_\kappa \left(2\|(\mI+k\dw\mCw)^{-1}(\vw_\kappa-\vwt)\|^2 + 2\|\vwt - \vw_\alpha\|^2\right) \|\vx_{k+1}\|^2 \middle\vert \neg\textbf{C}\right]
\\&
    (\text{Recall } \vwt = \vzero.)
\\&<
    P(\neg\textbf{C}) \E_{\Skx}\left[\sum\nolimits_{\kappa\in\{\alpha,\beta\}} \tpi_\kappa (2\cdot 1\cdot 1 + 2\cdot 1) \|\vx_{k+1}\|^2 \middle\vert \neg\textbf{C}\right]
\\&= 
    4 P(\neg \textbf{C})\E_{\Skx}\left[\sum\nolimits_{\kappa\in\{\alpha,\beta\}} \tpi_\kappa \|\vx_{k+1}\|^2 \middle\vert \neg\textbf{C}\right]
\\&<
    4 P(\neg \textbf{C})\E_{\vx_{k+1}}[\|\vx_{k+1}\|^2 \vert \neg\textbf{C}]
\\&
(\text{Notice } \textbf{C} \text{ is defined on } \{\vx_1,\ldots,\vx_k\}.)
\\&<
    4 P(\neg \textbf{C})\E_{\vx_{k+1}}[\|\vx_{k+1}\|^2]
\\&<
    4(1+d\nxd^2)P(\neg \textbf{C})
\\&
    (\text{Let } t = k^{-\frac{1}{4}}.)
\\&<
    12(1+d\nxd^2)\exp\left(-\frac{k^{\frac{1}{2}}}{8}\right).
\end{align}
\end{proof}

\subsection{Proof of Lemma~\ref{lemma:zeroicl}: Summary}
\begin{proof}
Summarizing \ref{equation:zeropart1}, \ref{equation:zeropart2}, and \ref{equation:zeropart3}, we have:
\begin{align}
&
    \E_{\Skx}[\mathcal{L}_k^\alpha]
\\&<
    9 r C_{k=0} 
    \exp\left(-\frac{
        d_\vm^2 k
    }{
        8\nx^2
    }\right)
    +
    \|\vw_\alpha-\vwt\|^2 (1+d\nxd^2)\min\{1,4k^2\dw^2(1+\nxd^2)^2\}
    +
    12(1+d\nxd^2)\exp\left(-\frac{k^{\frac{1}{2}}}{8}\right)
\\&=
    9 r C_{k=0} 
    \exp\left(-\frac{
        d_\vm^2 k
    }{
        8\nx^2
    }\right)
    +
    (1+d\nxd^2)\min\{1,4k^2\dw^2(1+\nxd^2)^2\}
    +
    12(1+d\nxd^2)\exp\left(-\frac{k^{\frac{1}{2}}}{8}\right)
\\&=
    C_4
    \exp\left(-\frac{
        d_\vm^2 k
    }{
        8\nx^2
    }\right)
    +
    12(1+d\nxd^2)\exp\left(-\frac{k^{\frac{1}{2}}}{8}\right)
    +
    (1+d\nxd^2)\min\{1,4k^2\dw^2(1+\nxd^2)^2\}.
\end{align}
\end{proof}

\subsection{The Particular Interval}
\label{app:proof:zeroicl:interval}
The particular interval for the risk bound revealing bounded efficacy is the union of $I_\vm$ and $I_\U$:
\begin{align}
k&\leq \frac{1}{\dm}-1\\
4\nxd\gamma\sqrt{1+k^{-\frac{1}{4}}}) &< \frac{d_\vm^2}{2}\\
\U &< 2(1+\nxd^2).
\end{align}

\section{Toy Example for Component Shifting and Component Re-weighting}
\label{sec:demo}
We study how in-context examples affect the prediction of ICL by a pretrained \bosp and how the pretraining distribution affects this phenomenon.
Assume the \sp $f$ is initially pretrained on a dataset distribution to produce the minimum risk minimizer $\sf$, and then the pretrained $\sf$ is used to predict the next token $y$ of the token $x$.
Instead of direct inference via $\sf(x)$, we consider inference with additional $k$ in-context examples $\{x_i\}_{i=1}^k$ via the format $\sf([x_1,\ldots,x_k,x])$.
We aim to theoretically examine the effect of in-context examples $\{x_i\}_{i=1}^k$ on the prediction $\sf([x_1,\ldots,x_k,x])$.
While the formal problem setting may involve verbose math, this demo section illustrates the basic phenomenon for better delivering our work.

The following demo subsections are organized as follows. 
We first introduce the problem setting in Sec.~\ref{demo:setting}. 
We then connect ICL with Bayesian inference in Sec.~\ref{demo:connection}.
Further, we introduce the assumptions for the pretraining dataset in Sec.~\ref{demo:assumptions}. 
Finally, we derive a closed-form posterior and introduce two phenomena, ``Component Shifting'' and ``Component Re-weighting'' in Sec.~\ref{demo:posterior}.

\subsection{Toy Example: Pretraing Data Generative Modela}
\label{demo:setting}
ICL involves two important components: the pretraining dataset, and the \sp supporting varied input lengths.
We assume the \sp $f:\cup_{k\in \{0,\ldots,K-1\}} \mathcal{R}^{k\times 1} \rightarrow \mathcal{R}^{1\times 1}$ can fit the pretraining distribution exactly with enough data and expressivity.
To generate a training sample, we first sample a task $\mu$ from underlying task distribution $\mathcal{D}_\mu$, and then we generate tokens of the sequence from a distribution $\mathcal{D}_x(\mu)$ based on the task $\vm$.
The sample generation process is described as follows:
\begin{asu}[Demo: Pretraining Data Generative Model]
\label{asu:demo:setting} Given a task prior distribution $\mathcal{D}_{\mu}$, and a conditioned $x$ sampler $\mathcal{D}_{x}(\mu)$ conditioned on task $\mu$, the process of generating a sequence $S_K = [x_1, x_2, \ldots, x_K]$ with length $K$ follows:\\
\subasu\label{asu:demo:setting1} Sample a task $\mu$ from the task prior: $\mu\sim\mathcal{D}_\mu$, and the probability of $\mu$ is indicated by $P(\mu)$;\\
\subasu\label{asu:demo:setting2} Sample $K$ samples, each denoted by $x_i$, from the chosen task: For $i\in \{1,2,\ldots,K\}$, $x_i\sim\mathcal{D}_{x}(\mu)$, and the probability of $x_i=x$ is indicated by $P(x|\mu)$;\\
\subasu\label{asu:demo:setting3}
Define a Sequence $S_k$: For capital $K$, $S_K=[x_1,\ldots,x_K]$; and for lowercase $k$, the sequence of the first $k$ demonstrations of $S_K$ is indicated by $S_k=[x_1,\ldots,x_k]$, \eg, $S_2=[x_1,x_2]$.
\end{asu}
The generation process is related to real-world scenarios via two points: (i) For sampling step~\ref{asu:demo:setting1}, the LM is trained on varied tasks; (ii) For sampling step~\ref{asu:demo:setting2}, when one person/agent produces texts for one task, the generated text could be noisy.
For instance, given a task such as describing a football game, one person has multiple ways to describe it.

\subsection{Toy Example: \BOSP}
\label{demo:connection}
Now we consider training $f(\cdot)$ using sample $S_K$ generated via the above generation process~\ref{asu:demo:setting}:
\begin{align}
    \label{equation:demo:pretraining}
    \mathcal{L} (f) 
    = \DE_{S_K}\left[\frac{1}{K}\sum_{k=0}^{K-1} (f(S_k)-x_{k+1})^2\right] 
    = \DE_{\mu\sim \mathcal{D}_\mu} \left[
    \DE_{
    \substack{x_i \sim \mathcal{D}(\mu),\\ i\in\{1,\ldots,K\}}} 
    \left[\frac{1}{K}\sum_{k=0}^{K-1} (f(S_k)-x_{k+1})^2\middle\vert\mu\right]\right].
\end{align}
$f$ can be viewed as $K$ separate models $f_0,\ldots,f_{K-1}$, where $f_k$ takes a sequence of $k$ tokens as input.
Therefore, when the model $f$ has enough expressivity, the optimization problem $f^*=\argminA_f \mathcal{L} (f)$
could be regarded as $K$ different optimization problems: \begin{align}
    f_k^* = \argminA_{f_k} \DE_{S_K} [(f(S_k)-x_{k+1})^2], \forall k\in\{0,\ldots,K-1\}.
\end{align}
Thus, the solution $f_k^*$ for each $k$ is a minimum mean square error (MMSE) estimator~\citep[page~63]{van2004detection}, and the prediction of $\sf(S_k)$ satisfies:
\begin{align}
    \label{equation:demo:connection}
    \sf(S_k)
    = \DE_{S_K} [x_{k+1}|S_k]
    = \DE_{\mu\sim \mathcal{D}_\mu} [
    \DE_{
    \substack{x_i \sim \mathcal{D}(\mu),\\ i\in\{1,\ldots,K\}}}  [x_{k+1}|\mu,S_k]|S_k]
    = \DE_{\mu\sim \mathcal{D}_\mu} [
    \DE_{x_{k+1} \sim \mathcal{D}(\mu)}  [x_{k+1}|\mu]|S_k].
\end{align}
The prediction $\sf(S_k)$ is the expectation of $\DE_{x_{k+1} \sim \mathcal{D}(\mu)} [x_{k+1}|\mu]$ on the task posterior observing $S_k$.

\subsection{Toy Example: Gaussian Assumptions on Pretraining Data Generative Model}
\label{demo:assumptions}
In Sec.~\ref{demo:connection}, we connect ICL with Bayesian inference, and in Eq.~\ref{equation:demo:connection}, we observe that the prediction $\sf(S_k)$ depends on the posterior.
We are interested in how the in-context examples affect the prediction and the posterior.
We make assumptions on the pretraining dataset to have a closed-form expression of the posterior facilitating further analyses:
\begin{asu}[Demo: Gaussian Assumptions for Generative Model for Pretraining Data]
\label{asu:demo:assumption}
\,~\\
\subasu \label{asu:demo:assumption1} 
Task distribution: $\mu \sim \mathcal{D}_{\mu}, P(\mu) = \sum_{m=1}^M \pi_m P(\mu \vert T_m)$, where $T_m$ is the $m^\text{th}$ mixture component of the Gaussian mixture, \ie, $P(\mu \vert T_m) = \mathcal{N}(\mu \vert \mu_m,\sigma^2)$, and $\pi_m$ is the corresponding mixture weight.
$\sum_{m=1}^M\pi_m=1$, $0<\pi_m<1$, $\mu_m$ is the center of the mixture component $T_m$,
and all components share the same covariance matrix controlled by $\sigma$;\\
\subasu \label{asu:demo:assumption2} 
Token distribution: $x\sim\mathcal{D}_x(\mu)$, $P(x\vert \mu)=\mathcal{N}(x\vert \mu_m, \tau^2)$.
\end{asu}

\begin{figure}
    \centering
    \includegraphics[width = 0.95\textwidth]{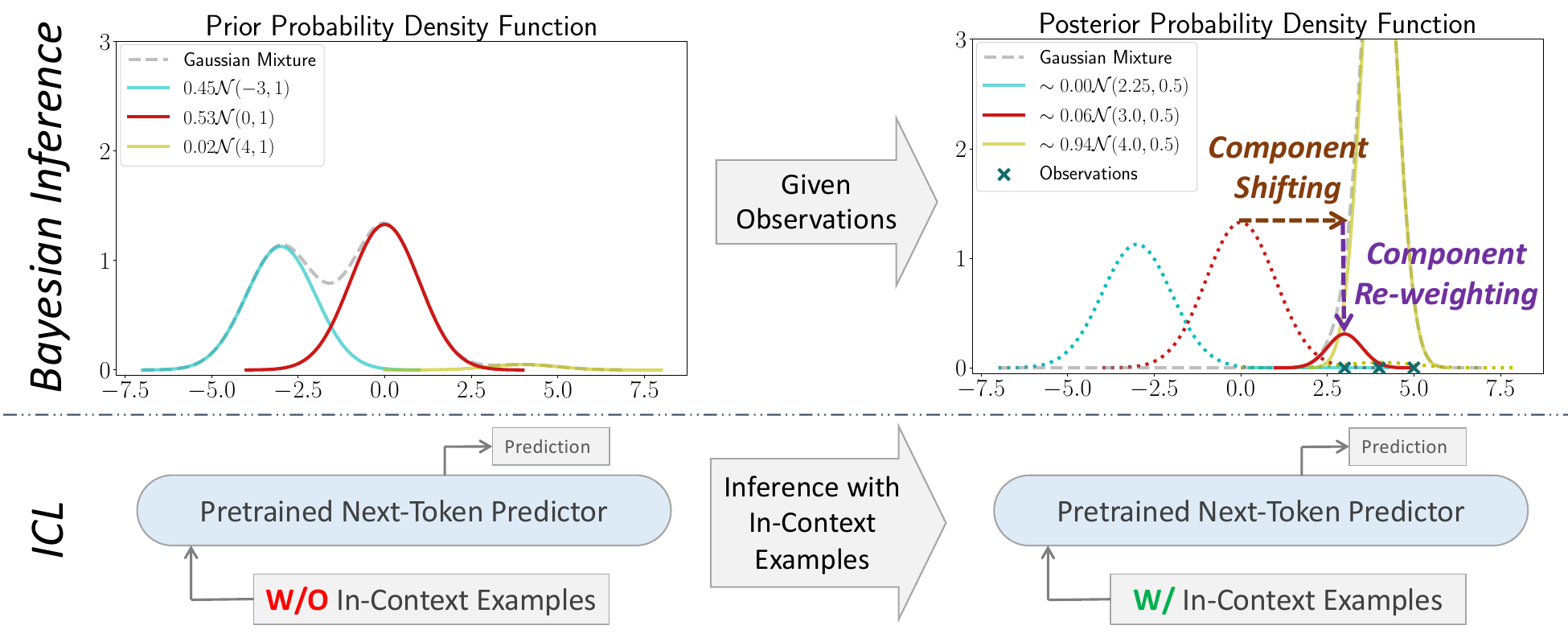}
    \caption{The left part of the figure indicates the pretrained \sp is pretrained on the task prior distribution according to Assumption~\ref{asu:demo:assumption}, and the prediction is based on the prior without in-context examples.
    The right part of the figure indicates that with in-context samples, the prediction is based on posterior, regarding the in-context examples as observed samples.}
    \label{fig: demo priorposterior}
\end{figure}
\subsection{Toy Example: Posterior Analysis}
\label{demo:posterior}
With Assumption~\ref{asu:demo:assumption}, we derive the closed-form expression of the posterior as follows:
\begin{align}
    \label{equation:demo:posterior}
    P(\mu|S_k)
    &\propto \sum_{m=1}^M \tpi_m \mathcal{N}(\mu|\tmu_m,\tsigma^2).
    \\&
    (\tpi_m = \pi_m \exp\left(\frac{k\left(\mu_m-\frac{\sum_{i=1}^k x_i}{k}\right)^2}{2(\tau^2+k\sigma^2)}\right), \tmu_m = \frac{\tau^2\mu_m+\sigma^2\sum_{i=1}^k x_i}{\tau^2+k\sigma^2}, \tsigma^2 = \frac{\tau^2\sigma^2}{\tau^2+k\sigma^2})
\end{align}
See Sec.~\ref{app:toy:proof} for proof details. From Eq.~\ref{equation:demo:posterior}, we observe two factors when comparing the posterior with the prior in Assumption~\ref{asu:demo:assumption}: (i) Component Shifting: after observing $S_k=[x_1,x_2,\ldots,x_k]$, the center of each mixture component is shifted to $\frac{\tau^2\mu_m+\sigma^2\sum_{i=1}^k x_i}{\tau^2+k\sigma^2}$; (ii) Component Re-weighting: the mixture weight $\pi_m$ of each mixture component is re-weighted by multiplying $\exp\left(\frac{k\left(\mu_m-\frac{\sum_{i=1}^k x_i}{k}\right)^2}{2(\tau^2+k\sigma^2)}\right)$ (which needs to be further normalized so that re-weighted mixture weights sum to $1$).
Fig.~\ref{fig: demo priorposterior} illustrates the phenomena of Component Shifting and Component Re-weighting by observing in-context examples.

\subsection{Proof of Posterior Derivation in Toy Example}
\label{app:toy:proof}
In this section, we give a detailed derivation of the posterior in Eq.~\ref{equation:demo:posterior} of Sec.~\ref{demo:posterior}:
\begin{align}
    P(\mu\vert S_k)
    &\propto P(\mu,S_k)\\
    &= P(S_k\vert \mu) P(\mu)\\
    &= (\Pi_{i=1}^{k} P(x_i \vert \mu))P(\mu)\\
    &= \sum_{m=1}^M \pi_m \mathcal{N}(\mu \vert \mu_m,\sigma^2) (\Pi_{i=1}^{k} \mathcal{N}(x_i \vert \mu,\tau^2)).
\end{align}
We then show $\mathcal{N}(\mu \vert \mu_m,\sigma^2) (\Pi_{i=1}^{k} \mathcal{N}(x_i \vert \mu,\tau^2))$ is proportional to a Gaussian distribution:
\begin{align}
&
    \log\left(\mathcal{N}(\mu \vert \mu_m,\sigma^2) \cdot \Pi_{i=1}^{k} \mathcal{N}(x_i \vert \mu,\tau^2)\right)
\\&=
    \left(\log\left(\frac{1}{\sqrt{2\pi}\sigma}\right)-\frac{(\mu-\mu_m)^2}{2\sigma^2}\right) + 
    \sum_{i=1}^k \left(\log\left(\frac{1}{\sqrt{2\pi}\tau}\right)-\frac{(x_i-\mu)^2}{2\tau^2}\right)
\\&
    (\text{Let } C_{10} = \log\left(\frac{1}{\sqrt{2\pi}\sigma}\right)+k\log\left(\frac{1}{\sqrt{2\pi}\tau}\right))
\\&=
    C_{10}-\frac{(\mu-\mu_m)^2}{2\sigma^2} -\sum_{i=1}^k \frac{(x_i-\mu)^2}{2\tau^2}
\\&= 
    C_{10}-\frac{1}{2\tau^2\sigma^2}\left(\tau^2(\mu-\mu_m)^2+\sigma^2\sum_{i=1}^k (x_i-\mu)^2\right)
\\&
    (\text{Abbreviate } \sum_{i=1}^k \text{ as } \sum \text{ for simplicity}.)
\\&=
    C_{10}-\frac{1}{2\tau^2\sigma^2}\bigg(\mu^2(\tau^2+k\sigma^2)-2\mu\left(\tau^2\mu_m+\sigma^2\sum x_i\right)+\left(\tau^2\mu_m^2+\sigma^2\sum x_i^2\right)\bigg)
\\&=
    C_{10}-\frac{\tau^2+k\sigma^2}{2\tau^2\sigma^2}\Bigg(\left(\mu-\frac{\tau^2\mu_m+\sigma^2\sum x_i}{\tau^2+k\sigma^2}\right)^2+\frac{\tau^2\mu_m^2+\sigma^2\sum x_i^2}{\tau^2+k\sigma^2}-\left(\frac{\tau^2\mu_m+\sigma^2\sum x_i}{\tau^2+k\sigma^2}\right)^2\Bigg)
\\&=
    C_{10}-\frac{\tau^2+k\sigma^2}{2\tau^2\sigma^2}\Bigg(\left(\mu-\frac{\tau^2\mu_m+\sigma^2\sum x_i}{\tau^2+k\sigma^2}\right)^2
    +\frac{(\tau^2\mu_m^2+\sigma^2\sum x_i^2)(\tau^2+k\sigma^2)-(\tau^2\mu_m+\sigma^2\sum x_i)^2}{(\tau^2+k\sigma^2)^2}
    \Bigg)
\\&=
    C_{10}-\frac{\tau^2+k\sigma^2}{2\tau^2\sigma^2}\Bigg(\left(\mu-\frac{\tau^2\mu_m+\sigma^2\sum x_i}{\tau^2+k\sigma^2}\right)^2
    +\frac{
        k\sigma^2\tau^2\mu_m^2+\sigma^2\sum x_i^2(\tau^2+k\sigma^2)
        -2\mu_m \tau^2\sigma^2\sum x_i - (\sigma^2\sum x_i)^2
    }{(\tau^2+k\sigma^2)^2}
    \Bigg)
\\&
    (\text{Let } C_{11} = C_{10}-\frac{\tau^2+k\sigma^2}{2\tau^2\sigma^2} \cdot \frac{
        \sigma^2\sum x_i^2(\tau^2+k\sigma^2)
        - (\sigma^2\sum x_i)^2
        - \tau^2\sigma^2(\sum x_i)^2/k
    }{(\tau^2+k\sigma^2)^2}.)
\\&=
    C_{11}-\frac{\tau^2+k\sigma^2}{2\tau^2\sigma^2}\Bigg(\left(\mu-\frac{\tau^2\mu_m+\sigma^2\sum x_i}{\tau^2+k\sigma^2}\right)^2
    +\frac{
        k\sigma^2\tau^2\mu_m^2
        -2\mu_m \tau^2\sigma^2\sum x_i + \tau^2\sigma^2(\sum x_i)^2/k
    }{(\tau^2+k\sigma^2)^2}
    \Bigg)
\\&=
    C_{11}-\frac{\tau^2+k\sigma^2}{2\tau^2\sigma^2}\Bigg(\left(\mu-\frac{\tau^2\mu_m+\sigma^2\sum x_i}{\tau^2+k\sigma^2}\right)^2
    +\frac{
        k\tau^2\sigma^2
    }{(\tau^2+k\sigma^2)^2}\cdot
    \left(\mu_m-\frac{\sum x_i}{k}\right)^2
    \Bigg)
\\&=
    C_{11} - \frac{k\left(\mu_m-\frac{\sum_{i=1}^k x_i}{k}\right)^2}{2(\tau^2+k\sigma^2)} -\frac{\left(\mu-\frac{\tau^2\mu_m+\sigma^2\sum_{i=1}^k x_i}{\tau^2+k\sigma^2}\right)^2}{2\cdot \frac{\tau^2\sigma^2}{\tau^2+k\sigma^2}}.
\end{align}
Notice $C_{11}$ is independent to $m, \forall m\in[M]$ and $\mu$. Therefore, we have:
\begin{align}
&
    \pi_m\cdot \mathcal{N}(\mu \vert \mu_m,\sigma^2) \cdot \Pi_{i=1}^{k} \mathcal{N}(x_i \vert \mu,\tau^2)
\propto
    \tpi_m\cdot \mathcal{N}(\mu\vert \tmu_m,\tsigma^2),
\end{align}
where $\tpi_m = \pi_m \exp\left(-\frac{k\left(\mu_m-\frac{\sum_{i=1}^k x_i}{k}\right)^2}{2(\tau^2+k\sigma^2)}\right), \tmu_m = \frac{\tau^2\mu_m+\sigma^2\sum_{i=1}^k x_i}{\tau^2+k\sigma^2},$ and $\tsigma^2 = \frac{\tau^2\sigma^2}{\tau^2+k\sigma^2}$. Thus:
\begin{align}
    P(\mu\vert S_k)
    &\propto 
    \sum_{m=1}^M \pi_m \mathcal{N}(\mu \vert \mu_m,\sigma^2) (\Pi_{i=1}^{k} \mathcal{N}(x_i \vert \mu,\tau^2))
    \\&\propto
    \tpi_m \mathcal{N}(\mu \vert \tmu_m,\tsigma^2).
\end{align}

\end{document}